\newcommand{\squeeze}{}
\definecolor{bgcolor}{rgb}{0.8,1,1}
\definecolor{bgcolor2}{rgb}{0.8,1,0.8}
\newcommand{\eqdef}{\; { := }\;}
\newcommand{\R}{\mathbb{R}}
\newcommand{\ExpBr}[1]{\mathbb{E}\left[#1\right]}
\newcommand{\norm}[1]{\left\|#1\right\|}
\def\<#1,#2>{\langle #1,#2\rangle}
\newcommand{\newalpha}{h}
\definecolor{mydarkgreen}{RGB}{39,130,67}
\definecolor{mydarkred}{RGB}{192,47,25}
\newcommand{\green}{\color{mydarkgreen}}
\newcommand{\red}{\color{mydarkred}}
\newcommand{\cmark}{\green\ding{51}}%
\newcommand{\xmark}{\red\ding{55}}%
\newcommand{\mA}{\mathbf{A}}
\newcommand{\mB}{\mathbf{B}}
\newcommand{\mH}{\mathbf{H}}
\newcommand{\mI}{\mathbf{I}}
\newcommand{\mU}{\mathbf{U}}
\newcommand{\cC}{{\mathcal{C}}}
\newcommand{\cH}{{\mathcal{H}}}
\declaretheorem[within=section]{definition}
\declaretheorem[sibling=definition]{theorem}
\declaretheorem[sibling=definition]{assumption}
\declaretheorem[sibling=definition]{corollary}
\declaretheorem[sibling=definition]{lemma}
\title{\bf Distributed Second Order Methods with Fast Rates  and Compressed Communication}
\author{Rustem Islamov\thanks{King Abdullah University of Science and Technology (KAUST), Thuwal, Saudi Arabia, and Moscow Institute of Physics and Technology (MIPT), Dolgoprudny, Russia. This research was conducted while this author was an intern at KAUST and an undergraduate student at MIPT.} \and Xun Qian\thanks{King Abdullah University of Science and Technology, Thuwal, Saudi Arabia.} \and Peter Richt\'{a}rik\thanks{King Abdullah University of Science and Technology, Thuwal, Saudi Arabia.}}
\date{February 13, 2021}
\begin{document}

\maketitle

\begin{abstract}
We develop several new communication-efficient second-order methods for distributed optimization. Our first method, {\sf NEWTON-STAR}, is a variant of Newton's method from which it inherits its fast local quadratic rate. However, unlike Newton's method, {\sf NEWTON-STAR} enjoys the same per iteration communication cost as gradient descent. While this method is impractical as it relies on the use of certain unknown parameters characterizing the Hessian of the objective function at the optimum,  it serves as the starting point which enables us design practical variants thereof with strong theoretical guarantees. In particular, we design a stochastic sparsification strategy for learning the unknown parameters in an iterative fashion in a communication efficient manner. Applying this strategy to {\sf NEWTON-STAR} leads to our next method, {\sf NEWTON-LEARN}, for which we prove  local linear and superlinear rates independent of the condition number. When applicable, this method can have dramatically superior convergence behavior when compared to state-of-the-art methods. Finally, we develop a globalization strategy using cubic regularization which leads to our next method, {\sf CUBIC-NEWTON-LEARN}, for which we prove global sublinear and linear convergence rates, and a fast superlinear rate. Our results are supported with experimental results on real datasets, and show several orders of magnitude improvement on baseline and state-of-the-art methods in terms of communication complexity. 
\end{abstract}

{
\tableofcontents
}

\clearpage
\section{Introduction}

The prevalent paradigm for training modern supervised machine learning models is based on (regularized) empirical risk minimization (ERM) \citep{shai_book}, and the most commonly used optimization methods deployed for solving ERM problems belong to the class of  stochastic first order methods \citep{RobbinsMonro:1951,  Nemirovski-Juditsky-Lan-Shapiro-2009}. Since modern training data sets are very large and are becoming larger every year, it is increasingly harder to get by without relying on modern computing architectures which make efficient use of  distributed computing.  However, in order to develop efficient distributed methods, one has to keep in mind that communication among the different parallel workers (e.g. processors or compute nodes) is typically very slow, and almost invariably forms the main bottleneck in deployed optimization software and systems \citep{bekkerman2011scaling}. For this reason, further advances in the area of communication efficient distributed first order optimization methods for solving ERM problems are highly needed, and research in this area constitutes one  of the most important fundamental endeavors in modern machine learning. Indeed, this research  field is very active, and numerous advances  have been made over the past decade \citep{Seide14, Wen17, Alistarh17, Bernstein18, DIANA, Stich19, Tang19}.
 
  \subsection{Distributed optimization}
 
We consider L2 regularized empirical risk minimization problems of the form
\begin{equation}\label{primal}
\squeeze 
\min \limits_{x\in \mathbb{R}^d} \left[ P(x) \eqdef f(x) + \frac{\lambda}{2}\|x\|^2  \right],
\end{equation}
where $f:\R^d \to \R$ is a smooth\footnote{Function $\phi:\R^d\to \R$ is {\em smooth} if it is differentiable, and has $L_\phi$ Lipschitz gradient: $\|\nabla \phi(x)- \nabla \phi(y)\| \leq L_\phi  \|x-y\|$ for all $x,y\in \R^d$. We say that $L_\phi$ is the {\em smoothness constant} of $\phi$.} convex function of the ``average of averages'' structure
\begin{equation} \label{eq:f_and_f_i} 
 \squeeze f(x) \eqdef\frac{1}{n}\sum \limits_{i=1}^n  f_i(x), \quad f_i(x) \eqdef \frac{1}{m} \sum \limits \limits_{j=1}^{m} f_{ij}(x), \end{equation}
and $\lambda\geq 0$ is a regularization parameter. Here $n$ is the number of parallel workers (nodes), and $m$ is the number of training examples handled by each node\footnote{All our results can be extended in a straightforward way to the more general case when node $i$ contains $m_i$ training examples. We decided to present the results in the special case $m=m_i$ for all $i$ in order to simplify the notation. }.  
The value $f_{ij}(x)$ denotes the loss of the model parameterized by vector $x\in \R^d$ on the $j^{\rm th}$ example owned by the $i^{\rm th}$ node. This example is denoted as $a_{ij} \in \R^d$, and the corresponding loss function is $\varphi_{ij}:\R \to \R$, and hence we have
\begin{equation}\label{eq:f_ij} f_{ij}(x) \eqdef \varphi_{ij}(a_{ij}^\top x).\end{equation}

Thus, $f$ represents the average loss/risk over all $nm$ training datapoints, and problem \eqref{primal} seeks to find the model whose (L2 regularized) empirical risk is minimized. We make the following assumption throughout the paper. 
\begin{assumption}\label{as:general}
Problem (\ref{primal}) has at least one optimal solution $x^*$. For all  $i$ and $j$, the loss function $\varphi_{ij}: \mathbb{R} \to \mathbb{R}$ is $\gamma$-smooth, twice differentiable, and its second derivative $\varphi_{ij}^{\prime\prime} : \R \to \R$ is $\nu$-Lipschitz continuous.
\end{assumption} 

Note that in view of \eqref{eq:f_ij}, the Hessian of $f_{ij}$ at point $x$ is
\begin{equation}
\label{eq:87ybfd0fd}
\mH_{ij}(x) \eqdef 
\nabla^2 f_{ij}(x) = 
\newalpha_{ij}(x) a_{ij}a_{ij}^\top, 
\end{equation}
where
\begin{equation}\label{eq:h_ij-def} \newalpha_{ij}(x) \eqdef \varphi^{\prime\prime}_{ij}(a_{ij}^\top x).\end{equation} In view of Assumption~\ref{as:general}, we have $|\varphi''_{ij}(t)| \leq \gamma$ for all $t\in \R$, and 
\begin{equation}
|\newalpha_{ij}(x) - \newalpha_{ij}(y)| \leq \nu |a_{ij}^\top x - a_{ij}^\top y| \leq \nu \|a_{ij}\|  \|x-y\| \label{eq:alphaijL} 
\end{equation}
for all $x, y \in \R^d$. Let $R\eqdef \max_{ij} \|a_{ij}\|$. The Hessian of $f_i$ is given by
\begin{equation}\label{eq:H_i} \squeeze \mH_i(x) \overset{\eqref{eq:f_and_f_i}  }{=}
\frac{1}{m}\sum\limits_{j=1}^m \mH_{ij}(x) \overset{\eqref{eq:87ybfd0fd}}{=} \frac{1}{m}\sum\limits\limits_{j=1}^m \newalpha_{ij}(x) a_{ij}a_{ij}^\top,\end{equation}
and the Hessian of $f$ is given by
\begin{equation}\label{eq:H}\squeeze \mH(x) \overset{\eqref{eq:f_and_f_i}  }{=}
\frac{1}{n}\sum\limits_{i=1}^n \mH_{i}(x) \overset{\eqref{eq:H_i}}{=} \frac{1}{nm}\sum \limits_{i=1}^n   \sum\limits_{j=1}^m \newalpha_{ij}(x) a_{ij}a_{ij}^\top.\end{equation}

\subsection{The curse of the condition number}
 
All first order methods---distributed or not---suffer from a dependence on an appropriately chosen notion of a {\em condition number\footnote{Example: if one wishes to minimize an $L$-smooth $\mu$-strongly convex function  and one cares about the number of gradient type iterations, the appropriate notion of a condition number is $\kappa \eqdef \frac{L}{\mu}$.}}---a number that describes the difficulty of solving the problem by the method at hand. A condition number is a function of the goal we are trying to achieve (e.g., minimize the number of iterations vs minimize the  number of communications), choice of the loss function, structure of the model we are trying to learn, and last but not least, the size and properties of the training data. In fact, most research in this area is motivated by the desire to design methods that would have a {\em reduced} dependence on the condition number. This is the case for many of the tricks heavily studied in the literature, including minibatching~\citep{pegasos2}, importance sampling~\citep{NeedellWard2015, IProx-SDCA}, random reshuffling~\citep{RR}, variance reduction~\citep{schmidt2017minimizing, johnson2013accelerating, proxSVRG, SAGA}, momentum~\citep{SHB-NIPS, SMOMENTUM}, adaptivity \citep{MM2019}, communication compression~\citep{Alistarh17, Bernstein18, DIANA}, and local computation~\citep{COCOA+journal, localSGD-Stich,localSGD-AISTATS2020}.  Research in this area is becoming saturated, and new ideas are needed to make further progress.

\subsection{Newton's method to the rescue?} One of the ideas that undoubtedly crossed everybody's mind is the trivial observation that there {\em is} a very old and simple method which does {\em not} suffer from any conditioning issues: Newton's method. Indeed, when it works, Newton's method has a fast {\em local quadratic convergence rate} which is entirely independent of the condition number of the problem \citep{Beck-book-nonlinear}. While this is a very attractive property, developing  scalable distributed variants of Newton's method that could also {\em provably outperform} gradient based methods remains a largely unsolved problem. To highlight the severity of the issues with extending Newton's method to stochastic and distributed settings common in machine learning, we note that until recently, we did not even have any Newton-type analogue of SGD that could provably work with small minibatch sizes, let alone minibatch size one \citep{SN2019}. In contrast, SGD with minibatch size one is one of the simplest and well understood variants thereof \citep{NeedellWard2015}, and much of modern development in the area of SGD methods is much more sophisticated.  Most variants of Newton's method proposed for deployment in machine learning are heuristics, which is to say that they are not supported with any convergence guarantees, or have convergence guarantees without explicit rates, or suffer from rates that are worse than the rates of first order methods.

\subsection{Contributions summary}

We develop {\em several  new  fundamental Newton-type methods} which we hope make a marked step towards the ultimate goal of developing practically useful and {\em communication efficient distributed second order methods}. Our methods are designed with the explicit goal of supporting {\em efficient communication} in a distributed setting, and in sharp contrast with most recent work, their design was heavily influenced by our desire to equip them with {\em strong convergence guarantees} typical for the classical Newton's method \citep{Wallis1685, Raphson1697} and  cubically regularized Newton's method~\citep{Griewank-cubic-1981, PN2006-cubic}. Our convergence results are summarized in Table~\ref{tbl:rates}.

\begin{table}[t]
	\caption{Summary of algorithms proposed and convergence results proved in this paper.}
	\label{tbl:rates}
	\begin{center}
		  \begin{threeparttable}
		\scriptsize
			\begin{tabular}{|c | ccc | cc|}
				\hline
				                     & \multicolumn{3}{c|}{\bf Convergence} &  &   \\
				  {\bf Method} &  {\bf result} ${}^\dagger$   &  {\bf type}  & {\bf rate}  & \begin{tabular}{c}{ \bf Rate} \\ {\bf independent of the}\\ {\bf condition number?}  \end{tabular} & {\bf Theorem }\\
				\hline
				\begin{tabular}{c} {\sf NEWTON-STAR (NS)} \\ \eqref{eq:Newton-star} \end{tabular} & $r_{k+1}\leq c r_k^2$   & local & quadratic & \cmark &  \ref{th:localquadratic}\\ \hline	
				\begin{tabular}{c} {\sf MAX-NEWTON (MN)} \\ Algorithm~\ref{alg:maxnewton} \end{tabular} &  $r_{k+1}\leq c r_k^2$      &local & quadratic & \cmark &  \ref{th:maxnewton}		\\	 \hline				
			 \multirow{2}{*}{ \begin{tabular}{c} {\sf NEWTON-LEARN (NL1)}  \\ Algorithm~\ref{alg:NL1} \end{tabular}  } 
			 &  $\Phi^k_1 \leq \theta_1^k \Phi_1^0$ & local & linear & \cmark &  \ref{th:lambda>0}\\ 
			 &  $r_{k+1} \leq c \theta_1^k  r_k$ & local & superlinear & \cmark &  \ref{th:lambda>0}\\ \hline	
			 \multirow{2}{*}{ 	\begin{tabular}{c} {\sf NEWTON-LEARN (NL2)} \\ Algorithm~\ref{alg:NL2} \end{tabular}  } 
			 &   $\Phi^k_2 \leq \theta_2^k \Phi_2^0$  & local & linear & \cmark &  \ref{th:general}\\ 
			 &  $r_{k+1} \leq c \theta_2^k  r_k$ & local & superlinear & \cmark &  \ref{th:general}\\  \hline	
 \multirow{4}{*}{ \begin{tabular}{c} {\sf CUBIC-NEWTON-LEARN (CNL)} \\ Algorithm~\ref{alg:CNL} \end{tabular}  } 
				 & $\Delta_k \leq \frac{c}{k}$   & global & sublinear & \xmark &  \ref{th:concubic}\\ 
				 & $\Delta_k \leq c \exp(-k/c)$ & global & linear & \xmark &  \ref{th:stronglyconcubic}\\ 
				 & $\Phi^k_3 \leq \theta_3^k \Phi^0_3$ & local & linear & \cmark &  \ref{th:cubicsup}\\ 				  
				 & $r_{k+1} \leq c \theta_3^k r_k$ & local & superlinear & \cmark &  \ref{th:cubicsup}\\ 
          \hline
			\end{tabular}
     \end{threeparttable}			
    \begin{tablenotes}
      {\scriptsize      
  \item   Quantities for which we prove convergence:  (i) distance to solution $r_k \eqdef \norm{x^k-x^*}$; (ii) Lyapunov function $\Phi^k_q \eqdef \norm{x^k-x^*}^2 + c_q \sum_{i=1}^n \sum_{j=1}^m ( h_{ij}^k - h_{ij}(x^*) )^2$ for $q=1,2,3$, where $h_{ij}(x^*) = \varphi''_{ij}(a_{ij}^\top x^*)$ (see \eqref{eq:h_ij-def}); (iii) Function value suboptimality  $\Delta_k \eqdef P(x^k) - P(x^*)$
        \item ${}^\dagger$ constant $c$ is possibly different each time it appears in this table. Refer to the precise statements of the theorems for the exact values.
        }
    \end{tablenotes}			
	\end{center}
\end{table}

\begin{itemize}
\item {\bf First new method and its local quadratic convergence.} We first show that if we know the Hessian of the objective function at   the optimal solution, then we can use it instead of the typical Hessian appearing in Newton's method, and the resulting algorithm, which we call {\sf NEWTON-STAR (NS)}, inherits  local quadratic convergence behavior of Newton's method (see Theorem~\ref{th:localquadratic}). In a distributed setting with a central orchestrating sever, each compute node only needs to send the local  gradient  to the server  node, and no matrices need to be sent. While this method is not practically useful, it acts as a stepping stone to our next method, in which these deficiencies are removed. This method is described in Section~\ref{sec:3steps}. A somewhat different method with similar properties, which we call {\sf MAX-NEWTON}\footnote{In fact, this was the first method we developed, in Summer 2020, when we embarked on the research which eventually lead to the results presented in this paper.}, is described in Section~\ref{sec:maxNewton}. 

\item  {\bf Second new method and its local linear and superlinear convergence.} Motivated by the above result, we propose a {\em learning scheme which enables us to learn the Hessian at the optimum iteratively  in a communication efficient manner.} This scheme gives rise to our second new method: {\sf NEWTON-LEARN (NL)}. We analyze this method in two cases: (i) all individual loss functions are convex and $\lambda >0$ (giving rise to the {\sf NL1} method), and (ii) the aggregate loss function $P$ is strongly convex (giving rise to the {\sf NL2} method). Besides the local full gradient, each worker node needs to send additional information to the server node in order to learn the Hessian at the optimum. However, our learning  scheme supports {\em compressed communication} with arbitrary compression level. This level can be chosen so that in each iteration, each node sends an equivalent of a few gradients to the server only. That is, we can achieve $O(d)$ communication complexity in each iteration. In both cases, we prove local linear convergence for a carefully designed Lyapunov function, and  local superlinear convergence for the squared distance to optimum (see Theorems~\ref{th:lambda>0} and \ref{th:general}). Remarkably, all these rates are {\em independent of the condition number.} The {\sf NL1} and {\sf NL2} methods and the associated theory are described in Section~\ref{sec:Newton-learn}.

\item {\bf Third new method and its global convergence.} Next, we equip our learning scheme with a  cubic regularization strategy \citep{Griewank-cubic-1981, PN2006-cubic}, which  leads to a new {\em globally} convergent method: {\sf CUBIC-NEWTON-LEARN (CNL)}. We establish global sublinear and linear convergence (for function values) guarantees for convex and strongly convex problems, respectively. The method can also achieve a fast local linear (for a Lyapunov function) and superlinear (for squared distance to solution) convergence in the strongly convex case.  We describe this method and the associated theory in Section~\ref{sec:CUBIC-NEWTON-LEARN}.

\item {\bf Experiments.} Our theory is corroborated with numerical experiments showing the superiority of our methods to several state-of-the-art benchmarks, including DCGD~\citep{KFJ}, DIANA \citep{DIANA, DIANA2}, ADIANA \citep{ADIANA}, BFGS~\cite{Broyden1967, Fletcher1970, Goldfarb1970, shanno1970conditioning}, and DINGO \citep{DINGO2019}.  Our methods can achieve communication complexity which is {\em several orders of magnitude better} than  competing methods (see Section~\ref{sec:experiments}).
\end{itemize}

\subsection{Related work}

Several distributed Newton-type methods can be found in recent literature. DANE \citep{DANE} is a distributed approximate Newton-type method where each worker node needs to solve a subproblem using the full gradient at each iteration, and the new iterate is the average of these subproblem solutions. The linear convergence of DANE was obtained in the strongly convex case. An inexact DANE method in which the subproblem is solved approximately was proposed and studied by \citet{AIDE}. Moreover, an accelerated version of inexact DANE, called AIDE, was proposed in  \citep{AIDE} by a generic acceleration scheme---catalyst~\citep{lin2015universal}---and an optimal communication complexity can be obtained up to logarithmic factors in specific settings. The DiSCO method, which combines inexact damped Newton method and distributed preconditioned conjugate gradient method, was proposed by \citet{DISCO} and analyzed for self-concordant empirical loss. GIANT \citep{GIANT2018} is a globally improved approximate Newton method which has a better linear convergence rate than first-order methods for quadratic functions, and has local linear-quadratic convergence for strongly convex functions. GIANT and DANE are identical for quadratic programming. The communication cost per iteration of the above methods is $O(d)$. These methods can only achieve linear convergence in the strongly convex case. The comparison of the iteration complexity of the above methods for the ridge regression problem can be found in Table 2 of \citep{GIANT2018}.

\begin{table}[t]
	\caption{Comparison of distributed Newton-type methods. Our methods combine the best of both worlds, and are the only methods we know about which do so: we obtain fast rates independent of the condition number, and allow for $O(d)$ communication per communication round.}
	\label{tablecomparison}
	\begin{center}
		\scriptsize
			  \begin{threeparttable}
			\begin{tabular}{cccccc}
				\toprule
				 {\bf Method}  &  \begin{tabular}{c} {\bf Convergence} \\ {\bf rate} \end{tabular} & \begin{tabular}{c}{ \bf Rate} \\ {\bf independent of the}\\ {\bf condition number?}  \end{tabular} &\begin{tabular}{c}{\bf Communication} \\ {\bf  cost}\\ {\bf per iteration}  \end{tabular}  & \begin{tabular}{c}  {\bf Network} \\ {\bf  structure}  \end{tabular} \\
				\midrule 
				\begin{tabular}{c} DANE \\ \citep{DANE} \end{tabular}   & Linear & \xmark & $O(d)$ & Centralized \\ \hline
				\begin{tabular}{c} DiSCO \\ \citep{DISCO}  \end{tabular}   & Linear &  \xmark & $O(d)$ & Centralized \\ \hline 
				\begin{tabular}{c} AIDE \\ \citep{AIDE}  \end{tabular}   & Linear &  \xmark & $O(d)$ & Centralized \\ \hline 
				\begin{tabular}{c} GIANT \\ \citep{GIANT2018}  \end{tabular}   & Linear &  \xmark & $O(d)$ & Centralized \\ \hline
				\begin{tabular}{c} DINGO \\ \citep{DINGO2019}   \end{tabular}   & Linear &  \xmark & $O(d)$ & Centralized \\ \hline 
				\begin{tabular}{c} DAN \\ \citep{DAN2020}   \end{tabular}   & Local quadratic${}^\dagger$ & \cmark & $O(nd^2)$ & Decentralized \\ \hline 
				\begin{tabular}{c} DAN-LA \\ \citep{DAN2020}   \end{tabular}   & Superlinear & \cmark & $O(nd)$ & Decentralized \\ \hline 
				\rowcolor{bgcolor} 
				\begin{tabular}{c} {\sf NEWTON-STAR} \\ {\bf this work}  \end{tabular}   &Local  quadratic & \cmark & $O(d)$ & Centralized \\ \hline 
				\rowcolor{bgcolor} 
				\begin{tabular}{c} {\sf MAX-NEWTON} \\ {\bf this work}  \end{tabular}   & Local  quadratic & \cmark & $O(d)$ & Centralized \\ \hline 				
				\rowcolor{bgcolor} 
				\begin{tabular}{c} {\sf NEWTON-LEARN} \\ {\bf this work}  \end{tabular}   &Local  superlinear & \cmark & $O(d)$ & Centralized \\ \hline 
				\rowcolor{bgcolor} 
				\begin{tabular}{c} {\sf CUBIC-NEWTON-LEARN} \\ {\bf this work}  \end{tabular}   & Superlinear & \cmark & $O(d)$ & Centralized \\ 
				\bottomrule
			\end{tabular}
			\end{threeparttable}
    \begin{tablenotes}
      {\scriptsize      
  \item   ${}^\dagger$ DAN converges globally, but the quadratic rate is introduced only after $O(L_2/\mu^2)$ steps, where $L_2$ is the Lipschitz constant of the Hessian of $P$, and $\mu$ is the strong convexity parameter of $P$. This is a property it inherits from the recent method of Polyak \citep{L-Newton2019} this method is based on.
        }
    \end{tablenotes}				
	\end{center}
\end{table}

\citet{DINGO2019} proposed a distributed Newton-type method called DINGO   for solving  invex finite-sum problems. Invexity is a special case of non-convexity, which subsumes convexity as a sub-class. A linear convergence rate was obtained for DINGO under certain assumptions using an Armijo-type line search, and at each iteration, several communication rounds are needed assuming two communication rounds for line-search per iteration. The communication cost for each communication round is $O(d)$. The compressed version of DINGO was studied in \citep{Ghosh2020} to reduce the communication cost at each communication round by using the $\delta$-approximate compressor, and the same rate of convergence as DINGO can be obtained by properly choosing the stepsize and hyper-parameters when $\delta$ is a constant.  \citet{DAN2020} proposed two decentralized distributed adaptive Newton methods, called DAN and DAN-LA. DAN combines the distributed selective flooding (DSF) algorithm and Polyak’s adaptive Newton method \citep{polyak2020new}, and enters pure Newton method which has quadratic convergence after about $\frac{2M}{\mu^2} \|\nabla P(x^0)\|$ iterations, where $M$ is the Lipschitz constant of the Hessian of $P$ and $\mu$ is the strongly convex parameter of $P$. DAN-LA, which leverages the low-rank approximation method to reduce the communication cost, has global superlinear convergence. At each iteration, both DAN and DAN-LA need $n-1$ communication rounds, and the communication cost for each communication round is $O(d^2)$ and $O(d)$ respectively. 

We compare the convergence rate and per-iteration communication cost with these Newton-type methods in Table~\ref{tablecomparison}. Note that the first five methods in the table have rates that depend on the condition number of the problem, and as such, do not have the benefits normally attributed to pure Newton's method. Note also that the two prior methods which do have rates independent of the condition number have high cost of communication. Our methods combine the best of both worlds, and are the only methods we know about which do so: we obtain fast rates independent of the condition number, and allow for $O(d)$ communication per communication round.  We were able to achieve this by a complete redesign of how second order methods should work in the distributed setting. Our methods are not simple extensions of existing schemes, and our proofs use novel arguments and techniques.

\section{Three Steps Towards an Efficient Distributed Newton Type Method} \label{sec:3steps}

In order to better explain the  algorithms and results of this paper, we will proceed through several steps in a gradual explanation of the ideas that ultimately lead to our methods. While this is not the process we used to come up with our methods, in retrospect we believe that our methods and results will be understood more easily when seen as having been arrived at in this way. In other words, we have constructed what we believe is a plausible discovery story, one enabling faster and better comprehension. If these ideas seem to follow naturally, it is because we made a conscious effort to make then appear that way. The goal of this paper is to develop communication efficient variants of Newton's method for solving the distributed optimization problem \eqref{primal}.   

\subsection{Naive distributed implementation of Newton's method} \label{subsec:Newton}

 Newton's method applied to problem \eqref{primal}  performs the iteration
 
 \begin{equation} \label{eq:Newton} x^{k+1} = x^k - \left(\nabla^2 P(x^k)\right)^{-1} \nabla P(x^k) \overset{\eqref{primal}}{=} x^k - \left(\mH(x^k) +\lambda \mI \right)^{-1} \nabla P(x^k).\end{equation}
 
A naive way to implement this method in the {\em parameter server} framework is for each node $i$ to compute the Hessian $\mH_i(x^k)$ and gradient $\nabla f_i(x^k)$ and to communicate these objects to the server. The server then averages the local Hessians $\mH_i(x^k)$ to produce $\mH(x^k)$ via \eqref{eq:H}, and averages the local gradients $\nabla f_i(x^k)$ to produce $\nabla f(x^k)$. The server then adds $\lambda \mI$ to the Hessian, producing $\mH(x^k) + \lambda \mI = \nabla^2 P(x^k)$, adds  $\lambda x^k$ to the gradient, producing $\nabla P(x^k) = \nabla f(x^k) + \lambda x^k$, and subsequently performs the Newton step \eqref{eq:Newton}. The resulting vector $x^{k+1}$ is then broadcasted to the nodes and the process is repeated. 

This implementation mirrors the way GD and many other first order methods are implemented in the parameter server framework. However, unlike in the case of GD, where only $O(d)$ floats need to be sent and received by each node in each iteration, the upstream communication in Newton's method requires $O(d^2)$ floats to be communicated by each worker to the server. Since $d$ is typically very large, this is prohibitive in practice. Moreover, computation of the Newton's step by the parameter server is much more expensive than simple averaging of the gradients performed by gradient type methods. However, in this paper we will not be concerned with the cost of the Newton step itself, as we will assume the server is powerful enough and the network connection is slow enough for this step not to be the main bottleneck of the iteration. Instead, we assume that the communication steps in general, and the $O(d^2)$ communication of the Hessian matrices in particular,  is what forms the bottleneck. The $O(d)$ per node communication cost of the local gradients is negligible,  and so is the $O(d)$ broadcast of the updated model.  

\subsection{A better implementation taking advantage of the structure of $\mH_{ij}(x)$} \label{subsec:Naive2}

The above naive implementation can be improved in the setting when $m < d^2$ by taking advantage of the explicit structure \eqref{eq:H_i} of the local Hessians as a conic combination of positive semidefinite rank one matrices:
\begin{equation}\label{eq:Newton2}
\squeeze\mH_i(x) = \frac{1}{m}\sum \limits_{j=1}^m \newalpha_{ij}(x) a_{ij}a_{ij}^\top.\end{equation}
Indeed, assuming that the server has direct access to all the training data vectors $a_{ij}\in \R^d$  (these vectors can be sent to the server at the start of the process), node $i$ can send the $m$ coefficients $\newalpha_{i1}(x), \dots, \newalpha_{im}(x)$ to the server instead, and the server is then able to reconstruct the Hessian matrix $\mH_i(x)$ from this information. This way, each node sends $O(m+d)$ floats to the server, which is a substantial improvement on the naive implementation in the regime when $m\ll d^2$.  However, when $m\gg d$, the upstream communication cost is still substantially larger than the $O(d)$ cost of GD.  If the server does not have enough memory to store all vectors $a_{ij}$, this procedure does not work.

\subsection{{\sf NEWTON-STAR}: Newton's method with a single Hessian} \label{subsec:Newton-star}

We now introduce a simple idea which, surprisingly, enables us to {\em remove the need to iteratively communicate any coefficients altogether.}  Assume, for the sake of argument, that we know the values  $\newalpha_{ij}(x^*)$ for all $i,j$. That is, assume the server has access to coefficients $\newalpha_{ij}(x^*)$ for all $i,j$, and that each node $i$ has access to coefficients $\newalpha_{ij}(x^*)$  for $j=1,\dots,m$, i.e., to the vector
\begin{equation}\label{eq:8f0d8hfd}\newalpha_{i}(x) \eqdef (\newalpha_{i1}(x),\dots, \newalpha_{im}(x)) \in \R^m\end{equation}
for $x=x^*$.  Next, consider the following new Newton-like method which we call {\sf NEWTON-STAR (NS)}, where the ``star'' points to the method's reliance on the knowledge of the optimal solution $x^*$:
\begin{eqnarray}  x^{k+1} &=& x^k - \left(\nabla^2 P(x^*) \right)^{-1} \nabla P(x^k) 
\overset{\eqref{primal}}{=} x^k - \left(\mH(x^*) +\lambda \mI \right)^{-1} \left(\frac{1}{n}\sum_{i=1}^n \nabla f_i(x^k) + \lambda x^k\right).\label{eq:Newton-star}\end{eqnarray}

Since the server knows $\mH(x^*)$, all that the nodes need to communicate are the local gradients $\nabla f_i(x^k)$, which costs $O(d)$ per node. The server then computes $x^{k+1}$, broadcasts it back to the nodes, and the process is repeated. This method has the same per-iteration $O(d)$ communication complexity as GD. However, as we show next, the number of iterations (which is the same as the number of communications) of {\sf NEWTON-STAR} does not depend on the condition number -- a property it borrows from the classical Newton's method. The following theorem says that {\sf NEWTON-STAR} enjoys {\em local quadratic convergence}.

\begin{theorem}[Local quadratic convergence]\label{th:localquadratic}
Let  Assumption \ref{as:general} hold,  and assume that $\mH(x^*) \succeq \mu^* \mI$ for some $\mu^* \geq 0$ (for instance, this holds if $f$ is $\mu^*$-strongly convex) and that $\mu^*+\lambda >0$. Then for any starting point $x^0 \in \R^d$, the iterates of {\sf NEWTON-STAR} for solving problem \eqref{primal}   satisfy the following inequality:
\begin{equation}
\label{eq:NS-rate}
 \squeeze
 \|x^{k+1} - x^*\| \leq \frac{\nu }{2(\mu^*+\lambda)} \cdot \left( \frac{1}{nm} \sum \limits_{i=1}^n \sum \limits_{j=1}^{m} \|a_{ij}\|^3 \right) \cdot \|x^k-x^*\|^2. 
\end{equation}
\end{theorem}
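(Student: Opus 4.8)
The plan is to bound the error $x^{k+1}-x^*$ by exploiting the fact that $x^*$ is a fixed point of the iteration, exactly as in the classical convergence proof of Newton's method, but with the twist that the Hessian is evaluated at $x^*$ rather than at $x^k$. First I would write, using $\nabla P(x^*)=0$ and the definition \eqref{eq:Newton-star},
\begin{equation}
x^{k+1}-x^* = x^k - x^* - \left(\mH(x^*)+\lambda\mI\right)^{-1}\left(\nabla P(x^k)-\nabla P(x^*)\right)
= \left(\mH(x^*)+\lambda\mI\right)^{-1}\left[\left(\mH(x^*)+\lambda\mI\right)(x^k-x^*) - \left(\nabla P(x^k)-\nabla P(x^*)\right)\right].
\end{equation}
Then I would use the fundamental theorem of calculus, $\nabla P(x^k)-\nabla P(x^*) = \int_0^1 \nabla^2 P(x^*+t(x^k-x^*))(x^k-x^*)\,dt$, so that the bracketed term becomes $\int_0^1\left(\nabla^2 P(x^*) - \nabla^2 P(x^*+t(x^k-x^*))\right)(x^k-x^*)\,dt$, and note $\nabla^2 P(y) = \mH(y)+\lambda\mI$, so the $\lambda\mI$ cancels and only the difference of Hessians $\mH(x^*)-\mH(x^*+t(x^k-x^*))$ survives.

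Next I would take norms and pull out the operator norm of $\left(\mH(x^*)+\lambda\mI\right)^{-1}$, which is at most $1/(\mu^*+\lambda)$ by the assumption $\mH(x^*)\succeq\mu^*\mI$ and $\mu^*+\lambda>0$. It remains to bound $\norm{\mH(x^*)-\mH(y)}$ for $y=x^*+t(x^k-x^*)$. Using the representation \eqref{eq:H}, $\mH(x^*)-\mH(y) = \frac{1}{nm}\sum_{i,j}\left(\newalpha_{ij}(x^*)-\newalpha_{ij}(y)\right)a_{ij}a_{ij}^\top$, and by the triangle inequality the norm is at most $\frac{1}{nm}\sum_{i,j}|\newalpha_{ij}(x^*)-\newalpha_{ij}(y)|\,\norm{a_{ij}}^2$. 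Now apply the Lipschitz bound \eqref{eq:alphaijL}, $|\newalpha_{ij}(x^*)-\newalpha_{ij}(y)|\leq \nu\norm{a_{ij}}\,\norm{y-x^*} = \nu\norm{a_{ij}}\,t\norm{x^k-x^*}$, giving $\norm{\mH(x^*)-\mH(y)}\leq \frac{\nu t \norm{x^k-x^*}}{nm}\sum_{i,j}\norm{a_{ij}}^3$. Finally, integrating $\int_0^1 t\,dt = \tfrac12$ and combining everything yields exactly \eqref{eq:NS-rate}.

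I do not expect any serious obstacle here: the argument is a direct adaptation of the textbook Newton proof, and every ingredient (the cancellation of $\lambda\mI$, the rank-one structure, the Lipschitz estimate on $\newalpha_{ij}$) is already laid out in the excerpt. The one point requiring a little care is the justification that $\mH(x^*)-\mH(y)$, a difference of symmetric matrices, has operator norm bounded by $\frac{1}{nm}\sum_{i,j}|\newalpha_{ij}(x^*)-\newalpha_{ij}(y)|\norm{a_{ij}}^2$; this follows because each $a_{ij}a_{ij}^\top$ has operator norm $\norm{a_{ij}}^2$ and the triangle inequality for the operator norm applies termwise regardless of the signs of the scalar coefficients. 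A secondary bookkeeping point is to make sure the integral form of the remainder is valid, which it is since $P$ is twice differentiable with continuous (indeed Lipschitz, via Assumption~\ref{as:general}) Hessian along the segment $[x^*,x^k]$.
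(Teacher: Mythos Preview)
Your proposal is correct and follows essentially the same approach as the paper: first-order optimality, bound on $\norm{(\mH(x^*)+\lambda\mI)^{-1}}$, fundamental theorem of calculus for the remainder, then the Lipschitz estimate \eqref{eq:alphaijL} together with $\norm{a_{ij}a_{ij}^\top}=\norm{a_{ij}}^2$ and $\int_0^1 t\,dt=\tfrac12$. The only cosmetic difference is that the paper applies the integral representation at the level of each $f_{ij}$ after first splitting the sum via Jensen, whereas you apply it once at the level of $P$ and then bound the operator norm of $\mH(x^*)-\mH(y)$ termwise; both routes yield exactly the same constant.
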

\begin{proof}
{\footnotesize
By the first order optimality conditions, we have 
\begin{equation}\nabla f(x^*) + \lambda x^* = 0.\label{eq:FOC}\end{equation}
Let $\mH_* \eqdef \mH(x^*)$. Since $\mH_* \succeq \mu^* \mI$, we have $\mH_* +\lambda \mI \succeq (\mu^* +\lambda) \mI$, and hence \begin{equation}\label{eq:b98gdf_9898fd_93} \norm{\left(\mH_* + \lambda \mI\right)^{-1} } \leq \frac{1}{\mu^*+\lambda}.\end{equation} 
Using \eqref{eq:FOC} and \eqref{eq:b98gdf_9898fd_93}  and subsequently applying Jensen's inequality  to the function $x\mapsto \norm{x}$, we get 
\begin{eqnarray}
	\|x^{k+1} - x^*\| &=& \left \| x^k - x^*  - \left(\mH_* +\lambda \mI \right)^{-1} \nabla P(x^k)  \right \|  \notag \\
	&\overset{\eqref{eq:FOC}}{=}& \left \| \left(\mH_* +\lambda \mI \right)^{-1} \left[ \left(\mH_* + \lambda \mI \right)(x^k - x^*) - \left(\nabla f(x^k) -\nabla f(x^*)+ \lambda (x^k-x^*)\right)   \right] \right \|  \notag \\ 
	&\overset{\eqref{eq:b98gdf_9898fd_93}}{\leq} &\frac{1}{\mu^*+\lambda} \left \|  \left(\mH_* + \lambda \mI \right) (x^k - x^*) - \left( \nabla f(x^k) - \nabla f(x^*)   \right) - \lambda (x^k -x^*) \right \|  \notag \\ 
	&= &\frac{1}{\mu^*+\lambda} \left \|  \frac{1}{n} \sum_{i=1}^{n}  \mH_i(x^*) (x^k - x^*) -  \frac{1}{n}\sum_{i=1}^n \left( \nabla f_i(x^k) - \nabla f_i(x^*)   \right) \right \|  \notag \\ 
	&\leq & \frac{1}{n (\mu^*+\lambda)} \sum_{i=1}^{n}  \left \|  \mH_i(x^*) (x^k - x^*) -  \left( \nabla f_i(x^k) - \nabla f_i(x^*)   \right) \right \|  \notag \\ 	
	&\overset{\eqref{eq:H_i}+\eqref{eq:f_and_f_i} }{=} &\frac{1}{n (\mu^*+\lambda)} \sum_{i=1}^{n}  \left \| \frac{1}{m} \sum_{j=1}^m \newalpha_{ij}(x^*) a_{ij} a_{ij}^\top (x^k - x^*) -  \frac{1}{m} \sum_{j=1}^m \left( \nabla f_{ij}(x^k) - \nabla f_{ij}(x^*)   \right) \right \|  . \label{eq:rand-opur-9}
	\end{eqnarray}
	
We now use the fundamental theorem of calculus to express difference of gradients $\nabla f_{ij}(x^k) - \nabla f_{ij}(x^*)$ in an integral, obtaining
\begin{equation}\label{eq:difnablaf}
\nabla f_{ij}(x^k) - \nabla f_{ij}(x^*) = \int_{0}^1 \nabla^2 f_{ij}(x^* + \tau (x^k-x^*)) (x^k-x^*) d\tau. 
\end{equation}

Plugging this representation into	\eqref{eq:rand-opur-9} and noting that $ \nabla^2 f_{ij}(x)\equiv \mH_{ij}(x)$ (see \eqref{eq:87ybfd0fd}), we can continue:
\begin{eqnarray}	
	\|x^{k+1} - x^*\| 	& \overset{\eqref{eq:rand-opur-9}+\eqref{eq:difnablaf}}{ \leq} & \frac{1}{n (\mu^*+\lambda)} \sum_{i=1}^n \left\|   \frac{1}{m}  \sum_{j=1}^{m} \left(  \newalpha_{ij}(x^*) a_{ij}a_{ij}^\top (x^k - x^*) -  \int_{0}^1 \mH_{ij} (x^* + \tau(x^k - x^*)) (x^k - x^*)d\tau  \right)   \right\|  \notag \\ 
	&\overset{\eqref{eq:87ybfd0fd}}{=}& \frac{1}{n (\mu^*+\lambda)} \sum_{i=1}^n  \left\| \frac{1}{m} \sum_{j=1}^{m} \left(  \newalpha_{ij}(x^*) a_{ij}a_{ij}^\top (x^k - x^*)  -  \int_{0}^1 \newalpha_{ij} (x^* + \tau(x^k - x^*))a_{ij}a_{ij}^\top (x^k - x^*)d\tau  \right)   \right\| \notag  \\ 
	&=& \frac{1}{n (\mu^*+\lambda)} \sum_{i=1}^n  \left\| \frac{1}{m} \sum_{j=1}^{m}  a_{ij}a_{ij}^\top (x^k - x^*) \left( \newalpha_{ij}(x^*) - \int_{0}^1 \newalpha_{ij}(x^* + \tau(x^k - x^*))  d\tau \right) \right\|  \notag \\ 
	&\leq& \frac{ \|x^k - x^*\|}{ (\mu^*+\lambda)} \frac{1}{nm}\sum_{i=1}^n \sum_{j=1}^{m} \|a_{ij}\|^2 \left|   \int_{0}^1  \newalpha_{ij}(x^*) - \newalpha_{ij}(x^* + \tau(x^k - x^*))  d\tau  \right|. \label{eq:b98f9d8gfd}
\end{eqnarray}
In the last step we have again used Jensen's inequality applied to the function $x\mapsto \norm{x}$, followed by  inequalities of the form $\norm{\mA_{ij} x t_{ij}} \leq \norm{\mA_{ij}} \norm{x} |t_{ij}|$ for $\mA_{ij} = a_{ij}a_{ij}^\top $, $x=x^k-x^*$ and $t_{ij}\in \R$.

From (\ref{eq:alphaijL})  we obtain
$
| \newalpha_{ij}(x^*) - \newalpha_{ij}(x^* + \tau(x^k - x^*))| \leq \nu \tau \|a_{ij}\| \cdot \|x^k - x^*\|, 
$
which implies that 
$$
\left|   \int_{0}^1  \newalpha_{ij}(x^*) - \newalpha_{ij}(x^* + \tau(x^k - x^*))  d\tau  \right| \leq \int_{0}^1 \nu \tau \|a_{ij}\| \cdot \|x^k - x^*\| d\tau = \frac{\nu \|a_{ij}\|}{2} \cdot \|x^k - x^*\|. 
$$

Plugging this into \eqref{eq:b98f9d8gfd}, we finally arrive at \eqref{eq:NS-rate}.
}
\end{proof}

Note that we do not need to assume $f$ to be convex or strongly convex. All we need to assume is positive definiteness of the Hessian at the optimum. This implies local strong convexity, and since our convergence result is local, that is all we need.

{\bf Remark.} Besides  {\sf NEWTON-STAR}, we have designed another new Newton-type method with a local quadratic rate.  This method, which we call {\sf MAX-NEWTON}, is  similar to {\sf NEWTON-STAR} in that it relies on the knowledge of the coefficients $\newalpha_{ij}(x^*)$  for $j=1,\dots,m$. We describe this method in Appendix~\ref{sec:maxNewton}. 

\section{{\sf NEWTON-LEARN}: Learning the Hessian and Local Convergence Theory} \label{sec:Newton-learn}

In Sections~\ref{subsec:Newton}, \ref{subsec:Naive2} and \ref{subsec:Newton-star} we have gone through three steps in our story,  with the first true innovation and contribution of this paper being the {\sf NEWTON-STAR} method and its rate. We have now sufficiently prepared the ground to motivate our first {\em key} contribution: the {\sf NEWTON-LEARN} method. We only outline the basic insights behind this method here; the details are included in Section~\ref{sec:Newton-learn}.

\subsection{The main iteration}
In {\sf NEWTON-LEARN} we maintain a sequence of vectors \begin{equation}\label{eq:h_i^k}h_i^k=(h_{i1}^k,\dots, h_{im}^k) \in \R^m, \nonumber \end{equation} for all $i=1,\dots,n$ throughout the iterations $k\geq 0$  with the goal of {\em learning} the values $\newalpha_{ij}(x^*)$ for all $i,j$. That is, we construct the sequence with the explicit intention to enforce 
\begin{equation}\label{eq:learn} h_{ij}^k \to \newalpha_{ij}(x^*) \qquad \text{as} \qquad k\to \infty.\end{equation}

Using $h_{ij}^k \approx \newalpha_{ij}(x^*)$ we  estimate the Hessian $\mH(x^*)$ via
\begin{equation}\label{eq:Newton2-xx} 
\squeeze \mH(x^*) \approx \mH^k \eqdef  \frac{1}{nm}\sum \limits_{i=1}^n \sum \limits_{j=1}^m h^k_{ij}  a_{ij}a_{ij}^\top,\end{equation}
and then perform a similar iteration to \eqref{eq:Newton-star}:
\begin{equation} \label{eq:Newton-learn} x^{k+1} = x^k - \left(\mH^k + \lambda \mI\right)^{-1} \nabla P(x^k).\end{equation}

\subsection{Learning the coefficients: the idea}
To complete the description of the method, we need to explain how the vectors $h_i^{k+1}$ are updated. This is also the place where we can force the method to be communication efficient. Indeed, if we can design a rule that would enforce the update vectors $h_i^{k+1}-h_i^k$ to be {\em sparse}, say \begin{equation}\label{eq:sparse_update}\|h_i^{k+1}-h_i^k\|_0 \leq s \end{equation} for some $1 \leq s \leq m$ and all $i$ and $k$, then  the upstream communication by each node in each iteration would be of the order $O(s+d)$ only (provided the server has access to all vectors $a_{ij}$)! That is, each node $i$ only needs to communicate $s$ entries of the update vector $h_i^{k+1}-h_i^k$ as the rest are equal to zero, and each node also needs to communicate the $d$ dimensional gradient $\nabla f_i(x^k)$.  Note that $O(s+d)$ can be interpreted as an {\em interpolation} of the $O(m+d)$   per-iteration communication complexity  of the structure-aware implementation of Newton's method from  Section~\ref{subsec:Naive2}, and of the  $O(d)$   per-iteration communication complexity  of {\sf NEWTON-STAR} described in   Section~\ref{subsec:Newton-star}. 

In the more realistic regime when the server does {\em not} have access to the data $\{a_{ij}\}$, we ask each worker $i$ to additionally send the corresponding $s$ vectors $a_{ij}$, which costs extra $O(s d)$ in communication per node. However, when $s=O(1)$, this is the same per-iteration communication effort as that of GD.

We develop two different update rules defining the evolution of the vectors $h_1^k, \dots, h_n^k$. This first rule (see \eqref{eq: big78fd_8h9fd}) applies to the $\lambda>0$ case and leads to our first variant of {\sf NEWTON-LEARN} which we call {\sf NL1} (see Algorithm~\ref{alg:NL1}). This rule and the method are described in Section~\ref{subsec:NL1}. The second rule applies also to the $\lambda=0$ cases and leads to our second variant of {\sf NEWTON-LEARN} which we call {\sf NL2} (see Algorithm~\ref{alg:NL2}). This rule and the method are described in Section~\ref{subsec:NL2}.

\subsection{Outline of fast local convergence theory} We show in Theorem~\ref{th:lambda>0} (covering {\sf NL1}) and Theorem~\ref{th:general} (covering {\sf NL2}) that {\sf NEWTON-LEARN} enjoys a local linear rate wrt a certain Lyapunov function which  involves the term $\|x^k - x^*\|^2$ and also all terms of the form $\|h^k_i - \newalpha_i(x^*)\|^2$. This means that i) the main iteration \eqref{eq:Newton-learn} works, i.e., $x^k$ converges to $x^*$ at a local linear  rate, and that ii) the learning procedure works, and the desired convergence described in \eqref{eq:learn} occurs at a local linear rate. In addition, we also establish a local superlinear rate of $\|x^k - x^*\|^2$. Remarkably,  these rates are {\em independent of any condition number, which is  in sharp contrast with virtually all results on distributed Newton-type methods we are aware of.} 

Moreover, we wish to remark that second order methods are not typically analyzed using a Lyapunov style analysis. Indeed, we only know of a couple works that do so.  First, \citet{SN2019} develop stochastic Newton and cubic Newton methods of a different structure and scope from ours. They do not consider distributed optimization nor communication compression. Second, \citet{RBFGS2020} develop a stochastic BFGS method. Again, their method and scope is very different from ours. Hence, {\em our analysis may be of independent interest as it  adds to the arsenal of theoretical tools which could be used in a more precise analysis of other second order methods.}

\subsection{Compressed learning}

Instead of merely relying on sparse updates for the vectors $h_i^k$ (see \eqref{eq:sparse_update}), we provide a more general communication compression strategy which includes sparsification as a special case \citep{Alistarh17}. We do so via the use of a {\em random compression} operator.  We say that a randomized map $\cC:\R^m\to \R^m$ is  a {\em compression operator (compressor)} if  there exists a constant $\omega \geq 0$ such that the following relations hold for all $x\in \R^m$:
\begin{eqnarray} 
\mathbb{E}[\cC(x) ] &=& x \label{eq:unbiased} \\ 
\mathbb{E}\|\cC(x)\|^2 &\leq &(\omega + 1)\|x\|^2.\label{eq:omega-variance}
\end{eqnarray} 
The identity compressor $\cC(x)\equiv x$ satisfies these relations with $\omega=0$. The larger the variance parameter $\omega$ is allowed to be, the easier it can be to construct a compressor $\cC$ for which the value $\cC(x)$ can be encoded using a small number of bits only. We refer the reader to \citep{biased2020} for a list of several compressors and their properties.

\subsection{{\sf NL1} (learning in the $\lambda > 0$ case)} \label{subsec:NL1}

We now consider  the case where all loss functions $\varphi_{ij}$ are convex and $\lambda >0$.
\begin{assumption}\label{as:learning-1}
Each $\varphi_{ij}$ is convex, $\lambda>0$.
\end{assumption}

When combined with  Assumption~\ref{as:general}, Assumption~\ref{as:learning-1} implies that $\varphi_{ij}''(t)\geq 0$ for all $t$, hence $h_{ij}(x) = \varphi''_{ij}(a_i^\top x)\geq 0$ for all $x\in \R^d$. In particular, $h_{ij}(x^*) \geq 0$ for all $i,j$. Since we wish to construct a sequence of vectors $h_i^k = (h_{i1}^k, \dots, h_{im}^k)\in \R^m$ satisfying $h_{ij}^k \to h_{ij}(x^*)$, it makes sense to try to enforce all vectors in this sequence to have {\em nonnegative entries}: $$h_{ij}^k\geq 0.$$

 Since  $\mH^k$ arises as a linear combination of the rank-one matrices $a_{ij}a_{ij}^\top$ (see \eqref{eq:Newton2-xx}), this makes  $\mH^k$ positive semidefinite, which in turn means that the matrix $\mH^k + \lambda \mI$ appearing in the main iteration \eqref{eq:Newton-learn} of {\sf NEWTON-LEARN} is invertible, and hence the iteration is {\em well defined.}\footnote{Positive definiteness of Hessian estimates is enforced in several popular quasi-Newton methods as well; for instance, in the BFGS method \cite{Broyden1967, Fletcher1970, Goldfarb1970, shanno1970conditioning}. However, quasi-Newton methods operate in a markedly different manner, and the way in which positive definiteness is enforced there is also different.}

\subsubsection{The learning iteration and the {\sf NL1} algorithm}

 In particular, in {\sf NEWTON-LEARN} each node $i$  computes the vector $\newalpha_i(x^k)\in \R^m$  of second derivatives defined in \eqref{eq:8f0d8hfd}, and then performs the update
\begin{equation}\label{eq: big78fd_8h9fd}\boxed{\quad h^{k+1}_i = \left[h^k_i + \eta \cC_i^k(\newalpha_i(x^k) - h^k_i) \right]_+, \quad}\end{equation}
where $\eta>0$ is a learning rate, $\cC_i^k$ is a freshly sampled compressor by node $i$ at iteration $k$. By $[\cdot ]_+$ we denote the positive part function applied element-wise, defined for scalars as follows: $[t ]_+ = t$ if $t\geq 0$ and  $[t ]_+ = 0$ otherwise. 

We remark that it is possible to interpret the learning procedure \eqref{eq: big78fd_8h9fd}  as one step of projected stochastic gradient descent (SGD)  applied to a certain quadratic optimization problem whose unique solution is the vector $\newalpha_i(x^k)$.

The {\sf NL1} algorithm (Algorithm~\ref{alg:NL1}) arises as the combination of the Newton-like update \eqref{eq:Newton-learn} (adjusted to take account of the explicit regularizer) and the learning procedure \eqref{eq: big78fd_8h9fd}. It is easy to see that the update rule for $\mH^k$ in {\sf NL1} is designed to ensure that $\mH^k$ remains of the form  $\mH^k = \frac{1}{n}\sum_{i=1}^n \mH^k_i$, where $\mH^k_i = \frac{1}{m} \sum_{j=1}^{m} h^k_{ij} a_{ij}a_{ij}^\top$. The update rule for $x^k$, performed by the server, is identical to \eqref{eq:Newton-learn},  with an extra provision for the regularizer. The vector $x^{k+1}$ is broadcasted to all workers. Let us comment on how the key communication step is implemented. If the server does not have direct access to the training data vectors $\{a_{ij} \}$, we choose Option~1, otherwise we choose Option~2. A key property of {\sf NL1} is that the server is able to maintain  copies of the learning vectors $h_i^k$ {\em without the need for these vectors to be communicated by the workers to the server.} Indeed, provided the workers and the server agree on the same set of initial vectors $h_1^0, \dots, h_n^0$, update \eqref{eq: big78fd_8h9fd} can be independently computed by  the server as well from its memory state $h_i^k$ and the compressed message $\cC_i^k (\newalpha_i(x^k) - h^k_i)$ received from node $i$. This strategy is reminiscent of the way the key step in the first-order method DIANA \citep{DIANA, DIANA2} is executed. In this sense, {\sf NL1} can be seen as arising from a successful marriage of Newton's method and the DIANA trick.

\begin{algorithm}[tb]
	\caption{{\sf NL1: NEWTON-LEARN} ($\lambda>0$ case)}
	\label{alg:NL1}
\begin{algorithmic}
		\STATE {\bfseries Parameters:} learning rate $\eta>0$ 
		\STATE {\bfseries Initialization:}
		$x^0 \in \R^d$; $h^0_1,\dots, h^0_n \in \R^{m}_{+}$; $\mH^0 =  \frac{1}{nm} \sum \limits_{i=1}^n  \sum\limits_{j=1}^{m} h_{ij}^0 a_{ij}a_{ij}^\top\in \R^{d\times d}$
		\FOR{ $k = 0, 1, 2, \dots$}
		\STATE Broadcast $x^k$ to all workers
		\FOR{each node $i = 1, \dots, n$} 
		\STATE Compute local gradient $\nabla f_i(x^k)$ 
		\STATE $h^{k+1}_i = [h^k_i + \eta \cC_i^k (\newalpha_i(x^k) - h^k_i)]_+$ 
		\STATE Send $\nabla f_i(x^k)$ and $\cC_i^k (\newalpha_i(x^k) - h^k_i)$ to server 
		\STATE {\bf Option 1:} Send $\{a_{ij} : h_{ij}^{k+1} - h_{ij}^k \neq 0\}$ to server
		\STATE {\bf Option 2:} Do nothing if server knows $\{a_{ij} : \forall j\}$
		\ENDFOR
		
		\STATE $x^{k+1} = x^k - \left( \mH^k + \lambda \mI \right)^{-1} \left(  \frac{1}{n} \sum\limits_{i=1}^n \nabla f_i(x^k) + \lambda x^k  \right)$
		\STATE $\mH^{k+1} = \mH^k + \frac{1}{nm} \sum \limits_{i=1}^n  \sum\limits_{j=1}^{m} (h_{ij}^{k+1} - h_{ij}^k) a_{ij}a_{ij}^\top $
		\ENDFOR
\end{algorithmic}
\end{algorithm} 

\subsubsection{Theory}

In our theoretical results we rely on the Lyapunov function 
$$
\squeeze \Phi_1^k \eqdef \|x^{k} - x^*\|^2 + \frac{1}{3mn\eta  \nu^2 R^2} {\cal H}^{k}, \qquad {\cal H}^k \eqdef \sum_{i=1}^n \|h_i^k - \newalpha_i(x^*)\|^2. 
$$ Our main theorem follows.

\begin{theorem}[Convergence of {\sf NL1}]\label{th:lambda>0}
Let  Assumptions~\ref{as:general} and  \ref{as:learning-1} hold. Let $\eta\leq \frac{1}{\omega+1}$ and assume that $\|x^k - x^*\|^2 \leq \frac{\lambda^2}{12\nu^2R^6}$ for all $k\geq 0$. Then for Algorithm \ref{alg:NL1} we have the inequalities 
\begin{eqnarray*}
\squeeze
\mathbb{E}[\Phi_1^k] & \leq & \theta_1^k \Phi_1^0,\\
\squeeze  
\mathbb{E} \left[  \frac{\|x^{k+1} - x^*\|^2}{\|x^k - x^*\|^2 }  \right] & \leq &\theta_1^k  \left(  {6\eta} + \frac{1}{2}  \right) \frac{\nu^2 R^6}{\lambda^2} \Phi_1^0, 
\end{eqnarray*}
where $\theta_1 \eqdef    1 - \min \left\{  \frac{\eta}{2}, \frac{5}{8}  \right\} $.
\end{theorem}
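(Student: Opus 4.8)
The plan is to prove two coupled one-step inequalities and then glue them together through the Lyapunov function $\Phi_1^k$. The first inequality bounds $\norm{x^{k+1}-x^*}$ in terms of $\norm{x^k-x^*}$ and the learning error ${\cal H}^k$; the second bounds $\Exp[{\cal H}^{k+1}]$ in terms of ${\cal H}^k$ and $\norm{x^k-x^*}^2$. The weight $\tfrac{1}{3mn\eta\nu^2R^2}$ multiplying ${\cal H}^k$ in $\Phi_1^k$ is chosen precisely so that the two contraction factors coming out of the combination are $1-\tfrac\eta2$ and $1-\tfrac58$.

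\emph{Distance recursion.} Following the proof of Theorem~\ref{th:localquadratic}, the optimality condition $\nabla f(x^*)+\lambda x^*=0$ gives, after substituting $\mH^k=\mH(x^*)+(\mH^k-\mH(x^*))$,
\[
x^{k+1}-x^* = \left(\mH^k+\lambda\mI\right)^{-1}\!\Big[\big(\mH^k-\mH(x^*)\big)(x^k-x^*) + \big(\mH(x^*)(x^k-x^*)-(\nabla f(x^k)-\nabla f(x^*))\big)\Big].
\]
Since $h_{ij}^k\ge 0$, we have $\mH^k\succeq 0$, so $\norm{(\mH^k+\lambda\mI)^{-1}}\le 1/\lambda$. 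The second bracketed term is exactly the quantity estimated in the proof of Theorem~\ref{th:localquadratic} (fundamental theorem of calculus together with \eqref{eq:alphaijL}), hence is $\le\tfrac{\nu R^3}{2}\norm{x^k-x^*}^2$. For the first term, $\mH^k-\mH(x^*)=\tfrac{1}{nm}\sum_{i,j}(h_{ij}^k-\newalpha_{ij}(x^*))a_{ij}a_{ij}^\top$, so by the triangle inequality, $\norm{a_{ij}}\le R$, and Cauchy--Schwarz over the $nm$ summands, $\norm{\mH^k-\mH(x^*)}\le\tfrac{R^2}{\sqrt{nm}}\sqrt{{\cal H}^k}$. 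Therefore
\[
\norm{x^{k+1}-x^*}\le\frac1\lambda\left(\frac{R^2}{\sqrt{nm}}\sqrt{{\cal H}^k}\,\norm{x^k-x^*}+\frac{\nu R^3}{2}\norm{x^k-x^*}^2\right),
\]
and squaring with $(a+b)^2\le 2a^2+2b^2$ bounds $\norm{x^{k+1}-x^*}^2$ by a multiple of ${\cal H}^k\norm{x^k-x^*}^2$ plus a multiple of $\norm{x^k-x^*}^4$.

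\emph{Learning recursion.} The key point here is that $[\cdot]_+$ is the Euclidean projection onto $\R^m_+$ and, by Assumptions~\ref{as:general} and~\ref{as:learning-1}, the target $\newalpha_i(x^*)$ lies in $\R^m_+$; hence projection is nonexpansive towards it, so $\norm{h_i^{k+1}-\newalpha_i(x^*)}^2\le\norm{h_i^k+\eta\cC_i^k(\newalpha_i(x^k)-h_i^k)-\newalpha_i(x^*)}^2$. Taking the conditional expectation over the fresh compressor, expanding the square, and using \eqref{eq:unbiased}, \eqref{eq:omega-variance}, and $\eta(\omega+1)\le 1$ to replace the factor $\eta^2(\omega+1)$ by $\eta$, the inner-product terms cancel and one obtains $\Exp\norm{h_i^{k+1}-\newalpha_i(x^*)}^2\le(1-\eta)\norm{h_i^k-\newalpha_i(x^*)}^2+\eta\norm{\newalpha_i(x^k)-\newalpha_i(x^*)}^2$. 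By \eqref{eq:alphaijL}, $\norm{\newalpha_i(x^k)-\newalpha_i(x^*)}^2\le m\nu^2R^2\norm{x^k-x^*}^2$; summing over $i$ and taking total expectation yields $\Exp[{\cal H}^{k+1}]\le(1-\eta){\cal H}^k+\eta nm\nu^2R^2\norm{x^k-x^*}^2$.

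\emph{Combination.} Invoke the hypothesis $\norm{x^k-x^*}^2\le\tfrac{\lambda^2}{12\nu^2R^6}$ to turn the $\norm{x^k-x^*}^4$ term of the squared distance recursion into $\tfrac1{24}\norm{x^k-x^*}^2$ and the mixed term into $\tfrac\eta2\cdot\tfrac{{\cal H}^k}{3mn\eta\nu^2R^2}$, then add $\tfrac{1}{3mn\eta\nu^2R^2}$ times the learning recursion (whose ${\cal H}^k\to\norm{x^k-x^*}^2$ coupling constant is then exactly $\tfrac13$). The coefficient of $\tfrac{{\cal H}^k}{3mn\eta\nu^2R^2}$ becomes $\tfrac\eta2+(1-\eta)=1-\tfrac\eta2$ and the coefficient of $\norm{x^k-x^*}^2$ becomes $\tfrac1{24}+\tfrac13=\tfrac38=1-\tfrac58$, so $\Exp[\Phi_1^{k+1}]\le\theta_1\Phi_1^k$ and the first claim follows by induction. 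For the superlinear rate, divide the squared distance recursion by $\norm{x^k-x^*}^2$, bound both ${\cal H}^k$ and $\norm{x^k-x^*}^2$ by $\Phi_1^k$ (legitimate since $\Phi_1^k=\norm{x^k-x^*}^2+\tfrac{{\cal H}^k}{3mn\eta\nu^2R^2}$), take expectations, and apply $\Exp[\Phi_1^k]\le\theta_1^k\Phi_1^0$; the constant that emerges is exactly $(6\eta+\tfrac12)\nu^2R^6/\lambda^2$. I expect the main difficulty to be the constant-chasing in this last step: the Lyapunov weight, the $\tfrac1{12}$ in the boundedness hypothesis, and the step $\eta(\omega+1)\le1$ must be dovetailed so the two contraction factors collapse to $1-\min\{\eta/2,5/8\}$. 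The genuinely new ingredients are the ``learning-error plus {\sf NEWTON-STAR} discretization-error'' split in the distance recursion and the projection-nonexpansiveness estimate in the learning recursion.
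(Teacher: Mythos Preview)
Your proposal is correct and follows essentially the same approach as the paper: the learning recursion (projection nonexpansiveness, compressor moments, $\eta(\omega+1)\le1$) is identical to the paper's Lemma~\ref{lm:calHk}, and your distance recursion yields exactly the paper's key inequality $\|x^{k+1}-x^*\|^2\le\frac{2R^4}{nm\lambda^2}{\cal H}^k\|x^k-x^*\|^2+\frac{\nu^2R^6}{2\lambda^2}\|x^k-x^*\|^4$, after which the Lyapunov combination and superlinear step proceed verbatim. The only cosmetic difference is that you split the error at the matrix level as $(\mH^k-\mH(x^*))(x^k-x^*)$ plus the {\sf NEWTON-STAR} remainder and then apply Cauchy--Schwarz, whereas the paper carries out the same split componentwise inside the integral and applies Jensen; the constants coincide.
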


Since the stepsize bound $\eta \leq \frac{1}{\omega+1}$ is independent of the condition number, the linear convergence rates of $\mathbb{E}[\Phi_1^k]$ and $ \mathbb{E} \left[  \frac{\|x^{k+1} - x^*\|^2}{\|x^k - x^*\|^2 }  \right]$ are both {\em independent of the condition number.} Next, we explore under what conditions we can guarantee for all the iterates to stay in a small neighborhood. 

\begin{lemma}\label{lm:initial-1}
Let Assumptions \ref{as:general} and \ref{as:learning-1} hold. Assume $h_{ij}^k$ is a convex combination of $\{  \newalpha_{ij}(x^0), \newalpha_{ij}(x^1), ..., \newalpha_{ij}(x^k)  \}$ for all $i,j$ and $k$. Assume $\|x^0 - x^*\|^2 \leq \frac{\lambda^2}{12\nu^2R^6}$. Then $$
\|x^k - x^*\|^2 \leq \frac{\lambda^2}{12\nu^2R^6} \quad \text{for all} \quad k\geq 0.
$$ 
\end{lemma}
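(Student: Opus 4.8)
The plan is to argue by induction on $k$, the base case $k=0$ being precisely the hypothesis $\|x^0-x^*\|^2\le\frac{\lambda^2}{12\nu^2R^6}$. Assume the bound holds for all indices $0\le\ell\le k$; I will deduce it for $k+1$. Write $\rho\eqdef\frac{\lambda}{2\sqrt 3\,\nu R^3}$, so the statement to be propagated is $\|x^\ell-x^*\|\le\rho$. First note that under Assumptions~\ref{as:general} and \ref{as:learning-1} each $\varphi_{ij}$ is convex, hence $\newalpha_{ij}(x)=\varphi_{ij}''(a_{ij}^\top x)\ge 0$ for every $x$; since, by hypothesis, $h^k_{ij}$ is a convex combination of $\newalpha_{ij}(x^0),\dots,\newalpha_{ij}(x^k)$, we get $h^k_{ij}\ge 0$, so $\mH^k\succeq 0$ and therefore $\norm{(\mH^k+\lambda\mI)^{-1}}\le\tfrac1\lambda$.

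Next, exactly as in the opening of the proof of Theorem~\ref{th:localquadratic}, I use the first-order optimality condition $\nabla f(x^*)+\lambda x^*=0$ to rewrite the {\sf NL1} step \eqref{eq:Newton-learn} as
\[
x^{k+1}-x^* = \left(\mH^k+\lambda\mI\right)^{-1}\left[\mH^k(x^k-x^*)-\left(\nabla f(x^k)-\nabla f(x^*)\right)\right],
\]
then substitute the integral (fundamental theorem of calculus) representation of $\nabla f(x^k)-\nabla f(x^*)$ together with $\mH^k=\frac{1}{nm}\sum_{i,j}h^k_{ij}a_{ij}a_{ij}^\top$, collapsing everything into
\[
x^{k+1}-x^* = \left(\mH^k+\lambda\mI\right)^{-1}\frac{1}{nm}\sum_{i=1}^n\sum_{j=1}^m a_{ij}a_{ij}^\top(x^k-x^*)\left(h^k_{ij}-\int_0^1\newalpha_{ij}\!\left(x^*+\tau(x^k-x^*)\right)d\tau\right).
\]
Taking norms, using $\norm{(\mH^k+\lambda\mI)^{-1}}\le\tfrac1\lambda$, Jensen's inequality for the function $x\mapsto\norm{x}$, the bound $\norm{a_{ij}a_{ij}^\top(x^k-x^*)}\le\norm{a_{ij}}^2\norm{x^k-x^*}$ and $\norm{a_{ij}}\le R$, the estimate reduces to controlling the scalars $\left|h^k_{ij}-\int_0^1\newalpha_{ij}(x^*+\tau(x^k-x^*))\,d\tau\right|$.

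For this, write $h^k_{ij}=\sum_{\ell=0}^k w_\ell\newalpha_{ij}(x^\ell)$ with $w_\ell\ge0$, $\sum_\ell w_\ell=1$, and use $\int_0^1 d\tau=1$ to express the scalar as $\sum_{\ell=0}^k w_\ell\int_0^1\big(\newalpha_{ij}(x^\ell)-\newalpha_{ij}(x^*+\tau(x^k-x^*))\big)d\tau$; this is the step where the convex-combination structure is essential. Applying \eqref{eq:alphaijL} termwise gives $|\newalpha_{ij}(x^\ell)-\newalpha_{ij}(x^*+\tau(x^k-x^*))|\le\nu\norm{a_{ij}}(\norm{x^\ell-x^*}+\tau\norm{x^k-x^*})$; integrating in $\tau$, using the inductive hypothesis $\norm{x^\ell-x^*}\le\rho$ for all $\ell\le k$ and $\sum_\ell w_\ell=1$, we obtain $\big|h^k_{ij}-\int_0^1\newalpha_{ij}(\cdots)d\tau\big|\le\tfrac32\nu\norm{a_{ij}}\rho$. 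Combining the pieces and using $\frac{1}{nm}\sum_{i,j}\norm{a_{ij}}^3\le R^3$, we get $\norm{x^{k+1}-x^*}\le\frac{3\nu R^3\rho}{2\lambda}\norm{x^k-x^*}=\frac{\sqrt3}{4}\norm{x^k-x^*}<\norm{x^k-x^*}\le\rho$, which closes the induction.

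I expect the only mildly delicate point to be the algebraic merger that turns $\mH^k(x^k-x^*)$ and the integral form of $\nabla f(x^k)-\nabla f(x^*)$ into a single sum over $i,j$, followed by the observation that both $h^k_{ij}$ and the integral $\int_0^1\newalpha_{ij}(x^*+\tau(x^k-x^*))\,d\tau$ are convex combinations (discrete, resp.\ continuous) of values of $\newalpha_{ij}$ along the iterates and the segment $[x^*,x^k]$. Once this is in place, the termwise Lipschitz bound \eqref{eq:alphaijL} applies directly and the remainder is routine arithmetic with the explicit constants; everything else (the $\mH^k\succeq 0$ bound, the first-order optimality trick, Jensen's inequality for the norm) mirrors the proof of Theorem~\ref{th:localquadratic}.
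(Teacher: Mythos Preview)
Your proof is correct and follows essentially the same inductive strategy as the paper: both use the convex-combination hypothesis together with the Lipschitz bound \eqref{eq:alphaijL} to control the deviation of $h^k_{ij}$ from the integral term, and then close the induction via the same Newton-step estimate. The only cosmetic difference is that the paper first routes through the squared inequality \eqref{eq:xk+1iter} (splitting via $h_{ij}(x^*)$ and bounding $\cH^k$ separately), whereas you bound $\bigl|h^k_{ij}-\int_0^1\newalpha_{ij}(\cdot)\,d\tau\bigr|$ directly and stay at the norm level, which is slightly more streamlined but not a different idea.
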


It is easy to verify that if we choose $h_{ij}^0 = \newalpha_{ij}(x^0)$ and use the random sparsification compressor and $\eta \leq \frac{1}{\omega + 1}$, then $h_{ij}^k$ is always a convex combination of $\{  \newalpha_{ij}(x^0), \newalpha_{ij}(x^1), ..., \newalpha_{ij}(x^k)  \}$ for $k\geq 0$. Thus, from Lemma \ref{lm:initial-1} we can guarantee that all the iterates stay in the small neighborhood assumed in Theorem \ref{th:lambda>0} as long as the initial point $x^0$ is in it.

\subsection{{\sf NL2} (learning in the $\lambda \geq 0$ case)} \label{subsec:NL2}

In this subsection, we consider the case where $P$ is $\mu$-strongly convex. Note that we do not require the components $f_{ij}$ to be convex. 
\begin{assumption}\label{as:learning-2}
 $P$ is $\mu$-strongly convex, $|h_{ij}^k| \leq \gamma$ for $k\geq 0$. \end{assumption}

\subsubsection{The learning iteration and the {\sf NL2} algorithm}

As in Algorithm \ref{alg:NL1}, we use a sequence of vectors $\{h_i^k\}_{k\geq 0}$ to learn $\newalpha_i(x^*)$. However, this 
time we rely on a different technique for enforcing positive definiteness of the Hessian estimator.  Since $\lambda$ can be zero, our previous technique aimed at forcing the coefficients $h_{ij}^k$ to be nonnegative will not work. So, we give up on this, 
and instead of \eqref{eq: big78fd_8h9fd} we use the simpler update 
\begin{equation}\label{eq:hikgeneral}
\boxed{\quad h^{k+1}_i = h^k_i + \eta \cC_i^k(\newalpha_i(x^k) - h^k_i). \quad}
\end{equation}
In order to guarantee positive definiteness of  the Hessian estimator $\mH^k + \lambda \mI$ we instead rely on the second part of Assumption~\ref{as:learning-2}. Provided that there exists $\gamma>0$ such that $|h_{ij}^k| \leq \gamma$ for all $i,j$, note that $\frac{\newalpha_{ij}(x^k) + 2\gamma}{h_{ij}^k + 2\gamma}$ is always positive. Noticing that each $a_{ij}a_{ij}^\top$ is positive semidefinite and that $\nabla^2 f(x^k)$ can be expressed in the form
$$
\squeeze \frac{1}{nm}\sum \limits_{i=1}^n \sum\limits_{j=1}^m \left( \frac{\newalpha_{ij}(x^k) + 2\gamma}{h_{ij}^k + 2\gamma} \cdot (h_{ij}^k + 2\gamma) - 2\gamma \right) a_{ij}a_{ij}^\top, 
$$
for  $\beta^k \eqdef \max_{i,j} \frac{\newalpha_{ij}(x^k) + 2\gamma}{h_{ij}^k + 2\gamma}$, we get the inequality
$$
 \frac{1}{nm}\sum_{i=1}^n \sum_{j=1}^m \left[ \beta^k (h_{ij}^k + 2\gamma) - 2\gamma \right] a_{ij}a_{ij}^\top - \nabla^2 f(x^k) 
= \frac{1}{nm}\sum_{i=1}^n \sum_{j=1}^m \left[  \beta^k - \frac{\newalpha_{ij}(x^k) + 2\gamma}{h_{ij}^k + 2\gamma} \right] (h_{ij}^k + 2\gamma) a_{ij}a_{ij}^\top 
\succeq \mathbf{0},
$$
where $\mathbf{0}$ is the $d\times d$ zero matrix, and $\mA \succeq \mB$ means $\mA - \mB$ is positive semidefinite. Thus, if we can maintain the Hessian estimator in the form $$\squeeze \mH^k \eqdef \frac{1}{nm}\sum \limits_{i=1}^n \sum \limits_{j=1}^m \left[ \beta^k (h_{ij}^k + 2\gamma) - 2\gamma \right] a_{ij}a_{ij}^\top,$$ then 
$
\mH^k + \lambda \mI \succeq \nabla^2 f(x^k) + \lambda \mI = \nabla^2P(x^k) \succeq \mu \mI,
$
where the last inequality follows from Assumption~\ref{as:learning-2}. To achieve this goal, we use an auxiliary matrix $\mA^k$, and maintain $\mA^k = \frac{1}{nm} \sum_{i=1}^n \sum_{j=1}^{m}(h_{ij}^k + 2\gamma) a_{ij}a_{ij}^\top$, and $\mH^k = \beta^k \mA^k - 2\gamma \cdot\frac{1}{nm} \sum_{i=1}^n  \sum_{j=1}^{m}a_{ij}a_{ij}^\top$. The rest of Algorithm \ref{alg:NL2} is the same as Algorithm \ref{alg:NL1}.

\begin{algorithm}[tb]
	\caption{{\sf NL2: NEWTON-LEARN} (general case)}
	\label{alg:NL2}
\begin{algorithmic}
		\STATE {\bfseries Parameters:} $\eta>0$; $\gamma>0$ 
		\STATE {\bfseries Initialization:}
		$x^0 \in \R^d$; $h^0_i \in \R^{m}_{+}$; $\mA^0 = \frac{1}{nm} \sum \limits_{i=1}^n \sum\limits_{j=1}^{m} (h_{ij}^0 + 2\gamma)a_{ij}a_{ij}^\top \in \R^{d\times d}$
		\FOR{ $k = 0, 1, 2, \dots$}
		\STATE broadcast $x^k$ to all workers 
		\FOR{ $i = 1, \dots, n$} 
		\STATE  Compute local gradient $\nabla f_i(x^k)$ 
		\STATE $h^{k+1}_i = h^k_i + \eta \cC_i^k(\newalpha_i(x^k) - h^k_i)$ 
		\STATE $\beta_i^k = \max_{j\in [m]} \frac{\newalpha_{ij}(x^k) + 2\gamma}{h_{ij}^k + 2\gamma}$
		\STATE Send $\nabla f_i(x^k)$, $\beta_i^k$, and $\eta \cC_i^k(\newalpha_i(x^k) - h^k_i)$ to server 
		\STATE {\bf Option 1:} Send $\{a_{ij} : h_{ij}^{k+1} - h_{ij}^k \neq 0\}$ to server
		\STATE {\bf Option 2:} Do nothing if server knows $\{a_{ij} : \forall j\}$
		\ENDFOR
		\STATE $\beta^k = \max_{i} \{  \beta_i^k  \}$
		\STATE $\mH^k = \beta^k \mA^k - 2\gamma \cdot\frac{1}{nm} \sum \limits_{i=1}^n  \sum \limits_{j=1}^{m}a_{ij}a_{ij}^\top \in \R^{d\times d}$
		\STATE $x^{k+1} = x^k - \left( \mH^k + \lambda \mI \right)^{-1} \left(  \frac{1}{n} \sum \limits_{i=1}^n \nabla f_i(x^k) + \lambda x^k  \right)$
		\STATE $\mA^{k+1} = \mA^k + \frac{1}{nm} \sum \limits_{i=1}^n  \sum \limits_{j=1}^{m} (\eta \cC_i^k(\newalpha_i(x^k) - h^k_i))_j a_{ij}a_{ij}^\top $
		\ENDFOR
	\end{algorithmic}
\end{algorithm} 

\subsubsection{Theory}

Our analysis of {\sf NL2} relies on the  Lyapunov function 
$$\squeeze
\Phi_2^k \eqdef \|x^{k} - x^*\|^2 + \frac{1}{3mn\eta  \nu^2 R^2} {\cal H}^{k}, \qquad {\cal H}^k \eqdef  \sum_{i=1}^n \|h_i^k - \newalpha_i(x^*)\|^2.
$$
We now present our main convergence result for {\sf NL2}. 

\begin{theorem}[Convergence of {\sf NL2}]\label{th:general}
	Let Assumptions \ref{as:general} and \ref{as:learning-2} hold. Assume $\eta\leq \frac{1}{\omega+1}$ and $\|x^k - x^*\|^2 \leq \frac{\mu^2}{432mn \nu^2R^6}$ for all $k\geq 0$. Then for Algorithm \ref{alg:NL2} we have the inequalities 
\begin{eqnarray*}
\squeeze
	\mathbb{E}[\Phi_2^k] &\leq &  \theta_2^k \Phi_2^0 ,\notag\\ 
	\squeeze  \mathbb{E} \left[  \frac{\|x^{k+1} - x^*\|^2}{\|x^k - x^*\|^2 }  \right] 
		 &\leq & \theta_2^k  \left(  {3mn\eta} + 1  \right) \frac{72\nu^2 R^6}{\mu^2} \Phi_2^0, 
\end{eqnarray*}
where $\theta_2 \eqdef  1 - \min \left\{  \frac{\eta}{2}, \frac{1}{2}  \right\}  $.
\end{theorem}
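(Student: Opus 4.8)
The plan is to run a one-step Lyapunov contraction argument. Let $\mathcal{F}^k$ denote the information available at the start of iteration $k$ (the $\sigma$-algebra generated by $x^0$, $h^0_1,\dots,h^0_n$ and all compressors sampled before iteration $k$); then $x^k$, $h^k_i$, $\mH^k$, $\mA^k$, $\beta^k$, and even $x^{k+1}$ are $\mathcal{F}^k$-measurable, whereas $h^{k+1}_i$ is not. I would prove the conditional bound $\mathbb{E}\bigl[\Phi_2^{k+1}\mid\mathcal{F}^k\bigr]\le\theta_2\,\Phi_2^k$, take total expectations, and iterate to get $\mathbb{E}[\Phi_2^k]\le\theta_2^k\Phi_2^0$.

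\emph{Step 1 (the Newton step).} Proceeding as in the proof of Theorem~\ref{th:localquadratic}, I combine the optimality condition $\nabla f(x^*)+\lambda x^*=0$ with the integral representation of $\nabla P(x^k)-\nabla P(x^*)$ to rewrite \eqref{eq:Newton-learn} as
\[
x^{k+1}-x^*=(\mH^k+\lambda\mI)^{-1}\Bigl[\,\mH^k-\textstyle\int_0^1\mH\bigl(x^*+\tau(x^k-x^*)\bigr)\,d\tau\,\Bigr](x^k-x^*).
\]
The crucial structural fact, already established in the discussion preceding Algorithm~\ref{alg:NL2} via the definition of $\beta^k$ and the bound $|h^k_{ij}|\le\gamma$ of Assumption~\ref{as:learning-2}, is $\mH^k+\lambda\mI\succeq\nabla^2P(x^k)\succeq\mu\mI$, so $\norm{(\mH^k+\lambda\mI)^{-1}}\le1/\mu$. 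I then bound the bracketed matrix by the triangle inequality, splitting off $\mH(x^*)$: the averaged-Hessian remainder is $O(\nu R^3\norm{x^k-x^*})$ by \eqref{eq:alphaijL}, while for $\mH^k-\mH(x^*)$ I use the identity $\mH^k-\frac1{nm}\sum_{i,j}h^k_{ij}a_{ij}a_{ij}^\top=(\beta^k-1)\mA^k$ together with $|\beta^k-1|\le\gamma^{-1}\max_{i,j}|\newalpha_{ij}(x^k)-h^k_{ij}|\le\gamma^{-1}\bigl(\nu R\norm{x^k-x^*}+\sqrt{\mathcal{H}^k}\bigr)$ (valid since $h^k_{ij}+2\gamma\ge\gamma>0$), $\norm{\mA^k}\le3\gamma R^2$, and $\norm{\frac1{nm}\sum_{i,j}(h^k_{ij}-\newalpha_{ij}(x^*))a_{ij}a_{ij}^\top}\le\frac{R^2}{\sqrt{mn}}\sqrt{\mathcal{H}^k}$ (Cauchy--Schwarz). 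Collecting and squaring gives a bound of the form $\norm{x^{k+1}-x^*}^2\le\frac{c\,R^4}{\mu^2}\bigl(\nu^2R^2\norm{x^k-x^*}^2+\mathcal{H}^k\bigr)\norm{x^k-x^*}^2$ for an absolute constant $c$.

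\emph{Step 2 (the learning step and the combination).} Writing $h^{k+1}_i-\newalpha_i(x^*)=(h^k_i-\newalpha_i(x^*))+\eta\,\cC^k_i(\newalpha_i(x^k)-h^k_i)$ and taking $\mathbb{E}[\cdot\mid\mathcal{F}^k]$, the unbiasedness \eqref{eq:unbiased} and variance bound \eqref{eq:omega-variance} together with $\eta\le\frac1{\omega+1}$ (hence $\eta^2(\omega+1)\le\eta$) make the cross terms cancel; summing over $i$ and using $\norm{\newalpha_i(x^k)-\newalpha_i(x^*)}^2\le m\nu^2R^2\norm{x^k-x^*}^2$ from \eqref{eq:alphaijL} yields
\[
\mathbb{E}\bigl[\mathcal{H}^{k+1}\mid\mathcal{F}^k\bigr]\le(1-\eta)\mathcal{H}^k+\eta\,mn\nu^2R^2\norm{x^k-x^*}^2.
\]
Adding Step 1 to $\tfrac{1}{3mn\eta\nu^2R^2}$ times this inequality and invoking $\norm{x^k-x^*}^2\le\tfrac{\mu^2}{432mn\nu^2R^6}$ to absorb the quartic term and the $\mathcal{H}^k\norm{x^k-x^*}^2$ term of Step 1 into a small multiple of $\norm{x^k-x^*}^2$ and of $\tfrac{1}{3mn\eta\nu^2R^2}\mathcal{H}^k$ respectively, the coefficient of $\norm{x^k-x^*}^2$ is at most $\tfrac13+(\text{small})$ and that of $\tfrac{1}{3mn\eta\nu^2R^2}\mathcal{H}^k$ at most $1-\eta+(\text{small})$; after tracking the constants both are $\le\theta_2=1-\min\{\eta/2,1/2\}$, which gives the one-step contraction and hence $\mathbb{E}[\Phi_2^k]\le\theta_2^k\Phi_2^0$. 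For the superlinear estimate, divide the Step-1 bound by $\norm{x^k-x^*}^2$, replace $\norm{x^k-x^*}^2\le\Phi_2^k$ and $\mathcal{H}^k\le3mn\eta\nu^2R^2\Phi_2^k$ (from the definition of $\Phi_2^k$), and take expectations using the bound just proved.

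\emph{Main obstacle.} The delicate step, and the one departing from the analysis of {\sf NL1}, is Step 1: here the Hessian estimate carries the multiplicative inflation factor $\beta^k$, so one must quantitatively control $|\beta^k-1|$, which is exactly where the strict lower bound $h^k_{ij}+2\gamma\ge\gamma$ (coming from $|h^k_{ij}|\le\gamma$) and the smallness of $\norm{x^k-x^*}$ and of $\mathcal{H}^k$ all enter. Carrying the absolute constants carefully through Steps~1--2 is what pins down the precise neighbourhood radius $\tfrac{\mu^2}{432mn\nu^2R^6}$ and the factor $72$ appearing in the superlinear rate.
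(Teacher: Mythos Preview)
Your proposal is correct and follows essentially the same route as the paper: the recursion for $\mathcal{H}^k$ (your Step~2) is exactly Lemma~\ref{lm:calHk-2}, and the Lyapunov combination with the weight $\tfrac{1}{3mn\eta\nu^2R^2}$ is identical. The only noticeable difference is the organisation of Step~1. The paper does not split at $\mH(x^*)$; instead it works entrywise: writing the coefficient in $\mH^k$ as $\beta^k(h_{ij}^k+2\gamma)-2\gamma$, it factors out $(h_{ij}^k+2\gamma)$ from each summand and bounds
\[
\left|\beta^k-\int_0^1 \frac{\newalpha_{ij}(x^*+\tau(x^k-x^*))+2\gamma}{h_{ij}^k+2\gamma}\,d\tau\right|
\;\le\; \frac{2\nu R}{\gamma}\norm{x^k-x^*}+\frac{2}{\gamma}\max_{i,j}|h_{ij}^k-\newalpha_{ij}(x^*)|,
\]
then uses $|h_{ij}^k+2\gamma|\le 3\gamma$ and $\max_{i,j}|\cdot|^2\le\mathcal H^k$ to arrive directly at $\norm{x^{k+1}-x^*}^2\le \tfrac{72R^4}{\mu^2}\mathcal H^k\norm{x^k-x^*}^2+\tfrac{72\nu^2R^6}{\mu^2}\norm{x^k-x^*}^4$. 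Your matrix-level identity $\mH^k-\tfrac1{nm}\sum h_{ij}^ka_{ij}a_{ij}^\top=(\beta^k-1)\mA^k$ followed by $\norm{\mA^k}\le 3\gamma R^2$ and the Cauchy--Schwarz bound is an equivalent (and slightly tighter) way to reach the same quadratic inequality, and in fact yields constants no larger than the paper's $72$, so the stated inequalities follow.
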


As before, we give sufficient conditions guaranteeing that the iterates stay in a small neighborhood of the optimum.
\begin{lemma}\label{lm:initial-2}
	Let Assumptions \ref{as:general} and \ref{as:learning-2} hold. Assume $h_{ij}^k$ is a convex combination of $\{  \newalpha_{ij}(x^0), \newalpha_{ij}(x^1), ..., \newalpha_{ij}(x^k)  \}$ for all $i,j$ and $k$. Assume $\|x^0 - x^*\|^2 \leq \frac{\mu^2}{432mn \nu^2R^6}$. Then 	$$
	\|x^k - x^*\|^2 \leq \frac{\mu^2}{432mn \nu^2R^6} \quad \text{for all} \quad k\geq 0. $$
\end{lemma}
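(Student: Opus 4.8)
The plan is to argue by strong induction on $k$, following the template of Lemma~\ref{lm:initial-1}. Write $D\eqdef\frac{\mu^2}{432mn\nu^2R^6}$. The base case $k=0$ is exactly the hypothesis $\|x^0-x^*\|^2\le D$. For the inductive step, assume $\|x^\ell-x^*\|^2\le D$ for all $\ell=0,1,\dots,k$ and prove $\|x^{k+1}-x^*\|^2\le D$. Two ingredients are needed: (i) a bound on the learning error $\mathcal{H}^k\eqdef\sum_i\|h_i^k-\newalpha_i(x^*)\|^2$ in terms of $\max_{\ell\le k}\|x^\ell-x^*\|^2$, and (ii) a deterministic one-step bound on $\|x^{k+1}-x^*\|$ of the kind established inside the proof of Theorem~\ref{th:general}.

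For (i): by hypothesis $h_{ij}^k=\sum_{\ell=0}^k\mu_\ell\newalpha_{ij}(x^\ell)$ for some weights $\mu_\ell\ge0$ with $\sum_\ell\mu_\ell=1$, so by the triangle inequality, \eqref{eq:alphaijL} and $\|a_{ij}\|\le R$ we get $|h_{ij}^k-\newalpha_{ij}(x^*)|\le\sum_\ell\mu_\ell|\newalpha_{ij}(x^\ell)-\newalpha_{ij}(x^*)|\le\nu R\max_{\ell\le k}\|x^\ell-x^*\|$. Squaring and summing over the $nm$ index pairs gives $\mathcal{H}^k\le nm\nu^2R^2\max_{\ell\le k}\|x^\ell-x^*\|^2\le nm\nu^2R^2D$ by the inductive hypothesis.

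For (ii): repeating the chain of estimates in the proof of Theorem~\ref{th:localquadratic} with $\mH^k$ in place of $\mH(x^*)$, and using $\mH^k+\lambda\mI\succeq\nabla^2P(x^k)\succeq\mu\mI$ (the positive-definiteness established in Section~\ref{subsec:NL2}), $\nabla P(x^*)=0$, the fundamental theorem of calculus, and the fact that $\nabla^2 f$ is $\nu R^3$-Lipschitz (which follows from \eqref{eq:alphaijL} and $\mH_{ij}(x)=\newalpha_{ij}(x)a_{ij}a_{ij}^\top$), one obtains
$$\|x^{k+1}-x^*\|\le\frac1\mu\Big(\frac{\nu R^3}{2}\|x^k-x^*\|^2+\big\|\mH^k-\mH(x^k)\big\|\cdot\|x^k-x^*\|\Big).$$
To control $\|\mH^k-\mH(x^k)\|$ I would use the identity $\beta^k(h_{ij}^k+2\gamma)-2\gamma-\newalpha_{ij}(x^k)=(\beta^k-1)(h_{ij}^k+2\gamma)+(h_{ij}^k-\newalpha_{ij}(x^k))$, together with $|h_{ij}^k|\le\gamma$ (hence $h_{ij}^k+2\gamma\in[\gamma,3\gamma]$), $|\newalpha_{ij}(x^k)|\le\gamma$, and the definition $\beta^k=\max_{ij}\frac{\newalpha_{ij}(x^k)+2\gamma}{h_{ij}^k+2\gamma}$, which yield $|\beta^k-1|\le\frac1\gamma\max_{ij}|h_{ij}^k-\newalpha_{ij}(x^k)|$; combining this with $|h_{ij}^k-\newalpha_{ij}(x^k)|\le|h_{ij}^k-\newalpha_{ij}(x^*)|+|\newalpha_{ij}(x^*)-\newalpha_{ij}(x^k)|$, part (i), and \eqref{eq:alphaijL} bounds $\|\mH^k-\mH(x^k)\|$ by a fixed multiple of $\sqrt{nm}\,\nu R^3\sqrt D$ (equivalently, in terms of $\sqrt{\mathcal{H}^k}$ and $\|x^k-x^*\|$, as in Theorem~\ref{th:general}). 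Plugging this and $\|x^k-x^*\|\le\sqrt D$ into the displayed inequality gives $\|x^{k+1}-x^*\|\le c\,\frac{\sqrt{nm}\,\nu R^3\sqrt D}{\mu}\|x^k-x^*\|$ for an absolute constant $c$, and the factor in front of $\|x^k-x^*\|$ is $\le1$ precisely because of the choice $D=\frac{\mu^2}{432mn\nu^2R^6}$; hence $\|x^{k+1}-x^*\|^2\le\|x^k-x^*\|^2\le D$, closing the induction.

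The main obstacle is the Hessian-mismatch estimate in (ii): unlike in {\sf NL1}, the {\sf NL2} estimator is the inflated matrix $\mH^k=\frac1{nm}\sum_{ij}\big(\beta^k(h_{ij}^k+2\gamma)-2\gamma\big)a_{ij}a_{ij}^\top$ rather than $\frac1{nm}\sum_{ij}h_{ij}^k a_{ij}a_{ij}^\top$, so one must show the multiplicative correction $\beta^k$ stays within $O\big(\tfrac1\gamma\max_{ij}|h_{ij}^k-\newalpha_{ij}(x^k)|\big)$ of $1$ and carry all constants through carefully so that the final prefactor closes at exactly the threshold $D$; this bookkeeping is where the specific value $432$ and the $mn$ dependence come from. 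Everything else is routine and mirrors the proof of Lemma~\ref{lm:initial-1}.
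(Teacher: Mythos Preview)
Your plan is correct and tracks the paper's proof closely: both argue by strong induction, both use the convex-combination hypothesis together with \eqref{eq:alphaijL} to bound the learning error entrywise, and both feed this into a deterministic one-step contraction. The only substantive difference is in how the one-step bound is organized. The paper does not rederive anything: it simply invokes the intermediate inequality \eqref{eq:forinitiallm} from the proof of Theorem~\ref{th:general},
\[
\|x^{k+1}-x^*\|^2 \;\le\; \frac{72R^4}{\mu^2}\,\max_{i,j}|h_{ij}^k-\newalpha_{ij}(x^*)|^2\,\|x^k-x^*\|^2 \;+\; \frac{72\nu^2R^6}{\mu^2}\,\|x^k-x^*\|^4,
\]
then plugs in $\max_{i,j}|h_{ij}^k-\newalpha_{ij}(x^*)|^2\le \nu^2R^2D$ (your part (i) in its $\max$ form) and $\|x^k-x^*\|^2\le D$ to obtain $\|x^{K+1}-x^*\|^2\le \tfrac{1}{3mn}D\le D$. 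Your route instead splits through $\mH(x^k)$ and bounds $\|\mH^k-\mH(x^k)\|$ separately; this is a perfectly valid alternative and leads to the same conclusion, at the cost of having to redo the $\beta^k$ bookkeeping that the paper already absorbed into \eqref{eq:betak-2}--\eqref{eq:forinitiallm}.

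One small correction: the $\sqrt{nm}$ factor you introduce is not actually needed. Your own part (i) already gives the \emph{pointwise} bound $|h_{ij}^k-\newalpha_{ij}(x^*)|\le \nu R\sqrt{D}$, so $\max_{i,j}|h_{ij}^k-\newalpha_{ij}(x^k)|\le 2\nu R\sqrt{D}$ and hence $\|\mH^k-\mH(x^k)\|\le c\,\nu R^3\sqrt{D}$ with an absolute constant, no $\sqrt{nm}$. The induction therefore closes at the threshold $D=\tfrac{\mu^2}{432mn\nu^2R^6}$ with lots of room to spare; the $mn$ in the denominator is inherited from the requirements of Theorem~\ref{th:general} (where one passes from $\max_{i,j}$ to $\mathcal H^k$), not from anything intrinsic to this lemma. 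So your closing remark that ``this is where the specific value $432$ and the $mn$ dependence come from'' slightly misattributes the origin of those constants, but the argument itself is sound.
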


If we choose $h_{ij}^0 = \newalpha_{ij}(x^0)$, use a random  compressor with variance $\omega$, and choose stepsize $\eta \leq \frac{1}{\omega + 1}$, then $h_{ij}^k$ is a convex combination of $\{  \newalpha_{ij}(x^0), \newalpha_{ij}(x^1), ..., \newalpha_{ij}(x^k)  \}$ for all $k\geq 0$. Thus, via Lemma \ref{lm:initial-2} we can guarantee all the iterates to be in the small neighborhood required by Theorem \ref{th:general} as long as the initial point $x^0$ is in it.

\section{{\sf CUBIC-NEWTON-LEARN}: Global Convergence Theory via Cubic Regularization} \label{sec:CUBIC-NEWTON-LEARN}

In this section, we develop the {\sf CUBIC-NEWTON-LEARN} method which can obtain {\em global} convergence and superlinear convergence.  We make the following assumption throughout this section.

\begin{assumption}\label{as:cubic}
	Assume $|h_{ij}^k| \leq \gamma$ for $k\geq 0$, and $\|a_{ij}\| \leq R$ for all $i,j$. Assume $R_x \eqdef \sup_{x\in \R^d} \{  \|x-x^*\| : P(x) \leq P(x^0)  \} < + \infty$. 
\end{assumption}

Recall that $\mH(x)$ is the Hessian of $f$ at point $x$. For any $x, y \in \R^d$, we have 
\begin{eqnarray*}
\|\mH(x) - \mH(y)\|	&= & \left\| \frac{1}{n} \sum \limits_{i=1}^n \frac{1}{m} \sum\limits_{j=1}^{m}( \newalpha_{ij}(x) - \newalpha_{ij}(y) )a_{ij}a_{ij}^\top  \right\| \\ 
&\leq & \frac{1}{n} \sum\limits_{i=1}^n \frac{1}{m} \sum\limits_{j=1}^{m} \|a_{ij}\|^2 |\newalpha_{ij}(x) - \newalpha_{ij}(y)| \\ 
&\overset{(\ref{eq:alphaijL})}{\leq} & \frac{1}{n} \sum\limits_{i=1}^n \frac{1}{m} \sum\limits_{j=1}^{m} \nu \|a_{ij}\|^3 \|x-y\| \quad \leq \quad \nu R^3 \|x-y\|. 
\end{eqnarray*}

Let $M \eqdef \nu R^3$. Then $f$ has $M$-Lipschitz continuous Hessian, and so does $P$. 

\subsection{{\sf CNL}: the algorithm}

  At a high level, our {\sf CNL} method (Algorithm \ref{alg:CNL}) can be seen as the ``combination'' of  Algorithm \ref{alg:NL2} ({\sf NL2}) and the cubic Newton method of \citet{Griewank-cubic-1981, PN2006-cubic}. We use the same learning procedure for $h_i^k$, and the same construction for obtaining a Hessian estimator $\mH^k$ for $f$ as those used in Algorithm \ref{alg:NL2}. In particular, since $|h_{ij}^k| \leq \gamma$, same as Algorithm \ref{alg:NL2}, we have 
$$
\mA^k = \frac{1}{nm} \sum_{i=1}^n \sum_{j=1}^{m}(h_{ij}^k + 2\gamma) a_{ij}a_{ij}^\top
$$ 
and  $$\mH^k = \max_{ij} \frac{\newalpha_{ij}(x^k) + 2\gamma}{h_{ij}^k + 2\gamma} \frac{1}{n} \sum_{i=1}^n \frac{1}{m}\sum_{j=1}^{m}(h_{ij}^k + 2\gamma) a_{ij}a_{ij}^\top - 2\gamma \cdot \frac{1}{n} \sum_{i=1}^n \frac{1}{m} \sum_{j=1}^{m}a_{ij}a_{ij}^\top \succeq \nabla^2 f(x^k).$$ Thus, $\mH^k + \lambda \mI \succeq \nabla^2 f(x^k) + \lambda \mI = \nabla^2 P(x^k)$. If we let $$\mH_i^k \eqdef \beta^k \frac{1}{m}\sum_{j=1}^{m}(h_{ij}^k + 2\gamma) a_{ij}a_{ij}^\top - 2\gamma \cdot \frac{1}{m} \sum_{j=1}^{m}a_{ij}a_{ij}^\top,$$ then $\mH^k = \frac{1}{n} \sum_{i=1}^n \mH^k_i$.  Since $P$ has $M$-Lipschitz continuous Hessian, we have $$P(x^k+s) \leq P(x^k) + T(x^k, s), \qquad \text{for all} \qquad x\in \R^d,$$ where 
$$
T(x^k, s) \eqdef \langle \nabla f(x^k) + \lambda x^k, s \rangle + \frac{1}{2} \langle (\mH^k + \lambda \mI)s, s \rangle + \frac{M}{6}\|s\|^3. 
$$
Finally, the search direction $s^k$ is obtained by minimizing $T(x^k, s)$. This subproblem can be solved by computing the eigenvalue decomposition and then solving a one-dimensional nonlinear equation \citep{hanzely2020stochastic,gould2010solving}. The details can be found in Section \ref{sec:subproblem}.

\begin{algorithm}[tb]
	\caption{{\sf CNL: CUBIC-NEWTON-LEARN}}
	\label{alg:CNL}
	\begin{algorithmic}
		\STATE {\bfseries Parameters:} $\eta>0$; $\gamma>0$
		\STATE {\bfseries Initialization:}
		$x^0 \in \R^d$; $h^0_i \in \R^{m}$; $\mA^0 = \frac{1}{nm} \sum \limits_{i=1}^n  \sum\limits_{j=1}^{m} (h_{ij}^0 + 2\gamma)a_{ij}a_{ij}^\top \in \R^{d\times d}$
		\FOR{ $k = 0, 1, 2, \dots$}
		\STATE broadcast $x^k$ to all workers 
		\FOR{ $i = 1, \dots, n$} 
		\STATE Compute local gradient $\nabla f_i(x^k)$ 
		\STATE Update the vector of coefficients $h^{k+1}_i = h^k_i + \eta \cC_i^k(\newalpha_i(x^k) - h^k_i)$ 
		\STATE $\beta_i^k = \max_{j\in [m]} \frac{\newalpha_{ij}(x^k) + 2\gamma}{h_{ij}^k + 2\gamma}$
		\STATE Send $\nabla f_i(x^k)$, $\beta_i^k$, and $\cC_i^k(\newalpha_i(x^k) - h^k_i)$ to server 
		\STATE {\bf Option 1:} Send $\{a_{ij} : h_{ij}^{k+1} - h_{ij}^k \neq 0\}$ to server
		\STATE {\bf Option 2:} Do nothing if server knows $\{a_{ij} : \forall j\}$
		\ENDFOR
		\STATE $\beta^k = \max_{i} \{  \beta_i^k  \}$
		\STATE $\mH^k = \beta^k \mA^k - 2\gamma \cdot \frac{1}{nm} \sum\limits_{i=1}^n  \sum\limits_{j=1}^{m}a_{ij}a_{ij}^\top \in \R^{d\times d}$
		\STATE $s^k = {\rm argmin}_{s\in \R^d} T(x^k, s)$
		\STATE $x^{k+1} = x^k + s^k$
		\STATE $\mA^{k+1} = \mA^k + \frac{1}{nm} \sum \limits_{i=1}^n \sum\limits_{j=1}^{m} (\eta \cC_i^k(\newalpha_i(x^k) - h^k_i))_j a_{ij}a_{ij}^\top $
		\ENDFOR
	\end{algorithmic}
\end{algorithm}

\subsection{Global convergence}

The proof of global convergence of {\sf CUBIC-NEWTON-LEARN} was inspired by that in the stochastic subspace cubic Newton method of \citet{hanzely2020stochastic}. First, we need the following technical lemma, which show the progress for each step of Algorithm \ref{alg:CNL}. 

\begin{lemma}\label{lm:global}
	Let Assumption \ref{as:general} and Assumption \ref{as:cubic} hold. Then for all $k\geq 0$ and any $y \in \R^d$, we have 
	$$
	P(x^{k+1}) \leq P(y) + 9\gamma R^2 \|y-x^k\|^2 + \frac{M}{3}\|y-x^k\|^3. 
	$$
\end{lemma}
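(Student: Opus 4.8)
The plan is to reproduce the one-step descent argument of cubically regularized Newton, using three ingredients already available: (i) $P$ has $M$-Lipschitz Hessian with $M=\nu R^3$; (ii) $\mH^k+\lambda\mI\succeq\nabla^2P(x^k)$, so $T(x^k,\cdot)$ is a genuine upper model, $P(x^k+s)\le P(x^k)+T(x^k,s)$ for all $s$; and (iii) $\mH^k$ admits a uniform upper bound in the L\"owner order. First I would reduce to an arbitrary comparison point: since $x^{k+1}=x^k+s^k$ with $s^k=\operatorname{argmin}_{s\in\R^d}T(x^k,s)$, the upper model together with optimality of $s^k$ gives, for every $y\in\R^d$,
\[
P(x^{k+1})\ \le\ P(x^k)+T(x^k,s^k)\ \le\ P(x^k)+T(x^k,\,y-x^k).
\]

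Next I would peel off $P(y)$. Writing $s:=y-x^k$ and using $\nabla P(x^k)=\nabla f(x^k)+\lambda x^k$, split the model as $T(x^k,s)=\bigl[\langle\nabla f(x^k),s\rangle+\tfrac12\langle\mH^k s,s\rangle\bigr]+\bigl[\lambda\langle x^k,s\rangle+\tfrac\lambda2\|s\|^2\bigr]+\tfrac{M}{6}\|s\|^3$. Convexity of $f$ gives $\langle\nabla f(x^k),s\rangle\le f(y)-f(x^k)$, and the elementary identity $\lambda\langle x^k,s\rangle+\tfrac\lambda2\|s\|^2=\tfrac\lambda2\|y\|^2-\tfrac\lambda2\|x^k\|^2$ disposes of the regularizer; the two brackets together are therefore at most $P(y)-P(x^k)+\tfrac12\langle\mH^k s,s\rangle$. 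Combined with the previous display,
\[
P(x^{k+1})\ \le\ P(y)+\tfrac12\langle\mH^k(y-x^k),\,y-x^k\rangle+\tfrac{M}{6}\|y-x^k\|^3 .
\]

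It then remains to show $\mH^k\preceq 9\gamma R^2\mI$, which yields $\tfrac12\langle\mH^k s,s\rangle\le\tfrac92\gamma R^2\|s\|^2\le 9\gamma R^2\|s\|^2$ and, using $\tfrac{M}{6}\le\tfrac{M}{3}$, exactly the claimed bound. Assumption~\ref{as:cubic} gives $h_{ij}^k+2\gamma\in[\gamma,3\gamma]$; Assumption~\ref{as:general} gives $|\newalpha_{ij}(x^k)|=|\varphi''_{ij}(a_{ij}^\top x^k)|\le\gamma$, hence $\newalpha_{ij}(x^k)+2\gamma\in[\gamma,3\gamma]$ and $\beta^k=\max_{ij}\tfrac{\newalpha_{ij}(x^k)+2\gamma}{h_{ij}^k+2\gamma}\le 3$. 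Since every $a_{ij}a_{ij}^\top$ is positive semidefinite and $\tfrac{1}{nm}\sum_{i=1}^n\sum_{j=1}^m a_{ij}a_{ij}^\top\preceq R^2\mI$ (as $a_{ij}a_{ij}^\top\preceq\|a_{ij}\|^2\mI\preceq R^2\mI$),
\[
\mH^k=\beta^k\mA^k-2\gamma\cdot\tfrac{1}{nm}\textstyle\sum_{i=1}^n\sum_{j=1}^m a_{ij}a_{ij}^\top\ \preceq\ \beta^k\mA^k\ \preceq\ 3\cdot 3\gamma\cdot\tfrac{1}{nm}\textstyle\sum_{i=1}^n\sum_{j=1}^m a_{ij}a_{ij}^\top\ \preceq\ 9\gamma R^2\mI .
\]

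The only genuinely non-routine step is this uniform spectral bound $\mH^k\preceq 9\gamma R^2\mI$: everything hinges on correctly unpacking the definitions of $\mH^k$, $\mA^k$, and $\beta^k$ and on invoking boundedness of $h_{ij}^k$, of $\varphi''_{ij}$, and of $\|a_{ij}\|$; once that is in hand, the rest is the textbook cubic-Newton descent lemma (in the spirit of \citet{hanzely2020stochastic}). The remaining care points are purely bookkeeping: the $\lambda$-identity in the second step, and the observation that here we only need convexity (not strong convexity) of $P$.
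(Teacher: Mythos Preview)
Your argument is correct but takes a different route from the paper's. Both proofs start from the cubic upper model $P(x^{k+1})\le P(x^k)+T(x^k,s^k)\le P(x^k)+T(x^k,y-x^k)$. From there you split off $\langle\nabla f(x^k),s\rangle$ and invoke convexity of $f$ to reach $P(y)$ directly, then bound $\mH^k\preceq 9\gamma R^2\mI$ absolutely; this leaves you with the tighter constants $\tfrac{9}{2}\gamma R^2$ and $\tfrac{M}{6}$, which you then weaken to match the statement. The paper instead adds and subtracts $\nabla^2 P(x^k)$ inside the quadratic term, applies the Lipschitz-Hessian Taylor inequality $\langle\nabla P(x^k),s\rangle+\tfrac12\langle\nabla^2 P(x^k)s,s\rangle\le P(y)-P(x^k)+\tfrac{M}{6}\|s\|^3$ (Lemma~1 of \citet{PN2006-cubic}) to reach $P(y)$, and bounds the correction $\|\mH^k-\nabla^2 f(x^k)\|\le 18\gamma R^2$; the two $\tfrac{M}{6}$ contributions then combine to $\tfrac{M}{3}$, and $\tfrac12\cdot 18\gamma R^2=9\gamma R^2$ appears exactly. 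Your route is more elementary (no second-order Taylor remainder is needed) and produces sharper intermediate constants, but it relies on convexity of $f$, which is part of the paper's standing problem setup though not among the hypotheses listed in the lemma; the paper's route uses only the $M$-Lipschitz Hessian and would go through even without convexity.
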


We first consider the general convex case. 

\begin{theorem}[Global convergence of {\sf CNL}: convex case]\label{th:concubic}
	Let Assumption \ref{as:general} and Assumption \ref{as:cubic} hold and further assume that $P$ is convex. Then for all $k\geq 0$ we have 
	\[
	P(x^k) - P(x^*) 
	\leq \frac{81\gamma R^2 R_x^2}{k} + \frac{9MR_x^3}{k^2} + \frac{4[P(x^0) - P(x^*)]}{k^3}.
	\]  In particular,  
	$$
	k = O\left( \frac{81\gamma R^2 R_x^2}{\epsilon}  + \sqrt{\frac{MR_x^3}{\epsilon}} + \left(  \frac{P(x^0) - P(x^*)}{\epsilon}  \right)^{1/3}   \right) \qquad \Rightarrow \qquad P(x^k) - P(x^*) \leq \epsilon.
	$$
\end{theorem}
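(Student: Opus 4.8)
The plan is to turn the one‑step descent estimate of Lemma~\ref{lm:global} into a scalar recursion for the functional suboptimality $\Delta_k \eqdef P(x^k) - P(x^*)$ and then unroll it with a Nesterov‑type vanishing ``stepsize''.

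First I would record two easy consequences of Lemma~\ref{lm:global}. Taking $y = x^k$ gives $P(x^{k+1}) \le P(x^k)$, hence $P(x^k) \le P(x^0)$ for all $k$, and therefore $\|x^k - x^*\| \le R_x$ for all $k$ (this is exactly what the finiteness of $R_x$ in Assumption~\ref{as:cubic} grants). Next, for a free parameter $\alpha \in [0,1]$, apply Lemma~\ref{lm:global} with $y = x^k + \alpha(x^* - x^k)$; since $P$ is convex, $P(y) \le (1-\alpha)P(x^k) + \alpha P(x^*)$, and $\|y - x^k\| = \alpha\|x^k - x^*\| \le \alpha R_x$. Subtracting $P(x^*)$ from both sides yields the master inequality
\begin{equation*}
\Delta_{k+1} \;\le\; (1-\alpha)\Delta_k + 9\gamma R^2 R_x^2\,\alpha^2 + \tfrac{M}{3}R_x^3\,\alpha^3, \qquad \forall \alpha \in [0,1].
\end{equation*}

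Then I would specialize $\alpha = \alpha_k \eqdef \frac{3}{k+3}$ (so $\alpha_0 = 1$ and $1-\alpha_k = \frac{k}{k+3}$) and prove the claimed bound
\begin{equation*}
\Delta_k \;\le\; \frac{81\gamma R^2 R_x^2}{k} + \frac{9MR_x^3}{k^2} + \frac{4\Delta_0}{k^3}
\end{equation*}
by induction on $k \ge 1$. The base case $k=1$ follows from the master inequality at $k=0$ with $\alpha_0=1$, which gives $\Delta_1 \le 9\gamma R^2 R_x^2 + \tfrac{M}{3}R_x^3$, comfortably below the right‑hand side. For the inductive step one inserts the hypothesis into the master inequality and reduces the claim to three elementary scalar inequalities — of the type $\frac{k+4}{(k+3)^2} \le \frac{1}{k+1}$ for the $1/k$‑term and analogous ones for the $1/k^2$‑ and $1/k^3$‑terms — each verified by cross‑multiplication for all $k \ge 1$; the strict slack in the first of these is what absorbs the lower‑order cross‑terms generated by $\alpha_k$. (Equivalently, one can multiply the master inequality by suitable weights $w_k$ and telescope; the direct induction seems cleaner.) Finally, to read off the complexity, force each of the three terms to be at most $\epsilon/3$: this requires $k \gtrsim \gamma R^2 R_x^2/\epsilon$, $k \gtrsim \sqrt{MR_x^3/\epsilon}$, and $k \gtrsim (\Delta_0/\epsilon)^{1/3}$, which combine to the stated $O(\cdot)$ bound.

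The step I expect to be the main obstacle is calibrating the induction: a single universal choice of $\alpha_k$ must simultaneously drive all three terms (decaying at the rates $1/k$, $1/k^2$, $1/k^3$) down at the correct speed while the cross‑terms it produces stay within budget, and the bookkeeping has to land on exactly the constants $81$, $9$, $4$. Making the base case, the constraint $\alpha_k \le 1$, and the per‑term inequalities line up is the only genuinely delicate part; everything else (monotonicity of $P(x^k)$, convexity, and $\|x^k - x^*\| \le R_x$) is routine given Lemma~\ref{lm:global} and Assumption~\ref{as:cubic}.
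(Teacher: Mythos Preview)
Your derivation of the master inequality is correct and is precisely what the paper does: apply Lemma~\ref{lm:global} with $y = (1-\alpha)x^k + \alpha x^*$, use convexity of $P$, and bound $\|x^k - x^*\| \le R_x$ via monotonicity of $P(x^k)$.

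The gap is in your induction step. With $\alpha_k = \tfrac{3}{k+3}$, the ``analogous'' per-term inequalities you assert are \emph{not} all true. For the $1/k^3$ term the required inequality is $\frac{1}{k^2(k+3)} \le \frac{1}{(k+1)^3}$, equivalent to $3k+1 \le 0$, which fails for every $k \ge 1$. For the $1/k^2$ term the inequality $\frac{1}{k(k+3)} + \frac{1}{(k+3)^3} \le \frac{1}{(k+1)^2}$ fails at $k=1$ (it gives $17/64 > 1/4$). Your proposed fix---that the slack in the $1/k$ term absorbs these deficits---does not work in general: that slack is proportional to $A = 81\gamma R^2 R_x^2$, while the deficits are proportional to $B = 9MR_x^3$ and $C = 4\Delta_0$, and there is no a~priori relation forcing $A$ to dominate $B$ or $C$. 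Concretely, the step $k=1 \to k=2$ needs $12A \ge B + 8C$, which can fail for small $\gamma$ or large $M, \Delta_0$.

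The paper takes exactly the alternative you mention parenthetically: it multiplies by weights and telescopes. Specifically, it sets $c_k = k^2$, $C_k = \tfrac{4}{3} + \sum_{i=1}^k c_i \ge \tfrac{4}{3} + \tfrac{k^3}{3}$, and $\sigma_k = c_{k+1}/C_{k+1}$ (so $1-\sigma_k = C_k/C_{k+1}$ and $\sigma_k \le 1$ since $\max_{\xi \ge 0} \xi^2/(\tfrac{4}{3}+\tfrac{\xi^3}{3}) = 1$). Plugging $\alpha = \sigma_k$ into the master inequality and multiplying by $C_{k+1}$ gives the telescoping recursion
\[
C_{k+1}\Delta_{k+1} \le C_k \Delta_k + 9\gamma R^2 R_x^2 \,\frac{c_{k+1}^2}{C_{k+1}} + \frac{M R_x^3}{3}\,\frac{c_{k+1}^3}{C_{k+1}^2}.
\]
Summing from $0$ to $k-1$ and bounding $\sum_{i=1}^k c_i^2/C_i \le 3k^2$ and $\sum_{i=1}^k c_i^3/C_i^2 \le 9k$ (both via $C_i \ge i^3/3$) yields the constants $81$, $9$, $4$ directly, with no cross-term cancellation needed. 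Your final complexity argument (forcing each of the three terms below $\epsilon/3$) is fine and matches the paper.
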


In the strongly convex case, we have the following linear convergence result. 

\begin{theorem}[Global convergence of {\sf CNL}: strongly convex case]\label{th:stronglyconcubic}
	Let Assumption \ref{as:general} and Assumption \ref{as:cubic} hold and assume that $P$ is $\mu$-strongly convex. Then 	$$
	k = O\left( \left(  \frac{\gamma R^2}{\mu} + \sqrt{\frac{MR_x}{\mu}} + 1 \right) \log \left(  \frac{P(x^0) - P(x^*)}{\epsilon}  \right)   \right) \qquad \Rightarrow \qquad P(x^k) - P(x^*) \leq \epsilon.
	$$
\end{theorem}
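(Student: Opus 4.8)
The plan is to run the standard globalization argument for cubically regularized Newton methods, now driven by Lemma~\ref{lm:global} instead of a classical descent lemma. Note first that the whole argument is \emph{deterministic}: Lemma~\ref{lm:global} holds surely (it uses only $\mH^k+\lambda\mI\succeq\nabla^2P(x^k)$ and that $P$ has $M$-Lipschitz Hessian with $M=\nu R^3$), so no expectation and no compressor-dependent quantity will appear in the strongly convex rate. I would begin by establishing monotonicity of the function values: applying Lemma~\ref{lm:global} with $y=x^k$ gives $P(x^{k+1})\le P(x^k)$, hence $P(x^k)\le P(x^0)$ for all $k$, and therefore $\norm{x^k-x^*}\le R_x$ for every $k\ge 0$ by the definition of $R_x$ in Assumption~\ref{as:cubic}. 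Write $\Delta_k\eqdef P(x^k)-P(x^*)$.

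The key step is to apply Lemma~\ref{lm:global} at the interpolated point $y=x^k+\alpha(x^*-x^k)$ for a free parameter $\alpha\in[0,1]$. Convexity of $P$ gives $P(y)\le(1-\alpha)P(x^k)+\alpha P(x^*)=P(x^*)+(1-\alpha)\Delta_k$, while $\norm{y-x^k}=\alpha\norm{x^k-x^*}$. Using $\mu$-strong convexity in the form $\norm{x^k-x^*}^2\le\frac{2}{\mu}\Delta_k$ together with $\norm{x^k-x^*}\le R_x$, I get $\norm{y-x^k}^2\le\frac{2\alpha^2}{\mu}\Delta_k$ and $\norm{y-x^k}^3\le\frac{2\alpha^3 R_x}{\mu}\Delta_k$. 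Substituting into Lemma~\ref{lm:global} produces the one-step recursion
$$
\Delta_{k+1}\le\left(1-\alpha+\frac{18\gamma R^2}{\mu}\alpha^2+\frac{2MR_x}{3\mu}\alpha^3\right)\Delta_k.
$$

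It remains to optimize over $\alpha$. Choosing $\alpha=\min\left\{1,\ \frac{\mu}{72\gamma R^2},\ \sqrt{\frac{3\mu}{8MR_x}}\right\}$ forces $\frac{18\gamma R^2}{\mu}\alpha^2\le\frac{\alpha}{4}$ and $\frac{2MR_x}{3\mu}\alpha^3\le\frac{\alpha}{4}$, so the bracket is at most $1-\frac{\alpha}{2}$ and hence $\Delta_{k+1}\le\left(1-\frac{\alpha}{2}\right)\Delta_k$. Iterating gives $\Delta_k\le\left(1-\frac{\alpha}{2}\right)^k\Delta_0$, so $\Delta_k\le\epsilon$ once $k=O\!\left(\frac{1}{\alpha}\log\frac{P(x^0)-P(x^*)}{\epsilon}\right)$; since $\frac{1}{\alpha}=O\!\left(1+\frac{\gamma R^2}{\mu}+\sqrt{\frac{MR_x}{\mu}}\right)$, this is exactly the claimed complexity.

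The step I expect to require the most care is the balancing of the three terms in the recursion via the choice of $\alpha$, together with the (harmless) case split according to which of the three quantities attains the minimum. The quadratic term $\frac{18\gamma R^2}{\mu}\alpha^2$ originates from the slack $9\gamma R^2\norm{y-x^k}^2$ in Lemma~\ref{lm:global} between the surrogate Hessian $\mH^k$ and the true Hessian $\nabla^2 P(x^k)$, and one must check that it merely inflates the effective condition number by an additive $\frac{\gamma R^2}{\mu}$ rather than destroying linear convergence; this is precisely what the above choice of $\alpha$ accomplishes. Everything else — monotonicity, the $R_x$ bound, and the use of convexity and strong convexity — is routine.
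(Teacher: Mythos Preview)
Your argument is correct, but it follows a genuinely different route from the paper's own proof. The paper does not derive a one-step contraction from Lemma~\ref{lm:global} directly. Instead, it invokes the already-established sublinear bound of Theorem~\ref{th:concubic}, uses $\mu$-strong convexity only in the crude form $R_x^2\le\tfrac{2}{\mu}(P(x^0)-P(x^*))$ to convert that bound into
\[
P(x^k)-P(x^*)\le\left(\frac{162\gamma R^2}{\mu k}+\frac{18MR_x}{\mu k^2}+\frac{4}{k^3}\right)\bigl(P(x^0)-P(x^*)\bigr),
\]
and then picks $k$ large enough to make the bracket $\le\tfrac12$. A restart argument (halving every $O\!\left(\tfrac{\gamma R^2}{\mu}+\sqrt{\tfrac{MR_x}{\mu}}+1\right)$ steps) then yields the claimed complexity.

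Compared with this, your approach is more self-contained: you bypass Theorem~\ref{th:concubic} entirely and obtain the linear rate in a single application of Lemma~\ref{lm:global} at $y=x^k+\alpha(x^*-x^k)$, with the explicit contraction factor $1-\tfrac{\alpha}{2}$. The paper's route is more modular (it reuses the convex result), while yours is shorter and gives an explicit per-iteration contraction rather than a halving-after-a-block statement. Both lead to the same $O$-complexity.
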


\subsection{Superlinear convergence}

Let us define the Lyapunov function 
$$
\Phi_3^k \eqdef \|x^{k} - x^*\|^2 + \frac{4}{9mn\eta  \nu^2 R^2} {\cal H}^{k}, \qquad {\cal H}^k \eqdef  \sum_{i=1}^n \|h_i^k - \newalpha_i(x^*)\|^2.
$$
We now show that this Lyapunov function converges locally linearly, and that the squared distance to the optimum converges locally superlinearly. Both rates are independent of any condition number. Note that this also means that our learning procedure works, i.e., the estimates $h_i^k$ converge to the values $\newalpha_i(x^*)$, which also means that $\mH^k \to \mH(x^*)$. 

\begin{theorem}[Local superlinear convergecne of {\sf CNL}: strongly convex case]\label{th:cubicsup}
	Let Assumption \ref{as:general} and Assumption \ref{as:cubic} hold. Assume $P$ is $\mu$-strongly convex, and that $\|x^k - x^*\|^2 \leq \frac{\mu^2}{432m n \nu^2R^6}$ for all $k\geq 0$. Then for Algorithm \ref{alg:CNL}, we have the relations
\begin{eqnarray*}
	\mathbb{E} \left[\Phi_3^k \right] & \leq &  \theta_3^k \Phi_3^0 \\
		\mathbb{E} \left[  \frac{\|x^{k+1} - x^*\|^2}{\|x^k - x^*\|^2 }  \right]	&	\leq & \theta_3^k  \left(  {3m n\eta} + 1  \right) \frac{100\nu^2 R^6}{\mu^2} \Phi_3^0, 
	\end{eqnarray*}
	where $\theta_3 \eqdef  1 - \min \left\{  \frac{\eta}{2}, \frac{1}{4}  \right\}$.
\end{theorem}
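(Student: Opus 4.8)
The plan is to follow the template of the proof of Theorem~\ref{th:general} for {\sf NL2}: {\sf CNL} uses the same learning update \eqref{eq:hikgeneral} and the same $\beta^k$-rescaled Hessian estimator $\mH^k$, so the only genuinely new ingredient is that the Newton step is replaced by the cubic step $x^{k+1}=x^k+s^k$ with $s^k={\rm argmin}_{s\in\R^d}T(x^k,s)$. I would prove two one-step estimates and combine them through $\Phi_3^k$. The first is a \emph{deterministic} contraction of the form $\|x^{k+1}-x^*\|^2\le\frac{a_1R^4}{nm\mu^2}{\cal H}^k\|x^k-x^*\|^2+\frac{a_2\nu^2R^6}{\mu^2}\|x^k-x^*\|^4$ for absolute constants $a_1,a_2$; the second is the \emph{stochastic} descent of the learning error, $\mathbb{E}\big[{\cal H}^{k+1}\mid\mathcal{F}^k\big]\le(1-\eta){\cal H}^k+\eta nm\nu^2R^2\|x^k-x^*\|^2$, where $\mathcal{F}^k$ is the $\sigma$-algebra generated by the first $k$ iterations. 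Since the cubic step uses no compression, $x^{k+1}$ is $\mathcal{F}^k$-measurable, so all randomness lives in the second estimate.

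For the first estimate: because $\mH^k+\lambda\mI\succeq\nabla^2P(x^k)\succeq\mu\mI$ (shown at the start of Section~\ref{sec:CUBIC-NEWTON-LEARN}), the model $T(x^k,\cdot)$ is strongly convex and $s^k$ is characterized by $\nabla P(x^k)+(\mH^k+\lambda\mI)s^k+\tfrac{M}{2}\|s^k\|s^k=\mathbf 0$, hence $x^{k+1}-x^*=x^k-x^*-\mathbf{G}_k^{-1}\nabla P(x^k)$ with $\mathbf{G}_k\eqdef\mH^k+\lambda\mI+\tfrac{M}{2}\|s^k\|\mI\succeq\mu\mI$. Writing $\nabla P(x^k)-\nabla P(x^*)=\int_0^1\nabla^2P(x^*+\tau(x^k-x^*))(x^k-x^*)\,d\tau$ as in the proof of Theorem~\ref{th:localquadratic}, using $\nabla P(x^*)=\mathbf 0$, $\|\mathbf{G}_k^{-1}\|\le1/\mu$, and the $M=\nu R^3$-Lipschitzness of $\nabla^2P$ established in this section, I get $\|x^{k+1}-x^*\|\le\mu^{-1}\big(\|\mH^k-\mH(x^*)\|+\tfrac{M}{2}\|x^k-x^*\|+\tfrac{M}{2}\|s^k\|\big)\|x^k-x^*\|$. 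I would then bound $\|s^k\|\le\|x^{k+1}-x^*\|+\|x^k-x^*\|$ and absorb the $\|x^{k+1}-x^*\|$ term: the neighborhood hypothesis $\|x^k-x^*\|^2\le\tfrac{\mu^2}{432mn\nu^2R^6}$ makes $\tfrac{M}{2\mu}\|x^k-x^*\|$ an absolute constant below $\tfrac{1}{2}$, leaving $\|x^{k+1}-x^*\|\le\tfrac{2}{\mu}\big(\|\mH^k-\mH(x^*)\|+M\|x^k-x^*\|\big)\|x^k-x^*\|$. The remaining ingredient is $\|\mH^k-\mH(x^*)\|\lesssim\tfrac{R^2}{\sqrt{nm}}\sqrt{{\cal H}^k}+\nu R^3\|x^k-x^*\|$: writing $\mH^k=\tfrac{1}{nm}\sum_{i,j}[\beta^k(h_{ij}^k+2\gamma)-2\gamma]a_{ij}a_{ij}^\top$ and $\mH(x^*)=\tfrac{1}{nm}\sum_{i,j}\newalpha_{ij}(x^*)a_{ij}a_{ij}^\top$, I split the $(i,j)$-coefficient as $\beta^k(h_{ij}^k-\newalpha_{ij}(x^*))+(\beta^k-1)(\newalpha_{ij}(x^*)+2\gamma)$, use $|h_{ij}^k|\le\gamma$ and $|\newalpha_{ij}(\cdot)|\le\gamma$ to get $\beta^k\in[1/3,3]$ and $|\beta^k-1|\le\gamma^{-1}(\nu R\|x^k-x^*\|+\sqrt{{\cal H}^k})$ (the latter via \eqref{eq:alphaijL}), and then apply $\|a_{ij}\|\le R$ and Cauchy--Schwarz. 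Squaring and again invoking $\|x^k-x^*\|^2\le\tfrac{\mu^2}{432mn\nu^2R^6}$ puts the bound in the claimed form.

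For the second estimate, this is the {\sf DIANA}-type computation: expanding $\|h_i^{k+1}-\newalpha_i(x^*)\|^2=\|(h_i^k-\newalpha_i(x^*))+\eta\cC_i^k(\newalpha_i(x^k)-h_i^k)\|^2$, taking $\mathbb{E}[\,\cdot\mid\mathcal{F}^k]$ via \eqref{eq:unbiased}--\eqref{eq:omega-variance}, and using $\eta\le\tfrac{1}{\omega+1}$ (so $\eta^2(\omega+1)\le\eta$), the inner products collapse through $2\langle h_i^k-\newalpha_i(x^*),\newalpha_i(x^k)-h_i^k\rangle+\|\newalpha_i(x^k)-h_i^k\|^2=\|\newalpha_i(x^k)-\newalpha_i(x^*)\|^2-\|h_i^k-\newalpha_i(x^*)\|^2$ to give $\mathbb{E}[\|h_i^{k+1}-\newalpha_i(x^*)\|^2\mid\mathcal{F}^k]\le(1-\eta)\|h_i^k-\newalpha_i(x^*)\|^2+\eta\|\newalpha_i(x^k)-\newalpha_i(x^*)\|^2$; summing over $i$ and bounding $\|\newalpha_i(x^k)-\newalpha_i(x^*)\|^2\le m\nu^2R^2\|x^k-x^*\|^2$ by \eqref{eq:alphaijL} yields the displayed descent.

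Finally I would substitute both estimates into $\mathbb{E}[\Phi_3^{k+1}\mid\mathcal{F}^k]=\mathbb{E}[\|x^{k+1}-x^*\|^2\mid\mathcal{F}^k]+\tfrac{4}{9mn\eta\nu^2R^2}\mathbb{E}[{\cal H}^{k+1}\mid\mathcal{F}^k]$, collect the coefficients of $\|x^k-x^*\|^2$ and of ${\cal H}^k$, and verify---this is where the constants $432$, $\tfrac{4}{9}$ and $\theta_3=1-\min\{\eta/2,1/4\}$ are calibrated---that both are $\le\theta_3$, so that $\mathbb{E}[\Phi_3^{k+1}\mid\mathcal{F}^k]\le\theta_3\Phi_3^k$; iterating and taking total expectation gives $\mathbb{E}[\Phi_3^k]\le\theta_3^k\Phi_3^0$. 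For the superlinear rate, I divide the deterministic one-step estimate by $\|x^k-x^*\|^2$, bound ${\cal H}^k\le\tfrac{9mn\eta\nu^2R^2}{4}\Phi_3^k$ and $\|x^k-x^*\|^2\le\Phi_3^k$, take expectations, and plug in $\mathbb{E}[\Phi_3^k]\le\theta_3^k\Phi_3^0$, which yields a bound of the form $\theta_3^k(3mn\eta+1)\tfrac{100\nu^2R^6}{\mu^2}\Phi_3^0$. I expect the main obstacle to be the cubic-step part of the first estimate: absorbing $\tfrac{M}{2}\|s^k\|$ cleanly and tracking the $\beta^k$-rescaling in $\|\mH^k-\mH(x^*)\|$ tightly enough that, after combination, both Lyapunov coefficients still fit under $\theta_3$ in both regimes $\eta\le\tfrac{1}{2}$ and $\eta>\tfrac{1}{2}$; the learning descent and the final bookkeeping are routine once the constants in the neighborhood radius and in $\Phi_3^k$ are chosen to match.
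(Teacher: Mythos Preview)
Your proposal is essentially correct and tracks the paper's proof closely: both split $\|x^{k+1}-x^*\|$ into an {\sf NL2}-type term plus a cubic correction $\tfrac{M}{2}\|s^k\|$ via the first-order optimality condition of the cubic subproblem, handle $\|s^k\|$ through $\|x^{k+1}-x^*\|+\|x^k-x^*\|$ and absorb (the paper squares first and uses Young's inequality, you absorb before squaring---either works), then combine with the identical {\sf DIANA}-type descent of ${\cal H}^k$ into $\Phi_3^k$.

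One slip to flag: your claimed bound $\|\mH^k-\mH(x^*)\|\lesssim \tfrac{R^2}{\sqrt{nm}}\sqrt{{\cal H}^k}+\nu R^3\|x^k-x^*\|$ and hence the first estimate with $\tfrac{a_1R^4}{nm\mu^2}{\cal H}^k$ are too optimistic. The $(\beta^k-1)(\newalpha_{ij}(x^*)+2\gamma)$ piece contributes $3\gamma R^2|\beta^k-1|$ after averaging, and $|\beta^k-1|$ is controlled by $\max_{i,j}|h_{ij}^k-\newalpha_{ij}(x^*)|\le\sqrt{{\cal H}^k}$ with \emph{no} $1/\sqrt{nm}$ factor. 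The correct deterministic estimate is therefore $\|x^{k+1}-x^*\|^2\le \tfrac{a_1R^4}{\mu^2}{\cal H}^k\|x^k-x^*\|^2+\tfrac{a_2\nu^2R^6}{\mu^2}\|x^k-x^*\|^4$ (no $nm$ in the first denominator), which is exactly what the paper obtains via the per-sample integral route inherited from the {\sf NL2} proof. This does not break your argument: the missing $\tfrac{1}{nm}$ is supplied afterward by the neighborhood hypothesis $\|x^k-x^*\|^2\le\tfrac{\mu^2}{432mn\nu^2R^6}$, and the constants $\tfrac{4}{9}$ in $\Phi_3^k$ and $\theta_3=1-\min\{\eta/2,1/4\}$ are calibrated to this weaker bound, not the stronger one you wrote down.
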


Theorem \ref{th:stronglyconcubic}, Theorem \ref{th:cubicsup}, and the fact that $\frac{\mu}{2} \|x-x^*\|^2 \leq P(x) - P(x^*)$ when $P$ is $\mu$-strongly convex, imply the following corollary. 

\begin{corollary}\label{co:supcubic}
	Let Assumption \ref{as:general} and Assumption \ref{as:cubic} hold. Assume $P$ is $\mu$-strongly convex. Then 
	$$
	\lim\limits_{k\to +\infty} \mathbb{E} \left[  \frac{\|x^{k+1} - x^*\|^2}{\|x^k - x^*\|^2 }  \right] =0. 
	$$
\end{corollary}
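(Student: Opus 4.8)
The plan is to bootstrap from the \emph{global} linear convergence of Theorem~\ref{th:stronglyconcubic} to the \emph{local} superlinear estimate of Theorem~\ref{th:cubicsup}. The only genuine issue is that Theorem~\ref{th:cubicsup} presupposes $\|x^k-x^*\|^2\le \mu^2/(432mn\nu^2R^6)$ \emph{for all} $k\ge 0$; so first I would show that this neighbourhood is entered (and never left) after finitely many deterministic steps, and then apply Theorem~\ref{th:cubicsup} to the tail of the sequence.

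First, note that Theorem~\ref{th:stronglyconcubic} yields $P(x^k)-P(x^*)\to 0$ at a rate \emph{independent of the realization of the compressors}: its proof rests on the pathwise descent inequality of Lemma~\ref{lm:global}, which holds for every sample path, so along every path $P(x^k)-P(x^*)$ decays at the stated deterministic linear rate. Combining this with $\mu$-strong convexity, $\tfrac{\mu}{2}\|x^k-x^*\|^2\le P(x^k)-P(x^*)$, gives $\|x^k-x^*\|^2\to 0$ pathwise, again at a deterministic rate. Hence there is a \emph{deterministic} index $k_0$ with $\|x^k-x^*\|^2\le \mu^2/(432mn\nu^2R^6)$ for every $k\ge k_0$.

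Next, since {\sf CNL} is Markovian and the compressors drawn at steps $\ge k_0$ are independent of the history, the tail $(x^{k_0+t},h^{k_0+t})_{t\ge 0}$ is itself a run of {\sf CNL} started from $(x^{k_0},h^{k_0})$; Assumptions~\ref{as:general} and~\ref{as:cubic} still hold, and by the previous paragraph so does the neighbourhood hypothesis of Theorem~\ref{th:cubicsup}. Applying that theorem to this run---with the expectation below understood conditionally on $(x^{k_0},h^{k_0})$---gives, for all $k\ge k_0$,
$$
\mathbb{E}\!\left[\frac{\|x^{k+1}-x^*\|^2}{\|x^k-x^*\|^2}\right]\;\le\;\theta_3^{\,k-k_0}\,(3mn\eta+1)\,\frac{100\,\nu^2R^6}{\mu^2}\,\Phi_3^{k_0},
$$
with the usual convention that the ratio is $0$ when $x^k=x^*$ (in which case $x^{k+1}=x^*$ as well, since $s=0$ minimizes $T(x^*,\cdot)$). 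It remains to bound $\Phi_3^{k_0}$ deterministically: $P(x^k)\le P(x^0)$ by monotonicity (Lemma~\ref{lm:global} with $y=x^k$), so $\|x^{k_0}-x^*\|\le R_x$ by Assumption~\ref{as:cubic}; and $|h^{k_0}_{ij}|\le\gamma$ together with $|\newalpha_{ij}(x^*)|=|\varphi''_{ij}(a_{ij}^\top x^*)|\le\gamma$ gives ${\cal H}^{k_0}\le 4nm\gamma^2$, so $\Phi_3^{k_0}$ is bounded by a finite constant depending only on $R_x,\gamma,\eta,\nu,R$. Taking total expectations and letting $k\to\infty$, the right-hand side tends to $0$ because $\theta_3\in(0,1)$, and since the ratio is non-negative the corollary follows.

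The main obstacle is exactly this bridge between the ``for all $k$'' hypothesis of Theorem~\ref{th:cubicsup} and the unconditional statement of the corollary: one must argue that the global rate of Theorem~\ref{th:stronglyconcubic} is \emph{pathwise}, so that $k_0$ may be chosen deterministic rather than as a random stopping time, and that the restarted chain is a genuine {\sf CNL} run to which Theorem~\ref{th:cubicsup} applies verbatim. Everything else---monotonicity of $P(x^k)$, the crude bound on $\Phi_3^{k_0}$, and the geometric decay $\theta_3^{\,k-k_0}\to 0$---is routine.
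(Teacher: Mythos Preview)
Your proposal is correct and follows exactly the route the paper indicates: combine the global linear rate of Theorem~\ref{th:stronglyconcubic} with strong convexity to enter the neighbourhood required by Theorem~\ref{th:cubicsup}, then invoke the latter on the tail. The paper states only that the corollary follows from these two theorems and $\tfrac{\mu}{2}\|x-x^*\|^2\le P(x)-P(x^*)$; you have carefully spelled out the bridging argument---in particular the observation that Lemma~\ref{lm:global} and hence Theorem~\ref{th:stronglyconcubic} are pathwise, so $k_0$ is deterministic---which the paper leaves implicit.
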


\section{Experiments}
\label{sec:experiments}

We now study the empirical performance of our second order methods {\sf NL1}, {\sf NL2} and {\sf CNL}, and compare them with relevant benchmarks and with state-of-the-art methods. We  test on  the regularized logistic regression problem
\begin{eqnarray*}\squeeze
	\min\limits_{x\in\R^d}\left\{\frac{1}{n}\sum\limits_{i=1}^n\frac{1}{m}\sum\limits_{j=1}^m\log\left(1+\exp(-b_{ij}a_{ij}^\top x)\right) + \frac{\lambda}{2}\|x\|^2\right\},
\end{eqnarray*}
where $\{a_{ij}, b_{ij}\}_{j\in[m]}$ are data samples at the $i$-th node.

\subsection{Data sets and parameter settings} 

In our experiments we use five standard datasets from the LIBSVM library:  {\tt a2a}, {\tt a7a}, {\tt a9a},  {\tt w8a} and {\tt phishing}. Besides, we generated an artificial dataset {\tt artificial} as follows: each of the $d$ elements of the data vector $a_{ij}\in \R^d$ was sampled from the normal distribution $\mathcal{N} (10, 10).$ The corresponding label $b_{ij}$ was sampled uniformly at random from $\{-1, 1\}$. We partitioned each dataset across several nodes (selection of $n$)  in order to capture a variety of scenarios. See Table~\ref{table1} for more details on all the datasets and the choice of $n$. 

In all experiments we use the theoretical parameters (e.g., stepsizes) for all the three algorithms: vanilla Distributed Compressed Gradient Descent (DCGD) \citep{KFJ}, DIANA \citep{DIANA}, and ADIANA \citep{ADIANA}. 

\begin{table}[h]
	\caption{Data sets used in the experiments, and the number of worker nodes $n$ used in each case.}
	\label{table1}
	\begin{center}
		\begin{tabular}{| l | r | r | r |}
			\toprule
			{\bf Data set}  & \bf \# workers $n$ &{\bf \# data points} $(=nm)$ & \bf\# features $d$ \\
			\midrule 
			{\tt a2a} & $15$ & $2~265$ & $123$\\ \hline
			{\tt a7a} & $100$ & $16~100$ & $123$\\ \hline
			{\tt a9a} & $80$ & $32~560$ & $123$\\ \hline
			{\tt w8a} & $142$ & $49~700$ & $300$\\ \hline
			{\tt phishing} & $100$ & $11~000$ & $68$\\ \hline
			{\tt artificial} & $100$ & $1~000$ & $200$\\
			\bottomrule
		\end{tabular}
	\end{center}
\end{table}

As the initial approximation of the Hessian in BFGS \citep{Broyden1967, Fletcher1970, Goldfarb1970, shanno1970conditioning}, we use $\mH^0 = \nabla^2P(x^0)$, and the stepsize is $1$. We set the same constants in DINGO \citep{DINGO2019} as they did: $\theta=10^{-4}, \phi=10^{-6}, \rho=10^{-4},$ and use backtracking line search for DINGO to select the largest stepsize in $\{1, 2^{-1}, 2^{-2}, 2^{-4},\dots, 2^{-10}\}$. We conduct experiments for three values of the regularization parameter $\lambda$: $10^{-3}, 10^{-4}, 10^{-5}$.  
In the figures we plot the relation of the optimality gap $P(x^k) - P(x^*)$ and the number of accumulated transmitted bits or th number of iterations. The optimal value $P(x^*)$ in each case is the function value at the $20$-th iterate of standard Newton's method. In all plots, ``communicated bits'' refers to the total number of bits that all nodes send to the server. We adopt the realistic setting where the server does not have access to the local data ({\bf Option 1}).

\subsection{Compression operators} For the first order methods we use three compression operators: random sparsification \citep{stich2018sparsified}, random dithering \citep{Alistarh17}, and natural compression \citep{Cnat} (all defined below).
For random-$r$ sparsification, the number of communicated  bits per iteration is $32r+\log_2{\binom{d}{r}}$, and we choose $r = d/4$. For random dithering, we choose $s = \sqrt{d}$, which means the
number of communicated bits per iteration is $2.8d + 32$. For natural compression, the number of communicated bits per iteration is $9d$ bits. 

For {\sf NL1} and {\sf NL2} we use the random-$r$ sparsification operator with a selection of values of $r$.  For {\sf CNL} we use the random sparsification operator $\cC_p$ (with $p=1/20$)  induced by the random-$r$ compressor with $r=1$. This compressor is also defined below.

\subsubsection{Random sparsification} The random sparsification compressor \citep{stich2018sparsified}, denoted random-$r$, is a randomized mapping $\cC:\R^m\to \R$ defined as $$\cC(x) \eqdef \frac{m}{r} \cdot \xi \circ x$$ where  $\xi \in \R^m$ is a random vector distributed uniformly at random on the discrete set $\{  y \in \{0, 1 \}^m : \|y\|_0 = r  \}$, where $\|y\|_0\eqdef \{i \;|\; y_i \neq 0\}$ and  $\circ$ is the Hadamard product. The variance  parameter associate with this compressor is $\omega = \frac{m}{r} - 1$. 

\subsubsection{Random dithering} The random dithering compressor \citep{Alistarh17, DIANA2} with $s$ levels is defined via 
$$\cC(x)\eqdef \text{sign}(x)\cdot \|x\|_q\cdot \frac{1}{s} \cdot \xi_s,$$ 
where $\|x\|_q\eqdef \left(\sum_i |x_i|^q\right)^{1/q}$ and $\xi_s\in\R^m$ is a random vector with 
$i$-th element  defined as
$$\xi_s(i)\eqdef \begin{cases}
	l+1 &\text {with probability } \frac{|x_i|}{\|x\|_q}s- l \\
	l &\text {otherwise}
\end{cases}.$$ Here, $l$ satisfies $\frac{|x_i|}{\|x\|_q}\in [\frac{l}{s}, \frac{l+1}{s}]$ and $s \in \mathbb{N}_+$ denotes the levels of the rounding. The variance parameter of this compressor is $\omega \leq 2 + \frac{m^{1/2} + m^{1/q}}{s}$ \citep{DIANA2}. For $q=2$, one can get the improved bound $\omega \leq \min\{  \frac{m}{s^2}, \frac{\sqrt{m}}{s}  \}$ \citep{Alistarh17}.  

\subsubsection{Natural compression} The natural compression  \citep{Cnat} operator $\cC_{\rm nat}:\R^m \to \R$ is obtained by applying the random mapping $\cC_{}:\R\to \R$, defined next, to each coordinate of $x$ independently. We define $\cC(0)=0$ and for $t\neq 0$, we let
$$
\cC(t) \eqdef \begin{cases}
	{\rm sign}(t) \cdot 2^{\lfloor \log_2|t| \rfloor } &\text {with probability } \quad p(t) \eqdef \frac{2^{\lceil \log_2|t| \rceil } - |t|}{2^{\lfloor \log_2|t| \rfloor }} \\
	{\rm sign}(t) \cdot 2^{\lceil \log_2|t| \rceil } &\text {with probability } \quad 1-p(t) 
\end{cases}
$$
The variance parameter of natural compression is $\omega = \frac{1}{8}$.

\subsubsection{Bernoulli compressor} \label{sec:Bernoulli}
A variant of any compression operator $\cC:\R^m \to \R$ can be constructed as follows:
\begin{equation}\label{eq:Qp}
	\cC_p(x) \eqdef \left\{ \begin{array}{rl}
		\frac{1}{p}\cC(x) & \mbox{ with probability $p$} \\
		0 &\mbox{ otherwise}
	\end{array}, \right.
\end{equation}
where $p\in (0, 1]$ is a probability parameter. It is easy to verify that $\cC_p$ is still a compression operator with variance parameter $\omega_p \eqdef \frac{\omega+1}{p} - 1$, where $\omega$ is the variance parameter of the underlying compressor $\cC$.

\subsection{Behavior of {\sf NL1} and {\sf NL2}}

Before we compare our methods {\sf NL1} and {\sf NL2} with competing baselines,  we investigate how is their performance affected by the choice of the  sparsification parameter $r$ defining the random-$r$ sparsification operator  $\cC$. Likewise, we vary the probability parameter $p$ defining the induced Bernoulli compressor  $\cC_p$.

According to the results summarized in  Figure~\ref{exp:NL1_NL2}, the best performance of {\sf NL1} is obtained for $r=1$ and $p=1$. However, for {\sf NL2} these parameters are $r=1$ and $p=1/20$. We will use these  parameter settings for {\sf NL1} and {\sf NL2} in our subsequent experiments where we compare our methods with several baselines and state-of-the-art methods.

\begin{figure}[ht]
	\begin{center}
		\centerline{\begin{tabular}{cccc}
				\includegraphics[width = 0.23 \textwidth]{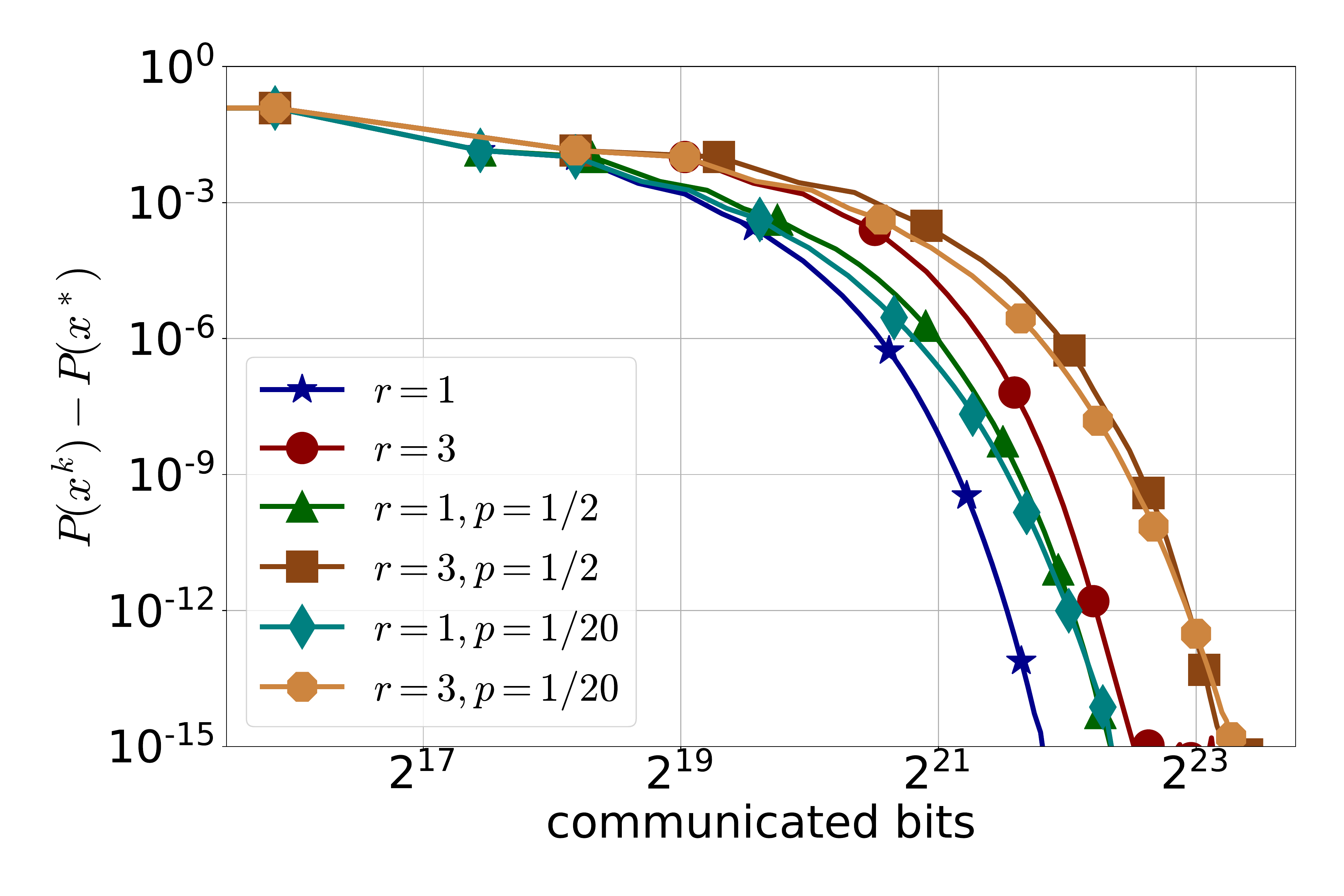}&
				\includegraphics[width = 0.23 \textwidth]{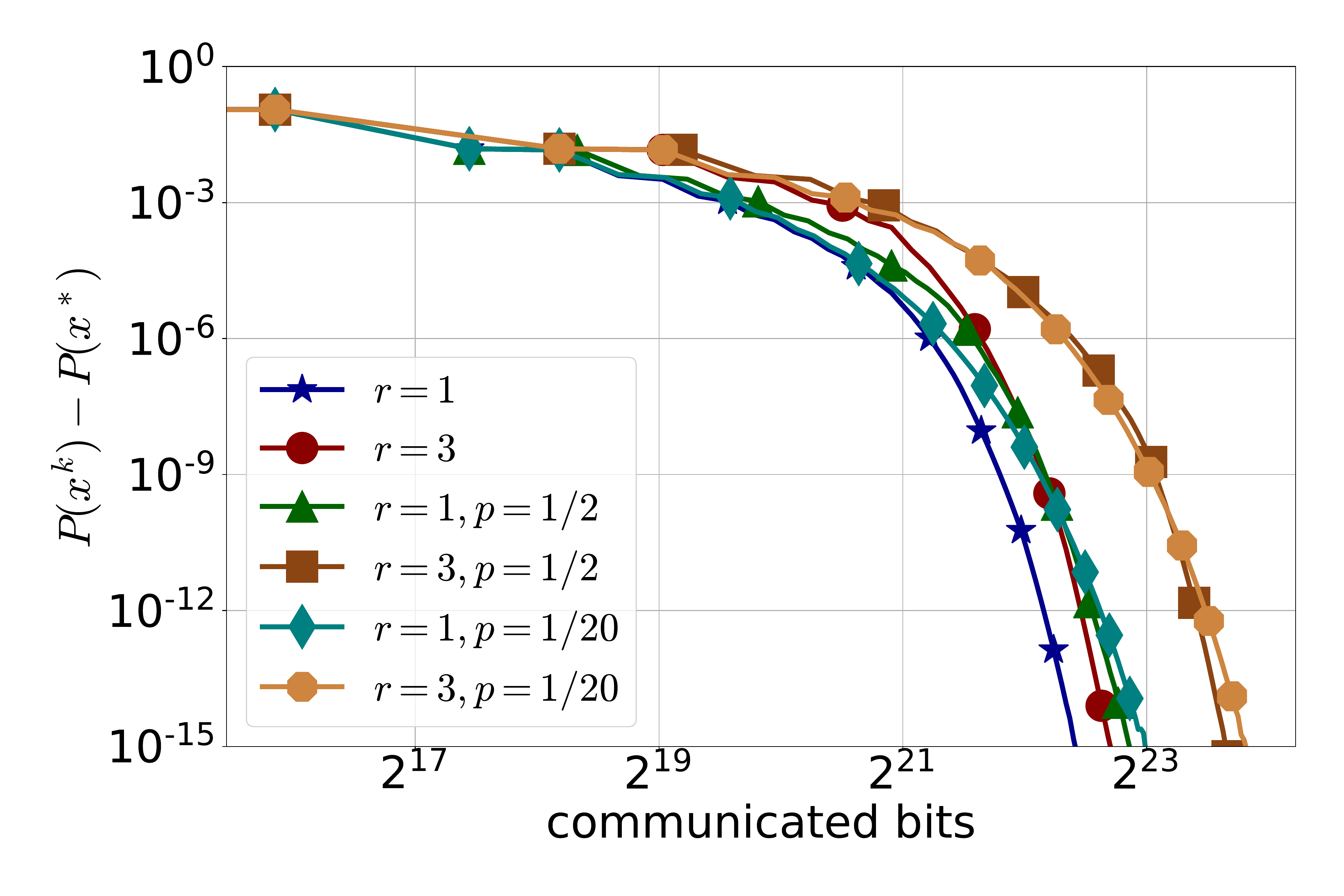}&
				\includegraphics[width = 0.23 \textwidth]{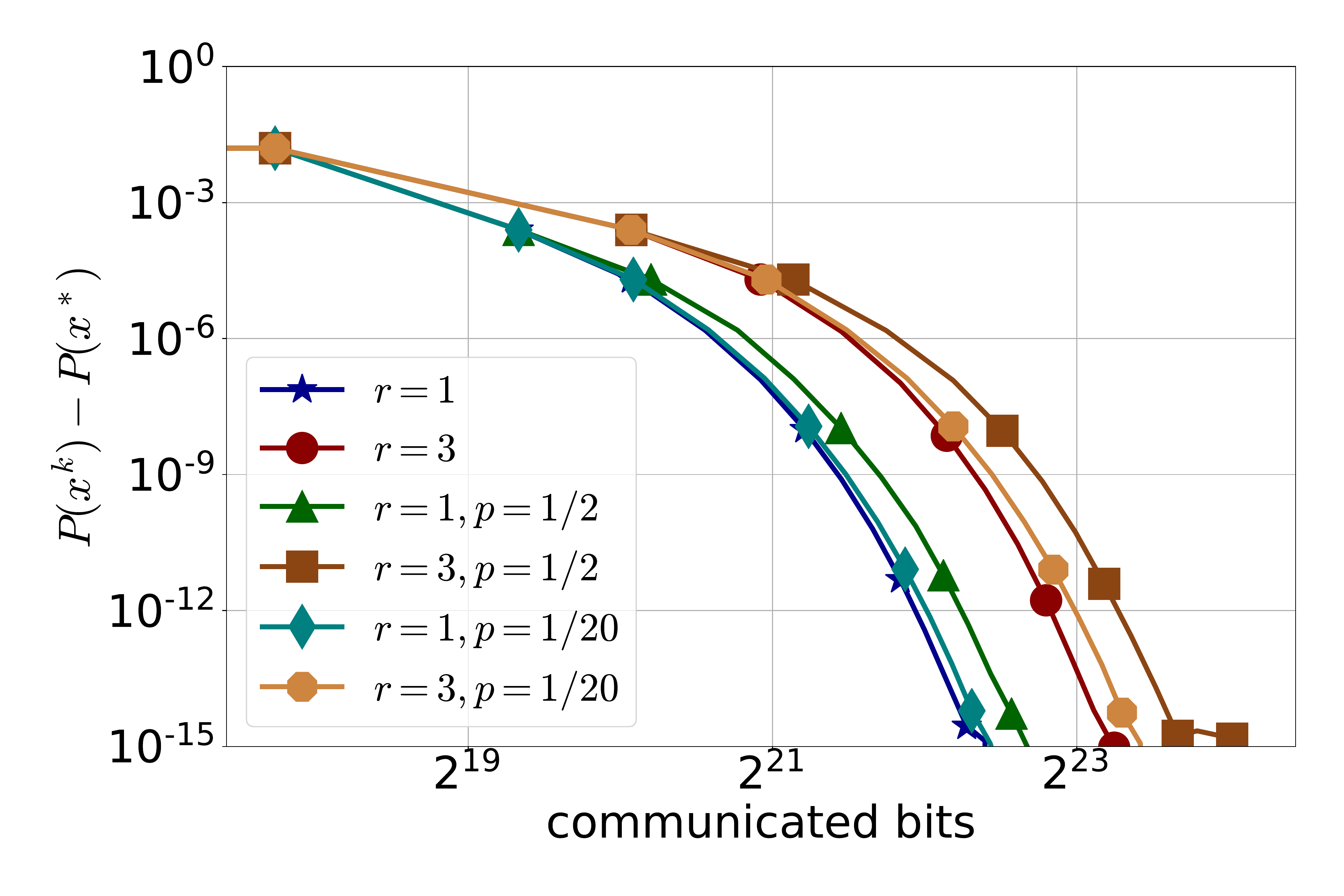}&
				\includegraphics[width = 0.23 \textwidth]{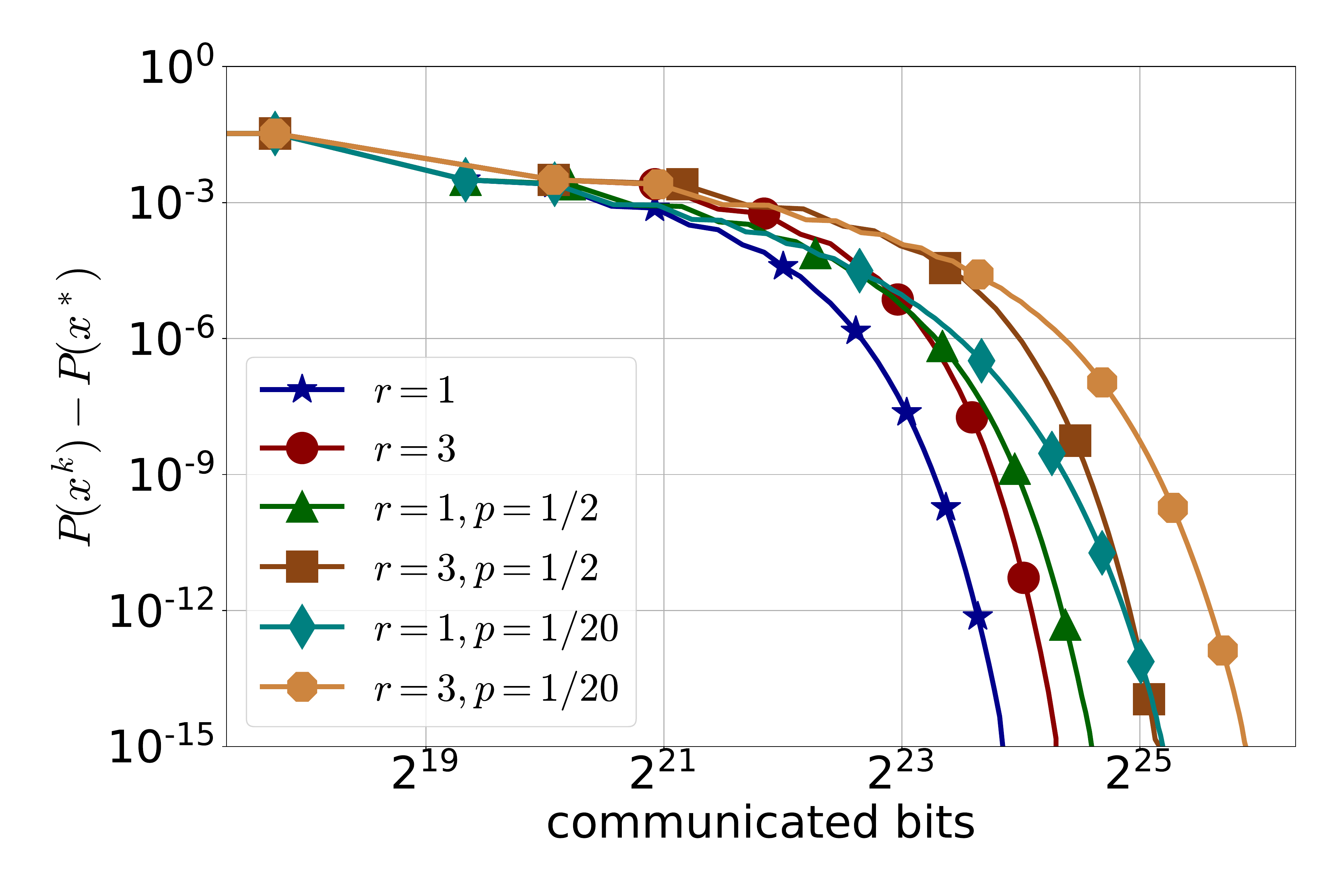}
				\\
				(a) {\tt a2a}, $\lambda=10^{-3}$ &(b) {\tt a2a}, $\lambda=10^{-4}$ & (c) {\tt phishing}, $\lambda=10^{-3}$ &(d) {\tt phishing}, $\lambda=10^{-4}$\\
				\includegraphics[width = 0.23 \textwidth]{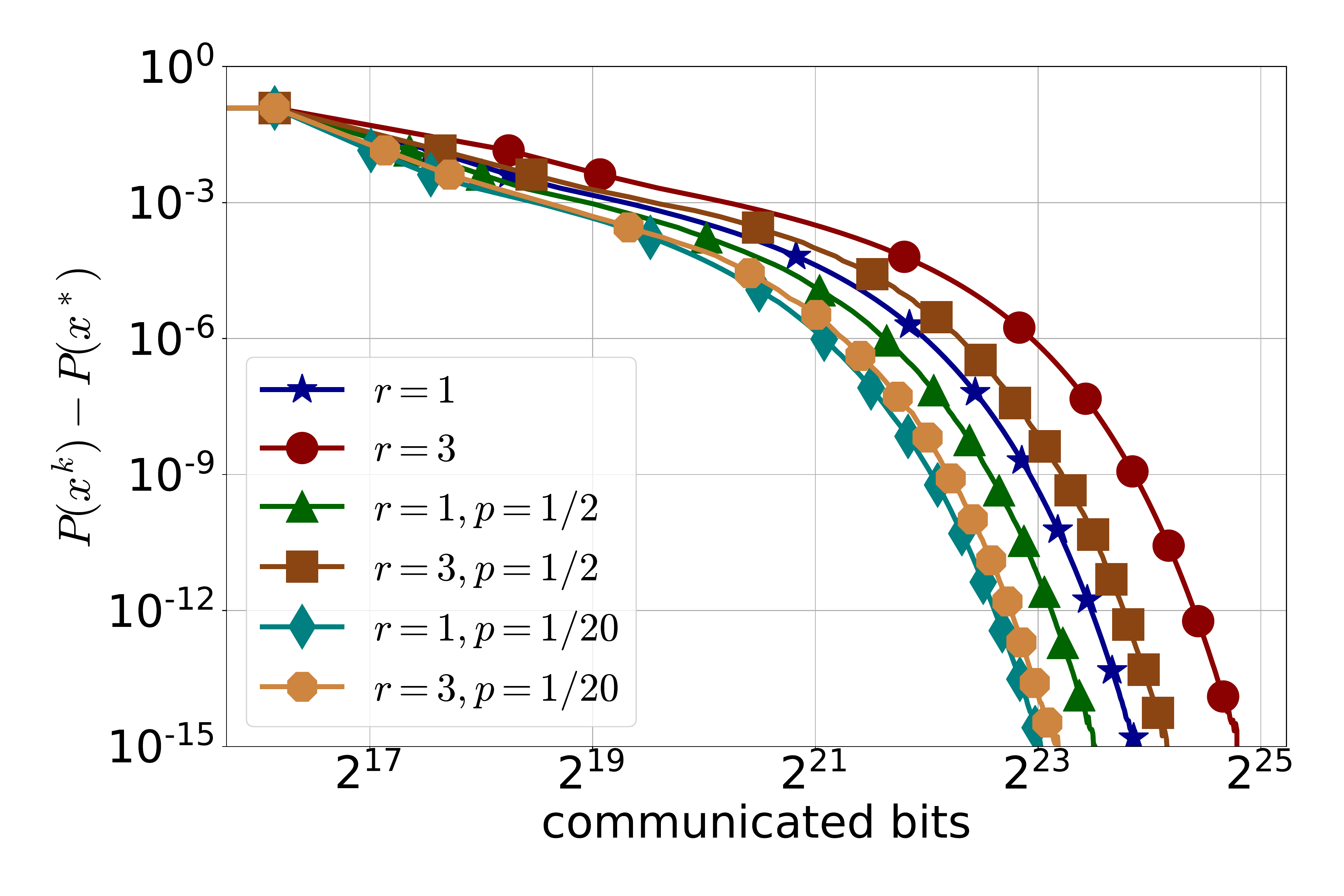}&
				\includegraphics[width = 0.23 \textwidth]{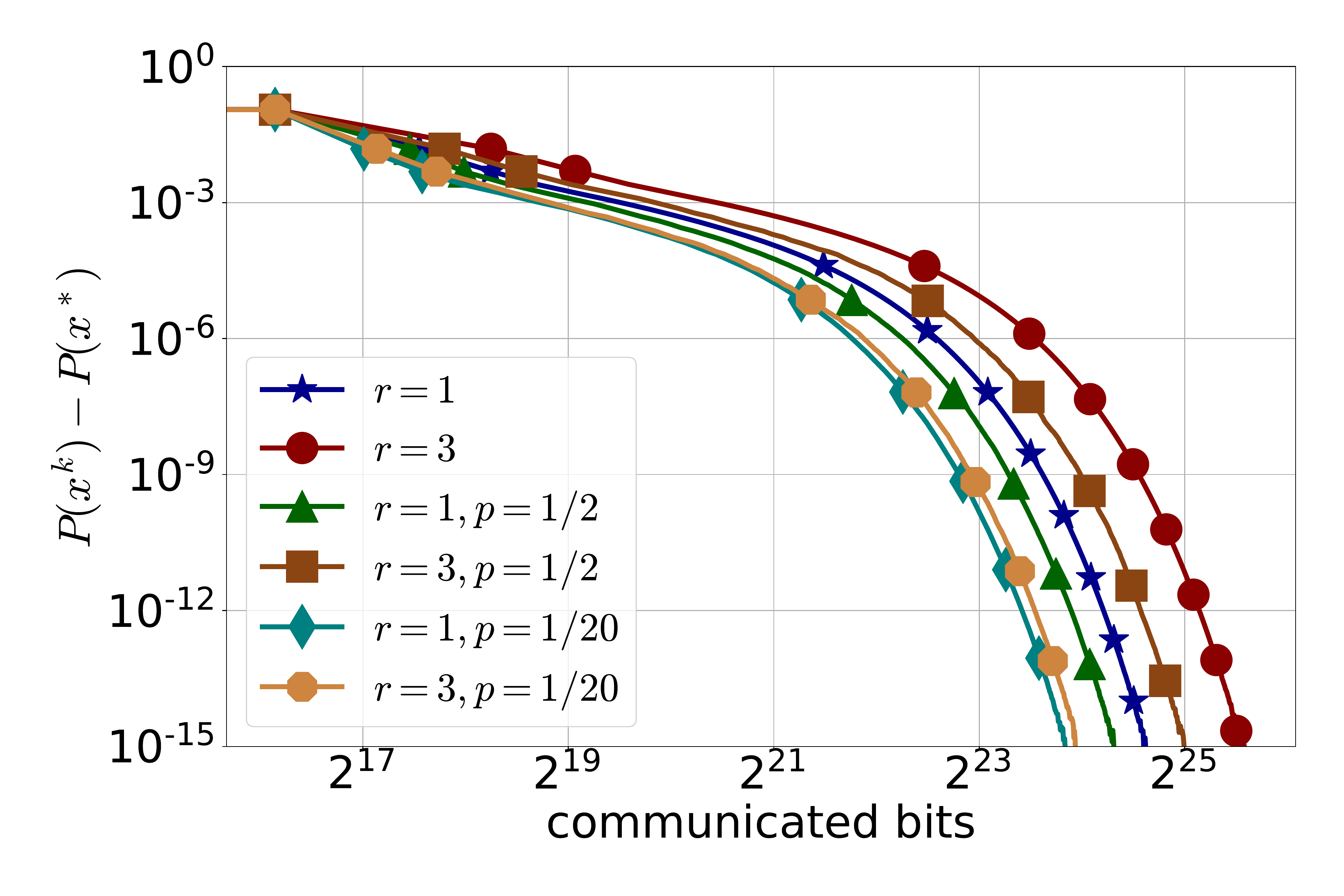}&
				\includegraphics[width = 0.23 \textwidth]{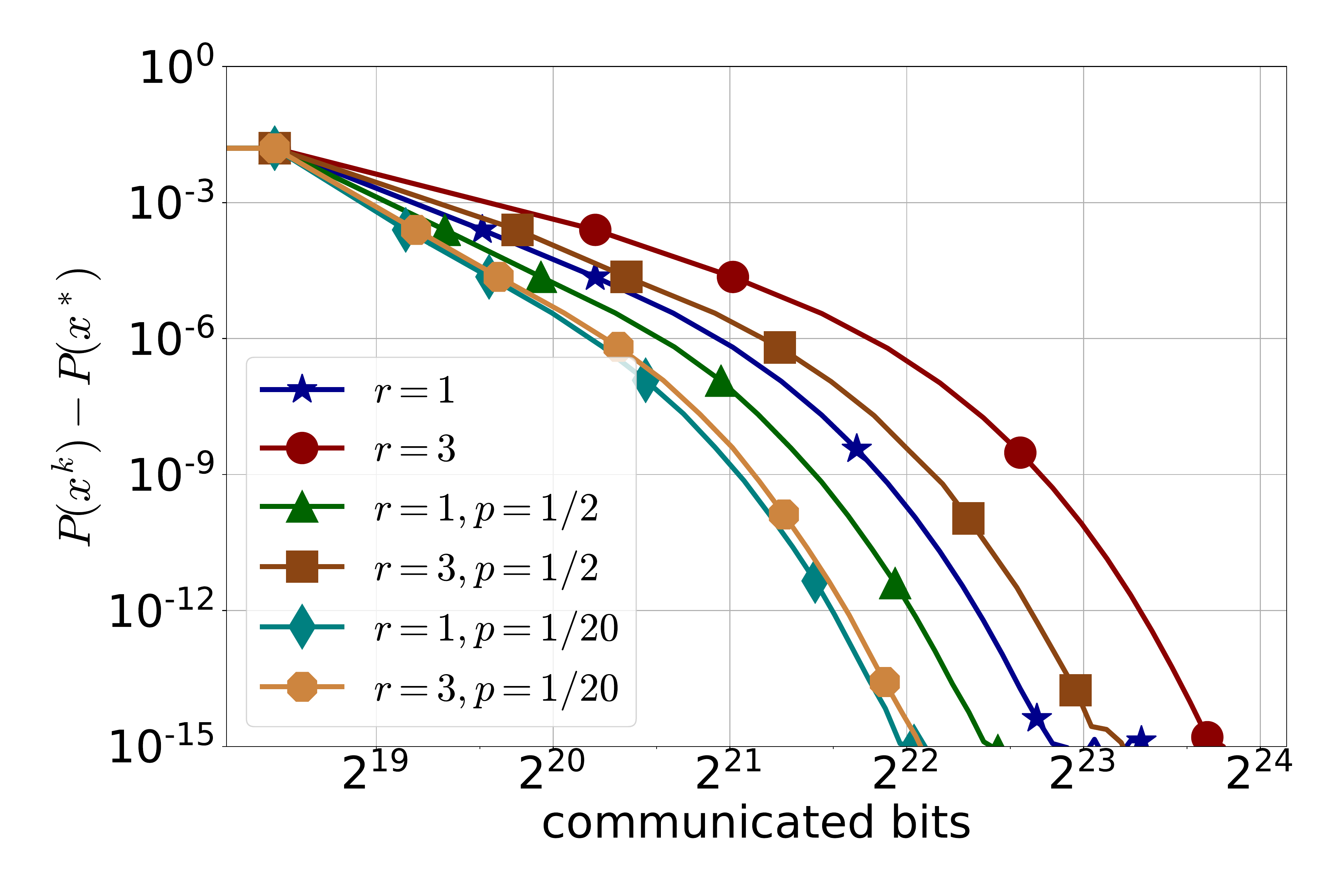}&
				\includegraphics[width = 0.23 \textwidth]{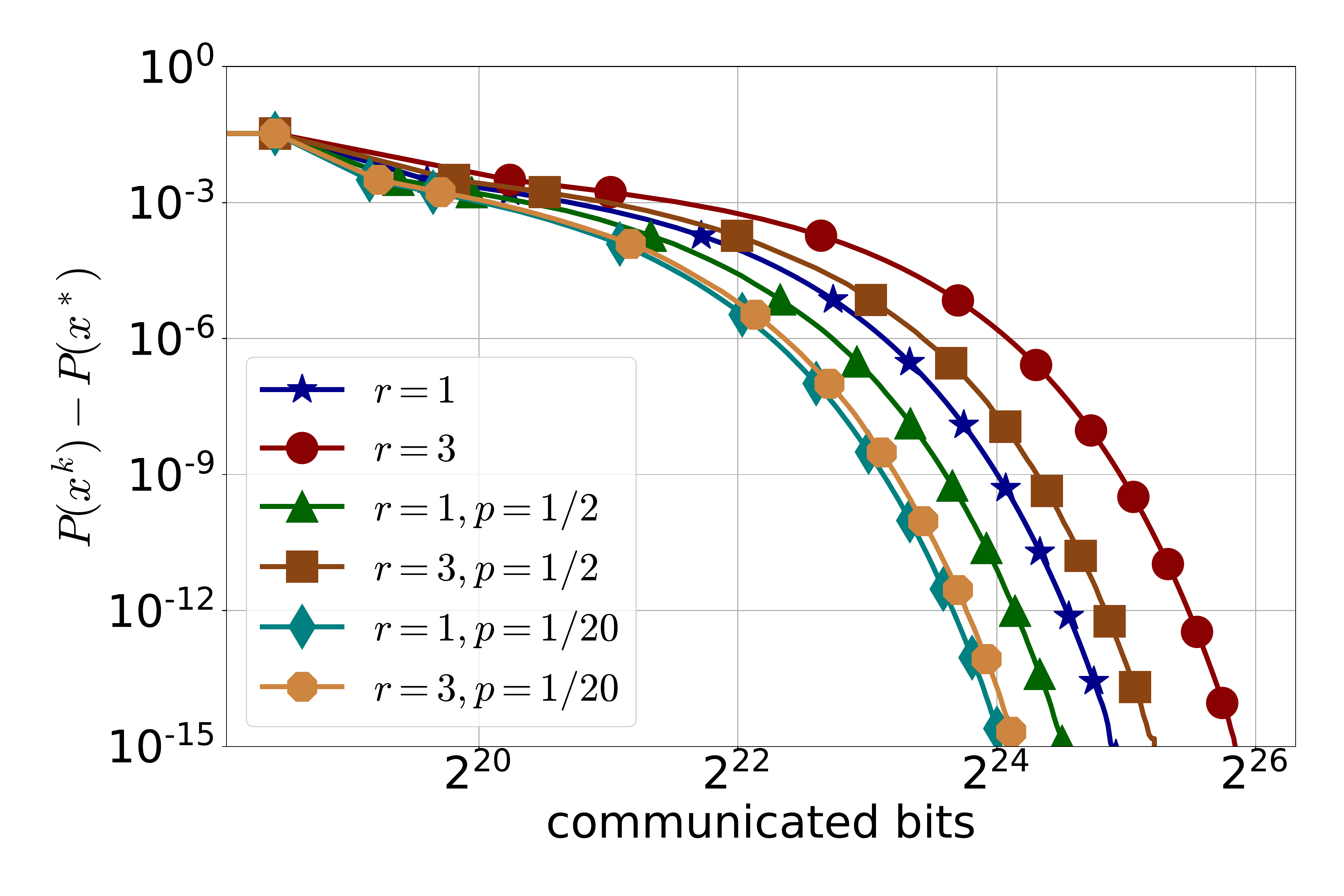}
				\\
				(e) {\tt a2a}, $\lambda=10^{-3}$ &(f) {\tt a2a}, $\lambda=10^{-4}$ & (g) {\tt phishing}, $\lambda=10^{-3}$ &(h) {\tt phishing}, $\lambda=10^{-4}$
		\end{tabular}}
		\caption{Performance of {\sf NL1} (first row) and {\sf NL2} (second row) across a few  values of $r$ defining the random-$r$ compressor, and a few values of $p$ defining the induced Bernoulli compressor $\cC_p$. }
		\label{exp:NL1_NL2}
	\end{center}
	\vskip -0.2in
\end{figure}

\subsection{Comparison of {\sf NL1} and {\sf NL2} with Newton's method}

In our next experiment we compare {\sf NL1} and {\sf NL2} using different values of $r$ for random-$r$ compression, with Newton's method; see Figure~\ref{a9a:alg1_alg2}. We clearly see that Newton's method performs better than {\sf NL1} and {\sf NL2} in terms of iteration complexity, as expected. However, our methods have better communication efficiency than Newton's method, by {\em several orders of magnitude}. Moreover, we see that the smaller $r$ is, the better {\sf NL1} and {\sf NL2} perform in terms of communication complexity. In  Figure~\ref{exp:newton} we perform a similar comparison for several more datasets, but focus on communication complexity only. The conclusions are unchanged: our methods  {\sf NL1} and {\sf NL2} have superior performance.

\begin{figure}[ht]
	\vskip 0.2in
	\begin{center}
		\begin{tabular}{cccc}
			\includegraphics[width = 0.23 \textwidth]{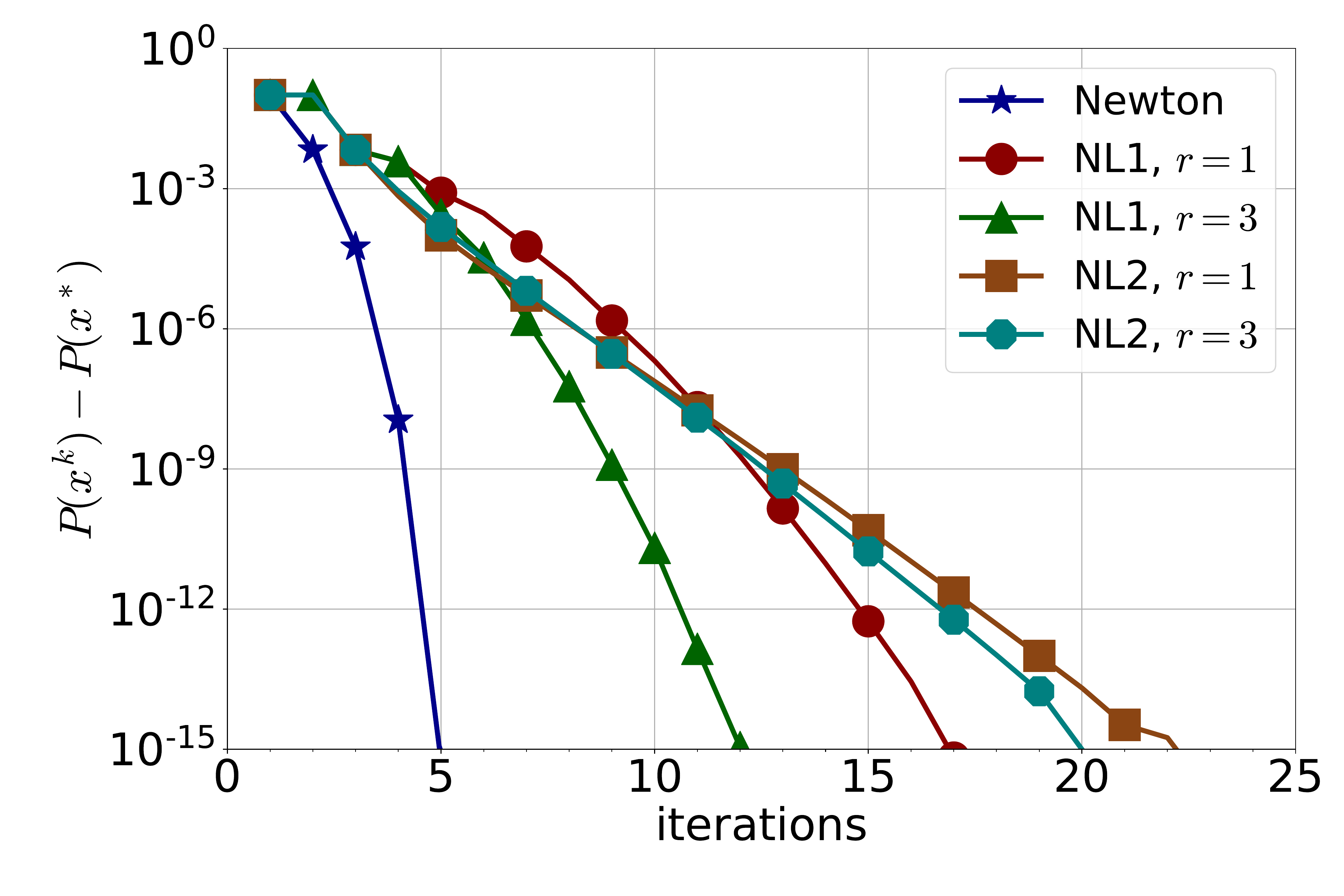}&
			\includegraphics[width = 0.23 \textwidth]{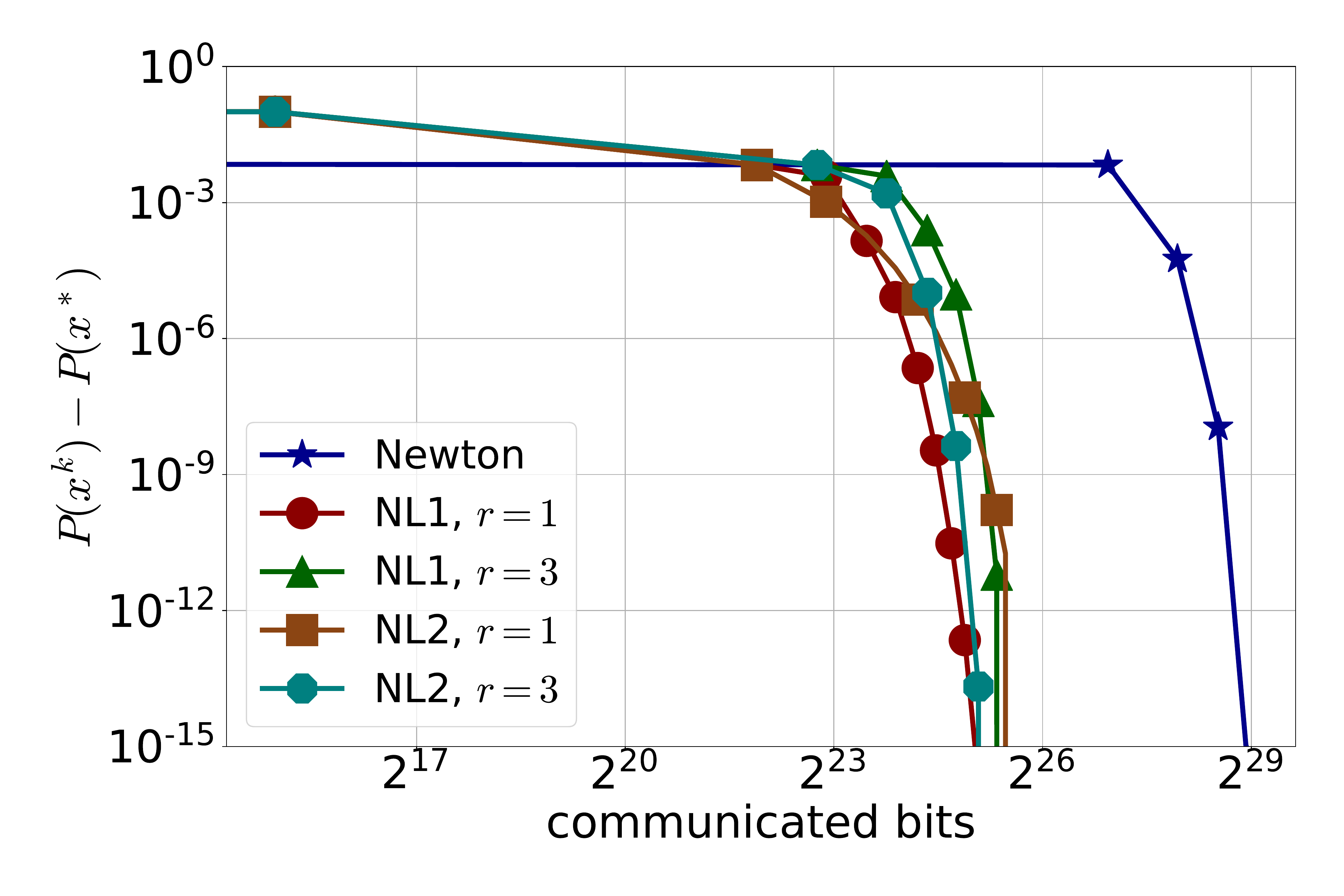}&	
			\includegraphics[width = 0.23 \textwidth]{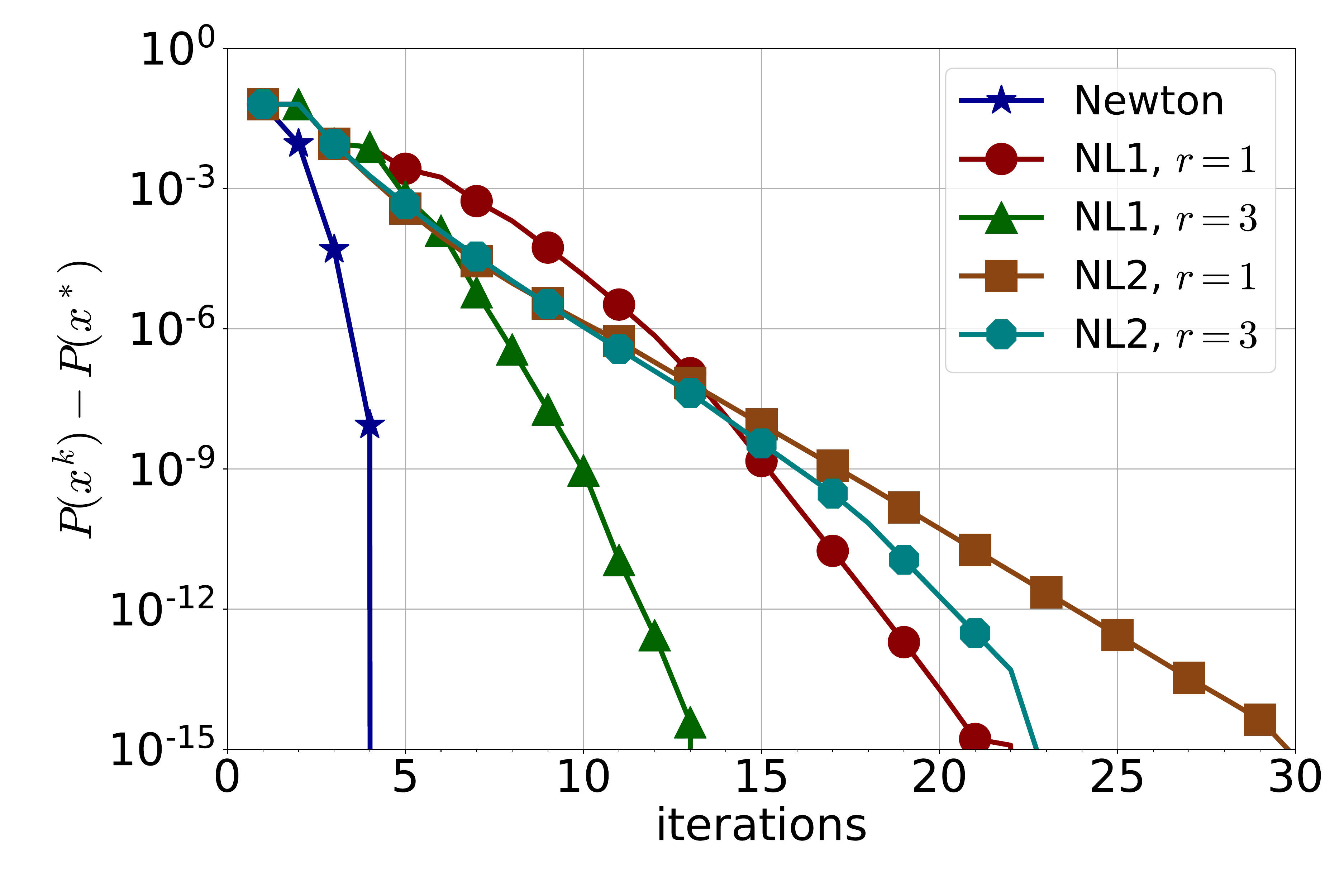}&
			\includegraphics[width = 0.23 \textwidth]{LogReg/artificial/Lambda=1e-4/artificial_nm_nl1_nl2_bits_lmb=0.0001.pdf}\\
			(a) {\tt artificial} & (b) {\tt artificial} & (c) {\tt artificial} & (d) {\tt artificial}\\
			$\lambda=10^{-4}$ & $\lambda=10^{-4}$ & $\lambda=10^{-5}$ & $\lambda=10^{-5}$\\
		\end{tabular}
		\caption{Comparison of {\sf NL1} and {\sf NL2} with Newton's method in terms of iteration complexity for (a), (c); in terms of communication complexity for (b), (d).}
		\label{a9a:alg1_alg2}
	\end{center}
\end{figure}

\begin{figure}[ht]
	\begin{center}
		\centerline{\begin{tabular}{cccc}
				\includegraphics[width = 0.23 \textwidth]{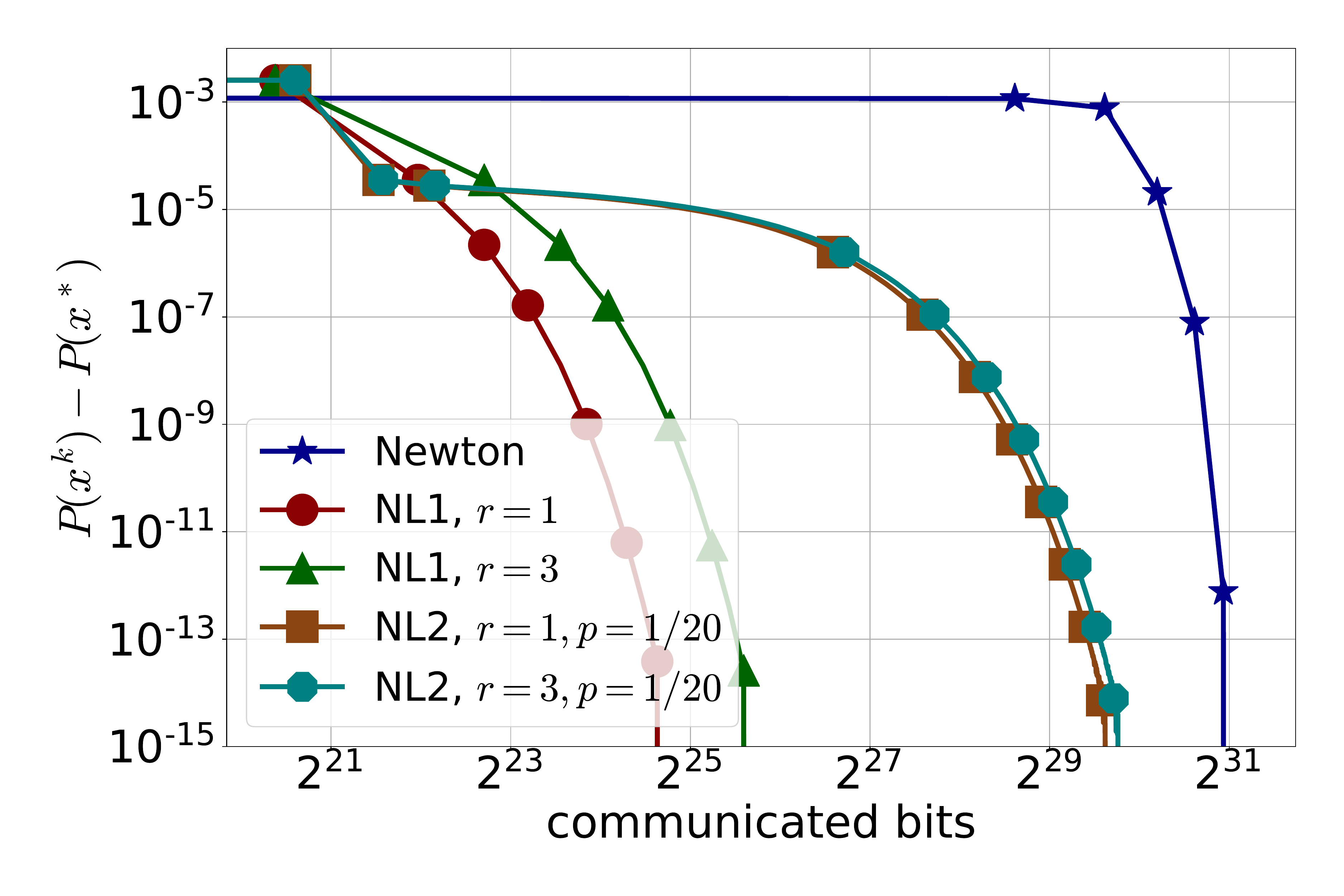}&
				\includegraphics[width = 0.23 \textwidth]{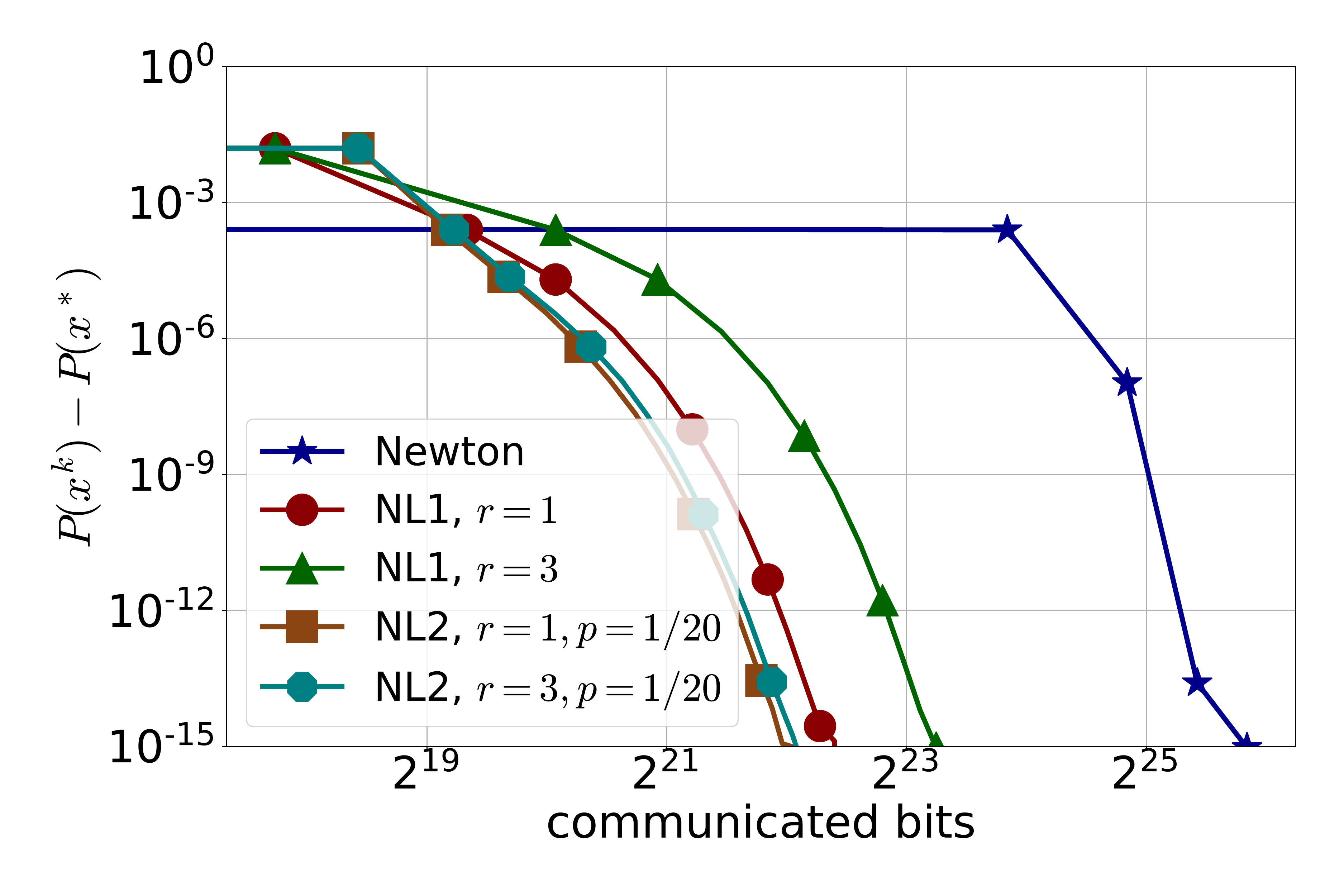}&
				\includegraphics[width = 0.23 \textwidth]{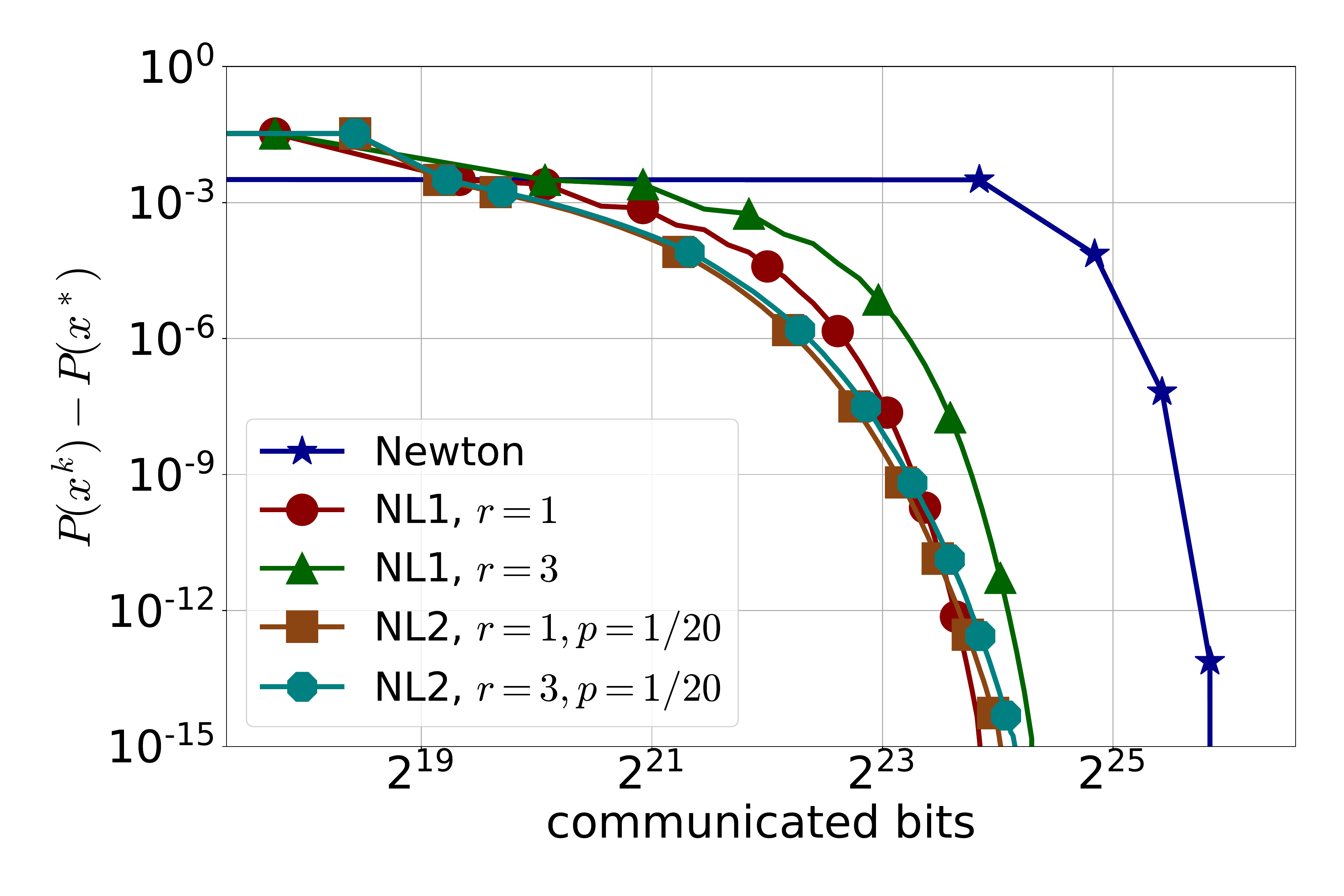}&
				\includegraphics[width = 0.23 \textwidth]{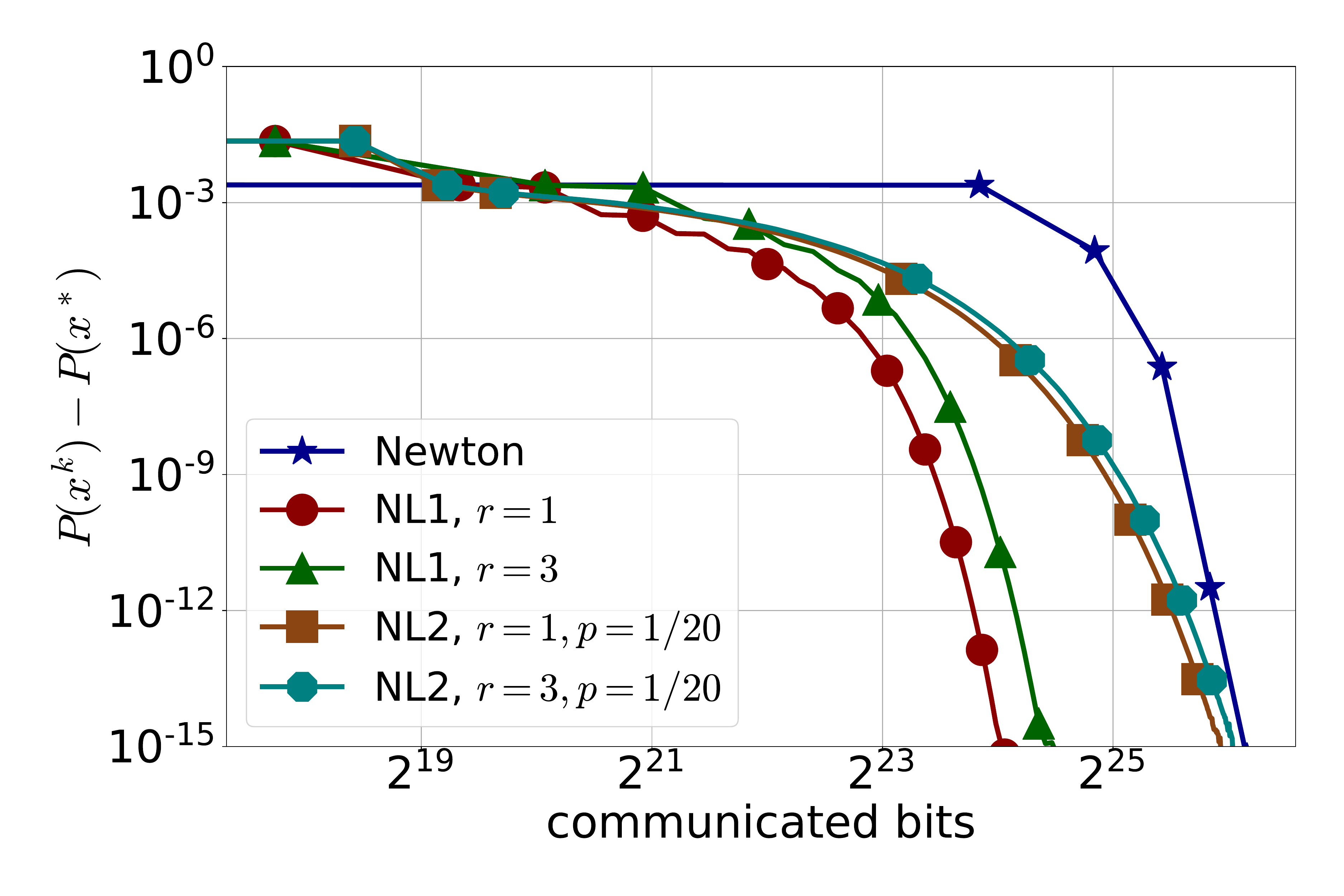}
				\\
				(a) {\tt w8a}, $\lambda=10^{-3}$ &(b) {\tt phishing}, $\lambda=10^{-3}$ & (c) {\tt phishing}, $\lambda=10^{-4}$ &(d) {\tt phishing}, $\lambda=10^{-5}$\\
				\includegraphics[width = 0.23 \textwidth]{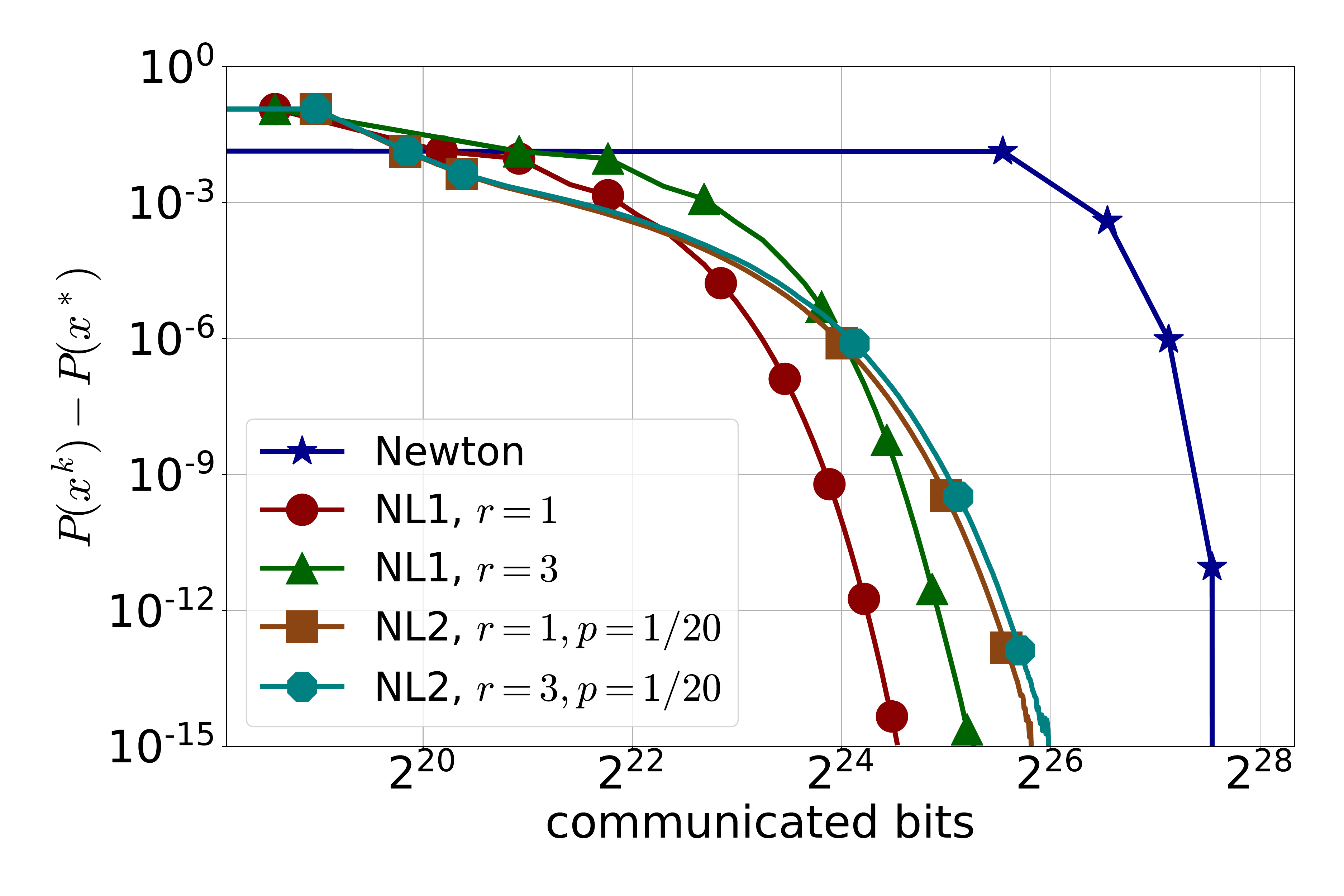}&
				\includegraphics[width = 0.23 \textwidth]{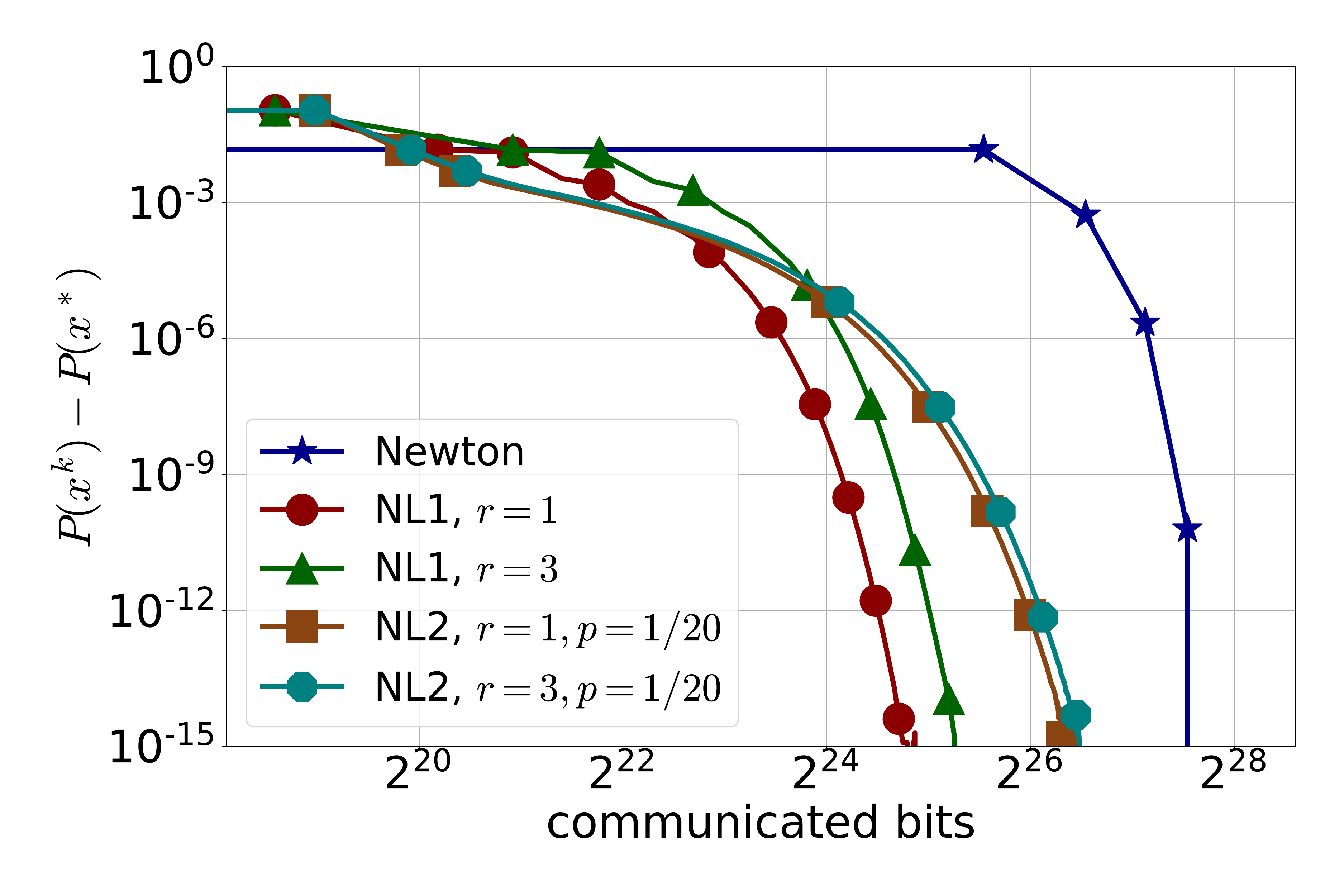}&
				\includegraphics[width = 0.23 \textwidth]{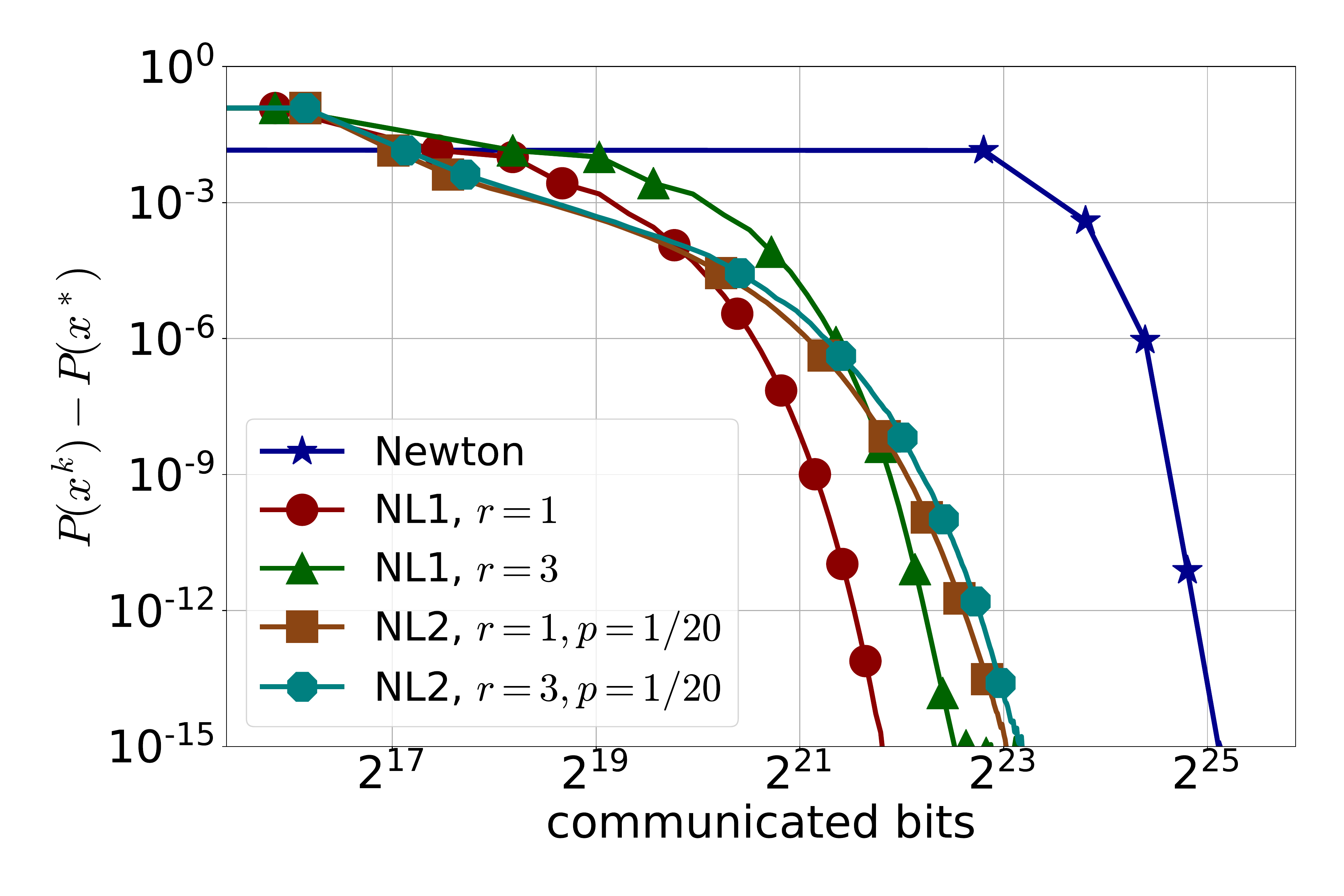}&
				\includegraphics[width = 0.23 \textwidth]{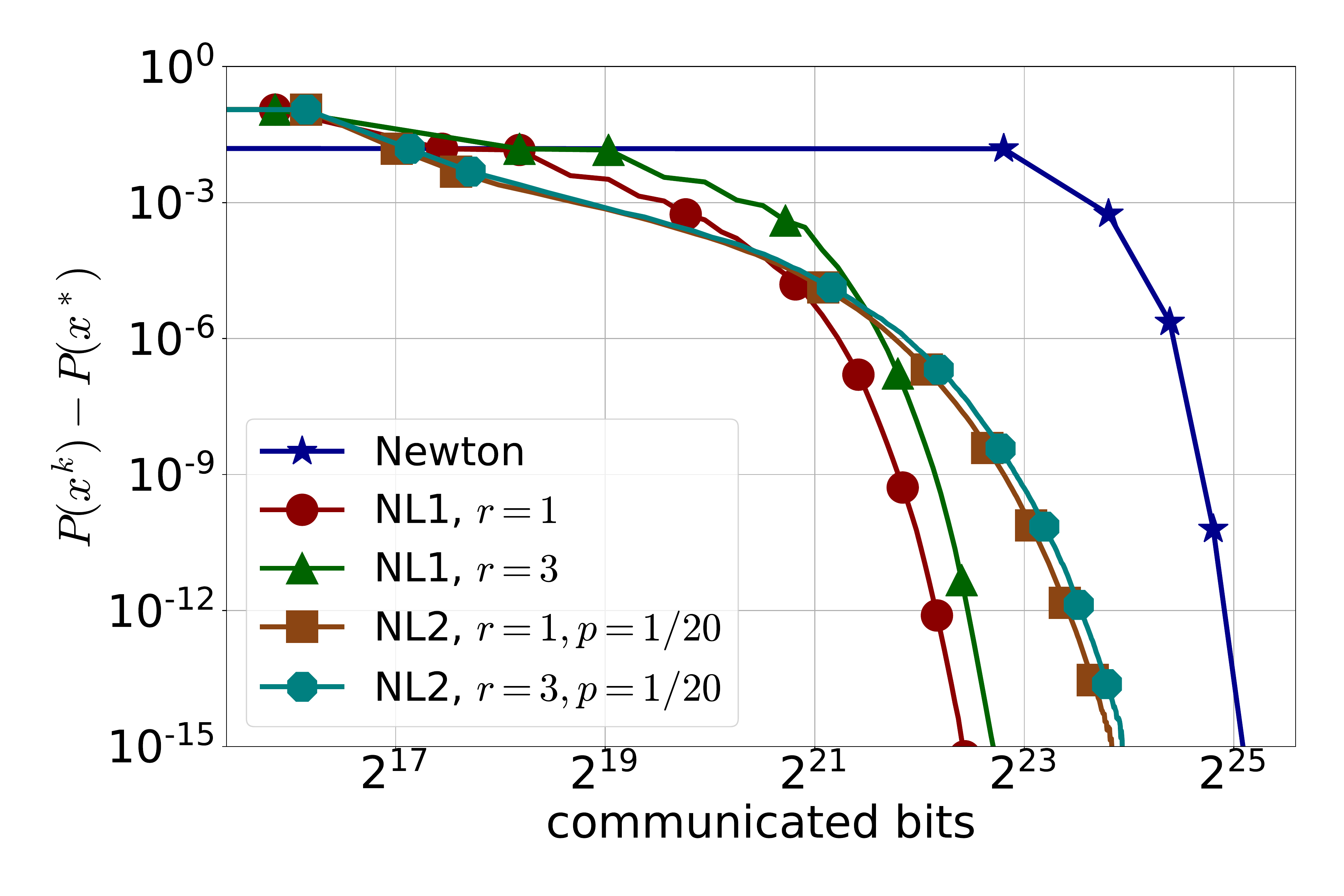}
				\\
				(e) {\tt a7a}, $\lambda=10^{-3}$ &(f) {\tt a7a}, $\lambda=10^{-3}$ & (g) {\tt a2a}, $\lambda=10^{-3}$ &(h) {\tt a2a}, $\lambda=10^{-4}$
		\end{tabular}}
		\caption{Comparison of {\sf NL1}, {\sf NL2} with Newton's method in terms of communication complexity.}
		\label{exp:newton}
	\end{center}
\end{figure}

\subsection{Comparison of {\sf NL1} and {\sf NL2} with BFGS}

In our next test, we compare {\sf NL1} and {\sf NL2}  with BFGS in Figure~\ref{exp:bfgs}. As we can see, our methods have better communication efficiency than BFGS, by {\em several orders of magnitude}.

\begin{figure}[ht]
	\begin{center}
		\centerline{\begin{tabular}{cccc}
				\includegraphics[width = 0.23 \textwidth]{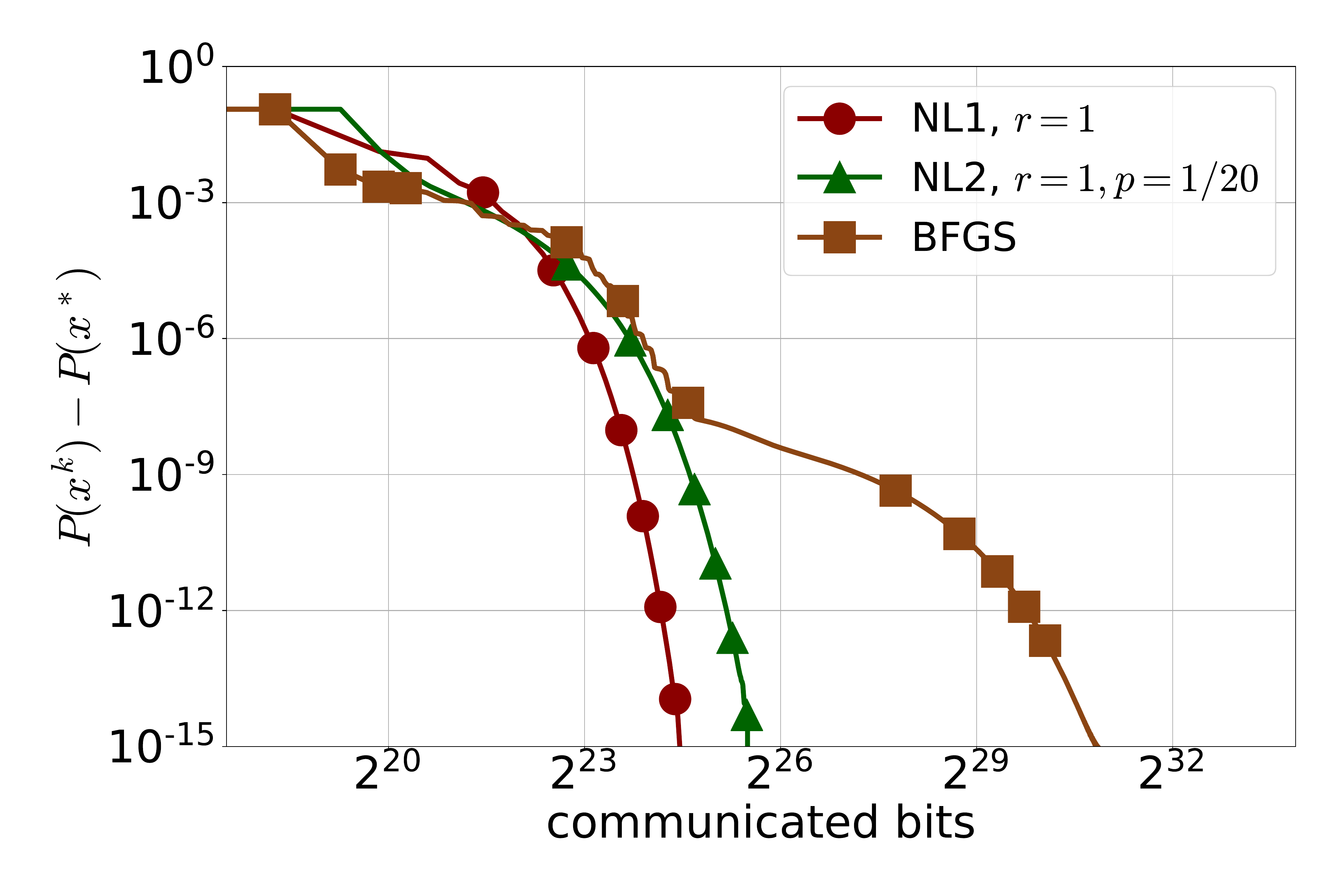}&
				\includegraphics[width = 0.23 \textwidth]{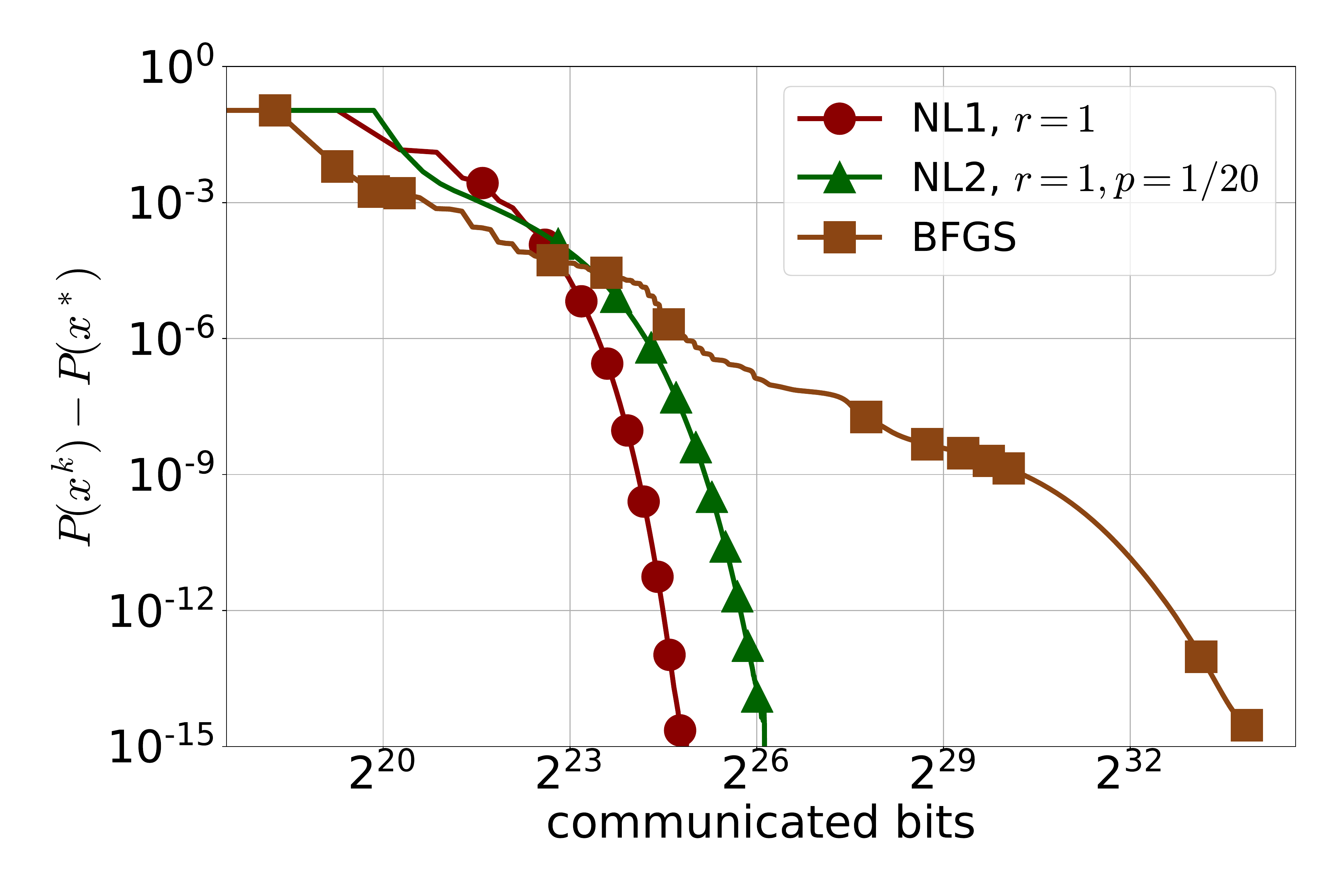}&
				\includegraphics[width = 0.23 \textwidth]{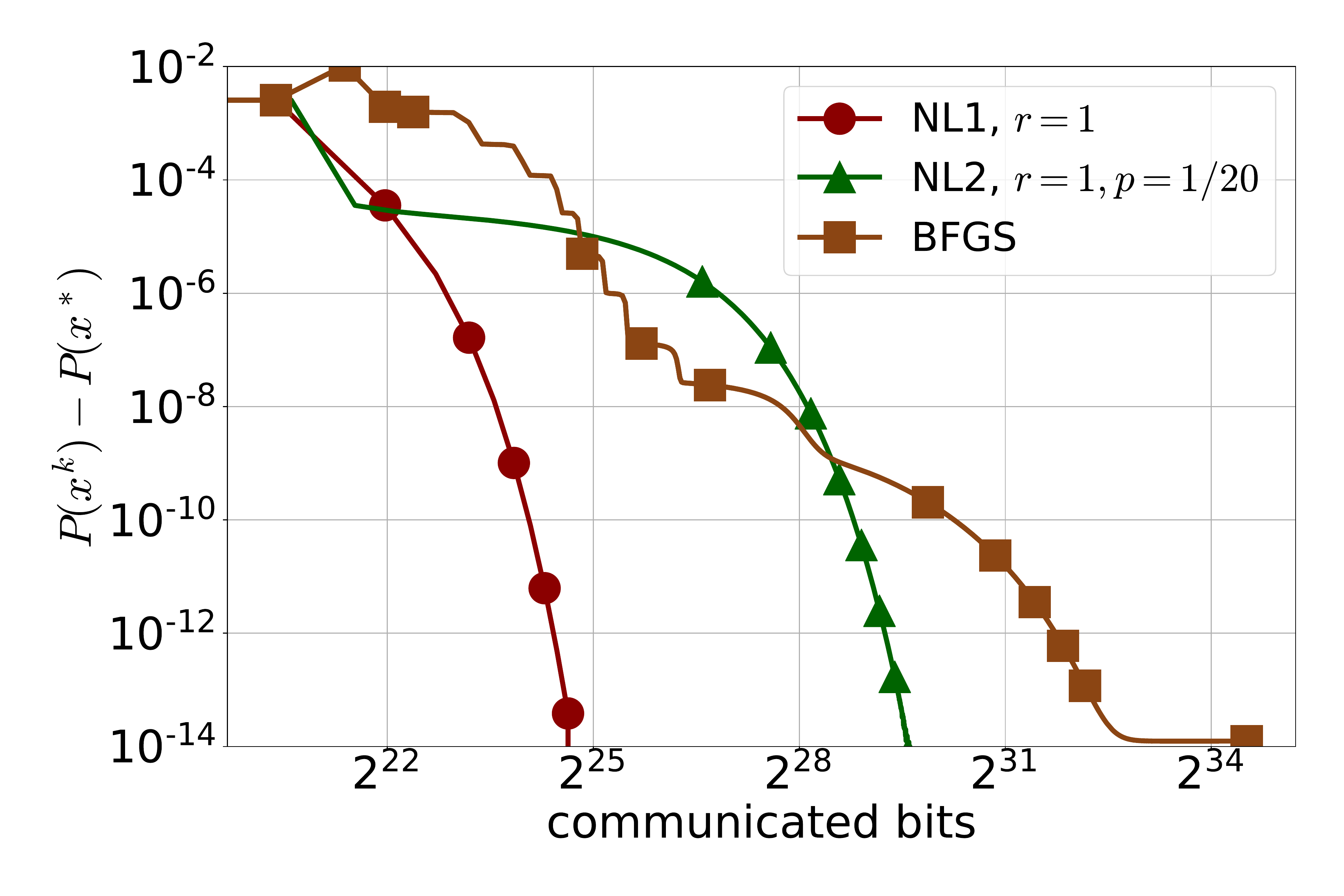}&
				\includegraphics[width = 0.23 \textwidth]{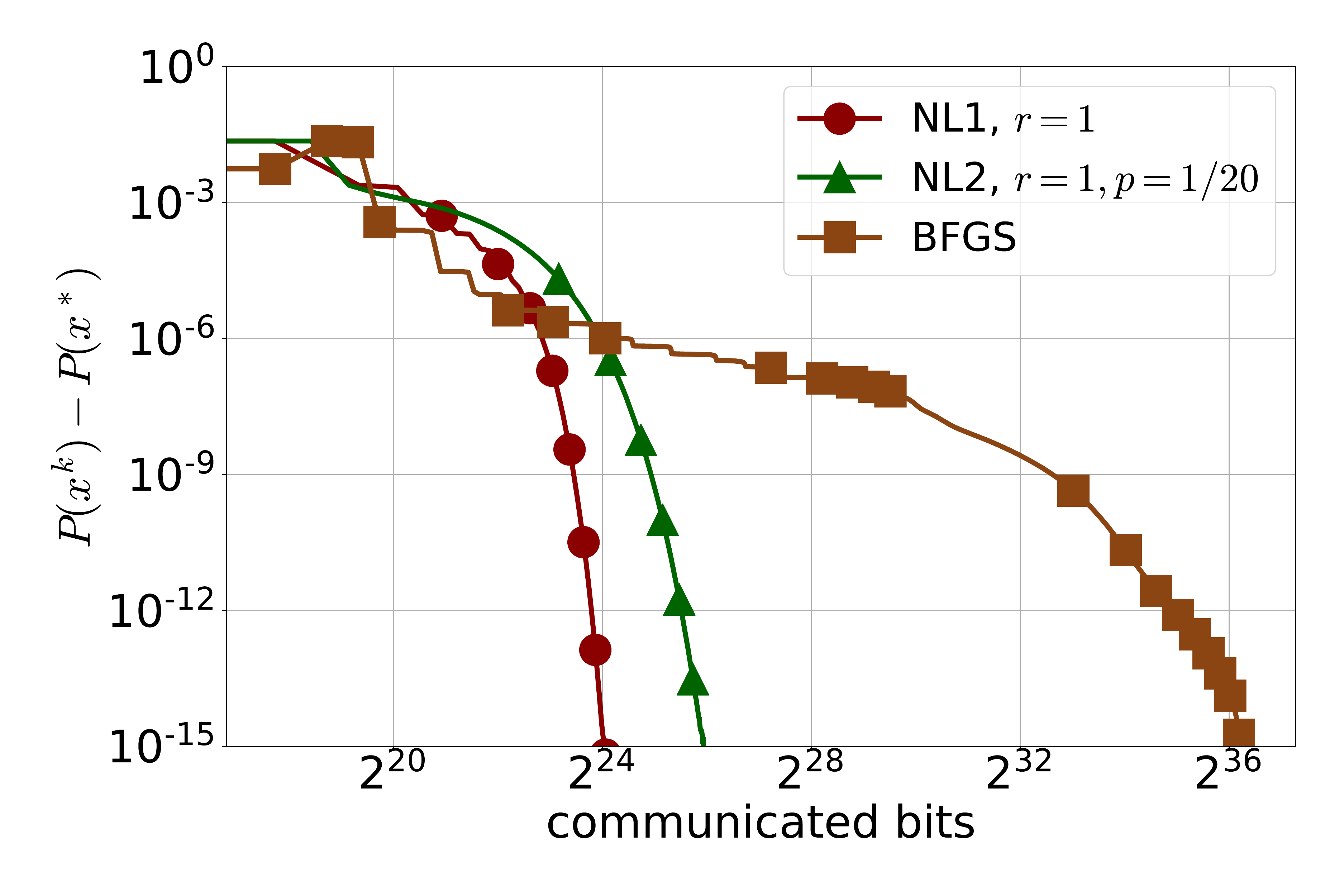}
				\\
				(a) {\tt a9a}, $\lambda=10^{-3}$ &(b) {\tt a9a}, $\lambda=10^{-4}$ & (c) {\tt w8a}, $\lambda=10^{-3}$ &(d) {\tt phishing}, $\lambda=10^{-5}$\\
				\includegraphics[width = 0.23 \textwidth]{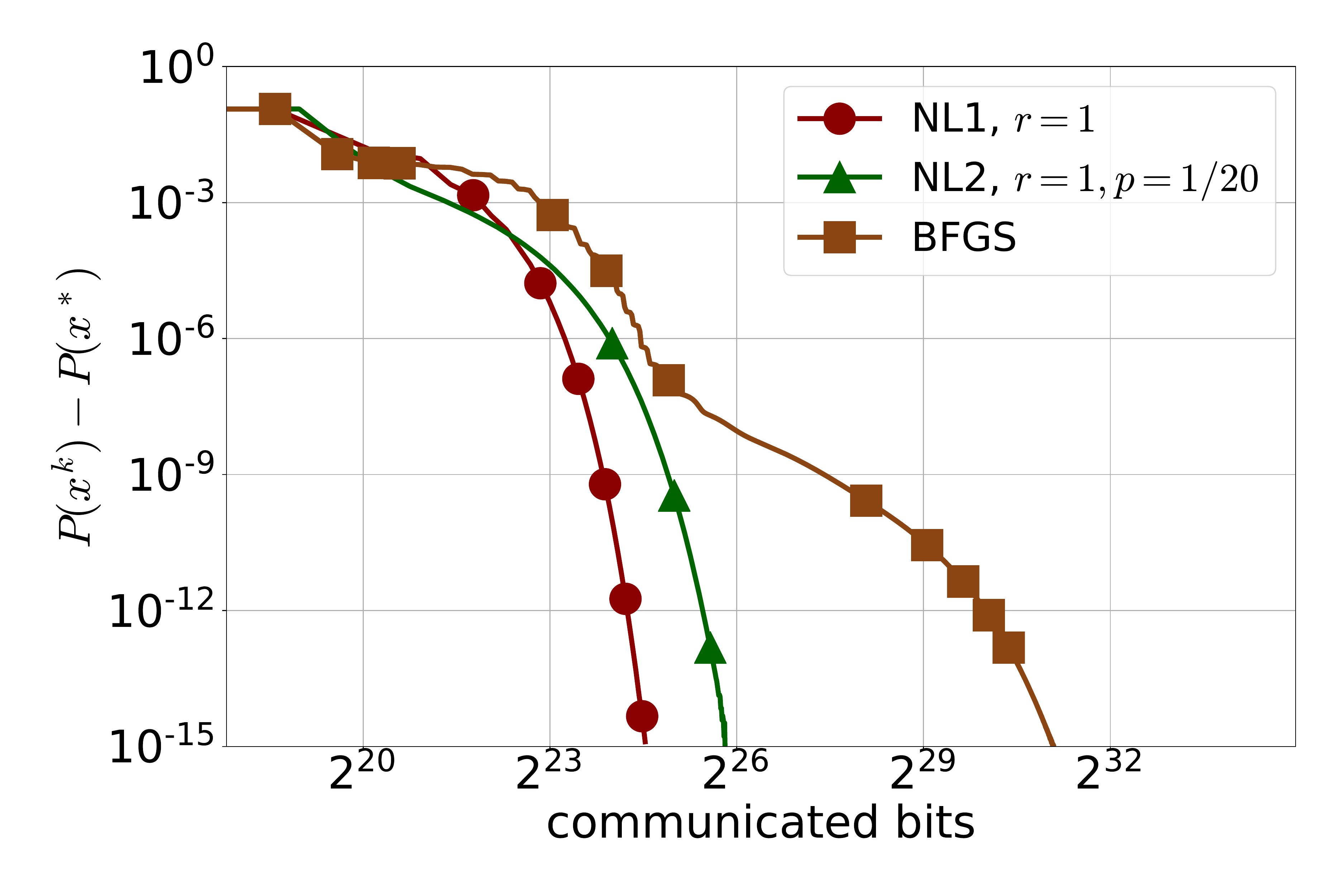}&
				\includegraphics[width = 0.23 \textwidth]{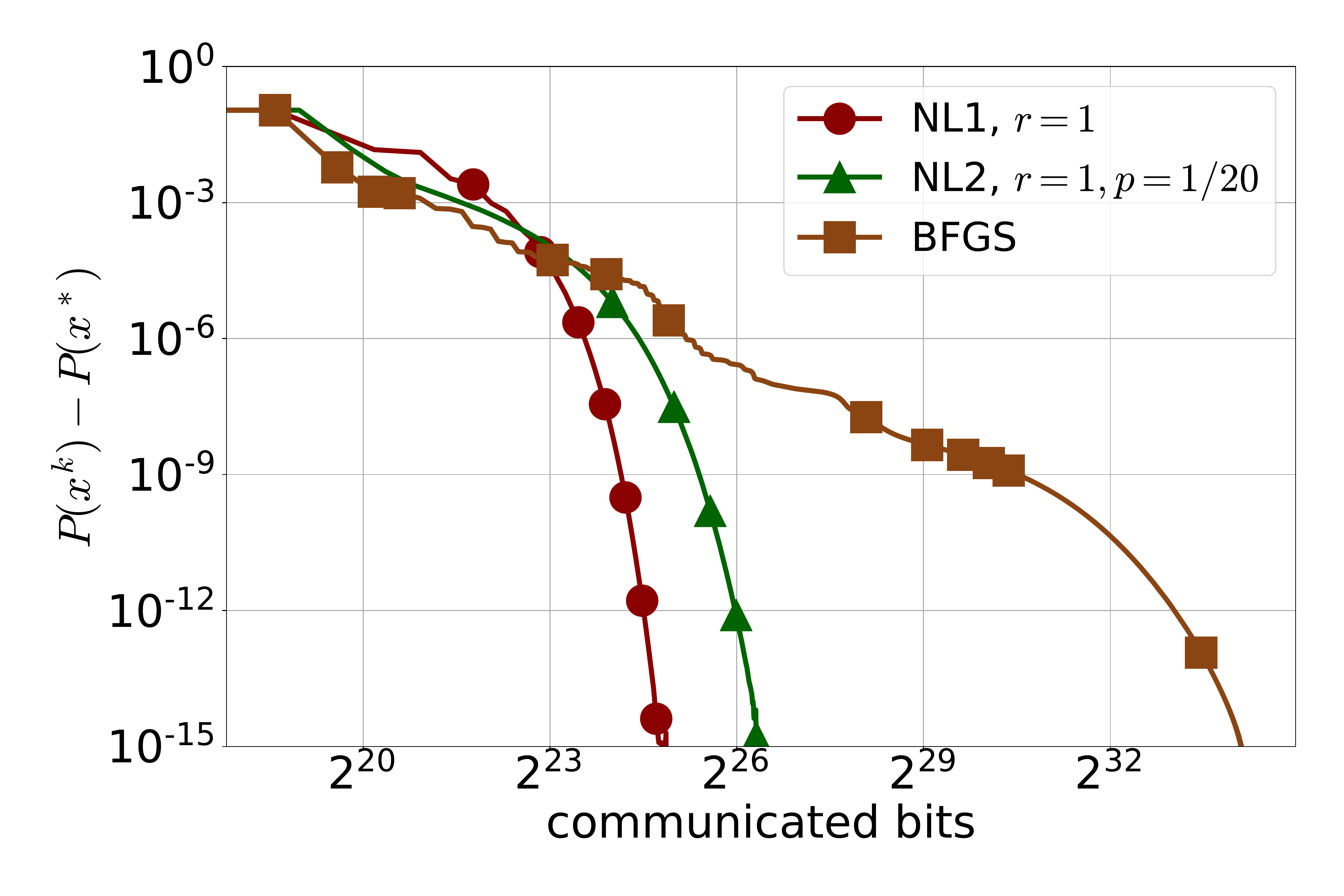}&
				\includegraphics[width = 0.23 \textwidth]{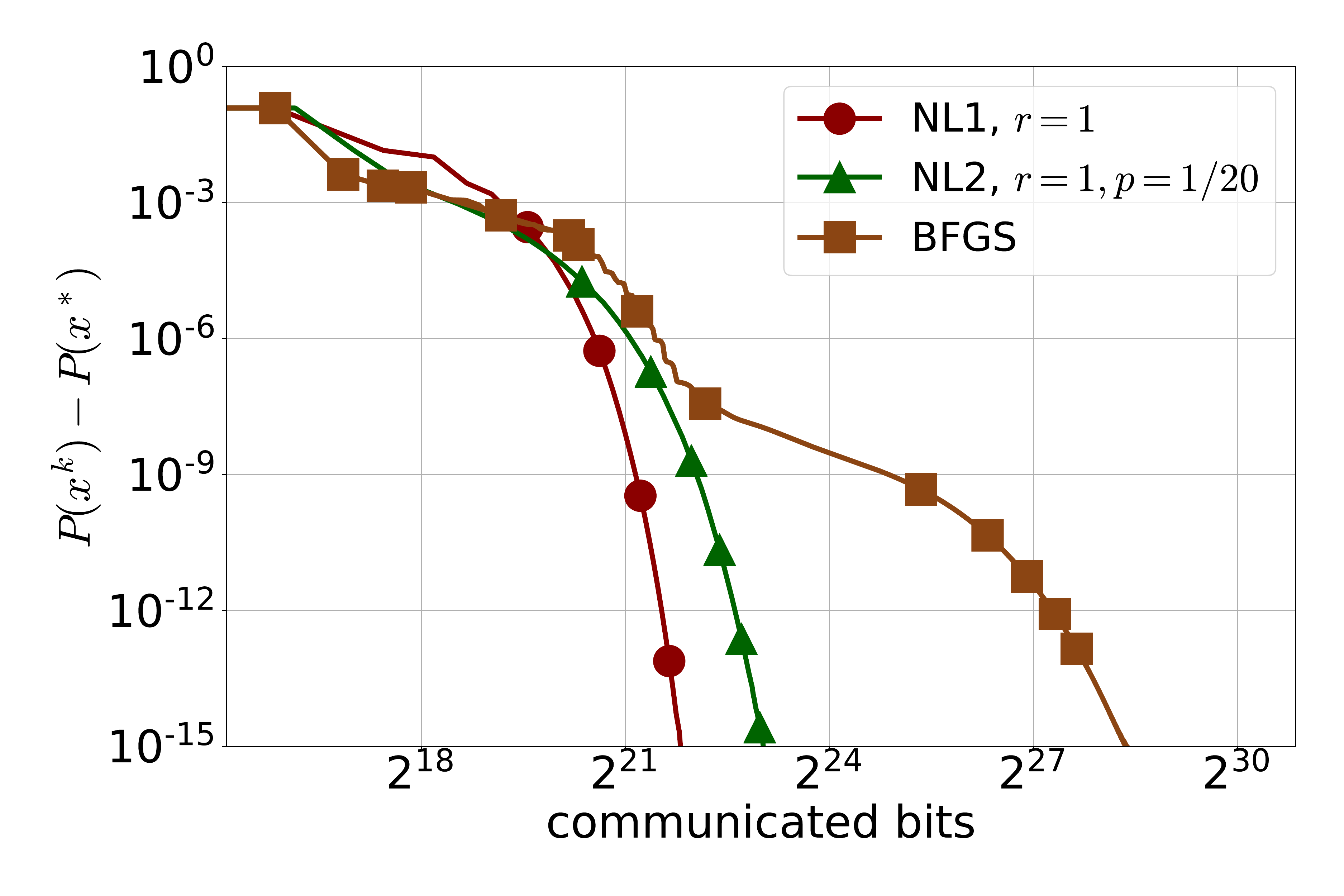}&
				\includegraphics[width = 0.23 \textwidth]{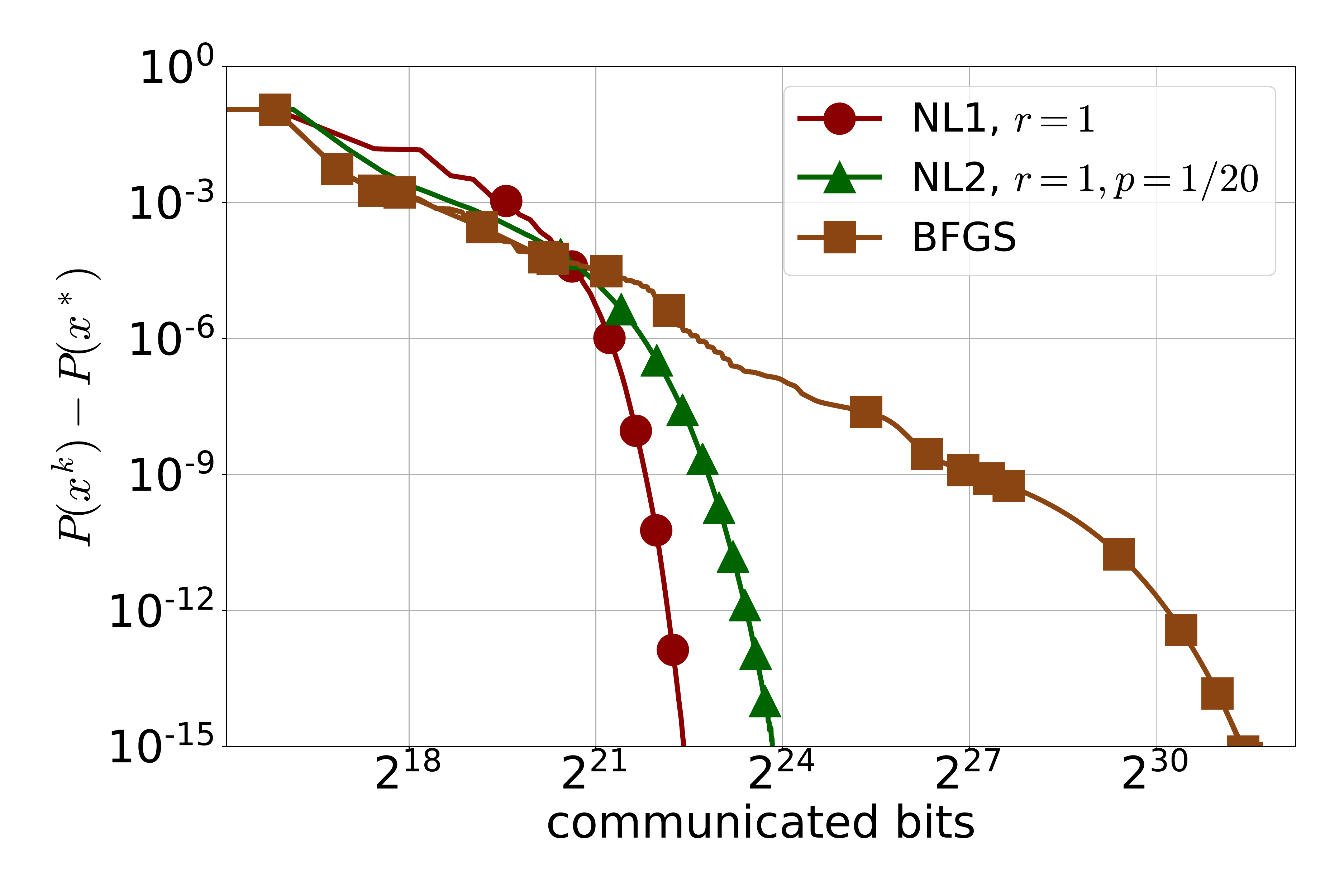}
				\\
				(e) {\tt a7a}, $\lambda=10^{-3}$ &(f) {\tt a7a}, $\lambda=10^{-4}$ & (g) {\tt a2a}, $\lambda=10^{-3}$ &(h) {\tt a2a}, $\lambda=10^{-4}$
		\end{tabular}}
		\caption{Comparison of {\sf NL1}, {\sf NL2} and BFGS in terms of communication complexity.}
		\label{exp:bfgs}
	\end{center}
\end{figure}

\subsection{Comparison of {\sf NL1} and {\sf NL2} with ADIANA}

Next, we compare {\sf NL1} and {\sf NL2} with ADIANA~\citep{ADIANA} using three different compression operators: natural compression (DIANA-NC),  random sparsification (DIANA-RS, $r = d/4$) and random dithering (DIANA-RD, $s = \sqrt{d}$); see Figure~\ref{exp:adiana}. Based on the experimental results, we can conclude that {\sf NL1} outperforms all three versions ADIANA for all types of compression, often by {\em several degrees of magnitude.} The same is true for  {\sf NL2}, with the exception of Figure~\ref{exp:adiana} (c) ({\tt w8a} dataset) where two variants of ADIANA are faster.

\begin{figure}[ht]
	\begin{center}
		\centerline{\begin{tabular}{cccc}
				\includegraphics[width = 0.23 \textwidth]{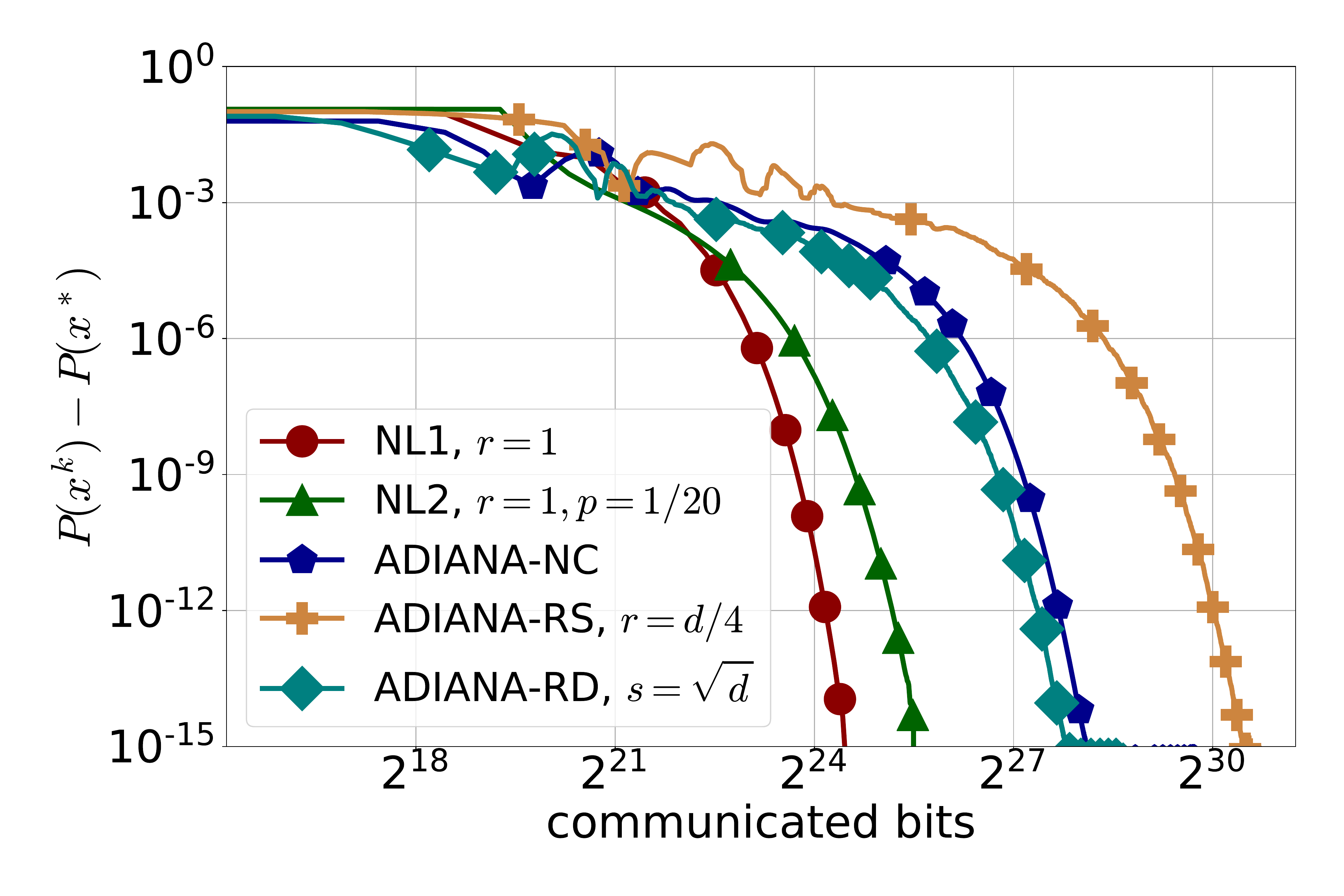}&
				\includegraphics[width = 0.23 \textwidth]{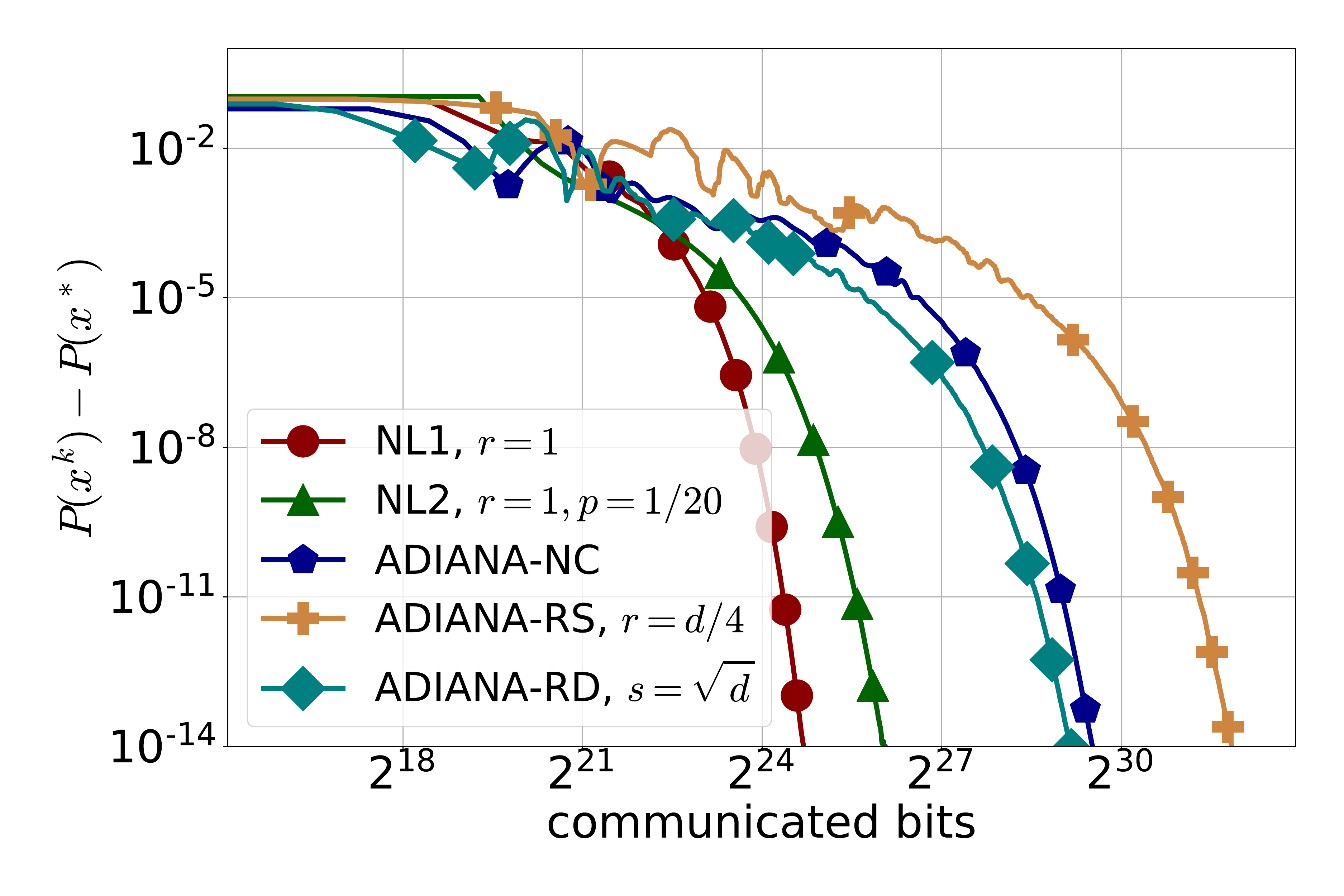}&
				\includegraphics[width = 0.23 \textwidth]{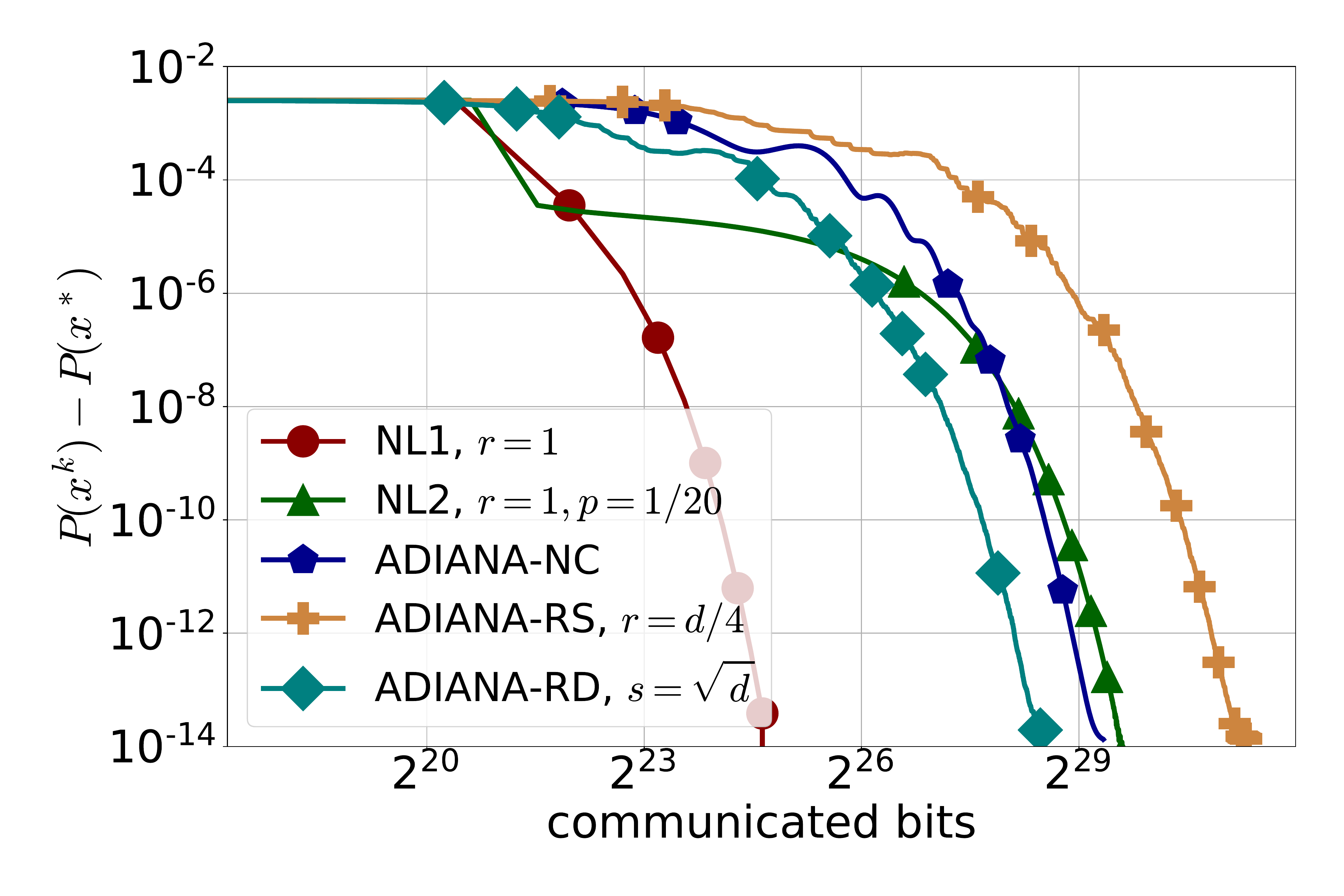}&
				\includegraphics[width = 0.23 \textwidth]{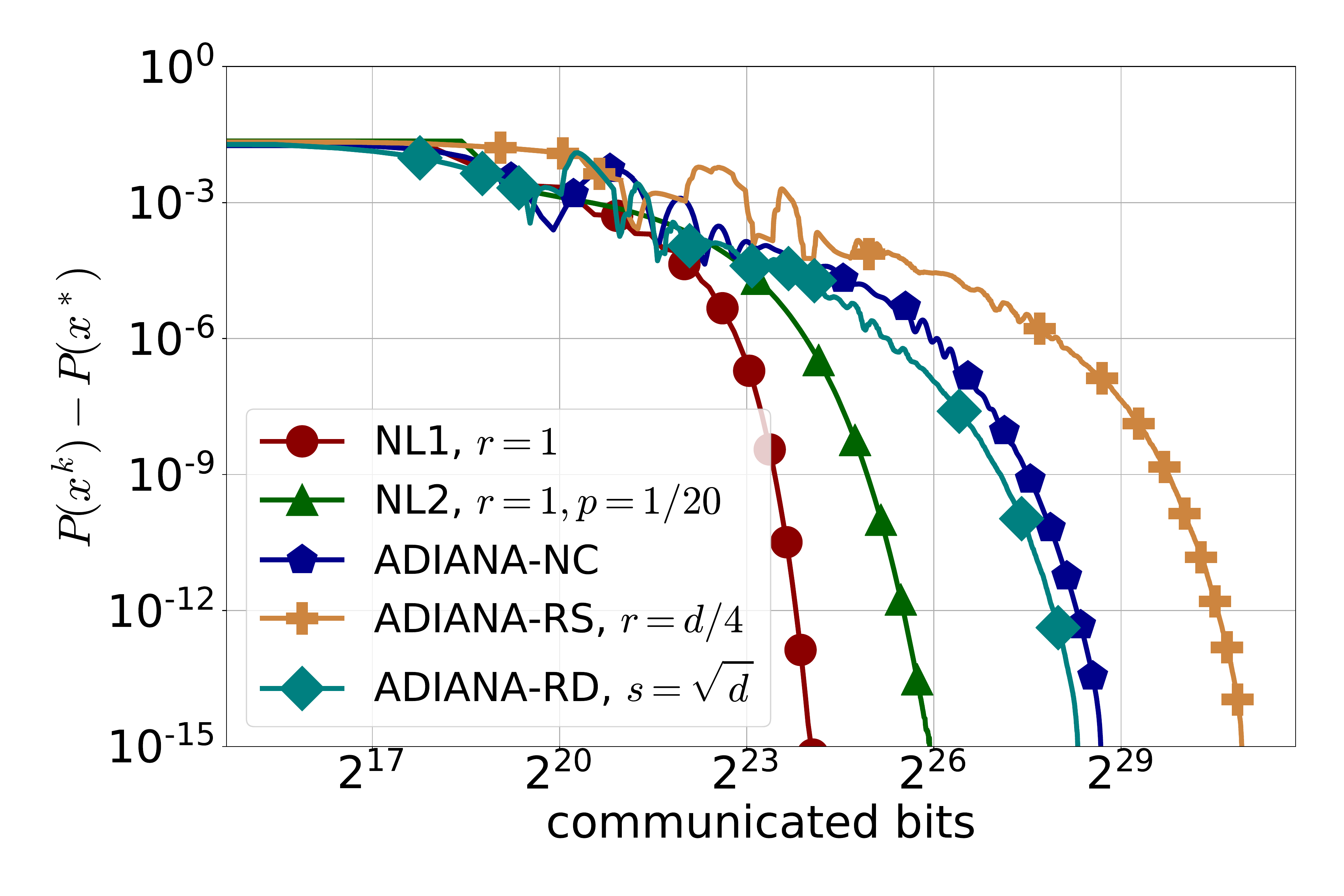}
				\\
				(a) {\tt a9a}, $\lambda=10^{-3}$ &(b) {\tt a9a}, $\lambda=10^{-4}$ & (c) {\tt w8a}, $\lambda=10^{-3}$ &(d) {\tt phishing}, $\lambda=10^{-5}$\\
				\includegraphics[width = 0.23 \textwidth]{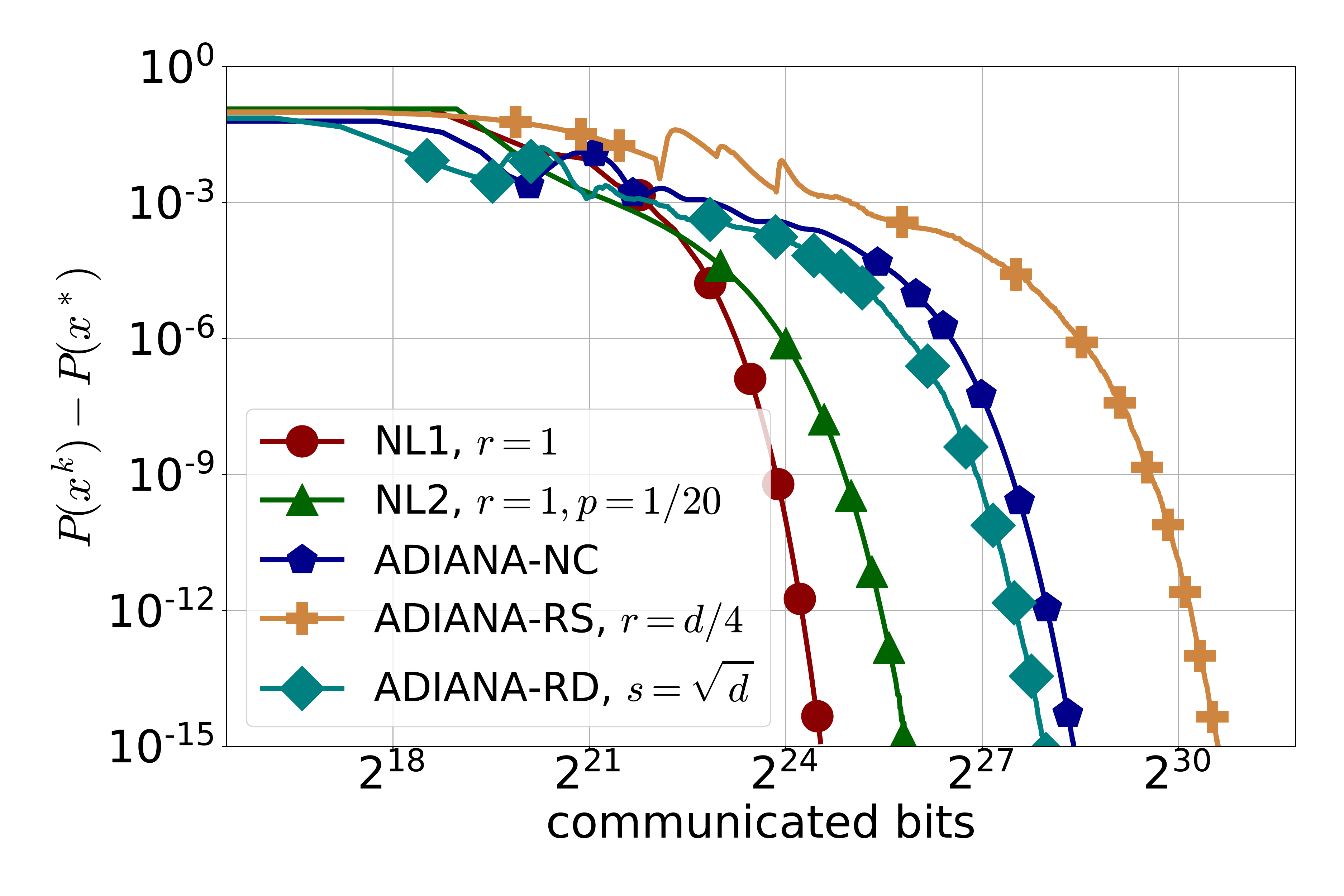}&
				\includegraphics[width = 0.23 \textwidth]{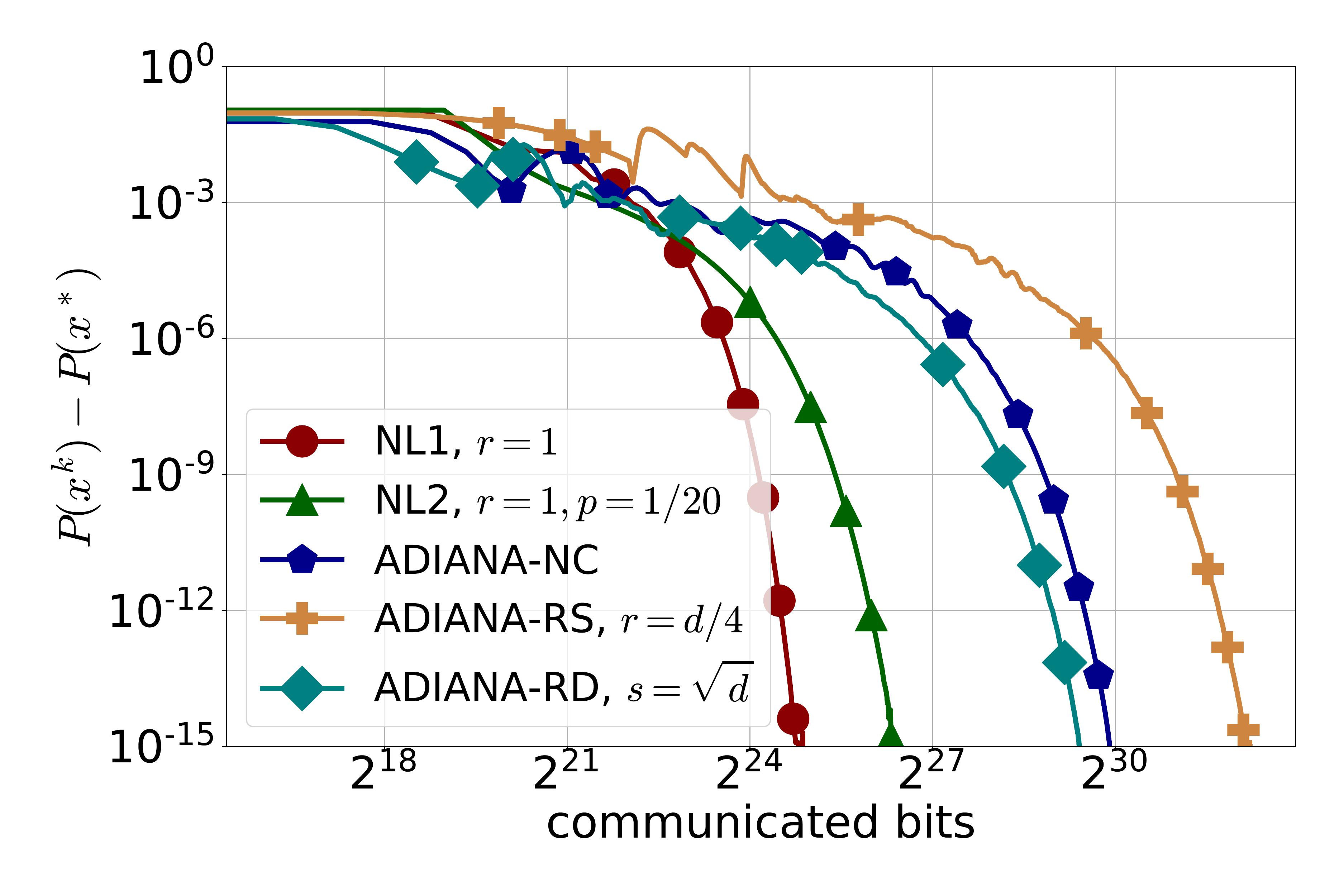}&
				\includegraphics[width = 0.23 \textwidth]{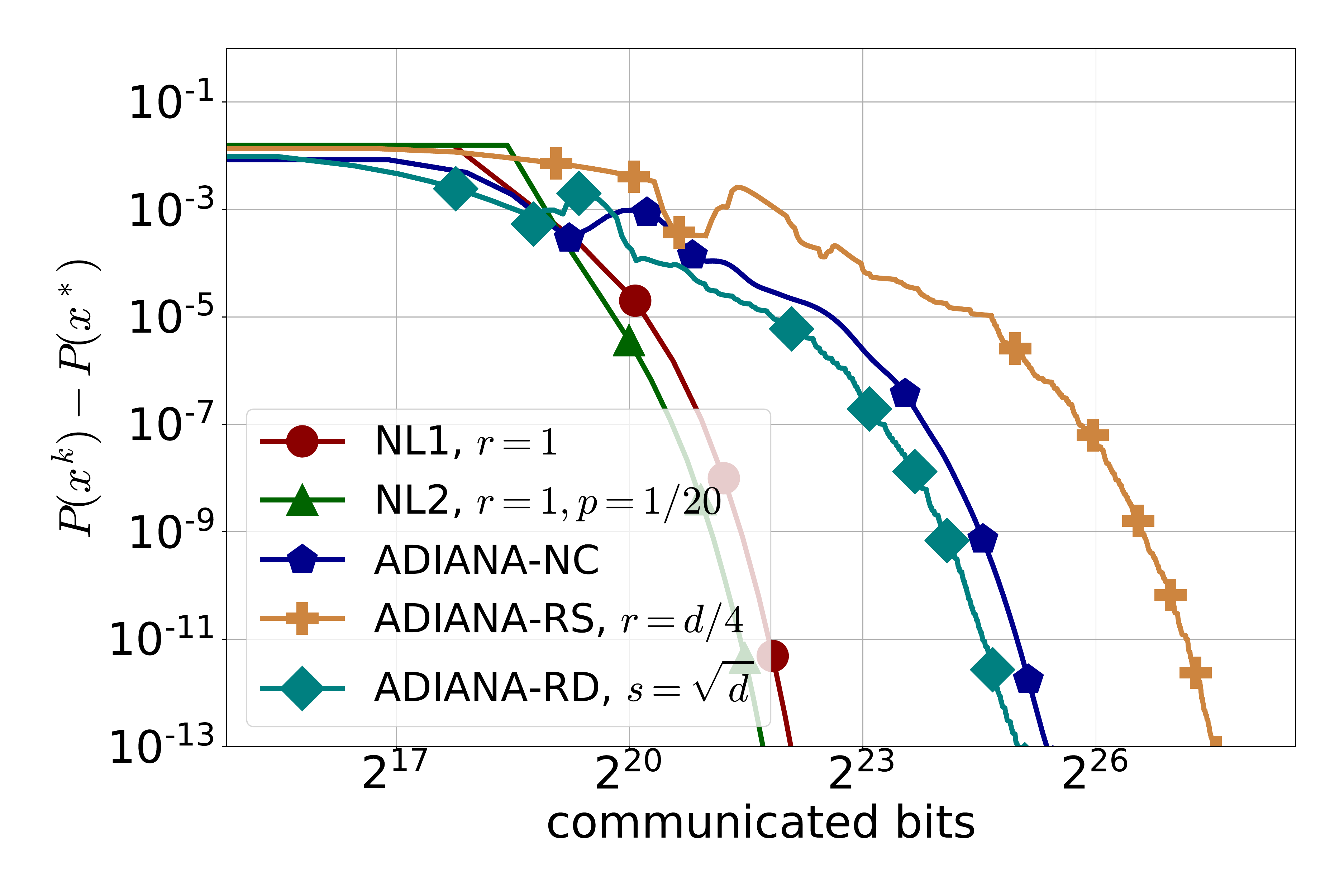}&
				\includegraphics[width = 0.23 \textwidth]{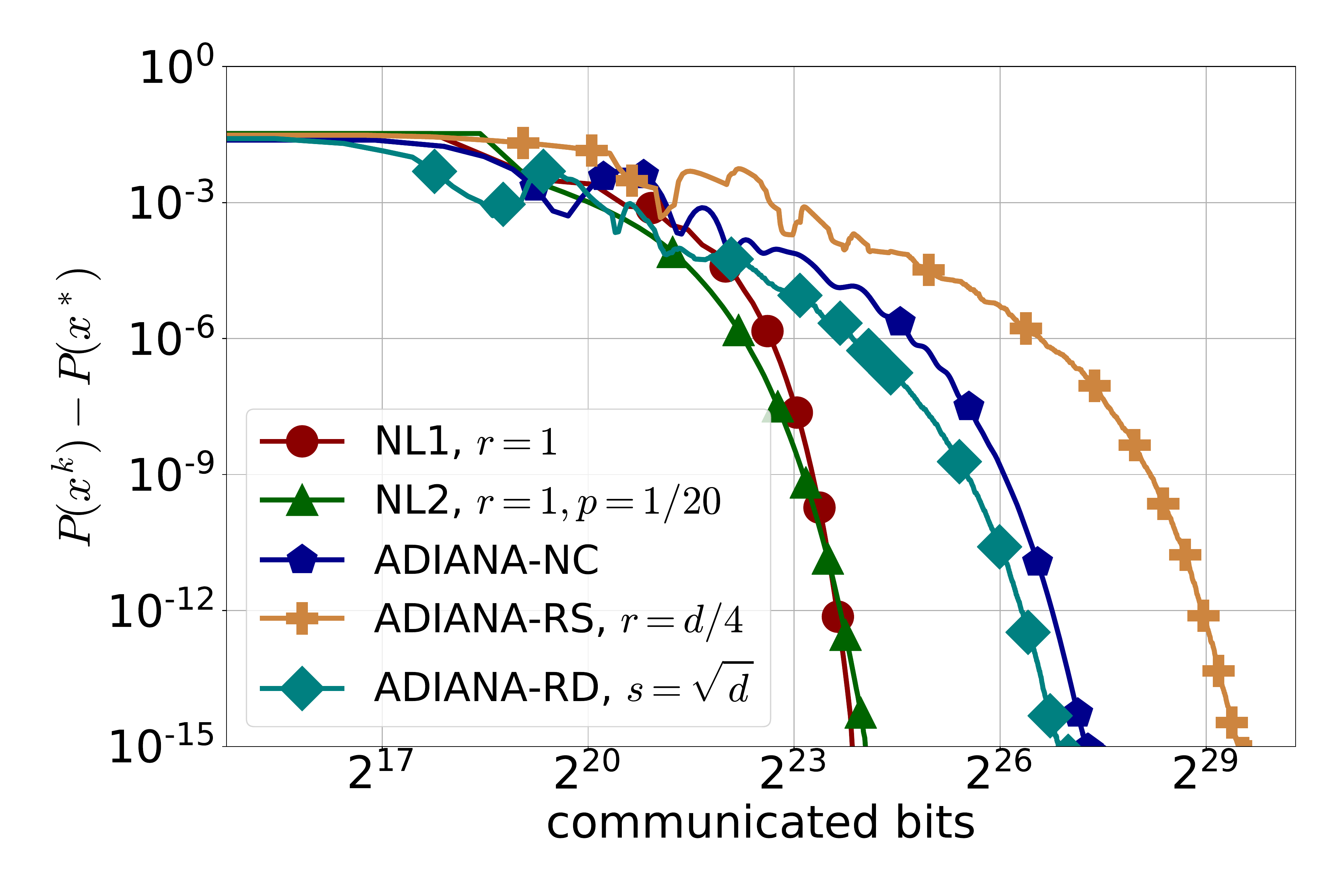}
				\\
				(e) {\tt a7a}, $\lambda=10^{-3}$ &(f) {\tt a7a}, $\lambda=10^{-4}$ & (g) {\tt phishing}, $\lambda=10^{-3}$ &(h) {\tt phishing}, $\lambda=10^{-4}$
		\end{tabular}}
		\caption{Comparison of {\sf NL1}, {\sf NL2} with ADIANA in terms of communication complexity.}
		\label{exp:adiana}
	\end{center}
\end{figure}

\subsection{Comparison of {\sf NL1} and {\sf NL2} with DINGO}

In our next experiment, we compare {\sf NL1} and {\sf NL2} with DINGO~\citep{DINGO2019}. The results, presented  in Figure~\ref{exp:dingo}, show that our methods are more communication efficient than DINGO  by {\em many orders of magnitude.}
This is true for all  experiments with the exception of Figure~\ref{exp:dingo} (c), where DINGO is slightly better than {\sf NL2}. 

\begin{figure}[ht]
	\begin{center}
		\centerline{\begin{tabular}{cccc}
				\includegraphics[width = 0.23 \textwidth]{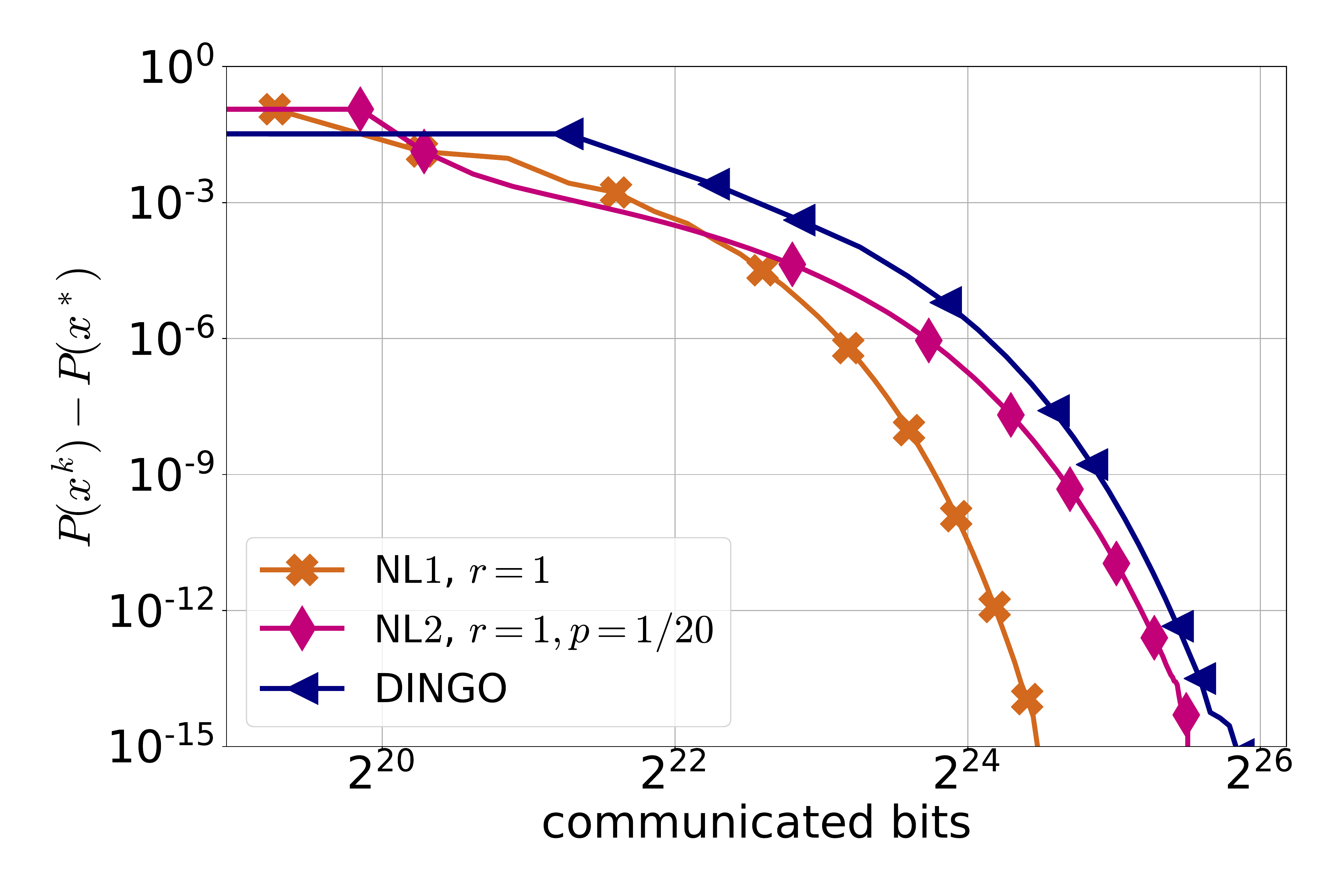}&
				\includegraphics[width = 0.23 \textwidth]{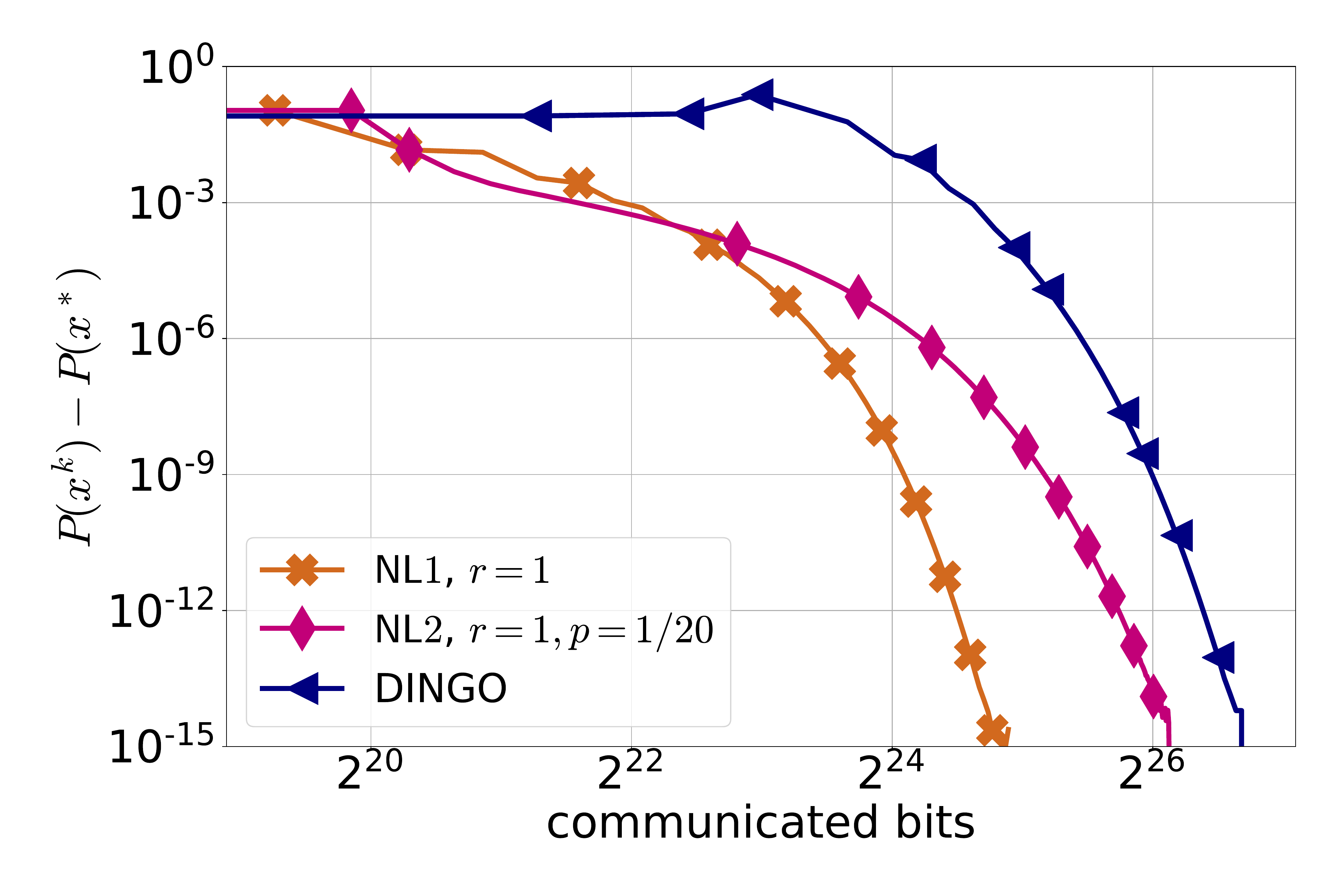}&
				\includegraphics[width = 0.23 \textwidth]{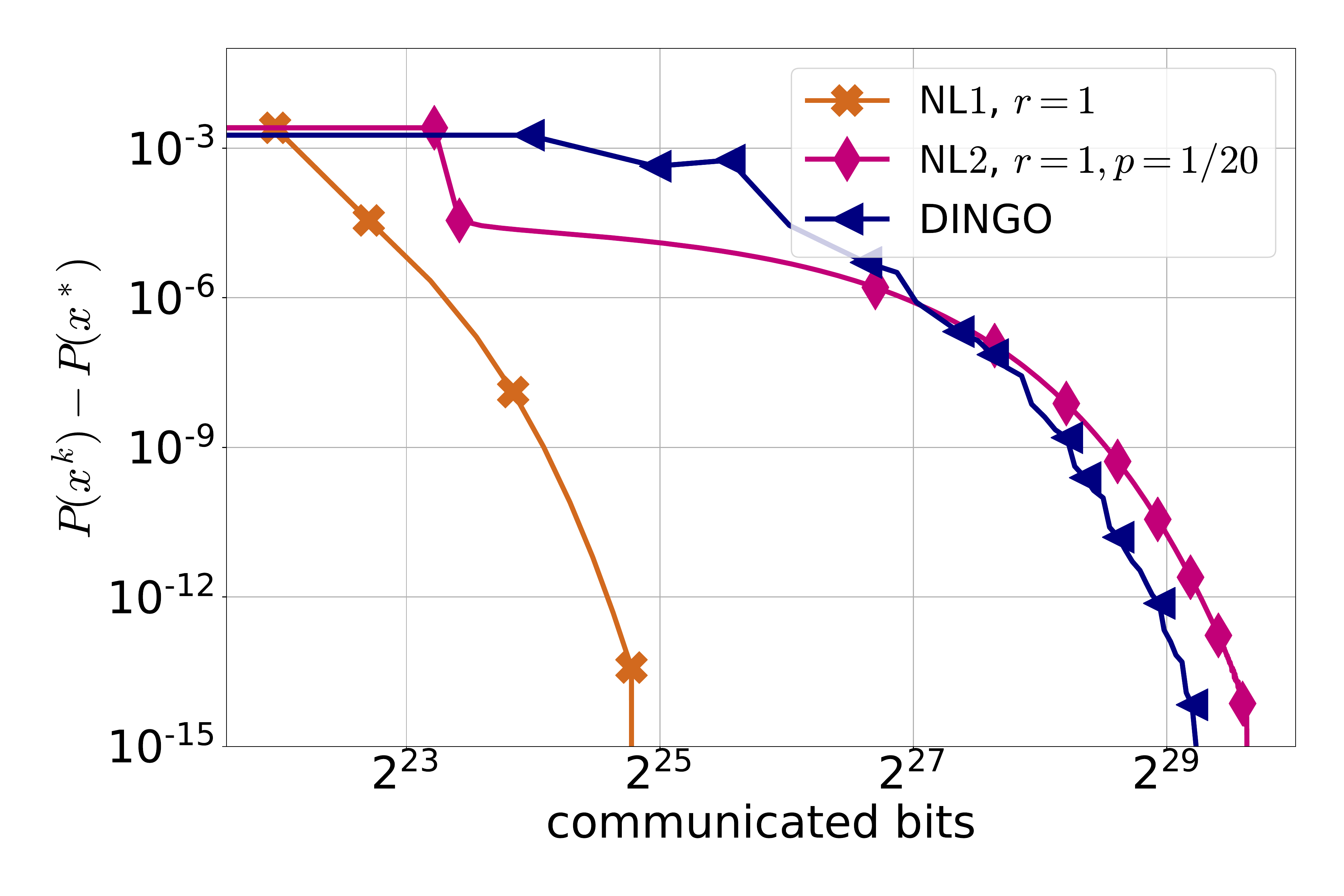}&
				\includegraphics[width = 0.23 \textwidth]{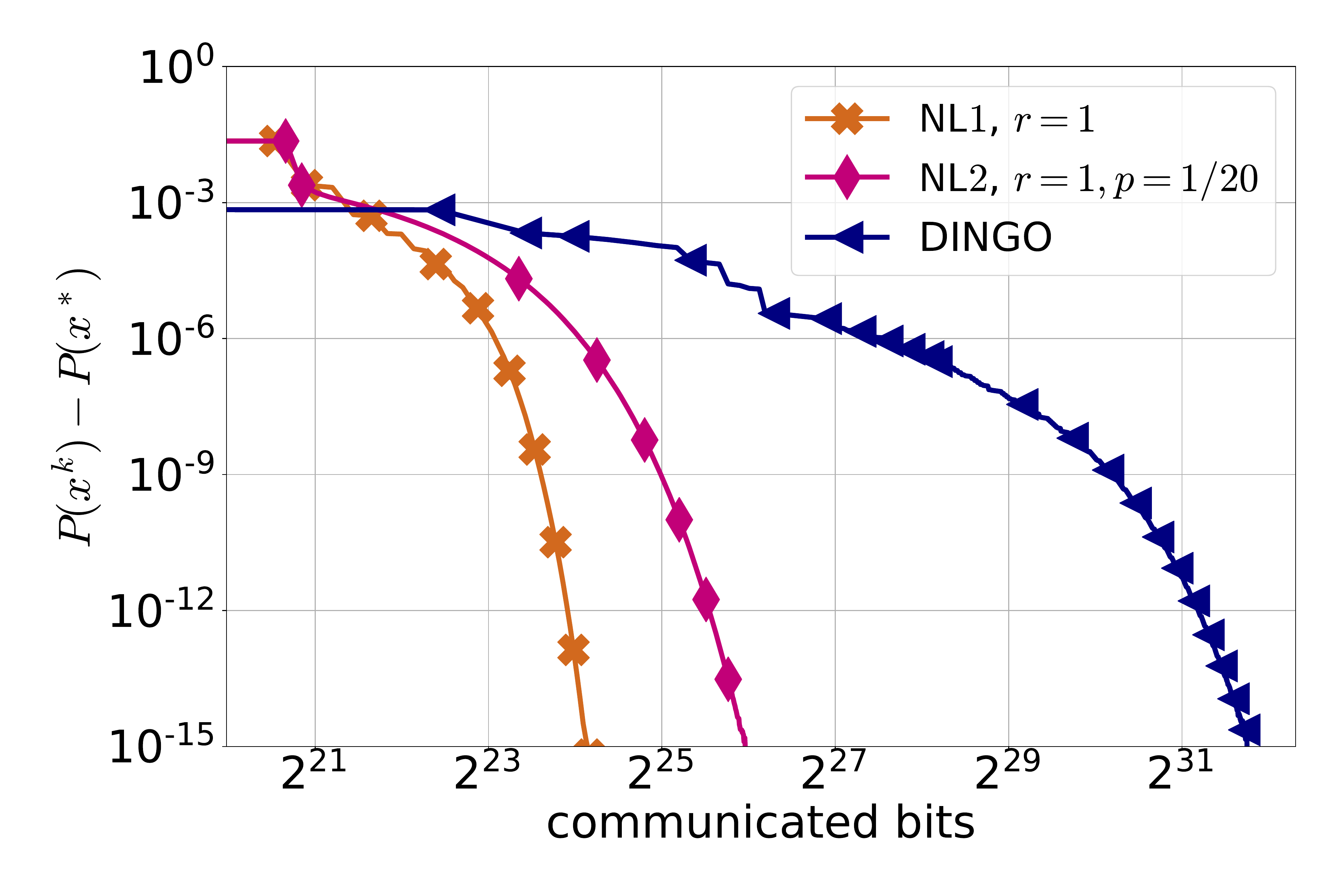}
				\\
				(a) {\tt a9a}, $\lambda=10^{-3}$ &(b) {\tt a9a}, $\lambda=10^{-4}$ & (c) {\tt w8a}, $\lambda=10^{-3}$ &(d) {\tt phishing}, $\lambda=10^{-5}$\\
				\includegraphics[width = 0.23 \textwidth]{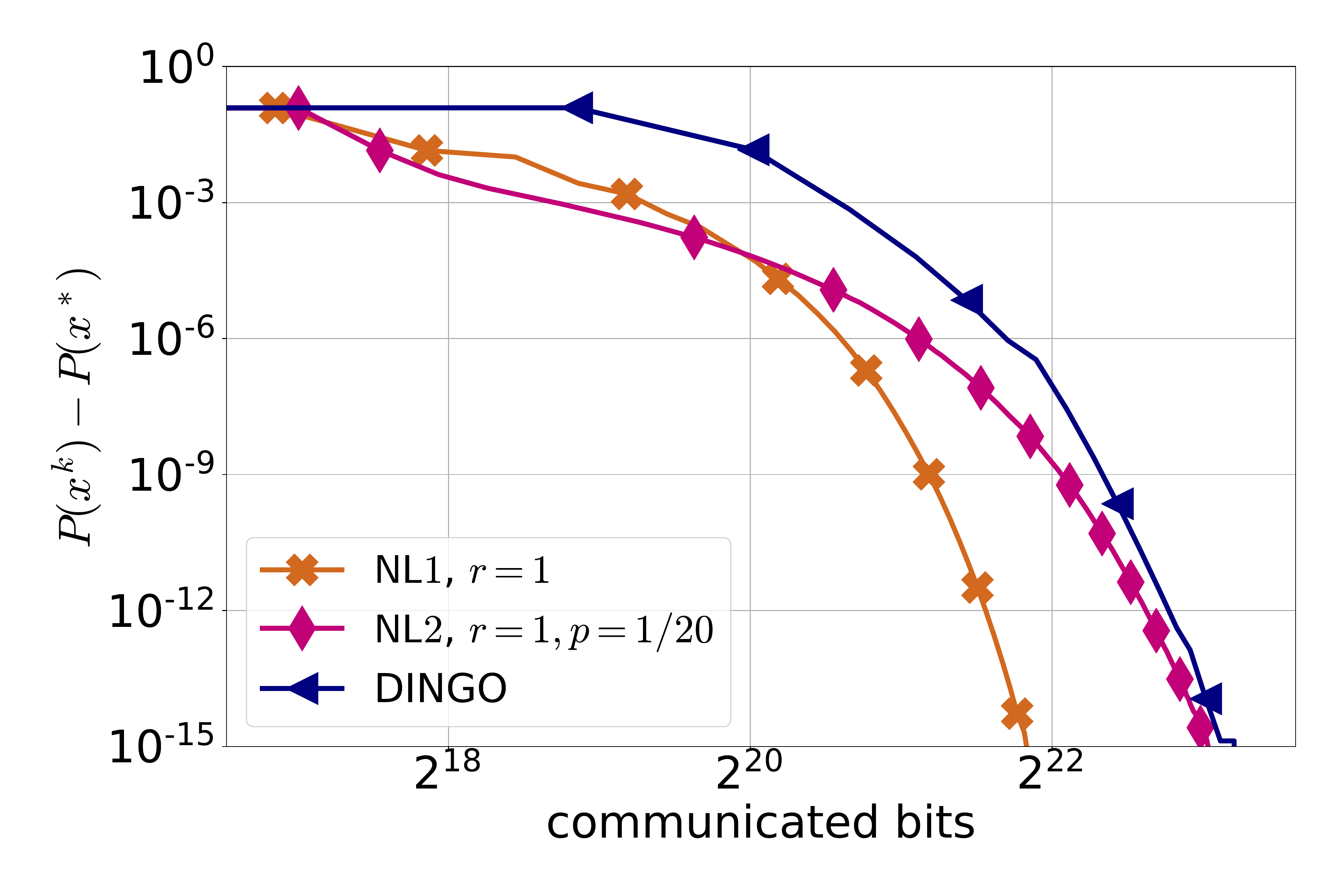}&
				\includegraphics[width = 0.23 \textwidth]{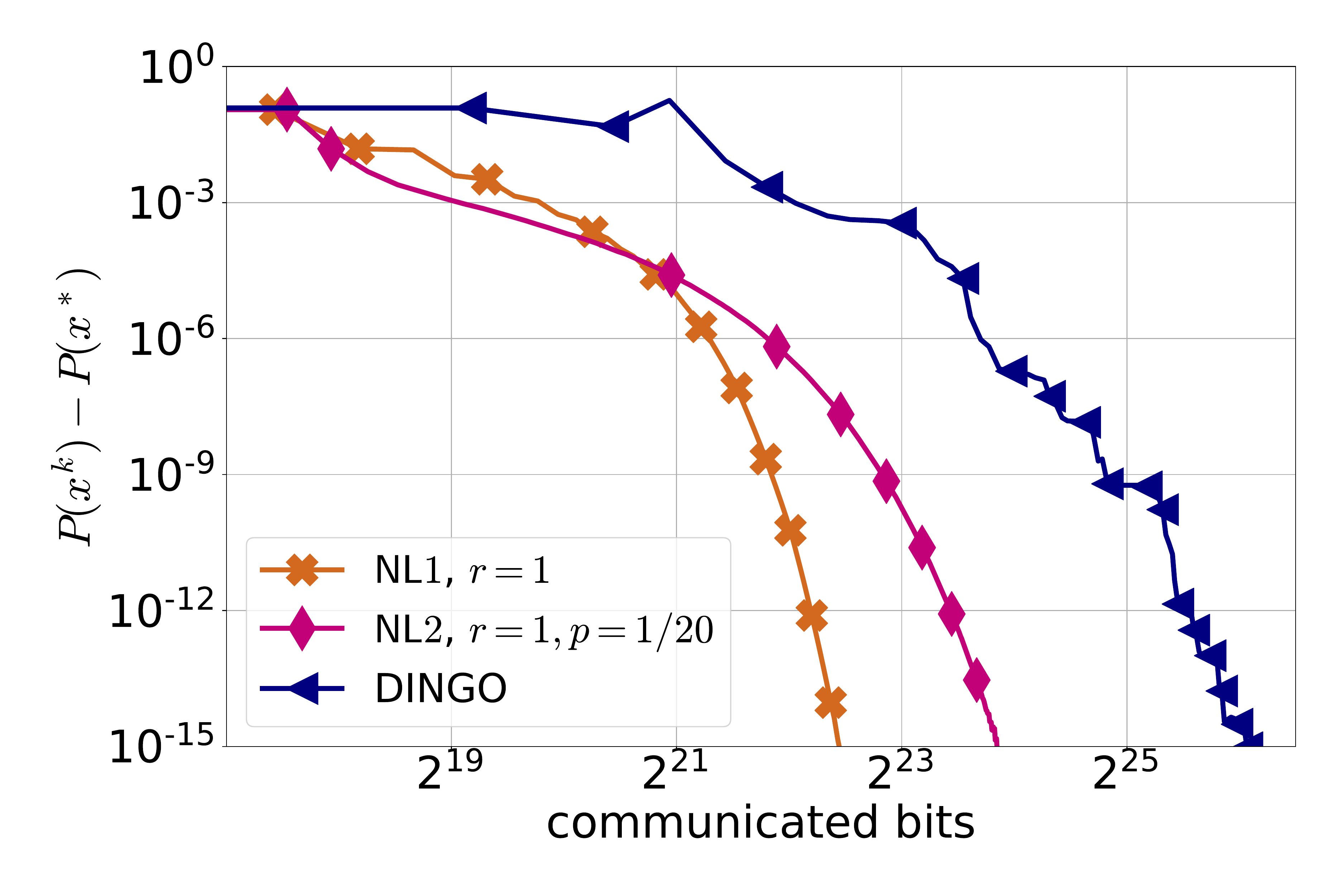}&
				\includegraphics[width = 0.23 \textwidth]{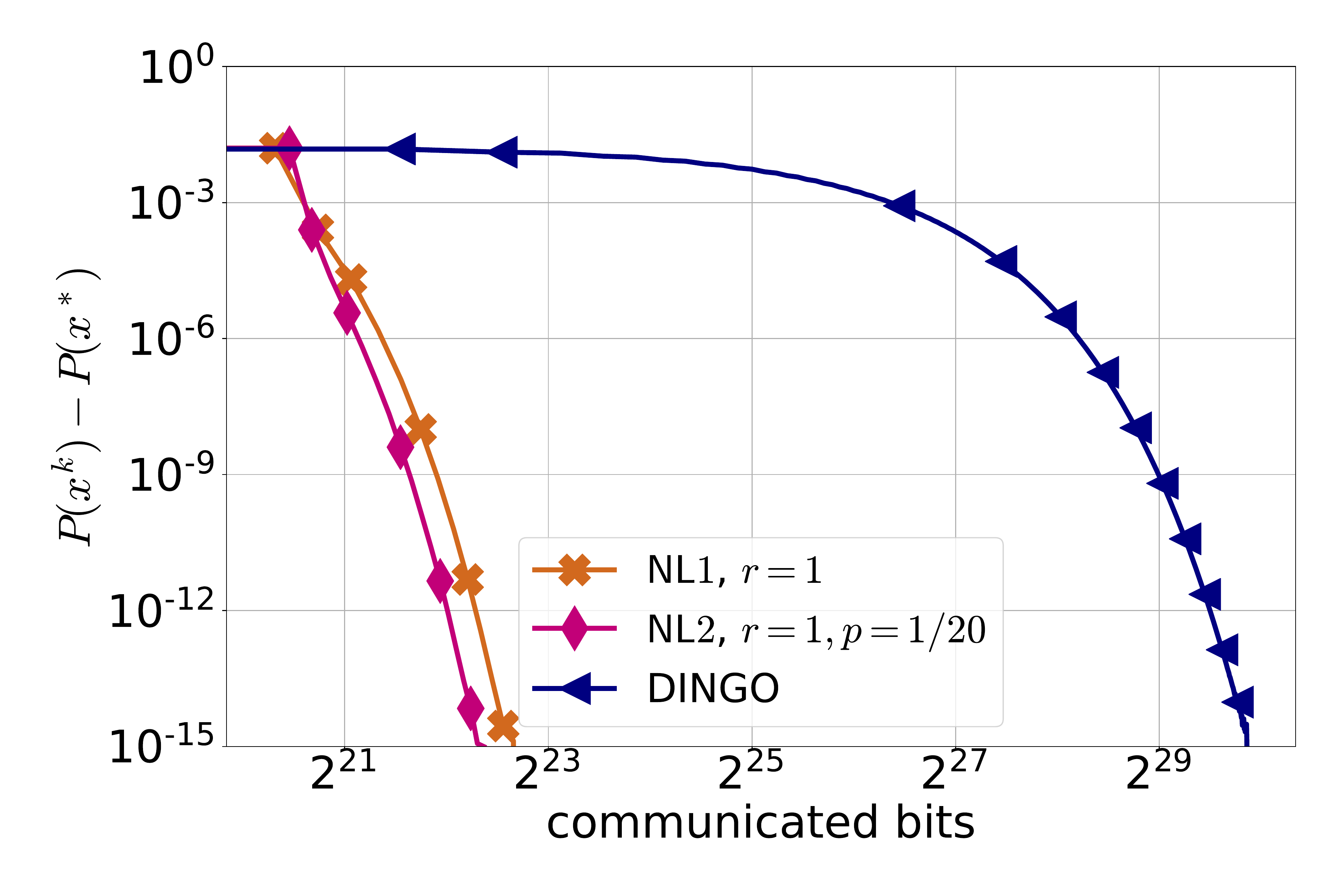}&
				\includegraphics[width = 0.23 \textwidth]{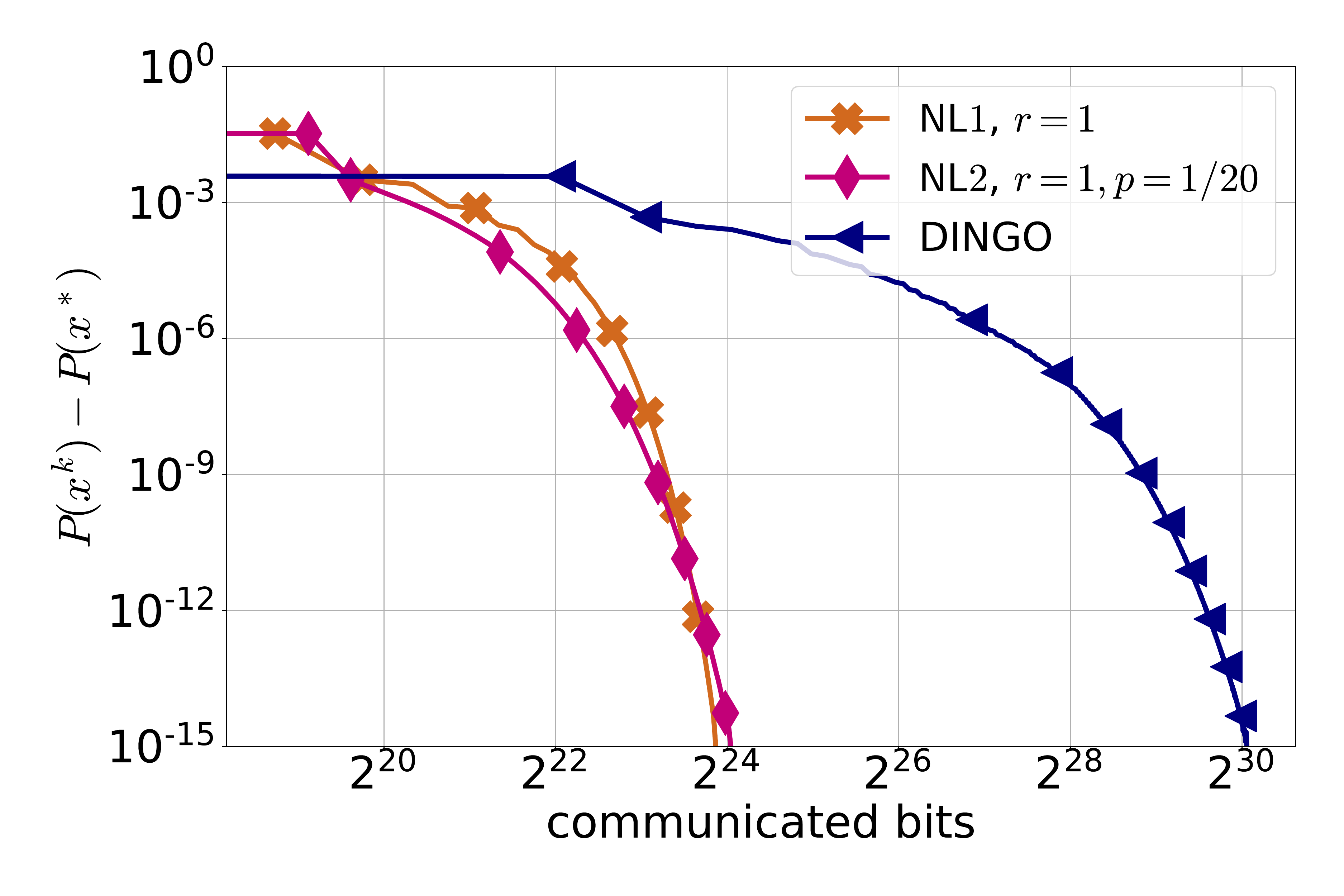}
				\\
				(e) {\tt a2a}, $\lambda=10^{-3}$ &(f) {\tt a2a}, $\lambda=10^{-4}$ & (g) {\tt phishing}, $\lambda=10^{-3}$ &(h) {\tt phishing}, $\lambda=10^{-4}$
		\end{tabular}}
		\caption{Comparison of {\sf NL1}, {\sf NL2} with ADIANA in terms of communication complexity.}
		\label{exp:dingo}
	\end{center}
\end{figure}

\subsection{Comparison of {\sf CNL} with  DCGD and DIANA}

We now deploy our {\sf CUBIC-NEWTON-LEARN (CNL)} method equipped with the random sparsification operator $\cC_p$ described in Section~\ref{sec:Bernoulli}. We compare {\sf CNL} against DIANA~\citep{DIANA} and DCGD~\citep{KFJ}, both in two variants: one using natural compression (NC) and one using random sparsification (RS, $r=d/4$). In Figure~\ref{exp:dcgd_diana} we see that for $\lambda=10^{-3}$, {\sf CNL} performs slightly better than the gradient-type methods DIANA and DCGD. However, once the regularization parameter becomes small enough ($\lambda=10^{-4}, 10^{-5}$), the increased condition number hurts the first-order methods, and {\sf CNL} outperforms both DCGD and DIANA significantly.

\begin{figure}[ht]
	\begin{center}
		\centerline{\begin{tabular}{cccc}
				\includegraphics[width = 0.23 \textwidth]{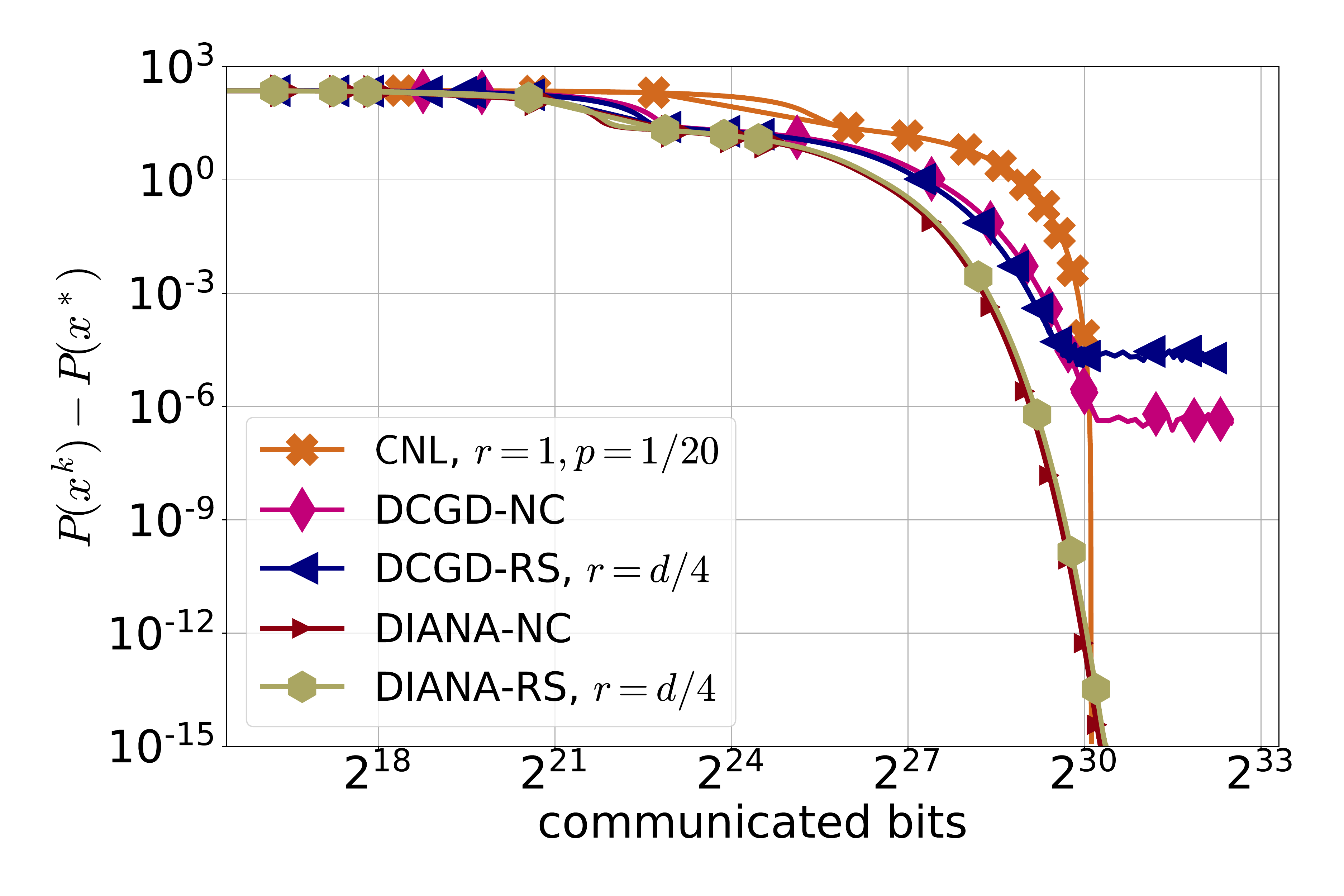}&
				\includegraphics[width = 0.23 \textwidth]{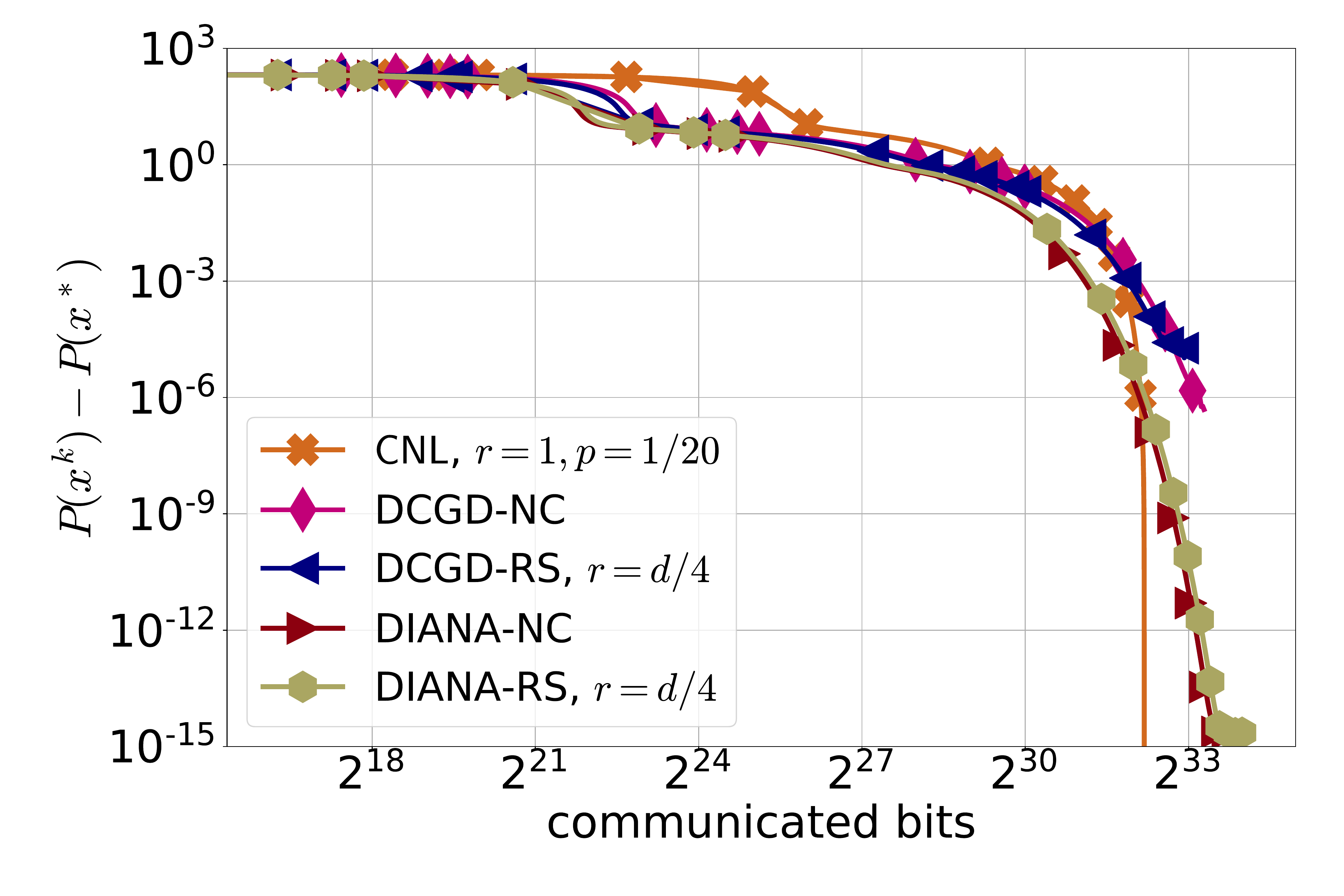}&
				\includegraphics[width = 0.23 \textwidth]{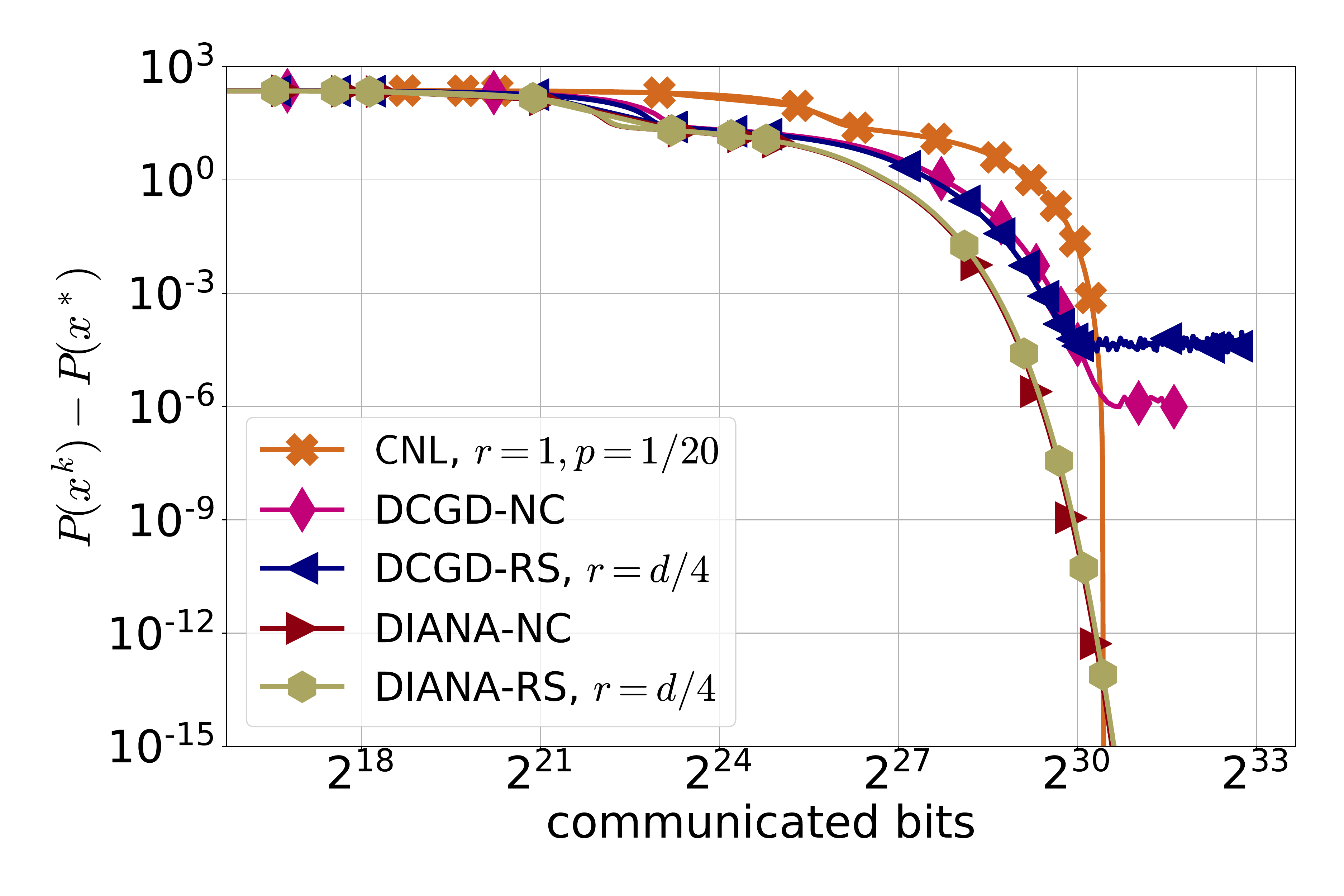}&
				\includegraphics[width = 0.23 \textwidth]{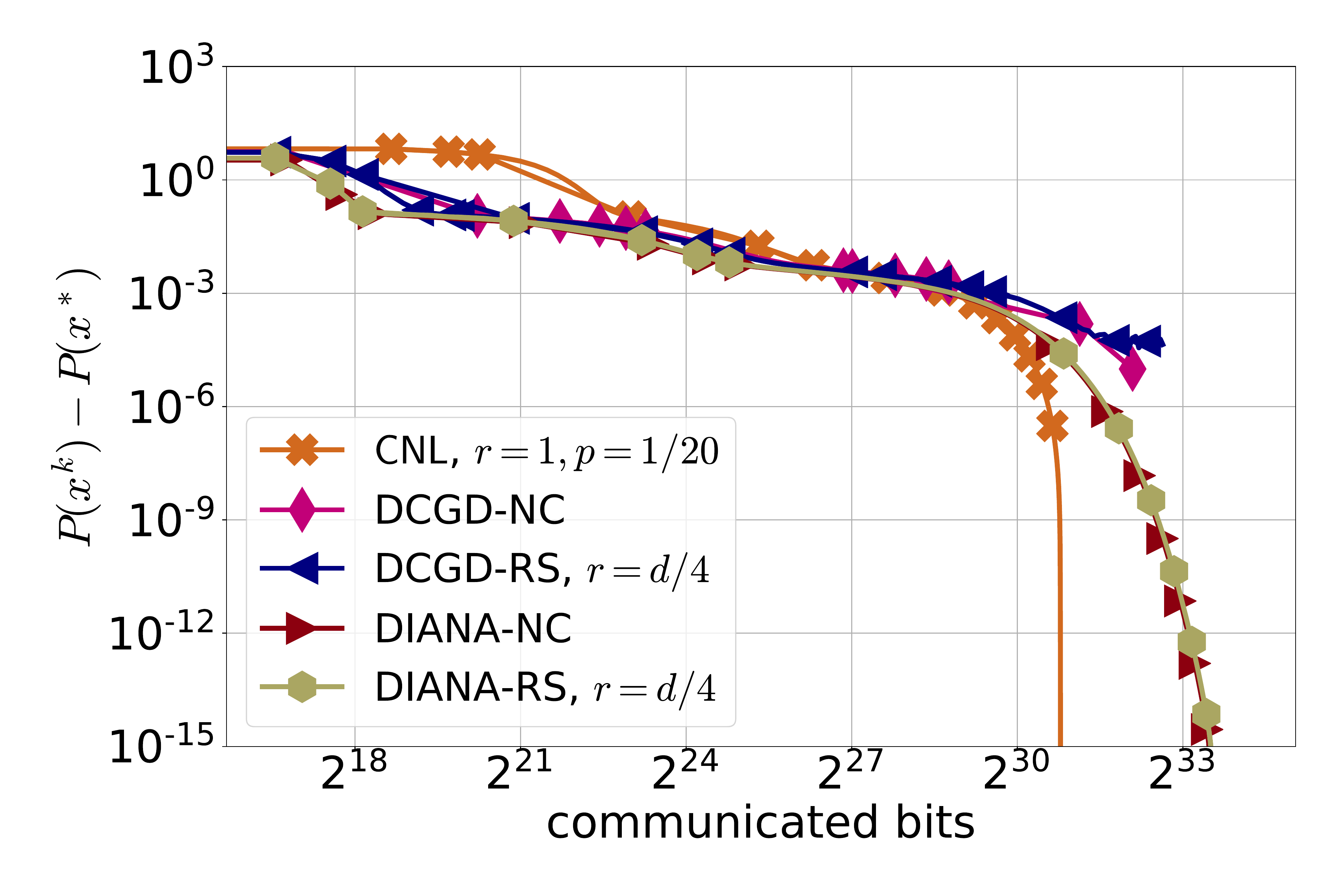}
				\\
				(a) {\tt a9a}, $\lambda=10^{-3}$ &(b) {\tt a9a}, $\lambda=10^{-4}$ & (c) {\tt a7a}, $\lambda=10^{-3}$ &(d) {\tt a7a}, $\lambda=10^{-4}$\\
				\includegraphics[width = 0.23 \textwidth]{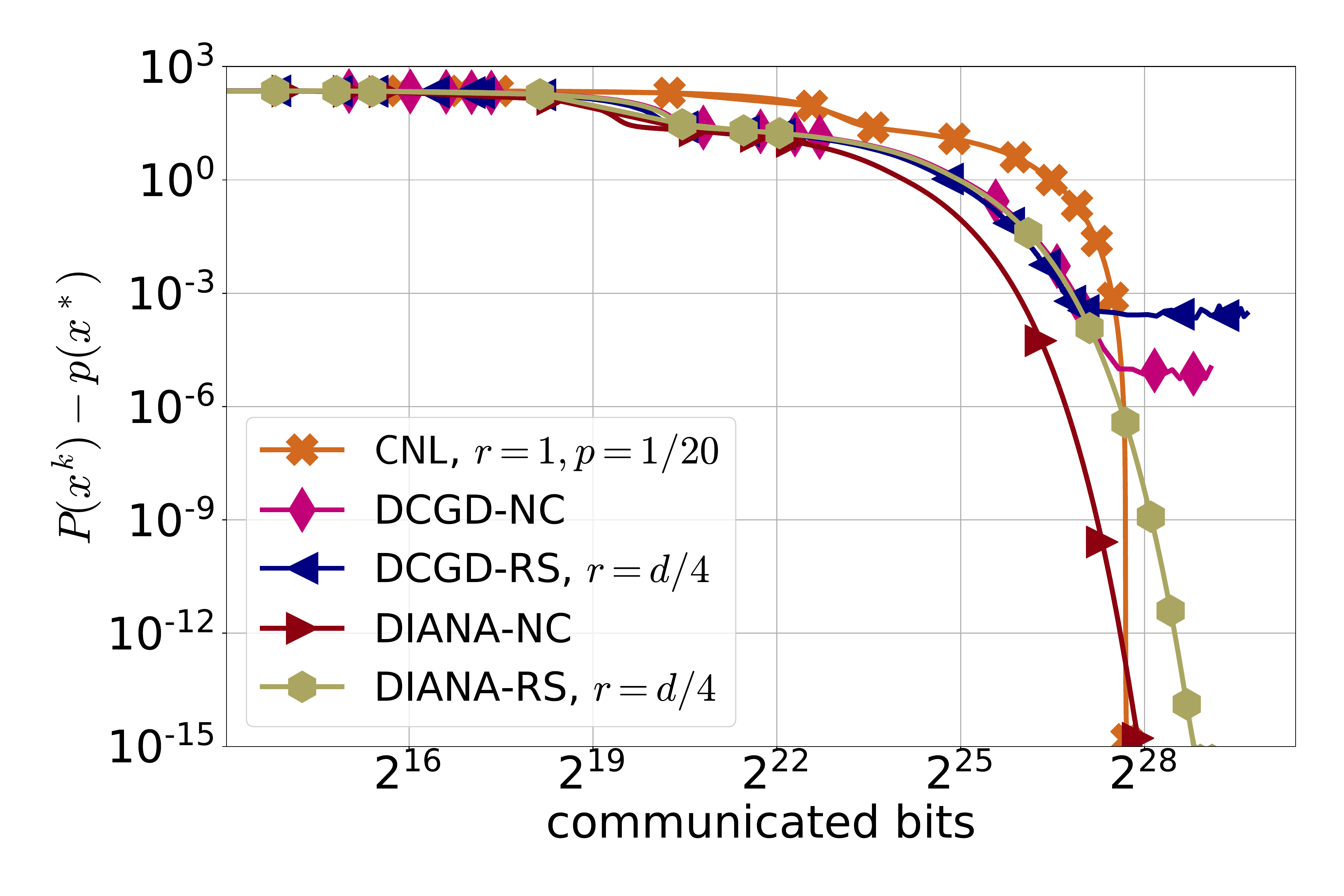}&
				\includegraphics[width = 0.23 \textwidth]{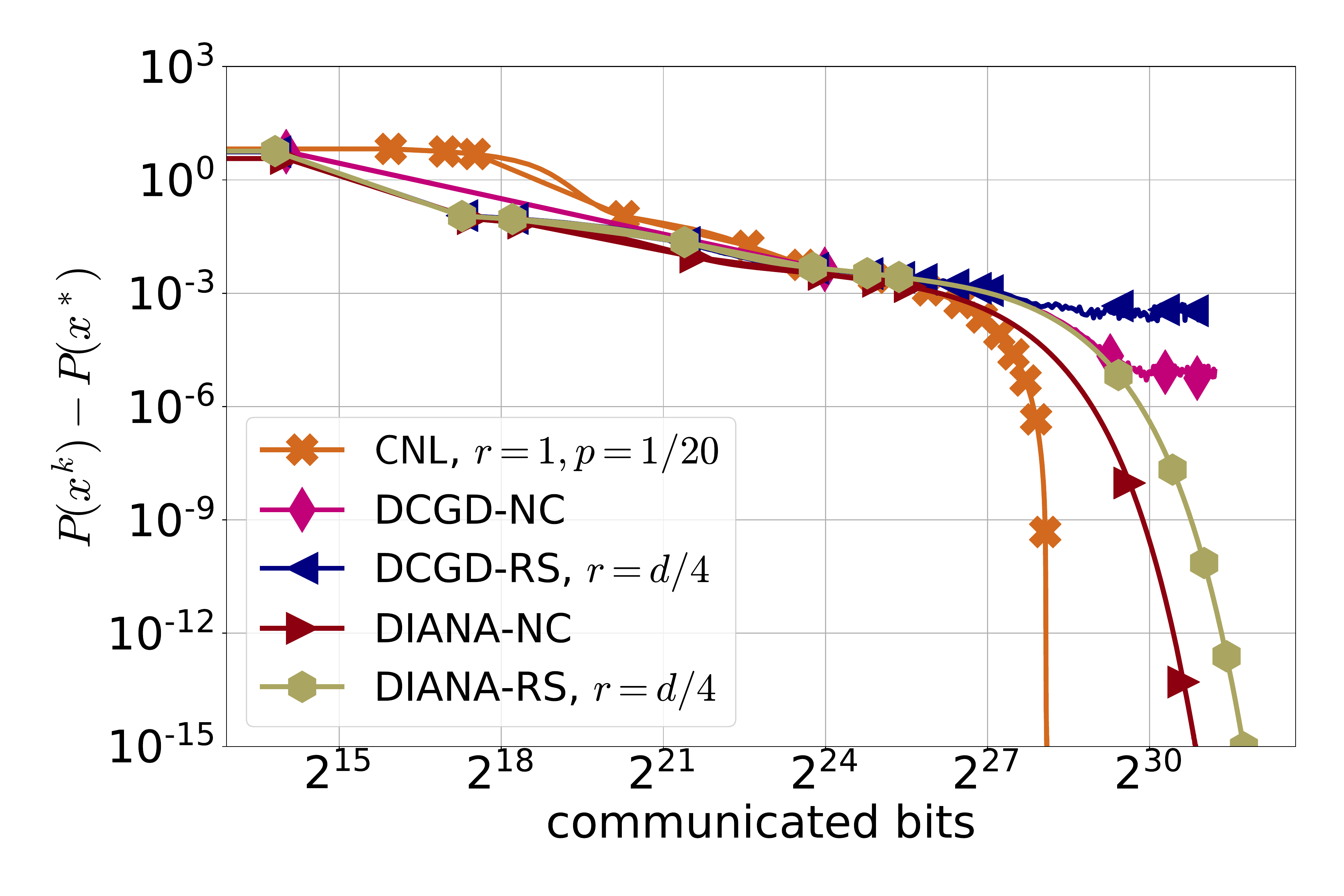}&
				\includegraphics[width = 0.23 \textwidth]{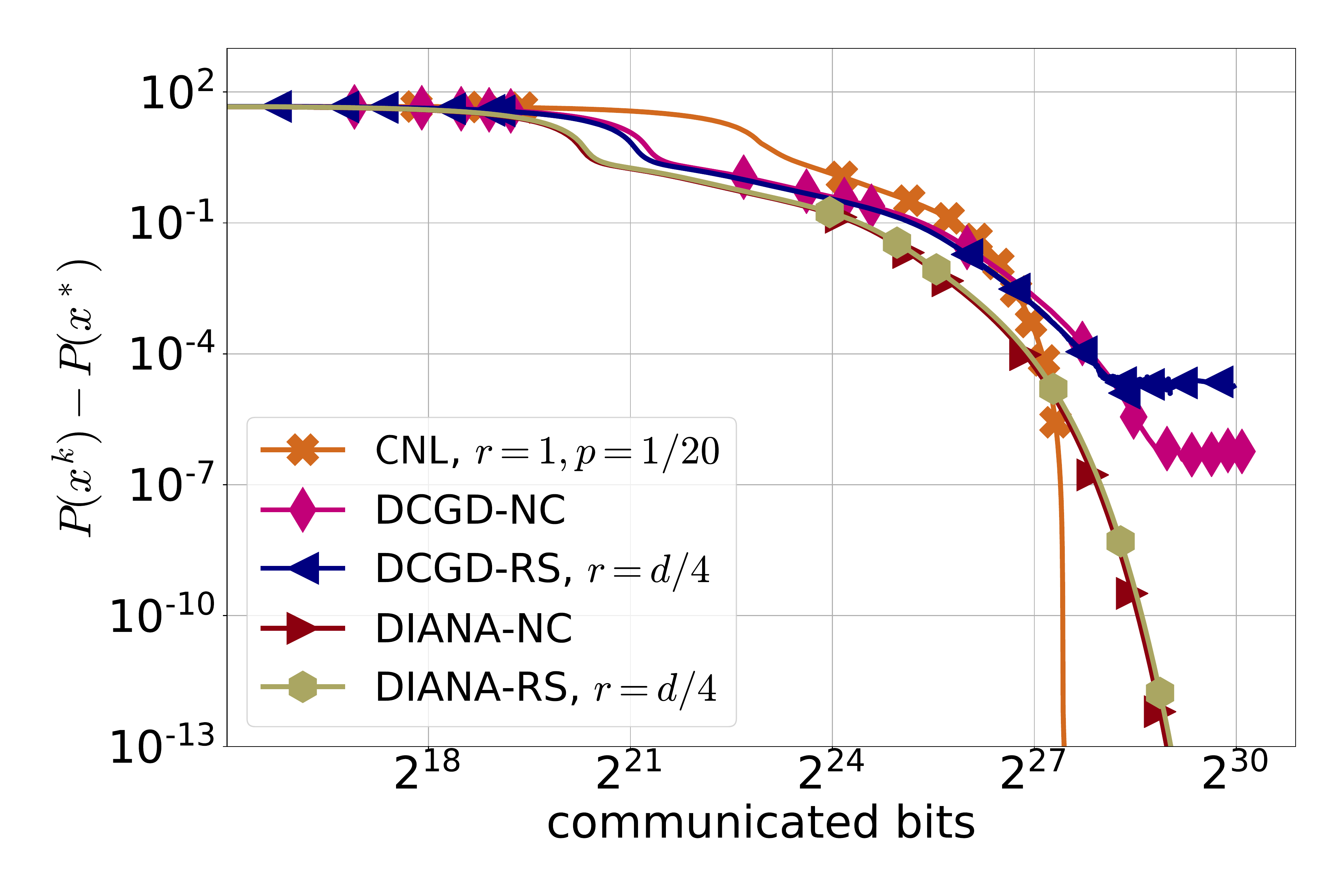}&
				\includegraphics[width = 0.23 \textwidth]{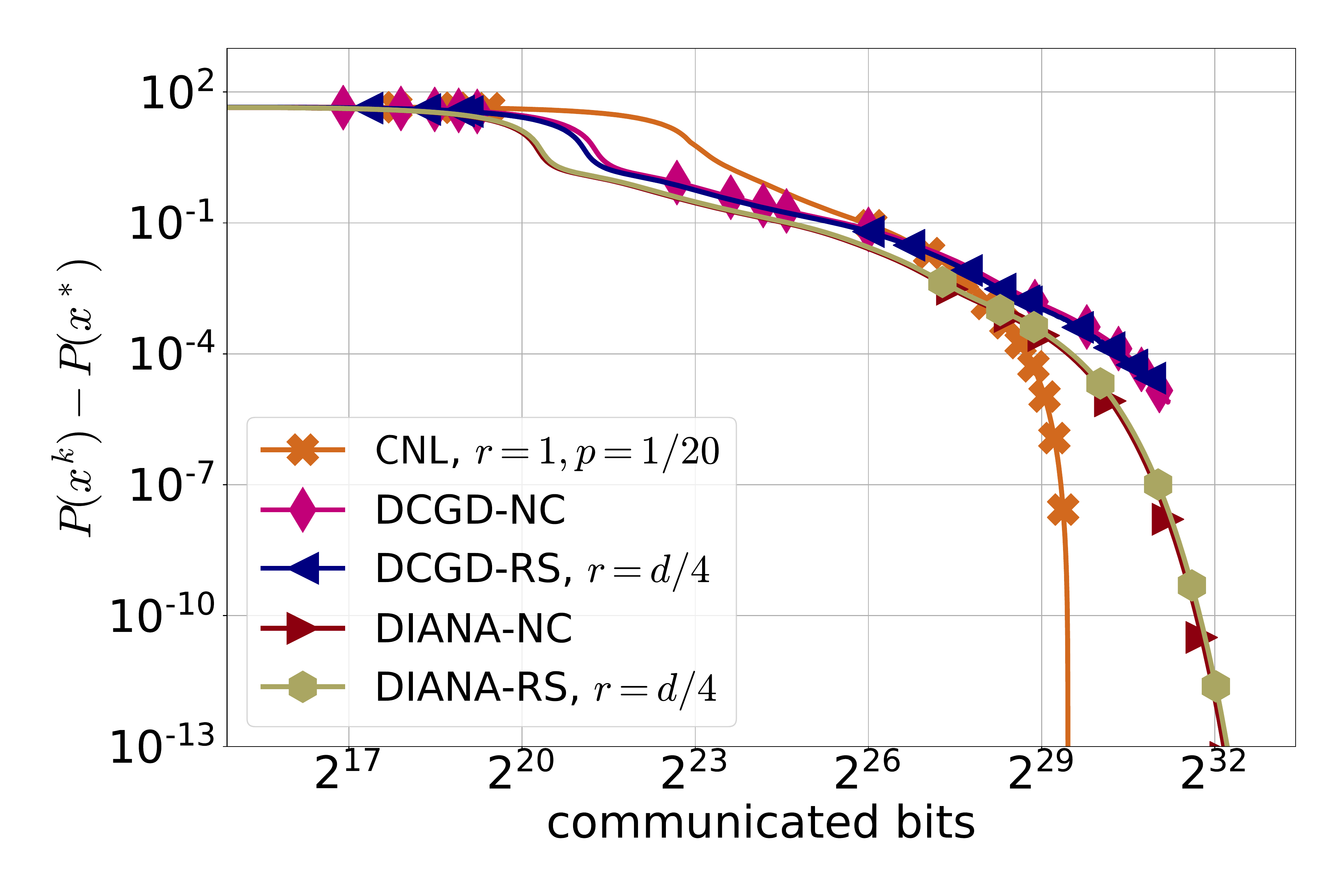}
				\\
				(e) {\tt a2a}, $\lambda=10^{-3}$ &(f) {\tt a2a}, $\lambda=10^{-4}$ & (g) {\tt phishing}, $\lambda=10^{-4}$ &(h) {\tt phishing}, $\lambda=10^{-5}$
		\end{tabular}}
		\caption{Comparison of {\sf CNL} with DCGD and DIANA in terms of communication complexity.}
		\label{exp:dcgd_diana}
	\end{center}
	\vskip -0.2in
\end{figure}

\clearpage

\bibliography{NL-arXiv}
\bibliographystyle{plainnat}

\clearpage

\appendix

\part*{Appendix}

\section{Proofs for {\sf NL1} (Section~\ref{subsec:NL1})}

Let $W^k \eqdef (h_1^k, \dots, h_n^k, x^k)$.

\subsection{Lemma}

We first establish a lemma.

\begin{lemma}\label{lm:calHk} Let $S^2\eqdef \sum_{i=1}^n \sum_{j=1}^m \norm{a_{ij}}^2$. For $\eta \leq \frac{1}{\omega+1}$ and all $k\geq 0$, we have 
$$
\ExpBr{ {\cal H}^{k+1} \;|\; W^k} \leq  (1-\eta) {\cal H}^k +  \eta \nu^2 S^2 \norm{x^k - x^*}^2.
$$
\end{lemma}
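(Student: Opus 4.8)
The plan is to track ${\cal H}^{k+1} = \sum_{i=1}^n \norm{h_i^{k+1} - \newalpha_i(x^*)}^2$ one node at a time, exploiting the fact that under Assumption~\ref{as:learning-1} the learning target is nonnegative. Indeed, convexity of $\varphi_{ij}$ gives $\newalpha_{ij}(x^*) = \varphi''_{ij}(a_{ij}^\top x^*) \ge 0$, so $\newalpha_i(x^*) \in \R^m_+$; since $[\cdot]_+$ is precisely the (non-expansive) Euclidean projection onto $\R^m_+$ and it fixes $\newalpha_i(x^*)$, the first step is to discard the clipping in \eqref{eq: big78fd_8h9fd} via
$$\norm{h_i^{k+1} - \newalpha_i(x^*)} = \norm{[\,h^k_i + \eta\cC_i^k(\newalpha_i(x^k) - h^k_i)\,]_+ - \newalpha_i(x^*)} \le \norm{h^k_i + \eta\cC_i^k(\newalpha_i(x^k) - h^k_i) - \newalpha_i(x^*)}.$$

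\textbf{Per-node variance computation.} Next I would condition on $W^k$, so that $h_i^k$, $x^k$, and hence $\newalpha_i(x^k)$ are deterministic and the only randomness is the compressor $\cC_i^k$. Setting $u_i \eqdef h^k_i - \newalpha_i(x^*)$ and $v_i \eqdef \newalpha_i(x^k) - h^k_i$ and expanding the square, unbiasedness \eqref{eq:unbiased} and the second-moment bound \eqref{eq:omega-variance} of $\cC_i^k$ give
$$\ExpBr{\norm{u_i + \eta\cC_i^k(v_i)}^2 \mid W^k} \le \norm{u_i}^2 + 2\eta\<u_i, v_i> + \eta^2(\omega+1)\norm{v_i}^2.$$
The stepsize restriction $\eta \le \tfrac{1}{\omega+1}$ yields $\eta^2(\omega+1) \le \eta$, and since $u_i + v_i = \newalpha_i(x^k) - \newalpha_i(x^*)$, the identity $2\<u_i,v_i> + \norm{v_i}^2 = \norm{u_i+v_i}^2 - \norm{u_i}^2$ turns the right-hand side into $(1-\eta)\norm{u_i}^2 + \eta\norm{\newalpha_i(x^k) - \newalpha_i(x^*)}^2$.

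\textbf{Aggregation.} Summing over $i = 1,\dots,n$ (linearity of expectation; no independence across the compressors $\cC_i^k$ is needed for this step) yields $\ExpBr{{\cal H}^{k+1} \mid W^k} \le (1-\eta){\cal H}^k + \eta\sum_{i=1}^n\norm{\newalpha_i(x^k) - \newalpha_i(x^*)}^2$. The final step is to bound the residual via \eqref{eq:alphaijL}: $\norm{\newalpha_i(x^k) - \newalpha_i(x^*)}^2 = \sum_{j=1}^m |\newalpha_{ij}(x^k) - \newalpha_{ij}(x^*)|^2 \le \nu^2 \norm{x^k - x^*}^2 \sum_{j=1}^m \norm{a_{ij}}^2$, and summing over $i$ produces exactly $\eta\nu^2 S^2 \norm{x^k - x^*}^2$, which is the claimed inequality.

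\textbf{Main difficulty.} The one genuinely non-routine point is realizing that the clipping operator can be removed at no cost because $\newalpha_i(x^*) \ge 0$ — this is exactly where the convexity half of Assumption~\ref{as:learning-1} is used — after which the argument reduces to standard ``DIANA-type'' bookkeeping for the compressor, combined with the Hessian-coefficient Lipschitz estimate \eqref{eq:alphaijL}.
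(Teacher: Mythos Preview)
Your proposal is correct and follows essentially the same approach as the paper's proof: drop the projection via non-expansiveness of $[\cdot]_+$ toward the nonnegative target $\newalpha_i(x^*)$, expand the square using unbiasedness \eqref{eq:unbiased} and the moment bound \eqref{eq:omega-variance}, absorb $\eta^2(\omega+1)\le\eta$, apply the polarization identity to collapse the cross term, and finish with \eqref{eq:alphaijL}. The only cosmetic difference is your shorthand $u_i,v_i$ versus the paper's fully written-out vectors.
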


\begin{proof}
First, recall that
\begin{equation}\label{eq:bu98fg9d_087f98dhf}{\cal H}^{k} =   \sum_{i=1}^n \norm{h_i^{k} - h_i(x^*)}^2 .\end{equation}Since $h_{i}(x^*) \in \R^m_+$ due to convexity of $f_{ij}$, we have \begin{equation} \label{ew:ub98g9fdu_09u9fdf}\norm{[x]_+ - h_i(x^*)}^2 \leq \norm{x - h_i(x^*)}^2, \qquad \text{for all} \qquad x \in \R^m.\end{equation} Then as long as $\eta \leq \frac{1}{\omega+1}$, we have 
\begin{eqnarray}
\ExpBr{{\cal H}^{k+1} \;|\; W^k}	&=& \ExpBr{  \sum_{i=1}^n \norm{h_i^{k+1} - h_i(x^*)}^2  \;|\; W^k} \notag \\ 
	&=&  \sum_{i=1}^n\ExpBr{  \norm{ [h^k_i + \eta \cC_i^k(h_i(x^k) - h^k_i)]_+ - h_i(x^*) }^2  \;|\; W^k} \notag  \\ 
	&\overset{\eqref{ew:ub98g9fdu_09u9fdf}}{\leq} &    \sum_{i=1}^n \ExpBr{\norm{ h^k_i + \eta \cC_i^k(h_i(x^k) - h^k_i) - h_i(x^*) }^2 \;|\; W^k }\notag  \\ 
	&=& \sum_{i=1}^n  \norm{ h_i^k - h_i(x^*) }^2   + 2 \eta\sum_{i=1}^n \ExpBr{  \langle  \cC_i^k(h_i(x^k) - h_i^k),  h_i^k - h_i(x^*) \rangle \;|\; W^k}\notag  \\
	&& \qquad + \eta^2 \sum_{i=1}^n \ExpBr{ \norm{\cC_i^k(h_i(x^k) - h_i^k) }^2 \;|\; W^k} \notag \\ 
	&\overset{\eqref{eq:bu98fg9d_087f98dhf}+\eqref{eq:unbiased} + \eqref{eq:omega-variance}}{\leq} &  {\cal H}^k  + 2\eta \sum_{i=1}^n    \langle  h_i(x^k) - h_i^k, h_i^k - h_i(x^*) \rangle  + \eta^2 \sum_{i=1}^n  (\omega+1) \|h_i(x^k) - h_i^k\|^2 .\label{eq:nbu98fd-09ud9f8hufd-fd}
	\end{eqnarray}
	
Using the stepsize restriction $\eta \leq \frac{1}{\omega+1}$, we can bound $\eta^2 (\omega+1) \leq \eta$. Plugging this back in  \eqref{eq:nbu98fd-09ud9f8hufd-fd}, we get
\begin{eqnarray*}
\ExpBr{{\cal H}^{k+1} \;|\; W^k}
	&\leq &  {\cal H}^k  + \eta  \sum_{i=1}^n   \langle h_i(x^k) - h_i^k, h_i(x^k) + h_i^k - 2h_i(x^*) \rangle  \\ 
	&=&  {\cal H}^k  + \eta \sum_{i=1}^n  \left( \norm{h_i(x^k) - h_i(x^*)}^2 - \norm{h_i^k - h_i(x^*)}^2\right)   \\ 
	&=& (1-\eta) {\cal H}^k  + \eta \sum_{i=1}^n  \norm{h_i(x^k) - h_i(x^*)}^2  \\ 
		&\overset{\eqref{eq:8f0d8hfd}}{=} & (1-\eta) {\cal H}^k  + \eta \sum_{i=1}^n  \sum_{j=1}^m (h_{ij}(x^k) - h_{ij}(x^*))^2  \\ 
	&\overset{(\ref{eq:alphaijL})}{\leq}&  (1-\eta) {\cal H}^k + \eta\sum_{i=1}^n \sum_{j=1}^m \nu^2 \norm{a_{ij}}^2 \norm{x^k - x^*}^2 \\ 
	&\leq&  (1-\eta) {\cal H}^k +  \eta \nu^2 S^2 \norm{x^k - x^*}^2. 
\end{eqnarray*}

\end{proof}

\subsection{Proof of Theorem~\ref{th:lambda>0}}

It is easy to see that \begin{equation}\label{eq:b97gfd-ujo_09op}\mH^k = \frac{1}{n} \sum_{i=1}^n \mH^k_i, \qquad \mH^k_i = \frac{1}{m} \sum_{j=1}^{m} h^k_{ij} a_{ij}a_{ij}^\top .\end{equation}
By the first order necessary optimality conditions, we have
\begin{equation}\label{eq:optx}
\nabla f(x^*) + \lambda x^*  = \nabla P(x^*)= 0. 
\end{equation}
Furthermore, since we maintain $\mH^k \succeq 0$ for all $k$, we have $\mH^k + \lambda \mI \succeq \lambda \mI$, whence  \begin{equation}\label{eq:89f8gd9=08yh98fd}\norm{\left(\mH^k + \lambda \mI \right)^{-1}} \leq \frac{1}{\lambda}.\end{equation}
Then we can write
\begin{eqnarray}
	\norm{ x^{k+1} - x^* } &\overset{\eqref{eq:Newton-learn}}{=}& \norm{ x^k - x^* - (\mH^k + \lambda \mI)^{-1} (\nabla f(x^k) + \lambda x^k) }\notag  \\
	&=& \norm{ (\mH^k + \lambda \mI)^{-1} \left(  (\mH^k + \lambda \mI)(x^k - x^*) - \nabla f(x^k) - \lambda x^k  \right) } \notag \\ 
	&\overset{\eqref{eq:89f8gd9=08yh98fd}}{\leq}& \frac{1}{\lambda} \norm{  (\mH^k + \lambda \mI)(x^k - x^*) - \nabla f(x^k) - \lambda x^k  } \notag \\ 
	&\overset{(\ref{eq:optx})}{=}& \frac{1}{\lambda} \norm{ \mH^k (x^k - x^*) - (\nabla f(x^k) - \nabla f(x^*)) } \notag \\ 
	&=& \frac{1}{\lambda} \norm{ \frac{1}{n} \sum_{i=1}^n \left[\mH_i^k (x^k - x^*) - (\nabla f_i(x^k) - \nabla f_i(x^*)) \right]}\notag  \\ 
	&\leq& \frac{1}{n \lambda} \sum_{i=1}^n \norm{ \mH_i^k (x^k - x^*) - (\nabla f_i(x^k) - \nabla f_i(x^*)) } \notag  \\
	&\overset{\eqref{eq:b97gfd-ujo_09op}+\eqref{eq:f_and_f_i} }{=}& \frac{1}{n \lambda} \sum_{i=1}^n \norm{ \frac{1}{m} \sum_{j=1}^m  \left[ h_{ij}^k a_{ij} a_{ij}^\top (x^k - x^*) - (\nabla f_{ij}(x^k) - \nabla f_{ij}(x^*)) \right] },\label{eq:pop_us_lop_989f}
	\end{eqnarray}
	where the last step follows from applying Jensen's inequality to he convex function $x\mapsto \norm{x}$. 

We can now express the difference of the gradients in integral form via the fundamental theorem of calculus, obtaining 
\begin{eqnarray*} \nabla f_{ij}(x^k) - \nabla f_{ij}(x^*) &=&   \int_{0}^1 \mH_{ij} (x^* + \tau(x^k - x^*)) (x^k - x^*) \; d\tau \\
&\overset{\eqref{eq:87ybfd0fd}}{=} & \int_{0}^1 h_{ij}(x^* + \tau(x^k - x^*)) a_{ij}a_{ij}^\top  (x^k - x^*)\; d\tau .\end{eqnarray*}
We can plug this back into \eqref{eq:pop_us_lop_989f}, which gives
\begin{eqnarray}	
\norm{ x^{k+1} - x^* }	&\leq & 
 \frac{1}{n \lambda} \sum_{i=1}^n \frac{1}{m} \norm{  \sum_{j=1}^{m} \left(  h_{ij}^k a_{ij}a_{ij}^\top (x^k - x^*)  -  \int_{0}^1 h_{ij} (x^* + \tau(x^k - x^*))a_{ij}a_{ij}^\top (x^k - x^*)d\tau  \right)   } \notag \\ 
	&=& \frac{1}{n \lambda} \sum_{i=1}^n \frac{1}{m} \norm{  \sum_{j=1}^{m}  a_{ij}a_{ij}^\top (x^k - x^*) \left( h_{ij}^k - \int_{0}^1 h_{ij}(x^* + \tau(x^k - x^*))  d\tau \right) }\notag \\ 
	&\leq& \frac{ \norm{x^k - x^*}}{ \lambda} \frac{1}{nm} \sum_{i=1}^n \sum_{j=1}^{m} \norm{a_{ij}}^2 \left|   \int_{0}^1 h_{ij}^k - h_{ij}(x^* + \tau(x^k - x^*))  d\tau  \right|. \label{petrol-09809fd}
\end{eqnarray}

From (\ref{eq:alphaijL}), we have 
\begin{eqnarray}
	\left|   \int_{0}^1 h_{ij}^k - h_{ij}(x^* + \tau(x^k - x^*))  d\tau  \right| &\leq& \int_{0}^1 \left|  h_{ij}^k - h_{ij}(x^* + \tau(x^k - x^*))   \right| d\tau \notag \\
	&\leq& |h_{ij}^k - h_{ij}(x^*)| + \int_{0}^1 \left|  h_{ij}(x^*) -  h_{ij}(x^* + \tau(x^k - x^*))   \right| d \tau \notag \\ 
	&\overset{(\ref{eq:alphaijL})}{\leq}&  |h_{ij}^k - h_{ij}(x^*)| + \int_{0}^1 \tau \nu \|a_{ij}\| \cdot \|x^k - x^*\| d\tau  \notag \\ 
	&=&  |h_{ij}^k - h_{ij}(x^*)| + \frac{\nu \|a_{ij}\|}{2} \|x^k-x^*\|. \label{eq:olive-09u0hfdih}
\end{eqnarray}

By squaring both sides of \eqref{petrol-09809fd},  applying Jensen's inequality to the function $t\mapsto t^2$ in the form $\left(\frac{1}{nm}\sum_i \sum_j t_{ij}\right)^2 \leq \frac{1}{nm}\sum_i \sum_j t^2_{ij}$, and plugging in the bound \eqref{eq:olive-09u0hfdih}, we get
\begin{eqnarray*}	
\norm{ x^{k+1} - x^* }^2	&\overset{}\leq &  \frac{ \norm{x^k - x^*}^2}{ \lambda^2} \left(\frac{1}{nm} \sum_{i=1}^n \sum_{j=1}^{m} \norm{a_{ij}}^2 \left|   \int_{0}^1 h_{ij}^k - h_{ij}(x^* + \tau(x^k - x^*))  d\tau  \right| \right)^2 \\
&\leq &\frac{ \norm{x^k - x^*}^2}{ \lambda^2} \frac{1}{nm} \sum_{i=1}^n \sum_{j=1}^{m} \left( \norm{a_{ij}}^2  \left| \int_{0}^1 h_{ij}^k - h_{ij}(x^* + \tau(x^k - x^*))  d\tau  \right|\right)^2 \\
&\overset{\eqref{eq:olive-09u0hfdih}}{\leq}&  \frac{\|x^k-x^*\|^2}{ \lambda^2} \frac{1}{nm}\sum_{i=1}^n \sum_{j=1}^{m} \|a_{ij}\|^4 \left(   |h_{ij}^k - h_{ij}(x^*)| + \frac{\nu \|a_{ij}\|}{2} \|x^k-x^*\|  \right)^2 \\ 
&\leq&  \frac{\|x^k-x^*\|^2}{ \lambda^2} \frac{1}{nm}\sum_{i=1}^n \sum_{j=1}^{m} \|a_{ij}\|^4 \left(   2|h_{ij}^k - h_{ij}(x^*)|^2 + \frac{\nu^2 \|a_{ij}\|^2}{2} \|x^k-x^*\|^2  \right) \\
&\leq & \frac{\|x^k-x^*\|^2}{ \lambda^2} \frac{1}{nm}\sum_{i=1}^n \sum_{j=1}^{m} R^4 \left(   2|h_{ij}^k - h_{ij}(x^*)|^2 + \frac{\nu^2 R^2}{2} \|x^k-x^*\|^2  \right).
\end{eqnarray*}
In the last step we have used Young's inequality.\footnote{$(a+b)^2 \leq 2a^2 + 2b^2$.}

Since $R\eqdef \max_{i, j} \{ \|a_{ij}\| \}$, we can further bound
\begin{equation}\label{eq:xk+1iter}
\|x^{k+1} - x^*\|^2 \leq \frac{2 R^4}{nm\lambda^2}  \cH^k \|x^k-x^*\|^2 + \frac{ \nu^2 R^6}{2\lambda^2}\|x^k-x^*\|^4. 
\end{equation}

Assume $\|x^k - x^*\|^2 \leq \frac{\lambda^2}{12\nu^2R^6}$ for all $k\geq 0$. Then from (\ref{eq:xk+1iter}), we have 
\begin{eqnarray*}
	\|x^{k+1} - x^*\|^2 &\leq& \frac{\lambda^2}{12\nu^2 R^6} \cdot \frac{2R^4}{nm\lambda^2} \cH^k + \frac{\lambda^2}{12\nu^2 R^6} \cdot \frac{ \nu^2 R^6\|x^k-x^*\|^2}{2\lambda^2} \\ 
	&\leq& \frac{1}{6nm \nu^2 R^2} {\cal H}^k + \frac{1}{24} \|x^k - x^*\|^2, 
\end{eqnarray*}
and by taking expectation, we have 
\begin{equation}\label{eq:expxk+1}
\ExpBr{ \|x^{k+1} - x^*\|^2 \;|\; W^k } \leq  \frac{1}{6nm \nu^2 R^2}  \cH^k + \frac{1}{24} \|x^k - x^*\|^2. 
\end{equation}
Next, Lemma~\ref{lm:calHk} implies that
\begin{equation}\label{eq:oklah090pop}
\ExpBr{ {\cal H}^{k+1} \;|\; W^k} \leq  (1-\eta) {\cal H}^k +  \eta nm \nu^2 R^2 \norm{x^k - x^*}^2.
\end{equation}

Recall that $\Phi_1^{k+1} \eqdef \|x^{k+1} - x^*\|^2 + \frac{1}{3\eta nm  \nu^2 R^2} {\cal H}^{k+1}$. Combining \eqref{eq:expxk+1} and \eqref{eq:oklah090pop}, we get
\begin{eqnarray}
	\ExpBr{ \Phi_1^{k+1} \;|\; W^k} &=& \ExpBr{ \|x^{k+1} - x^*\|^2 \;|\; W^k} + \frac{1}{3\eta nm  \nu^2 R^2}  \ExpBr{ {\cal H}^{k+1} \;|\; W^k}  \notag\\ 
	&\overset{\eqref{eq:expxk+1}}\leq&  \frac{1}{3\eta nm  \nu^2 R^2} \left(  1 - \eta + \frac{\eta}{2}  \right) {\cal H}^k  + \left(  \frac{1}{24} + \frac{1}{3}  \right)  \|x^k - x^*\|^2  \notag \\ 
	&\leq& \left(  1 - \min \left\{  \frac{\eta}{2}, \frac{5}{8}  \right\}  \right)  \Phi_1^k = \theta_1^k \Phi_1^k. \label{eq:ieopllup-878s}
\end{eqnarray}
By applying the tower property, we get $$\ExpBr{\Phi_1^{k+1}} = \ExpBr{\ExpBr{ \Phi_1^{k+1} \;|\; W^k}} \overset{\eqref{eq:ieopllup-878s}}{\leq} \theta_1 \ExpBr{\Phi_1^k}.$$ Unrolling the recursion, we get   $\ExpBr{\Phi_1^k } \leq  
\theta_1^k  \Phi_1^0$, and the first claim is proved.

We further have $\ExpBr{ \|x^k - x^*\|^2 } \leq \theta_1^k \Phi_1^0$ and $\ExpBr{{\cal H}^k} \leq  \theta_1^k 3\eta nm \nu^2 R^2 \Phi_1^0$. Assume $x^k \neq x^*$ for all $k$. Then from (\ref{eq:xk+1iter}), we have 
$$
\frac{\|x^{k+1} - x^*\|^2}{\|x^k - x^*\|^2 } \leq \frac{2R^4}{nm \lambda^2} {\cal H}^k + \frac{\nu^2 R^6}{2\lambda^2}\|x^k - x^*\|^2, 
$$
and by taking expectation, we obtain 
\begin{eqnarray*}
	\ExpBr{  \frac{\|x^{k+1} - x^*\|^2}{\|x^k - x^*\|^2 }  } &\leq& \frac{2R^4}{mn \lambda^2} \ExpBr{{\cal H}^k } + \frac{\nu^2 R^6}{2\lambda^2} \ExpBr{ \|x^k - x^*\|^2 } \\ 
	&\leq&\theta_1^k  \left(  {6\eta} + \frac{1}{2}  \right) \frac{\nu^2 R^6}{\lambda^2} \Phi_1^0. 
\end{eqnarray*}

\subsection{Proof of Lemma \ref{lm:initial-1}}

We prove this by induction. First, we have $\|x^0 - x^*\|^2 \leq \frac{\lambda^2}{12\nu^2R^6}$ by the assumption. We assume $\|x^k - x^*\|^2 \leq \frac{\lambda^2}{12\nu^2R^6}$ holds for all $k \leq K$. For $k\leq K$, since $h_{ij}^k$ is a convex combination of $\{  \newalpha_{ij}(x^0), \newalpha_{ij}(x^1), ..., \newalpha_{ij}(x^k)  \}$ for all $i,j$, we have 
$$
h_{ij}^k = \sum_{t=0}^k \rho_t \newalpha_{ij}(x^t), \quad {\rm with}\quad  \sum_{t=0}^k\rho_t = 1, \quad {\rm and} \quad \rho_t \geq 0, 
$$
which implies that 
\begin{eqnarray*}
	\sum_{i=1}^n\|h_i^k - \newalpha_i(x^*)\|^2 &=& \sum_{i=1}^n \sum_{j=1}^m |h_{ij}^k - \newalpha_{ij}(x^*)|^2 \\
	&=&  \sum_{i=1}^n \sum_{j=1}^m \left|  \sum_{t=0}^k \rho_t (\newalpha_{ij}(x^t) - \newalpha_{ij}(x^*))  \right|^2 \\ 
	&\leq&  \sum_{i=1}^n \sum_{j=1}^m  \sum_{t=0}^k \rho_t | \newalpha_{ij}(x^t) - \newalpha_{ij}(x^*) |^2 \\ 
	&\overset{(\ref{eq:alphaijL})}{\leq}&  \sum_{i=1}^n \sum_{j=1}^m  \sum_{t=0}^k \rho_t \nu^2 \|a_{ij}\|^2 \|x^t - x^*\|^2 \\ 
	&\overset{\text{Assumption}~\ref{as:learning-1}}{\leq}& \nu^2 R^2 \sum_{i=1}^n \sum_{j=1}^m  \sum_{t=0}^k \rho_t \cdot \frac{\lambda^2}{12\nu^2R^6} \\
	&=& \frac{mn \lambda^2}{12R^4}, 
\end{eqnarray*}
for $k\leq K$. 

Combining the above inequality and (\ref{eq:xk+1iter}), we arrive at 
\begin{eqnarray*}
	\|x^{K+1} - x^*\|^2 &\leq& \frac{2\|x^K-x^*\|^2 R^4}{mn\lambda^2} \sum_{i=1}^n \|h_i^K - \newalpha_i(x^*)\|^2 + \frac{ \nu^2 R^6\|x^K-x^*\|^4}{2\lambda^2} \\
	&\leq&  \frac{2\|x^K-x^*\|^2 R^4}{mn\lambda^2} \cdot  \frac{mn \lambda^2}{12R^4} +  \frac{ \nu^2 R^6\|x^K-x^*\|^4}{2\lambda^2} \\
	&\leq& \frac{1}{6}\|x^K - x^*\|^2 +  \frac{ \nu^2 R^6\|x^K-x^*\|^4}{2\lambda^2} \\
	&\leq& \frac{1}{6} \cdot \frac{\lambda^2}{12\nu^2R^6} + \frac{ \nu^2 R^6}{2\lambda^2} \cdot \left(\frac{\lambda^2}{12\nu^2R^6} \right)^2 \\ 
	&\leq& \frac{\lambda^2}{12\nu^2R^6}. 
\end{eqnarray*}

\clearpage
\section{Proofs for {\sf NL2} (Section~\ref{subsec:NL2})}

For ${\cal H}^k \eqdef \sum_{i=1}^n \|h_i^k - \newalpha_i(x^*)\|^2$, even though the update of $h_i^k$ in Algorithm~\ref{alg:NL2} is slightly different from that of Algorithm~\ref{alg:NL1}, we also have the following lemma. The proof is almost the same as that of Lemma~\ref{lm:calHk}, hence we omit it. 

\begin{lemma}\label{lm:calHk-2} For $\eta \leq \frac{1}{\omega+1}$, we have 
	$$
	\mathbb{E}[{\cal H}^{k+1}] \leq  (1-\eta) \ExpBr{{\cal H}^k} + m n \eta \nu^2 R^2 \ExpBr{\|x^k - x^*\|^2}, 
	$$
	for $k\geq 0$. 
\end{lemma}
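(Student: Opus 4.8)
The plan is to mirror the argument used for Lemma~\ref{lm:calHk}, the only change being that the update \eqref{eq:hikgeneral} of {\sf NL2} carries no positive-part projection. First I would condition on $W^k \eqdef (h_1^k,\dots,h_n^k,x^k)$ and expand
$$
\ExpBr{{\cal H}^{k+1} \mid W^k} = \sum_{i=1}^n \ExpBr{\norm{h_i^k + \eta \cC_i^k(\newalpha_i(x^k) - h_i^k) - \newalpha_i(x^*)}^2 \mid W^k}.
$$
Because there is no projection here, the nonexpansiveness step \eqref{ew:ub98g9fdu_09u9fdf} used in the {\sf NL1} proof is simply not needed: the square expands directly into $\norm{h_i^k - \newalpha_i(x^*)}^2$, the cross term $2\eta\langle \cC_i^k(\newalpha_i(x^k) - h_i^k),\, h_i^k - \newalpha_i(x^*)\rangle$, and the quadratic term $\eta^2 \norm{\cC_i^k(\newalpha_i(x^k) - h_i^k)}^2$. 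Taking the conditional expectation, then using unbiasedness \eqref{eq:unbiased} (which replaces $\cC_i^k(\newalpha_i(x^k) - h_i^k)$ by $\newalpha_i(x^k) - h_i^k$ in the cross term) and the variance bound \eqref{eq:omega-variance}, reproduces exactly the analogue of \eqref{eq:nbu98fd-09ud9f8hufd-fd}.

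Next I would invoke the stepsize restriction $\eta \le \frac{1}{\omega+1}$ to bound $\eta^2(\omega+1) \le \eta$, and then use the telescoping identity
$$
2\langle \newalpha_i(x^k) - h_i^k,\, h_i^k - \newalpha_i(x^*)\rangle + \norm{\newalpha_i(x^k) - h_i^k}^2 = \norm{\newalpha_i(x^k) - \newalpha_i(x^*)}^2 - \norm{h_i^k - \newalpha_i(x^*)}^2,
$$
which yields $\ExpBr{{\cal H}^{k+1}\mid W^k} \le (1-\eta){\cal H}^k + \eta\sum_{i=1}^n \norm{\newalpha_i(x^k) - \newalpha_i(x^*)}^2$. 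The componentwise Lipschitz estimate \eqref{eq:alphaijL} together with $\norm{a_{ij}}\le R$ then bounds $\sum_{i=1}^n\norm{\newalpha_i(x^k) - \newalpha_i(x^*)}^2 = \sum_{i,j}(\newalpha_{ij}(x^k) - \newalpha_{ij}(x^*))^2 \le mn\,\nu^2 R^2\,\norm{x^k - x^*}^2$, giving the conditional form of the claim. Taking full expectations via the tower property $\ExpBr{{\cal H}^{k+1}} = \ExpBr{\ExpBr{{\cal H}^{k+1}\mid W^k}}$ then finishes the argument.

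I do not anticipate a genuine obstacle. The only places where the {\sf NL1} proof used convexity of the $\varphi_{ij}$ (hence $\newalpha_i(x^*) \in \R^m_+$) were precisely the projection and its nonexpansiveness, and both are dropped here; everything else transfers verbatim. The single point to state with care is that each $\cC_i^k$ is sampled afresh, conditionally on $W^k$ and independently across $i$, so that \eqref{eq:unbiased} and \eqref{eq:omega-variance} apply inside the conditional expectation — identical to the situation in Lemma~\ref{lm:calHk}. Finally, the constant $mnR^2$ appearing in the statement (in place of $S^2 \eqdef \sum_{i,j}\norm{a_{ij}}^2$ from Lemma~\ref{lm:calHk}) is just the crude bound $S^2 \le mnR^2$.
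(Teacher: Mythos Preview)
Your proposal is correct and matches the paper's approach exactly: the paper itself says ``The proof is almost the same as that of Lemma~\ref{lm:calHk}, hence we omit it,'' and you have accurately identified the sole difference (no projection, so the nonexpansiveness step is unnecessary) and the cosmetic replacement of $S^2$ by the cruder bound $mnR^2$.
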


\subsection{Proof of Theorem~\ref{th:general}}

First, we have 

\begin{eqnarray*}
	\|x^{k+1} - x^*\| &=& \| x^k - x^* - (\mH^k + \lambda \mI)^{-1} (\nabla f(x^k) + \lambda x^k) \| \\
	&=& \|(\mH^k + \lambda \mI)^{-1} \left(  (\mH^k + \lambda \mI)(x^k - x^*) - \nabla f(x^k) - \lambda x^k  \right) \| \\ 
	&\leq& \frac{1}{\mu} \|  (\mH^k + \lambda \mI)(x^k - x^*) - \nabla f(x^k) - \lambda x^k  \| \\ 
	&\overset{(\ref{eq:optx})}{=}& \frac{1}{\mu} \| \mH^k (x^k - x^*) - (\nabla f(x^k) - \nabla f(x^*)) \| \\ 
	&\leq& \frac{1}{n \mu} \sum_{i=1}^n \| \mH_i^k (x^k - x^*) - (\nabla f_i(x^k) - \nabla f_i(x^*)) \| \\
	&=& \frac{1}{n \mu} \sum_{i=1}^n \frac{1}{m} \left\|  \sum_{j=1}^{m} \left( \left( \beta^k (h_{ij}^k + 2\gamma) - 2\gamma \right)a_{ij}a_{ij}^\top (x^k - x^*) -  (\nabla f_{ij}(x^k) - \nabla f_{ij}(x^*)) \right)   \right\| 
\end{eqnarray*}

By using 	\eqref{eq:difnablaf}, we further get
\begin{eqnarray*}
	\|x^{k+1} - x^*\|
	&\overset{(\ref{eq:difnablaf})}{=}& \frac{1}{n \mu} \sum_{i=1}^n \frac{1}{m} \left\|  \sum_{j=1}^{m} \left( \beta^k (h_{ij}^k + 2\gamma)a_{ij}a_{ij}^\top (x^k - x^*) - 2\gamma a_{ij}a_{ij}^\top (x^k - x^*) \right. \right. \\ 
	&& \left. \left. -  \int_{0}^1 \mH_{ij} (x^* + \tau(x^k - x^*)) (x^k - x^*)d\tau  \right)   \right\| \\ 
	&=& \frac{1}{n \mu} \sum_{i=1}^n \frac{1}{m} \left\|  \sum_{j=1}^{m} \left( \beta^k (h_{ij}^k + 2\gamma)a_{ij}a_{ij}^\top (x^k - x^*) - 2\gamma a_{ij}a_{ij}^\top (x^k - x^*) \right. \right. \\ 
	&& \left. \left. -  \int_{0}^1 \newalpha_{ij} (x^* + \tau(x^k - x^*))a_{ij}a_{ij}^\top (x^k - x^*)d\tau  \right)   \right\| \\ 
	&=& \frac{1}{n \mu} \sum_{i=1}^n \frac{1}{m} \left\|  \sum_{j=1}^{m}  (h_{ij}^k + 2\gamma)a_{ij}a_{ij}^\top (x^k - x^*) \left( \beta^k - \int_{0}^1 \frac{\newalpha_{ij}(x^* + \tau(x^k - x^*)) + 2\gamma}{ h_{ij}^k + 2\gamma} d\tau \right) \right\| \\ 
	&\leq& \frac{3\gamma\|x^k - x^*\|}{n \mu} \sum_{i=1}^n \sum_{j=1}^{m} \frac{\|a_{ij}\|^2}{m} \left|   \beta^k - \int_{0}^1 \frac{\newalpha_{ij}(x^* + \tau(x^k - x^*)) + 2\gamma}{ h_{ij}^k + 2\gamma} d\tau   \right|, 
\end{eqnarray*}
where we use $|h_{ij}^k| \leq \gamma$ in the last inequality. Next, we estimate $|\beta^k - 1|$:
\begin{eqnarray*}
	\left|  \frac{\newalpha_{ij}(x^k) + 2\gamma}{h_{ij}^k + 2\gamma}  - 1  \right| &=& \left|  \frac{\newalpha_{ij}(x^k) - h_{ij}^k}{h_{ij}^k + 2\gamma}  \right| \\ 
	&\leq& \frac{1}{\gamma} |\newalpha_{ij}(x^k) - \newalpha_{ij}(x^*) + \newalpha_{ij}(x^*) - h_{ij}^k| \\ 
	&\leq& \frac{1}{\gamma} |\newalpha_{ij}(x^k) - \newalpha_{ij}(x^*)| + \frac{1}{\gamma} |h_{ij}^k - \newalpha_{ij}(x^*)| \\ 
	&\overset{(\ref{eq:alphaijL})}{\leq}& \frac{\nu \|a_{ij}\|}{\gamma} \|x^k - x^*\| + \frac{1}{\gamma} |h_{ij}^k - \newalpha_{ij}(x^*)|. 
\end{eqnarray*}

Let $\{i^k, j^k\} = {\rm argmax}_{i, j} \left\{ \frac{\newalpha_{ij}(x^k) + 2\gamma}{h_{ij}^k + 2\gamma}  \right\}$. Then we have 
\begin{eqnarray*}
	|\beta^k - 1| &=& \left| \frac{\newalpha_{i^kj^k}(x^k) + 2\gamma}{h_{i^kj^k}^k + 2\gamma}   - 1  \right| \\ 
	&\leq& \max_{i, j} \left\{  \frac{\nu \|a_{ij}\|}{\gamma} \|x^k - x^*\| + \frac{1}{\gamma} |h_{ij}^k - \newalpha_{ij}(x^*)|  \right\} \\ 
	&\leq& \frac{\nu R}{\gamma} \|x^k - x^*\| + \frac{1}{\gamma} \max_{i, j} \{|h_{ij}^k - \newalpha_{ij}(x^*)| \}.  
\end{eqnarray*}

For $\left| \int_{0}^1 \frac{\newalpha_{ij}(x^* + \tau(x^k - x^*)) + 2\gamma}{ h_{ij}^k + 2\gamma} d\tau - 1   \right|$, we have 

\begin{eqnarray*}
	&& \left| \int_{0}^1 \frac{\newalpha_{ij}(x^* + \tau(x^k - x^*)) + 2\gamma}{ h_{ij}^k + 2\gamma} d\tau - 1   \right| \\ 
	&=& \left|  \int_{0}^1  \frac{\newalpha_{ij}(x^* + \tau(x^k - x^*)) - h_{ij}^k}{ h_{ij}^k + 2\gamma} d\tau  \right| \\ 
	&\leq& \int_{0}^1 \left|   \frac{\newalpha_{ij}(x^* + \tau(x^k - x^*)) - h_{ij}^k}{ h_{ij}^k + 2\gamma}  \right| d\tau \\ 
	&\leq& \frac{1}{\gamma} \int_{0}^1 \left(  |\newalpha_{ij}(x^* + \tau(x^k - x^*)) - \newalpha_{ij}(x^*)| + |h_{ij}^k - \newalpha_{ij}(x^*)|  \right) d\tau \\ 
	&\overset{(\ref{eq:alphaijL})}{\leq}& \frac{1}{\gamma} |h_{ij}^k - \newalpha_{ij}(x^*)| + \frac{1}{\gamma} \int_{0}^1 \tau \nu \|a_{ij}\| \cdot \|x^k - x^*\| d\tau \\ 
	&=& \frac{1}{\gamma} |h_{ij}^k - \newalpha_{ij}(x^*)|  + \frac{\nu \|a_{ij}\|}{2\gamma} \|x^k - x^*\|. 
\end{eqnarray*}

Combining the two above inequalities, we can obtain 

\begin{eqnarray}
&& \left|   \beta^k - \int_{0}^1 \frac{\newalpha_{ij}(x^* + \tau(x^k - x^*)) + 2\gamma}{ h_{ij}^k + 2\gamma} d\tau   \right| \nonumber \\ 
&\leq& \frac{\nu R}{\gamma} \|x^k - x^*\| + \frac{1}{\gamma} \max_{i, j} \{|h_{ij}^k - \newalpha_{ij}(x^*)| \} + \frac{1}{\gamma} |h_{ij}^k - \newalpha_{ij}(x^*)|  + \frac{\nu \|a_{ij}\|}{2\gamma} \|x^k - x^*\| \nonumber \\ 
&\leq&  \frac{2\nu R}{\gamma} \|x^k - x^*\| + \frac{2}{\gamma} \max_{i, j} \{|h_{ij}^k - \newalpha_{ij}(x^*)| \}. \label{eq:betak-2}
\end{eqnarray}

Thus, we have 
\begin{eqnarray}
\|x^{k+1} - x^*\|^2 &\leq& \frac{9\gamma^2\|x^k - x^*\|^2}{n \mu^2} \sum_{i=1}^n \sum_{j=1}^{m} \frac{\|a_{ij}\|^4}{m} \left|   \beta^k - \int_{0}^1 \frac{\newalpha_{ij}(x^* + \tau(x^k - x^*)) + 2\gamma}{ h_{ij}^k + 2\gamma} d\tau   \right|^2 \nonumber \\ 
&\leq& \frac{9\gamma^2\|x^k - x^*\|^2}{n \mu^2} \sum_{i=1}^n \sum_{j=1}^{m} \frac{\|a_{ij}\|^4}{m} \left(  \frac{8\nu^2 R^2}{\gamma^2} \|x^k - x^*\|^2 + \frac{8}{\gamma^2} \max_{i, j} \{|h_{ij}^k - \newalpha_{ij}(x^*)|^2 \}    \right) \nonumber \\ 
&\leq& \frac{72\|x^k - x^*\|^2 R^4}{\mu^2} \max_{i, j} \{|h_{ij}^k - \newalpha_{ij}(x^*)|^2 \}  +   \frac{72\nu^2 R^6\|x^k - x^*\|^4 }{\mu^2} \label{eq:forinitiallm} \\ 
&\leq&  \frac{72\|x^k - x^*\|^2 R^4}{\mu^2} \sum_{i=1}^n \|h_i^k - \newalpha_i(x^*)\|^2 +   \frac{72\nu^2 R^6\|x^k - x^*\|^4 }{\mu^2}, \nonumber 
\end{eqnarray}
where we use the convexity of $x\mapsto \|x\|^2$ in the first inequality. From the definition of ${\cal H}^k$, we can write 
\begin{equation}\label{eq:xk+1-2}
\|x^{k+1} - x^*\|^2 \leq  \frac{72\|x^k - x^*\|^2 R^4}{\mu^2} {\cal H}^k +   \frac{72\nu^2 R^6\|x^k - x^*\|^4 }{\mu^2}. 
\end{equation}

Assume $\|x^k - x^*\|^2 \leq \frac{\mu^2}{432m n \nu^2R^6}$ for all $k\geq 0$. Then from (\ref{eq:xk+1-2}), we have 
\begin{eqnarray*}
	\|x^{k+1} - x^*\|^2 &\leq& \frac{\mu^2}{432 mn\nu^2 R^6} \cdot \frac{72R^4}{\mu^2}{\cal H}^k + \frac{\mu^2}{432mn\nu^2 R^6} \cdot \frac{ 72\nu^2 R^6\|x^k-x^*\|^2}{\mu^2} \\ 
	&\leq& \frac{1}{6mn \nu^2 R^2} {\cal H}^k + \frac{1}{6mn} \|x^k - x^*\|^2, 
\end{eqnarray*}
and by taking expectation, we have 
\begin{equation}\label{eq:expxk+1-2}
\ExpBr{\|x^{k+1} - x^*\|^2} \leq  \frac{1}{6mn \nu^2 R^2} \ExpBr{{\cal H}^k} + \frac{1}{6mn} \ExpBr{ \|x^k - x^*\|^2}. 
\end{equation}

Let $\Phi_2^k \eqdef \|x^{k} - x^*\|^2 + \frac{1}{3mn\eta  \nu^2 R^2} {\cal H}^{k}$. Combining (\ref{eq:expxk+1-2}) and the evolution of ${\cal H}^k$ in Lemma~\ref{lm:calHk-2}, we arrive at 
\begin{eqnarray*}
	\mathbb{E}[\Phi_2^{k+1}] &=& \ExpBr{\|x^{k+1} - x^*\|^2} + \frac{1}{3mn\eta  \nu^2 R^2}  \ExpBr{{\cal H}^{k+1}} \\ 
	&\leq&  \frac{1}{3mn \eta \nu^2 R^2} \left(  1 - \eta + \frac{\eta}{2}  \right) \mathbb{E}[{\cal H}^k] + \left(  \frac{1}{6mn} + \frac{1}{3}  \right)\ExpBr{\|x^k - x^*\|^2 }\\ 
	&\leq& \left(  1 - \min \left\{  \frac{\eta}{2}, \frac{1}{2}  \right\}  \right) \ExpBr{\Phi_2^k}, 
\end{eqnarray*}
which implies that $\ExpBr{\Phi_2^k}\leq  \left(  1 - \min \left\{  \frac{\eta}{2}, \frac{1}{2}  \right\}  \right)^k \Phi_2^0$. Then we further have $\ExpBr{\|x^k - x^*\|^2} \leq  \left(  1 - \min \left\{  \frac{\eta}{2}, \frac{1}{2}  \right\}  \right)^k \Phi_2^0$ and $\mathbb{E}[{\cal H}^k] \leq  \left(  1 - \min \left\{  \frac{\eta}{2}, \frac{1}{2}  \right\}  \right)^k 3mn\eta \nu^2 R^2 \Phi_2^0$. Assume $x^k \neq x^*$ for all $k$. Then from (\ref{eq:xk+1-2}), we have 
$$
\frac{\|x^{k+1} - x^*\|^2}{\|x^k - x^*\|^2 } \leq \frac{72R^4}{ \mu^2} {\cal H}^k + \frac{72 \nu^2 R^6}{\mu^2}\|x^k - x^*\|^2, 
$$
and by taking expectation, we can get 
\begin{eqnarray*}
	\mathbb{E} \left[  \frac{\|x^{k+1} - x^*\|^2}{\|x^k - x^*\|^2 }  \right] &\leq& \frac{72R^4}{ \mu^2}\ExpBr{{\cal H}^k} + \frac{72 \nu^2 R^6}{\mu^2} \ExpBr{\|x^k - x^*\|^2} \\ 
	&\leq& \left(  1 - \min \left\{  \frac{\eta}{2}, \frac{1}{2}  \right\}  \right)^k  \left(  {3mn\eta} + 1  \right) \frac{72\nu^2 R^6}{\mu^2} \Phi_2^0. 
\end{eqnarray*}

\subsection{Proof of Lemma \ref{lm:initial-2}}

We prove this by induction. First, we have $\|x^0 - x^*\|^2 \leq \frac{\mu^2}{432mn \nu^2R^6}$ by the assumption. We assume $\|x^k - x^*\|^2 \leq \frac{\mu^2}{432mn \nu^2R^6}$ holds for all $k \leq K$. For $k\leq K$, since $h_{ij}^k$ is a convex combination of $\{  \newalpha_{ij}(x^0), \newalpha_{ij}(x^1), ..., \newalpha_{ij}(x^k)  \}$ for all $i,j$, we have 
$$
h_{ij}^k = \sum_{t=0}^k \rho_t \newalpha_{ij}(x^t), \quad {\rm with}\quad  \sum_{t=0}^k\rho_t = 1, \quad {\rm and} \quad \rho_t \geq 0, 
$$
which implies that 
\begin{eqnarray*}
	|h_{ij}^k - \newalpha_{ij}(x^*)|^2 &=&   \left|  \sum_{t=0}^k \rho_t (\newalpha_{ij}(x^t) - \newalpha_{ij}(x^*))  \right|^2 \\ 
	&\leq&   \sum_{t=0}^k \rho_t | \newalpha_{ij}(x^t) - \newalpha_{ij}(x^*) |^2 \\ 
	&\overset{(\ref{eq:alphaijL})}{\leq}&   \sum_{t=0}^k \rho_t \nu^2 \|a_{ij}\|^2 \|x^t - x^*\|^2 \\ 
	&\overset{Assumption \ref{as:learning-2}}{\leq}& \nu^2 R^2  \sum_{t=0}^k \rho_t \cdot \frac{\mu^2}{432mn \nu^2R^6} \\
	&=& \frac{ \mu^2}{432mnR^4}, 
\end{eqnarray*}
for $k\leq K$. 

Combining the above inequality and (\ref{eq:forinitiallm}), we arrive at 
\begin{eqnarray*}
	\|x^{K+1} - x^*\|^2 &\leq& \frac{72\|x^K-x^*\|^2 R^4}{\mu^2} \max_{i, j} \{  | h_{ij}^K - \newalpha_{ij}(x^*) |^2  \}  + \frac{ 72\nu^2 R^6\|x^K-x^*\|^4}{\mu^2} \\
	&\leq& \frac{72\|x^K-x^*\|^2 R^4}{\mu^2} \cdot  \frac{ \mu^2}{432mnR^4} +  \frac{ 72\nu^2 R^6\|x^K-x^*\|^4}{\mu^2} \\
	&\leq& \frac{1}{6mn}\|x^K - x^*\|^2 + \frac{ 72\nu^2 R^6\|x^K-x^*\|^4}{\mu^2} \\ \\
	&\leq& \frac{1}{6mn} \cdot \frac{\mu^2}{432n \nu^2R^6} + \frac{72 \nu^2 R^6}{\mu^2} \cdot \left(\frac{\mu^2}{432mn \nu^2R^6} \right)^2 \\ 
	&\leq& \frac{\mu^2}{432mn \nu^2R^6}. 
\end{eqnarray*}

\clearpage

\section{Proofs for {\sf CNL} (Section~\ref{sec:CUBIC-NEWTON-LEARN})}


\subsection{Solving the Subproblem}\label{sec:subproblem}

In {\sf CUBIC-NEWTON-LEARN}, we need to minimize $T(x^k, s)$ at each step. The optimality condition for this subproblem is 
\begin{equation}\label{eq:subproblem}
\nabla P(x^k) + (\mH^k + \lambda \mI)s + \frac{M\|s\|}{2}s = 0. 
\end{equation}

Let $\mU^T \Lambda \mU$ be the eigenvalue decomposition of $\mH^k + \lambda \mI$. Then we can transform the above equality to 
$$
\Lambda \mU s + \frac{M\|s\|}{2} \mU s = - \mU \nabla P(x^k). 
$$
Noticing that $\| \mU s\| = \|s\|$, define $y = \mU s$, then we can get 
$$
y = - \left(  \Lambda + \frac{M}{2}\|y\|\mI  \right)^{-1} \mU \nabla P(x^k), 
$$
and by taking the norm, we arrive at 
$$
\|y\|^2 = \sum_{i=1}^d \frac{\left(  \mU\nabla P(x^k)  \right)_i^2 }{(\Lambda_i + \frac{M}{2}\|y\|)^2}, 
$$
which is actually equivalent to a one-dimensional nonlinear equation. By solving this one-dimensional nonlinear equation, we can get $\|s\| = \|\mU s\| = \|y\|$, and then from (\ref{eq:subproblem}), we can obtain the solution 
$$
s = - \left(  \mH^k + \lambda \mI + \frac{M\|s\|\mI}{2}  \right)^{-1} \nabla P(x^k) = - \left(  \mH^k + \lambda \mI + \frac{M\|y\|\mI}{2}  \right)^{-1} \nabla P(x^k). 
$$

\subsection{Proof of Lemma \ref{lm:global}}

Since $x^{k+1} = x^k + s^k$, from Lemma 1 in \citep{PN2006-cubic}, we have 
\begin{eqnarray}
P(x^{k+1}) &\leq& P(x^k) + \langle \nabla f(x^k) + \lambda x^k, s^k \rangle + \frac{1}{2} \langle (\nabla^2 f(x^k) + \lambda \mI)s^k, s^k \rangle + \frac{M}{6}\|s^k\|^3 \nonumber \\ 
&\leq&  P(x^k) + \langle \nabla f(x^k) + \lambda x^k, s^k \rangle + \frac{1}{2} \langle (\mH^k + \lambda \mI)s^k, s^k \rangle + \frac{M}{6}\|s^k\|^3 \nonumber \\ 
&=& P(x^k) + T(x^k, s^k). \label{eq:decreasingP}
\end{eqnarray}
Since $T(x^k, s^k) = \min_{s}\{  T(x^k, s)  \} \leq T(x^k, 0)=0$, we have $P(x^{k+1}) \leq P(x^k)$.  From (\ref{eq:decreasingP}), we have 
\begin{eqnarray*}
	P(x^{k+1}) &\leq& P(x^k) + T(x^k, s^k)\\
	&=& P(x^k) + \min_{s}\{  T(x^k, s)  \}  \\ 
	&\leq& P(x^k) + T(x^k, y - x^k) \\ 
	&=& P(x^k) + \langle \nabla P(x^k), y-x^k \rangle + \frac{1}{2} \langle (\mH^k + \lambda \mI)(y-x^k), y-x^k \rangle + \frac{M}{6}\|y-x^k\|^2 \\ 
	&\leq& P(x^k) + \langle \nabla P(x^k), y-x^k \rangle + \frac{1}{2} \langle \nabla^2 P(x^k)(y-x^k), y-x^k \rangle + \frac{M}{6}\|y-x^k\|^2\\ 
	&& + \frac{1}{2} \|\mH^k + \lambda \mI - \nabla^2 P(x^k)\| \cdot \|y-x^k\|^2. 
\end{eqnarray*}

First, by Lemma 1 in \citep{PN2006-cubic}, we can obtain 
$$
\langle \nabla P(x^k), y-x^k \rangle + \frac{1}{2} \langle \nabla^2 P(x^k)(y-x^k), y-x^k \rangle \leq P(y) - P(x^k) + \frac{M}{6}\|y-x^k\|^3. 
$$

For $ \|\mH^k + \lambda \mI - \nabla^2 P(x^k)\| $, we have 
\begin{eqnarray*}
	\|\mH^k + \lambda \mI - \nabla^2 P(x^k)\| &=& \|\mH^k - \nabla^2 f(x^k)\| \\
	&=& \left\| \frac{1}{n} \sum_{i=1}^n \frac{1}{m} \sum_{j=1}^m \left(  \beta^k (h_{ij}^k + 2\gamma) - 2\gamma - \newalpha_{ij}(x^k)  \right)a_{ij}a_{ij}^\top \right\| \\ 
	&\leq&  \frac{1}{n} \sum_{i=1}^n \frac{1}{m} \sum_{j=1}^m R^2 \left| \beta^k (h_{ij}^k + 2\gamma) - 2\gamma - \newalpha_{ij}(x^k) \right| \\ 
	&\leq&  \frac{1}{n} \sum_{i=1}^n \frac{1}{m} \sum_{j=1}^m R^2 \left| h_{ij}^k + 2\gamma   \right| \cdot \left| \beta^k  - \frac{\newalpha_{ij}(x^k) + 2\gamma}{h_{ij}^k + 2\gamma} \right|. 
\end{eqnarray*}
Since $|h_{ij}^k| \leq \gamma$ and $|\newalpha_{ij}(x^k)| \leq \gamma$, we have $|h_{ij}^k + 2\gamma| \leq 3\gamma$ and $\left| \frac{\newalpha_{ij}(x^k) + 2\gamma}{h_{ij}^k + 2\gamma} \right| \leq 3$ for any $i,j$.  Thus we can get 
$$
\|\mH^k + \lambda \mI - \nabla^2 P(x^k)\| \leq 18\gamma R^2. 
$$

Then we arrive at 
$$
P(x^{k+1}) \leq P(y) + 9\gamma R^2 \|y-x^k\|^2 + \frac{M}{3}\|y-x^k\|^3. 
$$

\subsection{Proof of Theorem \ref{th:concubic}}

Recall that $R_x =\sup_{x\in \R^d} \{  \|x-x^*\| : P(x) \leq P(x^0)  \}$. Since $P(x^k) \leq P(x^0)$ for all $k\geq 0$, we have $\|x^k - x^*\| \leq R_x$ for $k\geq 0$.  Let $c_k \eqdef k^2$, $C_k = C_0 + \sum_{i=1}^kc_i$ with $C_0=\frac{4}{3}$ for $k\geq 1$. Then we can obtain  
\begin{equation}\label{eq:Ck}
C_k = C_0 + \sum_{i=1}^k i^2 \geq C_0 + \int_{0}^k x^2 dx = C_0 + \frac{k^3}{3}. 
\end{equation}

Let $\sigma_k \eqdef \frac{c_{k+1}}{C_{k+1}}$ for $k\geq 0$. Then $1 - \sigma_k = \frac{C_k}{C_{k+1}}$. We also have $\sigma_k \leq 1$ since 
$$
\max_{\xi\geq 0} \frac{\xi^2}{C_0 + \frac{\xi^3}{3}} =1. 
$$

Choose $y = \sigma_k x^* + (1-\sigma_k)x^k$. Then by the convexity of $P$, we have 
\begin{eqnarray*}
	P(x^{k+1}) &\leq& \sigma_k P(x^*) + (1-\sigma_k) P(x^k) + 9\gamma R^2 \sigma_k^2 \|x^k - x^*\|^2 + \frac{M \sigma_k^3}{3}\|x^k-x^*\|^3 \\ 
	&\leq& \frac{c_{k+1}}{C_{k+1}}P(x^*) + \frac{C_k}{C_{k+1}}P(x^k) + 9\gamma R^2R_x^2 \left(  \frac{c_{k+1}}{C_{k+1}}  \right)^2 + \frac{MR_x^3}{3} \left(  \frac{c_{k+1}}{C_{k+1}}  \right)^3, 
\end{eqnarray*}
which implies that 
$$
C_{k+1} [P(x^{k+1}) - P(x^*)] \leq C_k [P(x^k) - P(x^*)] + 9\gamma R^2R_x^2 \frac{c_{k+1}^2}{C_{k+1}} + \frac{MR_x^3}{3} \frac{c_{k+1}^3}{C_{k+1}^2}. 
$$

Hence, 
$$
C_k[P(x^k) - P(x^*)] \leq C_0[P(x^0) - P(x^*)] + 9\gamma R^2R_x^2 \sum_{i=1}^k \frac{c_i^2}{C_i} + \frac{MR_x^3}{3} \sum_{i=1}^k \frac{c_i^3}{C_i^2}. 
$$

Notice that 
$$
\sum_{i=1}^k \frac{c_i^2}{C_i} \overset{(\ref{eq:Ck})}{\leq} \sum_{i=1}^k \frac{i^4}{C_0 + \frac{1}{3}i^3}\leq 3 \sum_{i=1}^k i \leq 3k^2, 
$$
and 
$$
\sum_{i=1}^k \frac{c_i^3}{C_i^2}  \overset{(\ref{eq:Ck})}{\leq}  \sum_{i=1}^k \frac{i^6}{(C_0 + \frac{1}{3}i^3)^2} \leq 9k. 
$$

Therefore, we arrive at 
\begin{eqnarray*}
	P(x^k) - P(x^*) &\leq& \frac{1}{C_k} \left(  C_0[P(x^0) - P(x^*)] + 27\gamma R^2R_x^2 k^2 + 3MR_x^3 k \right) \\ 
	&\overset{(\ref{eq:Ck})}{\leq}& \frac{81\gamma R^2 R_x^2}{k} + \frac{9MR_x^3}{k^2} + \frac{4[P(x^0) - P(x^*)]}{k^3}. 
\end{eqnarray*}

Then in order to guarantee $P(x^k) - P(x^*) \leq \epsilon$, we only need to let 
$$
k = O\left( \frac{81\gamma R^2 R_x^2}{\epsilon}  + \sqrt{\frac{MR_x^3}{\epsilon}} + \left(  \frac{P(x^0) - P(x^*)}{\epsilon}  \right)^{1/3}   \right). 
$$

\subsection{Proof of Theorem \ref{th:stronglyconcubic}}

Since $P$ is $\mu$-strongly convex, we have 
$
\frac{\mu}{2} \|x-x^*\|^2 \leq P(x) - P(x^*), 
$
and thus 
$$
R_x^2 \leq \frac{2}{\mu} (P(x^0) - P(x^*)). 
$$

Combining the above inequality and Theorem~\ref{th:concubic}, we can obtain 
$$
P(x^k) - P(x^*) \leq \left(  \frac{162\gamma R^2}{\mu k} +  \frac{18MR_x}{\mu k^2} + \frac{4}{ k^3}  \right) \cdot \left(  P(x^0) - P(x^*)  \right)
$$

If we let 
$$
\frac{162\gamma R^2}{\mu k} \leq \frac{1}{6}, \quad  \frac{18MR_x}{\mu k^2} \leq \frac{1}{6}, \quad {\rm and} \frac{4}{ k^3}  \leq \frac{1}{6}, 
$$
which is equivalent to 
$$
k \geq O\left(  \frac{\gamma R^2}{\mu} + \sqrt{\frac{MR_x}{\mu}} + 1 \right), 
$$
then we can obtain $P(x^k) - P(x^*) \leq \frac{1}{2} (P(x^0) - P(x^*))$. This means that we can reduce the residual $P(x^k)-P(x^*)$ by half after $O\left(  \frac{\gamma R^2}{\mu} + \sqrt{\frac{MR_x}{\mu}} + 1 \right)$ iterations. Thus, we have $P(x^k) - P(x^*) \leq \epsilon$ as long as 
$$
k = O\left( \left(  \frac{\gamma R^2}{\mu} + \sqrt{\frac{MR_x}{\mu}} + 1 \right) \log \left(  \frac{P(x^0) - P(x^*)}{\epsilon}  \right)   \right). 
$$

\subsection{Proof of Theorem~\ref{th:cubicsup}}

Since $s^k = {\rm argmin}_{s\in\R^d}T(x^k, s)$, from the optimality condition, we have 

$$
\nabla f(x^k) + \lambda x^k + (\mH^k + \lambda \mI)s^k  + \frac{M}{2}\|s^k\| s^k = 0, 
$$
which indicates that 
\begin{equation}\label{eq:xk+1im}
x^{k+1} = x^k - \left(  \mH^k + \lambda \mI + \frac{M}{2}\|x^{k+1} - x^k\| \right)^{-1} \left(  \nabla f(x^k) + \lambda x^k  \right). 
\end{equation}

Then we can obtain 
\begin{eqnarray*}
	\|x^{k+1} - x^*\| &=& \left \| x^k - x^* - \left(\mH^k + \lambda \mI +  \frac{M}{2}\|x^{k+1} - x^k\| \right)^{-1} (\nabla f(x^k) + \lambda x^k) \right\| \\
	&=& \left\| \left(\mH^k + \lambda \mI + \frac{M}{2}\|x^{k+1} - x^k\| \right)^{-1} \left(  \left( \mH^k + \lambda \mI + \frac{M}{2}\|x^{k+1} - x^k\| \right) (x^k - x^*) - \nabla f(x^k) - \lambda x^k  \right) \right\| \\ 
	&\leq& \frac{1}{\mu}  \left\|  \left(\mH^k + \lambda \mI +  \frac{M}{2}\|x^{k+1} - x^k\| \right)(x^k - x^*) - \nabla f(x^k) - \lambda x^k  \right \| \\ 
	&\overset{(\ref{eq:optx})}{=}& \frac{1}{\mu} \left\| \mH^k (x^k - x^*) - (\nabla f(x^k) - \nabla f(x^*)) +  \frac{M}{2}\|x^{k+1} - x^k\|(x^k - x^*) \right\| \\ 
	&\leq& \frac{1}{\mu} \left\| \mH^k (x^k - x^*) - (\nabla f(x^k) - \nabla f(x^*)) \right\|  + \frac{M\|x^{k+1} - x^k\|}{2\mu} \| x^k - x^* \|. 
\end{eqnarray*}

Then same as the analysis in the proof of Theorem~\ref{th:general}, we can obtain 
\begin{eqnarray*}
	\|x^{k+1} - x^*\| &\leq& \frac{3\gamma\|x^k - x^*\|}{n \mu} \sum_{i=1}^n \sum_{j=1}^{m} \frac{\|a_{ij}\|^2}{m} \left|   \beta^k - \int_{0}^1 \frac{\newalpha_{ij}(x^* + \tau(x^k - x^*)) + 2\gamma}{ h_{ij}^k + 2\gamma} d\tau   \right| \\ 
	&& +  \frac{M\|x^{k+1} - x^k\|}{2\mu} \| x^k - x^* \|.
\end{eqnarray*}

Combing the above inequality and (\ref{eq:betak-2}) yields 
\begin{eqnarray*}
	\|x^{k+1} - x^*\|^2 &\leq& \left(  1 + \frac{1}{3}  \right) \frac{9\gamma^2 \|x^k - x^*\|^2}{n \mu^2} \sum_{i=1}^n \sum_{j=1}^{m} \frac{\|a_{ij}\|^4}{m} \left|   \beta^k - \int_{0}^1 \frac{\newalpha_{ij}(x^* + \tau(x^k - x^*)) + 2\gamma}{ h_{ij}^k + 2\gamma} d\tau   \right|^2 \\
	&& + (1 + 3) \cdot \frac{M^2\|x^{k+1} - x^k\|^2}{4\mu^2} \| x^k - x^* \|^2 \\
	&\overset{(\ref{eq:betak-2})}{\leq}& \frac{12 \gamma^2\|x^k - x^*\|^2}{n \mu^2} \sum_{i=1}^n \sum_{j=1}^{m} \frac{\|a_{ij}\|^4}{m} \left(  \frac{8\nu^2 R^2}{\gamma^2} \|x^k - x^*\|^2 + \frac{8}{\gamma^2} \max_{i} \{\|h_i^k - \newalpha_i(x^*)\|^2 \}    \right) \\ 
	&& +  \frac{M^2\|x^{k+1} - x^k\|^2}{\mu^2} \| x^k - x^* \|^2 \\ 
	&\leq&  \frac{96\|x^k - x^*\|^2 R^4}{\mu^2} \sum_{i=1}^n \|h_i^k - \newalpha_i(x^*)\|^2 +   \frac{96\nu^2 R^6\|x^k - x^*\|^4 }{\mu^2} +  \frac{M^2\|x^{k+1} - x^k\|^2}{\mu^2} \| x^k - x^* \|^2 \\ 
	&=&  \frac{96\|x^k - x^*\|^2 R^4}{\mu^2}{\cal H}^k  +   \frac{96\nu^2 R^6\|x^k - x^*\|^4 }{\mu^2} +  \frac{M^2\|x^{k+1} - x^k\|^2}{\mu^2} \| x^k - x^* \|^2. 
\end{eqnarray*}

Then from $\|x^{k+1} - x^k\|^2 \leq 2\|x^{k+1} - x^*\|^2 + 2\|x^k - x^*\|^2$, we can get 
\begin{equation}\label{eq:xk+1cubic}
\|x^{k+1} - x^*\|^2 \leq \frac{96\|x^k - x^*\|^2 R^4}{\mu^2}{\cal H}^k + \frac{(96\nu^2 R^6 + 2M^2) \|x^k - x^*\|^4 }{\mu^2} + \frac{2M^2\|x^{k+1} - x^*\|^2}{\mu^2} \| x^k - x^* \|^2. 
\end{equation}

Assume $\|x^k - x^*\|^2 \leq \frac{\mu^2}{432m n \nu^2R^6}$ for all $k\geq 0$. Then from (\ref{eq:xk+1cubic}) and $M=\nu R^3$, we have 
\begin{eqnarray*}
	\|x^{k+1} - x^*\|^2 &\leq& \frac{\mu^2}{432mn\nu^2 R^6} \cdot \frac{96R^4}{\mu^2}{\cal H}^k + \frac{\mu^2}{432mn\nu^2 R^6} \cdot \frac{ 100\nu^2 R^6\|x^k-x^*\|^2}{\mu^2} \\ 
	&\leq& \frac{2}{9mn \nu^2 R^2} {\cal H}^k + \frac{1}{4mn} \|x^k - x^*\|^2, 
\end{eqnarray*}
and by taking expectation, we have 
\begin{equation}\label{eq:expxk+1-cubic}
\mathbb{E}\|x^{k+1} - x^*\|^2 \leq  \frac{2}{9mn \nu^2 R^2} \mathbb{E}[{\cal H}^k] + \frac{1}{4mn} \|x^k - x^*\|^2. 
\end{equation}

Let $\Phi_3^k \eqdef \|x^{k} - x^*\|^2 + \frac{4}{9mn\eta  \nu^2 R^2} {\cal H}^{k}$. Combing (\ref{eq:expxk+1-cubic}) and the evolution of ${\cal H}^k$ in Lemma \ref{lm:calHk-2}, we arrive at 
\begin{eqnarray*}
	\mathbb{E}[\Phi_3^{k+1}] &=& \mathbb{E}\|x^{k+1} - x^*\|^2 + \frac{4}{9mn\eta  \nu^2 R^2}  \mathbb{E}[{\cal H}^{k+1}] \\ 
	&\leq&  \frac{4}{9mn \eta \nu^2 R^2} \left(  1 - \eta + \frac{\eta}{2}  \right) \mathbb{E}[{\cal H}^k] + \left(  \frac{1}{4mn} + \frac{4}{9}  \right) \mathbb{E}\|x^k - x^*\|^2 \\ 
	&\leq& \left(  1 - \min \left\{  \frac{\eta}{2}, \frac{1}{4}  \right\}  \right) \mathbb{E}[\Phi_1^k], 
\end{eqnarray*}
which implies that $\mathbb{E}[\Phi_3^k] \leq  \left(  1 - \min \left\{  \frac{\eta}{2}, \frac{1}{4}  \right\}  \right)^k \Phi_3^0$. Then we further have $\mathbb{E}\|x^k - x^*\|^2 \leq  \left(  1 - \min \left\{  \frac{\eta}{2}, \frac{1}{4}  \right\}  \right)^k \Phi_3^0$ and $\mathbb{E}[{\cal H}^k] \leq  \left(  1 - \min \left\{  \frac{\eta}{2}, \frac{1}{4}  \right\}  \right)^k 3mn\eta \nu^2 R^2 \Phi_3^0$. Assume $x^k \neq x^*$ for all $k$. Then from (\ref{eq:xk+1cubic}), we have 
$$
\frac{\|x^{k+1} - x^*\|^2}{\|x^k - x^*\|^2 } \leq \frac{96R^4}{ \mu^2} {\cal H}^k + \frac{98 \nu^2 R^6}{\mu^2}\|x^k - x^*\|^2 + \frac{2\nu^2 R^6}{\mu^2} \|x^{k+1} - x^*\|^2, 
$$
and by taking expectation, we can get 
\begin{eqnarray*}
	\mathbb{E} \left[  \frac{\|x^{k+1} - x^*\|^2}{\|x^k - x^*\|^2 }  \right] &\leq& \frac{96R^4}{ \mu^2} \mathbb{E} [{\cal H}^k] + \frac{98 \nu^2 R^6}{\mu^2} \mathbb{E}\|x^k - x^*\|^2 + \frac{2\nu^2 R^6}{\mu^2}\mathbb{E} \|x^{k+1} - x^*\|^2 \\ 
	&\leq& \left(  1 - \min \left\{  \frac{\eta}{2}, \frac{1}{4}  \right\}  \right)^k  \left(  {3mn\eta} + 1  \right) \frac{100\nu^2 R^6}{\mu^2} \Phi_3^0. 
\end{eqnarray*}


\clearpage
\section{Extra Method: {\sf MAX-NEWTON}} \label{sec:maxNewton}


In this section we propose and analyze one more method, {\sf MAX-NEWTON (MN)}, which should be seen as a variant of  {\sf NEWTON-STAR (NS)}. Like {\sf NS}, {\sf MS} is not practical and is of theoretical interest only. This was the first method we developed, and all subsequent development that eventually lead to the results in this paper started here.
{\sf MS} differs from {\sf NS} in how we approximate Hessian of $P(x)$. As in {\sf NS}, we also assume that we know all $\newalpha_{ij}(x^*)$ at the optimum. However,  we estimate the Hessian at $x^k$ by $\mH^k = \frac{1}{n}\sum\limits_{i=1}^n\mH_i^k,$ where $\mH_i^k$ is defined differently:
\begin{eqnarray*}
	\mH_i^k = \frac{1}{m}\max_{j \in [m]}\left\{\frac{\newalpha_{ij}(x^k)}{\newalpha_{ij}(x^*)}\right\}\sum\limits_{j=1}^m\newalpha_{ij}(x^*)a_{ij}a_{ij}^\top = \frac{\beta^k_i}{m}\sum\limits_{j=1}^m\newalpha_{ij}(x^*)a_{ij}a_{ij}^\top.
\end{eqnarray*}
Above, we define $\beta_i^k \eqdef \max\limits_{j \in [m]}\frac{\newalpha_{ij}(x^k)}{\newalpha_{ij}(x^*)}$. Subsequently, we perform a Newton-like  step:
\begin{eqnarray*}
	x^{k+1} = x^k - \left(\mH^k + \lambda \mI \right)^{-1}\left(\frac{1}{n}\sum\limits_{i=1}^n \nabla f_i(x^k) +\lambda x^k\right).
\end{eqnarray*}

The method is summarized as Algorithm~\ref{alg:maxnewton}.

\begin{algorithm}[h]
	\caption{{\sf MN: MAX-NEWTON}}
	\label{alg:maxnewton}
	\begin{algorithmic}
		\STATE {\bfseries Initialization:}
		$x^0 \in \R^d$
		\FOR{ $k = 0, 1, 2, \dots$}
		\STATE Broadcast $x^k$ to all workers 
		\FOR{ $i = 1, \dots, n$} 
		\STATE Compute $\nabla f_i(x^k)$ 
		\STATE $\beta_i^k = \max\limits_{j \in [m]}\frac{\newalpha_{ij}(x^k)}{\newalpha_{ij}(x^*)}$ 
		\STATE Send $\nabla f_i(x^k)$ and $\beta_i^k$ to the server 
		\ENDFOR
		\STATE $\mH_i^k = \frac{\beta_i^k}{m}\sum\limits_{j=1}^m\newalpha_{ij}(x^*)a_{ij}a_{ij}^\top$
		\STATE $\mH^k = \frac{1}{n}\sum\limits_{i=1}^n \mH_i^k$
		\STATE $x^{k+1} = x^k - \left(\mH^k + \lambda \mI \right)^{-1}\left(\frac{1}{n}\sum\limits_{i=1}^n \nabla f_i(x^k)+ \lambda x^k\right)$
		\ENDFOR
	\end{algorithmic}
\end{algorithm} 

We now show that like original Newton's method and {\sf NEWTON-STAR}, {\sf MAX-NEWTON} also converges locally quadratically.

\begin{theorem}[Local quadratic convergence]\label{th:maxnewton}
	Assume that $f_{ij}$ is convex for all $i,j$ and that we know  $\newalpha_{ij}(x^*)$ for all $i,j$. Furthermore, assume that $\mH_{ij}(x^*) \succeq \mu^* \mI$ for some $\mu^*>0$ for all $i,j$ (for instance, this holds if $f_{ij}$ is $\mu^*$-strongly convex). Then for any starting point $x^0 \in \R^d$, the iterates of {\sf MAX-NEWTON} for solving problem \eqref{primal} satisfy the following inequality
	$$
	\|x^{k+1}-x^*\| \leq  \left(\frac{\nu}{2\lambda} \frac{1}{nm}\sum\limits_{i=1}^n\sum\limits_{j=1}^m\left\|a_{ij}\right\|^3\left[\frac{\newalpha_{ij}(x^*)R}{\mu^*\|a_{ij}\|}+ 1\right] \right) \|x^k-x^*\|^2,
	$$
	where $R\eqdef \max\limits_{ij}\|a_{ij}\|$.
\end{theorem}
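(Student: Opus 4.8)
The plan is to mirror the proof of Theorem~\ref{th:localquadratic} for {\sf NEWTON-STAR}, replacing the fixed Hessian $\mH(x^*)$ by the estimate $\mH^k=\frac1n\sum_i\mH_i^k$ and then absorbing the extra error created by the multipliers $\beta_i^k$. Two preliminary facts are needed. First, since every $f_{ij}$ is convex we have $\newalpha_{ij}(x^*)\ge 0$ and $\newalpha_{ij}(x^k)\ge 0$, hence $\beta_i^k\ge 0$ and $\mH^k$ is a conic combination of the PSD matrices $a_{ij}a_{ij}^\top$; therefore $\mH^k+\lambda\mI\succeq\lambda\mI$ and $\norm{(\mH^k+\lambda\mI)^{-1}}\le 1/\lambda$, which explains the $\lambda$ in the denominator of the claimed bound. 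Second, the first-order optimality condition gives $\nabla f(x^*)+\lambda x^*=0$.

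Starting from the {\sf MAX-NEWTON} step and proceeding exactly as in \eqref{eq:rand-opur-9}, I would write
\begin{align*}
\norm{x^{k+1}-x^*}
&=\norm{(\mH^k+\lambda\mI)^{-1}\Big[(\mH^k+\lambda\mI)(x^k-x^*)-(\nabla f(x^k)-\nabla f(x^*))-\lambda(x^k-x^*)\Big]}\\
&\le\frac1\lambda\norm{\mH^k(x^k-x^*)-(\nabla f(x^k)-\nabla f(x^*))}
\le\frac1{n\lambda}\sum_{i=1}^n\norm{\mH_i^k(x^k-x^*)-(\nabla f_i(x^k)-\nabla f_i(x^*))},
\end{align*}
using Jensen's inequality for $\norm{\cdot}$ in the last step. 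Then I would expand $\nabla f_{ij}(x^k)-\nabla f_{ij}(x^*)$ via the fundamental theorem of calculus as in \eqref{eq:difnablaf}, substitute $\mH_{ij}=\newalpha_{ij}a_{ij}a_{ij}^\top$ and $\mH_i^k=\frac{\beta_i^k}{m}\sum_j\newalpha_{ij}(x^*)a_{ij}a_{ij}^\top$, factor out the common vector $a_{ij}a_{ij}^\top(x^k-x^*)$, and bound $\norm{a_{ij}a_{ij}^\top(x^k-x^*)}\le\norm{a_{ij}}^2\norm{x^k-x^*}$. This reduces the whole estimate to controlling a scalar error:
$$\norm{x^{k+1}-x^*}\le\frac{\norm{x^k-x^*}}{nm\lambda}\sum_{i=1}^n\sum_{j=1}^m\norm{a_{ij}}^2\left|\,\beta_i^k\,\newalpha_{ij}(x^*)-\int_0^1\newalpha_{ij}(x^*+\tau(x^k-x^*))\,d\tau\,\right|.$$

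The main remaining task is to bound $E_{ij}\eqdef\big|\beta_i^k\newalpha_{ij}(x^*)-\int_0^1\newalpha_{ij}(x^*+\tau(x^k-x^*))\,d\tau\big|$ by a constant times $\norm{x^k-x^*}$. I would split $E_{ij}\le|\beta_i^k-1|\,\newalpha_{ij}(x^*)+\big|\newalpha_{ij}(x^*)-\int_0^1\newalpha_{ij}(x^*+\tau(x^k-x^*))\,d\tau\big|$; the second summand is $\le\int_0^1\nu\tau\norm{a_{ij}}\norm{x^k-x^*}\,d\tau=\tfrac{\nu\norm{a_{ij}}}{2}\norm{x^k-x^*}$ by \eqref{eq:alphaijL}, and this produces the ``$+1$'' term in the final constant. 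For the first summand, writing $\beta_i^k-1=\max_{j'}\frac{\newalpha_{ij'}(x^k)-\newalpha_{ij'}(x^*)}{\newalpha_{ij'}(x^*)}$, using $|\max_{j'}c_{j'}|\le\max_{j'}|c_{j'}|$ and \eqref{eq:alphaijL} gives $|\beta_i^k-1|\le\nu R\norm{x^k-x^*}\cdot\max_{j'}\newalpha_{ij'}(x^*)^{-1}$. This is where the hypothesis $\mH_{ij}(x^*)\succeq\mu^*\mI$ is used: evaluating the quadratic form of $\mH_{ij}(x^*)=\newalpha_{ij}(x^*)a_{ij}a_{ij}^\top$ along $a_{ij}/\norm{a_{ij}}$ yields $\newalpha_{ij}(x^*)\norm{a_{ij}}^2\ge\mu^*$, a positive lower bound on every $\newalpha_{ij}(x^*)$. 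Feeding this back, collecting the two summand bounds, and summing over $i,j$ produces the constant $\frac{\nu}{2\lambda}\frac1{nm}\sum_{i,j}\norm{a_{ij}}^3\big[\frac{\newalpha_{ij}(x^*)R}{\mu^*\norm{a_{ij}}}+1\big]$ after routine bookkeeping.

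The step I expect to be the real obstacle is the control of $|\beta_i^k-1|$. Unlike in {\sf NL2}, where the shift $h_{ij}^k+2\gamma$ guarantees a denominator bounded below by $\gamma$, here $\beta_i^k$ is a maximum of ratios whose denominators are the raw curvatures $\newalpha_{ij}(x^*)$, which may be arbitrarily small in general; the positive-definiteness assumption on $\mH_{ij}(x^*)$ is precisely what keeps these denominators away from zero, and tracking the exact constant it yields (in particular the $\norm{a_{ij}}$ that ends up in the denominator of the final bound) is the only slightly delicate piece of the argument. Everything else is a direct rerun of the {\sf NEWTON-STAR} computation.
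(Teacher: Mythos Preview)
Your approach is essentially identical to the paper's: the same PSD argument giving $\norm{(\mH^k+\lambda\mI)^{-1}}\le 1/\lambda$, the same integral expansion and factoring of $a_{ij}a_{ij}^\top(x^k-x^*)$, and the same decomposition of the scalar error into a $|\beta_i^k-1|$ piece and an integral remainder, each controlled via \eqref{eq:alphaijL}. The only point to flag is the reading of the hypothesis: the paper uses $\mH_{ij}(x^*)\succeq\mu^*\mI$ simply as $\newalpha_{ij}(x^*)\ge\mu^*$ and plugs that into $\min_{j'}\newalpha_{ij'}(x^*)$, whereas your quadratic-form argument along $a_{ij}/\norm{a_{ij}}$ yields only $\newalpha_{ij}(x^*)\norm{a_{ij}}^2\ge\mu^*$, which would inflate the first bracket term by an extra $R^2$ factor and not reproduce the stated constant exactly---so to match the theorem as written you should adopt the paper's direct reading.
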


\begin{proof}
	Since $\mH_{ij}(x^*) \succeq \mu^*\mI$, then $\newalpha_{ij}(x^*)$ is positive for all $i,j$. We estimate Hessian at $x^k$ by $\mH^k = \frac{1}{n}\sum\limits_{i=1}^n\mH_i^k,$ where 
	\begin{eqnarray*}
		\mH_i^k = \frac{1}{m}\max_{j \in [m]}\left\{\frac{\newalpha_{ij}(x^k)}{\newalpha_{ij}(x^*)}\right\}\sum\limits_{j=1}^m\newalpha_{ij}(x^*)a_{ij}a_{ij}^\top = \frac{\beta^k_i}{m}\sum\limits_{j=1}^m\newalpha_{ij}(x^*)a_{ij}a_{ij}^\top. 
	\end{eqnarray*}
	Notice that $h_{ij}(x) \geq 0$, that's why $\beta_i^k \geq 0$ too. In consequence, $\mH_i^k \succeq \mathbf{0}, $and $\mH^k+\lambda\mI  \succeq \lambda\mI$. Then we have
	\begin{eqnarray*}
		\|x^{k+1}-x^*\| & = & \left\|x^k-x^*\left(\mH^k+\lambda\mI\right)^{-1}\left(\nabla f(x^k) + \lambda x^k\right)\right\|\\
		&=&\left\|\left(\mH^k+\lambda\mI\right)^{-1}\left(\left(\mathbf{B}^k+\lambda\mathbf{I}\right)(x^k-x^*)-\nabla f(x^k) - \lambda x^k\right)\right\|\\
		&\leq & \left\|\left(\mH^k+\lambda\mI\right)^{-1}\right\|\left\|\left(\mH^k+\lambda\mI\right)(x^k-x^*)-\nabla f(x^k) - \lambda x^k\right\|\\
		&\leq & \frac{1}{\lambda}\left\|\left(\mH^k+\lambda\mI\right)(x^k-x^*)-\nabla f(x^k) - \lambda x^k\right)\|\\
		&\overset{(\ref{eq:optx})}{=}& \frac{1}{\lambda}\left\|\mH^k(x^k-x^*) - (\nabla f(x^k) -\nabla f(x^*))\right\| \\
		&\leq&\frac{1}{n\lambda}\sum\limits_{i=1}^n\left\|\mH_i^k(x^k-x^*) -(\nabla f_i(x^k) - \nabla f_i(x^*))\right\|\\
		&\leq&\frac{1}{nm\lambda}\sum\limits_{i=1}^n\sum\limits_{j=1}^m\left\|\beta_i^k\newalpha_{ij}(x^*)a_{ij}a_{ij}^\top(x^k-x^*) - \int\limits_0^1\newalpha_{ij}(x^*+\tau(x^k-x^*))a_{ij}a_{ij}^\top(x^k-x^*) d\tau\right\|.
	\end{eqnarray*}
	Finally we obtain
	\begin{equation}
		\label{originalmaxnewton1}
		\|x^{k+1}-x^*\| \leq  \frac{\|x^k-x^*\|}{nm\lambda}\sum\limits_{i=1}^n\sum\limits_{j=1}^m\left\|a_{ij}\right\|^2\newalpha_{ij}(x^*)\left|\beta_i^k-\int\limits_0^1\frac{\newalpha_{ij}(x^*+\tau(x^k-x^*))}{\newalpha_{ij}(x^*)}\right|.
	\end{equation}
	Now we want to upper bound the last term. We know that $\newalpha_{ij}(x)$ is $\nu\|a_{ij}\|$-Lipschitz function, then \begin{eqnarray*}
		\newalpha_{ij}(x^*) - \nu\|a_{ij}\|\tau\|x^k-x^*\| &\leq& \newalpha_{ij}(x^*+\tau(x^k-x^*)) \leq \newalpha_{ij}(x^*)+\nu\|a_{ij}\|\tau\|x^k-x^*\|\\
		1 - \frac{\nu\|a_{ij}\|}{\newalpha_{ij}(x^*)}\tau\|x^k-x^*\| &\leq& \frac{\newalpha_{ij}(x^*+\tau(x^k-x^*))}{\newalpha_{ij}(x^*)} \leq 1 + \frac{\nu\|a_{ij}\|}{\newalpha_{ij}(x^*)}\tau\|x^k-x^*\|.
	\end{eqnarray*} 
	We integrate these two inequalities and obtain
	\begin{eqnarray*}
		1 - \frac{\nu\|a_{ij}\|}{2\newalpha_{ij}(x^*)}\|x^k-x^*\| &\leq& \int\limits_0^1\frac{\newalpha_{ij}(x^*+\tau(x^k-x^*))}{\newalpha_{ij}(x^*)}d\tau \leq 1 + \frac{\nu\|a_{ij}\|}{2\newalpha_{ij}(x^*)}\|x^k-x^*\|.
	\end{eqnarray*}
	Let's denote $j_i^k =\arg \max\limits_{j\in[m]}\left\{\frac{\newalpha_{ij}(x^k)}{\newalpha_{ij}(x^*)}\right\}$, then $\beta_i^k = \frac{\newalpha_{ij^k_i}(x^k)}{\newalpha_{ij^k_i}(x^*)}.$ It means that for $\beta_i^k$ we can write the same two inequalities as above
	\begin{eqnarray*}
		1 - \frac{\nu\|a_{ij_i^k}\|}{2\newalpha_{ij_i^k}(x^*)}\|x^k-x^*\| \leq \beta_i^k \leq 1+ \frac{\nu\|a_{ij_i^k}\|}{2\newalpha_{ij_i^k}(x^*)}\|x^k-x^*\|.
	\end{eqnarray*}
	Taking minimum and maximum in two parts we get
	\begin{eqnarray*}
		1 - \frac{\nu\max\limits_{ij}\|a_{ij}\|}{2\min\limits_{ij}\newalpha_{ij}(x^*)}\|x^k-x^*\| \leq \beta_i^k \leq 1 + \frac{\nu\max\limits_{ij}\|a_{ij}\|}{2\min\limits_{ij}\newalpha_{ij}(x^*)}\|x^k-x^*\|.
	\end{eqnarray*}
	Using inequalities for $\beta_i^k$ and for the integral we obtain
	\begin{eqnarray*}
		\left|\beta_i^k-\newalpha_{ij}(x^k)\right| \leq \frac{\nu\|x^k-x^*\|}{2}\left[\frac{\max\limits_{ij}\|a_{ij}\|}{\min\limits_{ij}\newalpha_{ij}(x^*)}+ \frac{\|a_{ij}\|}{\newalpha_{ij}(x^*)}\right].
	\end{eqnarray*}
	Combining above and (\ref{originalmaxnewton1}) we get
	\begin{eqnarray*}
		\|x^{k+1}-x^*\| &\leq&  \frac{\nu\|x^k-x^*\|^2}{2nm\lambda}\sum\limits_{i=1}^n\sum\limits_{j=1}^m\left\|a_{ij}\right\|^2\newalpha_{ij}(x^*)\left[\frac{\max\limits_{ij}\|a_{ij}\|}{\min\limits_{ij}\newalpha_{ij}(x^*)}+ \frac{\|a_{ij}\|}{\newalpha_{ij}(x^*)}\right]\\
		&\leq& \frac{\nu\|x^k-x^*\|^2}{2nm\lambda}\sum\limits_{i=1}^n\sum\limits_{j=1}^m\left\|a_{ij}\right\|^3\left[\frac{\newalpha_{ij}(x^*)\max\limits_{ij}\|a_{ij}\|}{\mu^*\|a_{ij}\|}+ 1\right],
	\end{eqnarray*}
	where we use inequality $\newalpha_{ij}(x^*) \geq \mu^*$.
\end{proof}

As we can see, the size of convergence area for {\sf NS} is larger than for {\sf MN}, because {\sf NS} depends on $\|a_{ij}\|^3$, while {\sf MN} depends on $\|a_{ij}\|^2\max\limits_{i,j}\|a_{ij}\|$.

\end{document}